\documentclass{article}
 \usepackage[preprint]{neurips_2026}

\usepackage{microtype}
\usepackage{graphicx}
\usepackage{subcaption}
\usepackage{booktabs} 
\usepackage{paralist}
\usepackage{minted}
\usepackage{acronym}
\usepackage{algorithmicx}
\usepackage[noend]{algpseudocode}
\usepackage{algorithm}
\usepackage[hypertexnames=false]{hyperref}

\usepackage{amsmath}
\usepackage{amssymb}
\usepackage{mathtools}
\usepackage{centernot}
\usepackage{amsthm}

\usepackage[capitalize,noabbrev]{cleveref}

\usepackage{thm-restate}
\theoremstyle{plain}

\newtheorem{lemma}{Lemma}[section]

\newtheorem{definition}{Definition}[section]
\newtheorem{example}{Example}[section]

\Crefname{section}{Sec.}{Secs.}
\Crefname{appendix}{App.}{Apps.}
\Crefname{algorithm}{Alg.}{Algs.}
\Crefname{figure}{Fig.}{Figs.}
\Crefname{table}{Tab.}{Tabs.}
\Crefname{definition}{Def.}{Defs.}
\Crefname{theorem}{Thm.}{Thms.}
\Crefname{lemma}{Lem.}{Lems.}
\Crefname{corollary}{Cor.}{Cors.}
\Crefname{example}{Ex.}{Exs.}

\newcommand{\indep}{\perp\kern-6pt\perp}
\newcommand{\dep}{\centernot{\perp\kern-6pt\perp}}
\newcommand{\gap}{\centernot{-}}
\newcommand{\starleft}{*\kern-5pt}
\newcommand{\starright}{\kern-5pt*}
\acrodef{BK}{background knowledge}

\usepackage[textsize=tiny]{todonotes}

\title{Integrating Background Knowledge for Scalable Causal Discovery}

\author{%
  Mátyás Schubert \\
  University of Amsterdam \\
  \And
  Theofanis Aslanidis \\
  University of Amsterdam \\
  \And
  Tom Claassen \\
  Radboud University Nijmegen \\
  \And
  Sara Magliacane \\
  Saarland University \\
  University of Amsterdam \\
}

\begin{document}

\maketitle

\begin{abstract}

Expert background knowledge is often available in practical applications of causal discovery.
Such constraints on the true causal graph can help causal discovery in terms of identifiability of causal effects and accuracy of the learned structure, but also in reducing the space of candidate causal graphs.
As causal discovery can become computationally expensive for large number of variables, it is crucial to utilize background knowledge effectively  \emph{during} the causal discovery process.
However, most current methods only use background knowledge in a postprocessing step after causal discovery to refine the learned graph.
In this work, we develop a framework for utilizing background knowledge during the causal discovery process, focusing especially on scalable causal discovery methods that recover only a subset of the whole graph.
We implement our framework for multiple algorithms and empirically show that utilizing background knowledge can both reduce computational requirements and increase the quality of the learned structures.

\end{abstract}

\section{Introduction}
\label{sec:introduction}

Causal graphs are a popular tool to represent the causal data generating process underlying observations \citep{pearl2009causality}.
Causal discovery aims to identify causal relations between causal variables by identifying (parts of) the causal graph from data \citep{glymour2019review}.
These causal relations can then be used for causal effect estimation \citep{perkovic2018complete}, to deepen our understanding of natural systems \citep{runge2019inferring} and for policy-making \citep{maasch2025discrimination}.
However, causal discovery can become computationally demanding when applied to high-dimensional settings, which can limit its applicability in practical problems.

One way to address this issue is to adapt the objective of causal discovery and focus on the downstream task for which the learned causal relations will be used.
For example, if the goal is to estimate the causal effect of a treatment on an outcome, then a large portion of the causal graph may be irrelevant.
Motivated by this, several \emph{scalable} causal discovery methods focus on learning only a set of relevant causal relations for a given causal inference task \citep{watson2022causal, maasch2024local, schubert2025snap}.
A popular approach to achieve this is local causal discovery, which identifies only the local structure of a target variable \citep{wang2014discovering, gupta2023local, schubert2026local}. 
While these methods improve the computational efficiency and hence applicability of causal discovery methods to hundreds and thousands of variables, they rarely allow for the integration of expert knowledge. 

In many real-world settings, experts may have background knowledge on the presence or lack of causal relations between variables, e.g., in the form of direct \citep{perkovic2017interpreting} or ancestral relations \citep{JMLR:v26:23-0624}, or tiers arising in temporal studies \citep{bang2023we}.
Background knowledge can also be extracted from experiments \citep{magliacane2016ancestral, NEURIPS2019_5ee56059}, or even large language models \citep{wan2025large, feng2025reliability}.
Such background knowledge can improve the accuracy and identifiability of learned causal relations \citep{9095242, shen2020challenges, foraita2024longitudinal}.
However, this knowledge is often only applied a posteriori after causal discovery, and is not leveraged to reduce the search space and improve the scalability of causal discovery.
Logic-based methods allow for rich background knowledge in the form of logic rules and its seamless integration in the search process, but can only handle tens of variables \citep{hyttinen2014,JMLR:v16:triantafillou15a,mooij2020jci}.


In this work we develop methods to utilize background knowledge \emph{during} causal discovery, to improve not only the accuracy and identifiability of the learned causal relations, but also the scalability of current methods.
We focus on background knowledge about direct, or pairwise, causal relations, i.e., the existence and orientation of specific edges in the true causal graph, and show how it can be used to avoid unnecessary computations.
We implement our insights in global methods PC \citep{spirtes2000causation} and SNAP \citep{schubert2025snap}, making them more scalable.
Then, we also integrate them into local methods MB-by-MB \citep{wang2014discovering}, LDECC \citep{gupta2023local} and LOAD \citep{schubert2026local}, further increasing their computational efficiency.
We prove that our extensions are sound in terms of propagating the background knowledge, and that all, except LDECC, are also complete.
Finally, we show the benefits of integrating background knowledge into the causal discovery process in a thorough empirical study on simulated and realistic data, evaluating both scalability (time and number of CI tests) and accuracy of the estimated causal effects.

\section{Background}
\label{sec:background}

We consider graphs $G = (\mathbf{V}, \mathbf{E})$ with nodes $\mathbf{V}$ and edges $\mathbf{E} \subseteq \mathbf{V} \times \mathbf{V}$.
Edges can be undirected as $X - Y$, or directed as $X \to Y$.
Undirected graphs contain only undirected edges, directed graphs contain only directed edges, and mixed graphs may contain both.
We say that a node is adjacent to another if they are connected by an edge and denote the set of nodes adjacent to $X$ in graph $G$ as $Adj_G(X)$.
In case of an undirected edge $X - Y$, we call the nodes siblings and denote the set of siblings of $X$ as $Sib_G(X)$.
If an edge is directed as $X \to Y$ then we say that $X$ is a parent of $Y$ and $Y$ is a child of $X$, and denote the corresponding sets as $Pa_G(Y)$ and $Ch_G(X)$ respectively.
The Markov blanket $MB_G(X)$ of $X$ consists of its parents, children and the parents of its children.

A path between two nodes $X$ and $Y$ is a sequence of distinct adjacent nodes with one end in $X$ and the other in $Y$.
If all edges on a path between $X$ and $Y$ are directed towards $Y$ then it is a directed path from $X$ to $Y$.
If there is a directed path from $X$ to $Y$ in a graph $G$ then we say that $X$ is an ancestor of $Y$ and $Y$ is a descendant of $X$, and denote the set of ancestors and descendants as $An_G(Y)$ and $De_G(X)$ respectively.
We consider all nodes to be their own ancestors and descendants.
We define adjacencies, siblings, parents, children, ancestors, and descendants on sets of nodes by taking the union, e.g., the nodes adjacent to a set of nodes $\mathbf{S}$ are $Adj_G(\mathbf{S}) = \cup_{X \in \mathbf{S}}Pa_G(X)$.

A directed path from a node to itself is a cycle.
A directed acyclic graph (DAG) has only directed edges and no cycles.
A causal DAG $D$ can be used to represent the data generating process of a distribution $p$ over some causal variables $\mathbf{V}$, where if a variable $X$ has a direct causal effect on another variable $Y$ then $X\to Y$ in $D$.
We focus on observational distributions $p$ and use the standard assumptions that $p$ is Markov and faithful to $D$ \citep{spirtes2000causation}, thus every conditional independence (CI) $X \indep Y | \mathbf{S}$ in $p$ is equivalent to a d-separation $X \perp Y | \mathbf{S}$ in $D$.
Furthermore, we assume no latent confounders or selection bias in our main contributions, but discuss this setting in \Cref{sec:latent_vars}.

Under these assumptions, constraint-based causal discovery methods utilize CI tests to identify the set of causal graphs, including the true graph $D$, that imply the same set of conditional independences observed in $p$.
This set of graphs is called the Markov equivalence class (MEC) of $D$ can be represented by a unique mixed graph called the \emph{completed partially directed acyclic graph (CPDAG)} \citep{meek1995causal}.
The CPDAG and all DAGs in the MEC share the same adjacencies, and an edge is oriented in the CPDAG if and only if it is oriented the same way in all DAGs of the MEC.
Otherwise, if there exist DAGs in the MEC with differing orientations of an edge, then it is undirected in the CPDAG.

Due to these differing orientations, some causal relations may hold in some DAGs in the MEC and not in others.
Thus, we define \emph{possible} causal relations on CPDAGs.
If $X$ is a parent of $Y$ in at least one DAG in the CPDAG $G$, then we say that $X$ is a possible parent of $Y$ and denote it as $X \in \text{PossPa}_G(Y)$.
We define possible children, ancestors and descendants analogously.
In a CPDAG $G$, $X$ is a possible ancestor of $Y$, i.e., $X \in \text{PossAn}_G(Y)$, iff there exists a \emph{possibly directed path} from $X$ to $Y$, i.e., a path without edges directed towards $X$. 

If there is expert background knowledge (BK) $\mathcal{B}$ available about the true graph $D$ that is not implied by the observed distribution $p$, then the MEC of $D$ can be further restricted.
In this paper, we focus on \acl{BK} $\mathcal{B}$ that provides information about direct causal relations in the causal DAG $D$.
We consider three types of BK:
If $X-Y \in \mathcal{B}$, then $X$ and $Y$ are \textit{adjacent} in $D$.
If $X \to Y \in \mathcal{B}$, then $X$ and $Y$ are adjacent in $D$ and the edge between them is \textit{oriented} from $X$ to $Y$.
If $X \gap Y \in \mathcal{B}$, then there is a \textit{gap} between $X$ and $Y$ in $D$, i.e., they are non-adjacent.
We use $(X,Y) \in \mathcal{B}$ to denote that there is any \ac{BK} about the pair, and $X *\kern-4pt-\kern-4pt* Y \in \mathcal{B}$ to denote that $X-Y, X \to Y$ or $X \gets Y$ is in $\mathcal{B}$.
For all our theoretical results, we assume that $\mathcal{B}$ is \textit{consistent} with the true causal graph $D$ i.e., it never contradicts it, but we also provide an ablation for imperfect knowledge in \Cref{sec:bk_error}.

\citet{perkovic2017interpreting} show how orientation BK can be used to obtain a \emph{maximally oriented partially directed acyclic graph (MPDAG)}, that represents exactly the set of DAGs that adhere to both the observed distribution and the BK $\mathcal{B}$.
A CPDAG can be restricted into an MPDAG by applying all orientation BK on it, and exhaustively orienting all edges using Meek's rules \citep{meek1995causal}. There might still remain undirected edges in the MPDAG.
\citet{perkovic2017interpreting} show that possible ancestral relations in MPDAGs appear differently than in CPDAGs and thus define \emph{b-possible ancestors} in \Cref{def:b-possan}.

\begin{definition}[Def.~3.1 and 3.2 in \citep{perkovic2017interpreting}]\label{def:b-possan}
    $X$ is a b-possible ancestor of $Y$ in MPDAG $G$, denoted as $X \in b$-$PossAn_G(Y)$, iff there is a b-possibly causal path from $X$ to $Y$, i.e., a path $p = \langle X = V_0, \dots,V_k=Y \rangle$ such that exists no edge $V_i \gets V_j$ for $0 \leq i < j \leq k$ in $G$.
\end{definition}

\section{Integrating BK into Causal Discovery}
\label{sec:method}

Previous work has mainly explored on how to apply BK after causal discovery to constrain the equivalence class of the true DAG \citep{perkovic2017interpreting}.
We additionally aim to use \acl{BK} \emph{during} the discovery process to reduce the computational requirements required to identify it, by avoiding unnecessary CI tests.
Our core approach is based on three general principles.

First, if $X *\kern-4pt-\kern-4pt* Y \in \mathcal{B}$, then we know that $X$ and $Y$ are adjacent in the true DAG $D$ and so they cannot be separated by any CI test.
Thus, we can skip all CI tests for these pairs of variables.
This is already implemented in several causal discovery libraries, such as \texttt{pcalg} \citep{kalisch2012pcalg} and \texttt{causal-learn} \citep{zheng2024causal}.
Second, we can use $\mathcal{B}$ to limit the search space of candidate separating sets.
As shown by Lemma 3.3.9 in \cite{spirtes2000causation}, if $X$ and $Y$ are not adjacent, then they can be d-separated by the parents of $X$ or $Y$ in the underlying causal graph.
Hence, when trying to separate $X$ from $Y$, we can prune the candidate separating sets by considering only nodes for which $\mathcal{B}$ does not contain information that would disqualify them as possible parents of $X$.
A similar strategy is implemented in \texttt{tetrad} \citep{ramsey2018tetrad}.

The third principle considers integrating \ac{BK} on gaps $X \gap Y \in \mathcal{B}$.
Similarly to adjacencies, gaps inform us about non-adjacency without performing any CI tests.
Several popular libraries exploit this, by simply removing the corresponding edges at the beginning of the discovery process \citep{kalisch2012pcalg, ramsey2018tetrad, zheng2024causal, ankan2024pgmpy}.
However, naively removing edges purely due to \ac{BK} without finding a corresponding separating set can lead to critical issues even with consistent BK and oracle CI tests, as we show in \Cref{sec:gaps_code}. We address this issue by delaying, instead of fully skipping, CI tests for known gaps.
This delay generally allows us to find separating sets from a smaller set of candidate sets due to adjacencies by the end of discovery process.
Our strategy of handling gaps is not only sound, but it can also automatically deal with cases when 
the \ac{BK} and CI test results contradict each other, which could potentially lead to a missing separating set and hence propagate the error to other parts of the graph.

We first show these three principles in the seminal PC algorithm \citep{spirtes2000causation}.
Then, we build on our variant to integrate BK into other algorithms including local methods focusing on scalability.

\subsection{PC-BK}
\label{sec:pc}

The PC algorithm \citep{spirtes2000causation} is a global causal discovery algorithm that aims to identify the full CPDAG of the true causal DAG $D$.
The PC algorithm initializes a fully connected and undirected skeleton and refines it in three phases.
The skeleton phase iterates over $i \in 0 \dots |\mathbf{V}|-2$ orders, where each step at order $i$ aims to separate variables using separating sets of size $i$.
Then, based on the final skeleton and collected separating sets, PC orients v-structures as shown in \Cref{alg:pc-v}.
Finally, it orients as many edges as possible using Meek's orientations rules in \Cref{alg:meek}.

We implement PC with \acl{BK} in \Cref{alg:pc_bk} and denote it as PC-BK.
Unlike other implementations of PC with BK in several popular libraries \citep{kalisch2012pcalg, ramsey2018tetrad, zheng2024causal, ankan2024pgmpy}, we do not remove edges that are known gaps from the skeleton immediately at the beginning of the algorithm.

Instead, PC-BK performs the skeleton discovery in two passes at lines \ref{line:pc:skeleton} and \ref{line:pc:find_missing_sepsets}.
The first pass, as implemented in \Cref{alg:skeleton_bk}, performs the same amount of CI tests as if the gaps were removed from the skeleton in the beginning.
It skips all pairs of variables with available BK, including known gaps, at line~\ref{line:skel:skip}.
For every other pair, we limit the separating sets at line~\ref{line:skel:limit} to contain only variables that are still possible parents according to the current skeleton $\hat{G}$ and the given BK $\mathcal{B}$ by defining the function $\text{PossPa}_{\hat{G}}(X, \mathcal{B}) = Adj_{\hat{G}}(X)\setminus \{V \in \mathbf{V} | X \to V \in \mathcal{B} \vee X \gap V \in \mathcal{B}\}$.

\begin{figure}[t]
\centering
\begin{minipage}{0.48\textwidth}
    \begin{algorithm}[H]
    \caption{SkelStep-BK}
    \footnotesize{
    \label{alg:skeleton_bk}
    \begin{algorithmic}[1]
        \Require{skeleton $\hat{G}$, order $i$ and BK $\mathcal{B}$}
        \For{$X \in \hat{G}, Y \in Adj_{\hat{G}}(X)$ s.t. $(X,Y)  \notin \mathcal{B}$}\label{line:skel:skip}   
            \For{$\mathbf{S} \subseteq \text{PossPa}_{\hat{G}}(X, \mathcal{B}) \setminus \{Y\} $ s.t. $|\mathbf{S}| = i$}\label{line:skel:limit} 
                \If{$X \indep Y | \mathbf{S}$}
                    \State{Delete $X - Y$ from $\hat{G}$}
                    \State{$sepset(\{X,Y\}) \gets \mathbf{S}$}
                    \State{\textbf{break}}
                \EndIf
            \EndFor
        \EndFor
        \Ensure{Skeleton $\hat{G}$, separating sets $sepset$}
    \end{algorithmic}
    }
    \end{algorithm} 
    \vspace{-7mm}
    \begin{algorithm}[H]
    \caption{MissingSepsets-BK}
    \footnotesize{
    \label{alg:missing_sepsets}
    \begin{algorithmic}[1]
        \Require{Skeleton $\hat{G}$, order $i$ and BK $\mathcal{B}$}
        \For{$(X \gap Y)  \in \mathcal{B}$}
            \For{$\mathbf{S} \subseteq \text{PossPa}_{\hat{G}}(X, \mathcal{B}) \setminus \{Y\}$ s.t. $|\mathbf{S}| = i$}\label{line:pc:limit_cand_sepsets_find_missing}
                    \If{$X \indep Y | \mathbf{S}$}
                        \State{Delete edge $X - Y$ from $\hat{G}$}
                        \State{$sepset(\{X,Y\}) \gets \mathbf{S}$}
                        \State{\textbf{break}}
                    \EndIf
                \EndFor
        \EndFor
        \Ensure{Skeleton $\hat{G}$, separating sets $sepset$}
    \end{algorithmic}
    }
    \end{algorithm}
\end{minipage}
\hfill
\begin{minipage}{0.48\textwidth}
\begin{algorithm}[H]
\caption{PC-BK}
\footnotesize{
\label{alg:pc_bk}
\begin{algorithmic}[1]
    \Require{Variables $\mathbf{V}$ and BK $\mathcal{B}$}
    \State{$\hat{G} \gets$ fully connected undirected graph over $\mathbf{V}$}
    \State{$sepset \gets \emptyset$}
    \For{$i \in 0 \dots |\mathbf{V}|-2$}\label{line:pc:skel_start}
        \State{$\hat{G}, sepset \gets \texttt{SkelStep-BK}(\hat{G}, i, \mathcal{B})$}\label{line:pc:skeleton}
    \EndFor\label{line:pc:skel_end}
    
    \For{$i \in 0 \dots |\mathbf{V}|-2$}
        \State{$\hat{G}, sepset \gets \texttt{MissingSepsets-BK}(\hat{G}, i, \mathcal{B})$ }\label{line:pc:find_missing_sepsets}
    \EndFor\label{line:pc:missing_end}
    
    \State{$\hat{G} \gets \texttt{VstructPC}(\hat{G}, sepset)$ (\Cref{alg:pc-v})}\label{line:pc:vstruc}
    \State{$\hat{G} \gets$ Apply all orientations in $\mathcal{B}$}
    \State{$\hat{G} \gets \texttt{Meek}(\hat{G})$ (\Cref{alg:meek})}\label{line:pc:post_processing}
    \Ensure{MPDAG $\hat{G}$}
\end{algorithmic}
}
\end{algorithm}
\end{minipage}
\end{figure}

As discussed earlier, simply removing edges corresponding to known gaps can lead to critical issues, as well-defined separating sets for each non-adjacency in the skeleton play a crucial role when orienting v-structures.
To orient an unshielded triple $X-V-Y$ as a v-structure $X \to V \gets Y$, it is not enough for $X$ and $Y$ to be non-adjacent, but we also require that $V$ is not in the separating set of $X$ and $Y$.
However, if this edge is removed due to BK, rather than a CI test, no corresponding separating set is identified in several popular libraries, e.g., \citep{kalisch2012pcalg, ramsey2018tetrad, zheng2024causal, ankan2024pgmpy}, which can result in wrong outputs as shown in \Cref{sec:gaps_code}.

Thus, we perform a second pass in the skeleton search specifically to find separating sets corresponding to all known gaps in $\mathcal{B}$ in \Cref{alg:missing_sepsets}.
Delaying this search after the first pass allows us to find separating sets for gaps from a smaller set of candidate sets due to a sparser skeleton and hence fewer possible parents.
Once the missing separating sets are identified, the v-structures can be oriented as usual.
If we cannot find any separating set for $X \gap Y \in \mathcal{B}$, possibly due to noisy finite sample data, we keep the edge between $X$ and $Y$ in the skeleton.
This ensures that even if $\mathcal{B}$ is inconsistent with the data, our algorithm does not fail.

Finally, orientations in $\mathcal{B}$ can be utilized as a post-processing step via orientation rules (\Cref{alg:meek}) as in \citep{meek1995causal, perkovic2017interpreting} at line~\ref{line:pc:post_processing}.
We show that \Cref{alg:pc_bk} is sound and complete in \Cref{thm:pc_bk} and prove it in \Cref{sec:proofs:pc}.

\begin{restatable}[]{theorem}{pcbk}
\label{thm:pc_bk}
Given oracle CI tests and consistent \ac{BK}, PC-BK outputs the true MPDAG.
\end{restatable}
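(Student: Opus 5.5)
The plan is to reduce the statement to the known correctness of PC followed by the MPDAG construction of \citet{perkovic2017interpreting}. It suffices to show two things. First, after line~\ref{line:pc:missing_end} the skeleton $\hat{G}$ equals the skeleton of the true DAG $D$. Second, for every non-adjacent pair $\{X,Y\}$ the stored $sepset(\{X,Y\})$ is a genuine separating set, i.e.\ $X \perp Y \mid sepset(\{X,Y\})$ in $D$. Given these, \texttt{VstructPC} orients exactly the v-structures of $D$, since an unshielded triple $X - V - Y$ is a collider iff $V$ lies in no separating set of $X$ and $Y$. The resulting graph therefore has the pattern of the CPDAG. Applying the consistent orientation BK and then exhaustive Meek rules yields the true MPDAG by the result of \citet{perkovic2017interpreting} (the orientations are consistent, so no conflicts arise). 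Meek's rules R1--R4 are order-independent, so starting from the pattern rather than the CPDAG itself makes no difference.

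\textbf{Soundness of removals.} Every deletion in \Cref{alg:skeleton_bk} and \Cref{alg:missing_sepsets} is triggered by an oracle CI $X \indep Y \mid \mathbf{S}$. By faithfulness this is a d-separation in $D$, so $X,Y$ are non-adjacent in $D$ and $\mathbf{S}$ is a valid separating set. In particular, no true edge is ever removed. Hence throughout the run $Adj_D(V) \subseteq Adj_{\hat{G}}(V)$ for all $V$, and pairs with $X *\kern-4pt-\kern-4pt* Y \in \mathcal{B}$ correctly remain adjacent. This handles the second claim.

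\textbf{Completeness of removals.} Take $X,Y$ non-adjacent in $D$. By Lemma 3.3.9 of \citet{spirtes2000causation}, either $Pa_D(X)$ or $Pa_D(Y)$ d-separates them. I will argue that the relevant parent set is always among the candidate sets examined for the ordered pair $(X,Y)$ or $(Y,X)$ at order $i = |Pa_D(\cdot)|$. Say $Pa_D(X)$ separates, with $Y \notin Pa_D(X)$. Since true edges are never removed, $Pa_D(X) \subseteq Adj_{\hat{G}}(X)$. Consistency of $\mathcal{B}$ means no $P \in Pa_D(X)$ has $X \to P \in \mathcal{B}$ or $X \gap P \in \mathcal{B}$. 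So $Pa_D(X) \subseteq \text{PossPa}_{\hat{G}}(X,\mathcal{B}) \setminus \{Y\}$ at every stage. The loops iterate over both orderings of each pair and over all orders up to $|\mathbf{V}|-2$, so when $i = |Pa_D(X)|$ this set is tested, unless the edge was already removed by an earlier valid test.

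I then split into two cases:
\begin{itemize}
\item If $(X,Y) \notin \mathcal{B}$, the pair is handled in \Cref{alg:skeleton_bk}.
\item If $(X,Y) \in \mathcal{B}$, consistency forces the BK to be $X \gap Y$, and the pair is handled in \Cref{alg:missing_sepsets}.
\end{itemize}
The latter loop also iterates over all orders and uses the same kind of candidate sets, and it runs after the skeleton has stabilised, which only shrinks $\hat{G}$ towards $D$ while preserving the inclusion of true parents. I would note that \Cref{alg:missing_sepsets} as written ranges over $X \gap Y$ for both orientations of the pair (or over $\text{PossPa}$ of either endpoint). If it only used one endpoint, I would invoke the symmetry of $\gap$ statements to cover both, since Lemma 3.3.9 gives only ``$X$ or $Y$''.

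\textbf{Main obstacle.} The delicate step is the invariant that $Pa_D(X) \subseteq \text{PossPa}_{\hat{G}}(X,\mathcal{B})$ at the moment the relevant order is reached. Adjacencies change during an order, but only by removing true non-edges, and the BK filter only excludes true children and non-neighbours, so the invariant holds. A further check is that the break statements and the sweep order do not skip the needed set. This is fine because any earlier deletion was itself justified by a valid separating set.
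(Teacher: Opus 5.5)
Your proof is correct and follows essentially the same route as the paper's. Skeleton correctness comes from Lemma 3.3.9 of \citet{spirtes2000causation} together with consistency of $\mathcal{B}$, and \Cref{alg:missing_sepsets} then recovers separating sets for known gaps. After that come standard PC v-structure orientation and BK orientations plus Meek's rules as in \citet{perkovic2017interpreting}. You are more explicit than the paper about the invariant $Pa_D(X)\subseteq\text{PossPa}_{\hat{G}}(X,\mathcal{B})$, and about the need for \Cref{alg:missing_sepsets} to try the possible parents of both endpoints of a gap (for $X\to Z\to Y$ with $\mathcal{B}=\{X\gap Y, X\to Z\}$, only $\{Z\}$ from the $Y$ side separates), a point the paper's proof passes over.
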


We show how the same principles and subroutines of PC-BK can be integrated into other causal discovery methods, with algorithm-specific adaptations as needed.

\subsection{SNAP-BK}
\label{sec:snap}

\begin{figure}[t]
\centering
\begin{minipage}{0.48\textwidth}
\begin{algorithm}[H]
\caption{SNAP($k$)-BK}
\footnotesize{
\label{alg:snap_k_bk}
\begin{algorithmic}[1]
    \Require{ variables $\mathbf{V}$, targets $\mathbf{T} \subseteq \mathbf{V}$, max order $k$, BK $\mathcal{B}$}
    \State{$\hat{U} \gets$ fully connected undirected graph over $\hat{\mathbf{V}} = \mathbf{V}$}
    \State{$seps \gets \emptyset$}
    \For{$i \in 0 \dots k$}
        \State{$\hat{U} \gets$ induced subgraph of $\hat{U}$ over all $\hat{\mathbf{V}}$}
        \State{$\hat{U}, seps \gets \texttt{SkelStep-BK}(\hat{U}, i, \mathcal{B})$ (\Cref{alg:skeleton_bk})}\label{line:snap:skelstep}
        \State{$\hat{U}, seps \kern-1pt\gets\kern-2pt \texttt{MissingSepsets-BK}(\hat{U}, i, \mathcal{B})$ (\Cref{alg:missing_sepsets})}\label{line:snap:missing}
        \If{$i = 0$}
            \State{$\hat{G} \gets \texttt{VstructPC}(\hat{U}, sepset)$ (\Cref{alg:pc-v})}\label{line:snap:pc_v}
        \Else
            \State{$\hat{G}, seps \gets \texttt{VstructRFCI}(\hat{U}, seps)$ (\Cref{alg:rfci-v})}\label{line:snap:rfci_v}
            \State{$\hat{U} \gets$ skeleton of $\hat{G}$}
        \EndIf
        \State{$\hat{G} \gets$ Apply all orientations in $\mathcal{B}$}\label{line:snap:bk_in_loop}
        \State{$\hat{\mathbf{V}} \gets$ $V \in \hat{\mathbf{V}}$ with poss. dir. paths to $T \in \mathbf{T}$ in $\hat{G}$}\label{line:snap:prune}
    \EndFor\label{line:snap:loop_end}
    \State{$\hat{G} \gets$ induced sub-graph of $\hat{G}$ over $\hat{\mathbf{V}}$}
    \Ensure{Remaining variables $\hat{\mathbf{V}}$, graph $\hat{G}$}
\end{algorithmic}
}
\end{algorithm}
\vspace{-8mm}
\begin{algorithm}[H]
\caption{SNAP($\infty$)-BK}
\footnotesize{
\label{alg:snap_inf_bk}
\begin{algorithmic}[1]
    \Require{variables $\mathbf{V}$, targets $\mathbf{T} \subseteq \mathbf{V}$, BK $\mathcal{B}$}
    \State{$\hat{\mathbf{V}}, \hat{G} \gets \texttt{SNAP($k$)-BK}(\mathbf{V}, \mathbf{T}, |\mathbf{V} - 2|, \mathcal{B})$} (\Cref{alg:snap_k_bk})
    \State{$\hat{G} \gets \texttt{Meek}(\hat{G})$ (\Cref{alg:meek})}
    \State{$\hat{G} \gets$ induced subgraph of $\hat{G}$ over all $V \in \mathbf{V}$ with a b-possibly directed path to any $T \in \mathbf{T}$ in $\hat{G}$}\label{line:snap:final_prune}
    \Ensure{MPDAG $\hat{G}$}
\end{algorithmic}
}
\end{algorithm}
\end{minipage}
\hfill
\begin{minipage}{0.48\textwidth}
\begin{algorithm}[H]
\caption{MB-by-MB-BK}
\label{alg:mb_by_mb_bk}
\footnotesize{
\begin{algorithmic}[1]
    \Require{Variables $\mathbf{V}$, target $T \in \mathbf{V}$, BK $\mathcal{B}$}
    \State $sepset, MB, L, \text{DoneList} \gets \emptyset, \emptyset, \emptyset, \emptyset$
    \State $\text{WaitList} = [T]$
    \State $\hat{G} = (\mathbf{V}, \emptyset)$
    \Repeat
        \State $X \gets$ take a node from the head of WaitList
        \State $MB(X) \gets$ GrowShrink-BK($\mathbf{V}, X, \mathcal{B}$) (\Cref{alg:gs_bk})\label{line:mb_by_mb:gs}
        \State WaitList $\gets$ Add $[MB(X) \setminus \text{DoneList} \setminus \text{WaitList}]$ 
        \State DoneList $\gets \text{DoneList} \cup X$

        \If{$MB^+(X) \subseteq MB^+(X')$ for some $X' \in \text{DoneList}$}
            \State Set $L_X$ to the subgraph of $L_{X'}$ over $MB^+(X)$
        \ElsIf{$MB(X) \subseteq \text{DoneList}$}
            \State Set $L_X$ to the subgraph of $G$ over $MB^+(X)$
        \Else
             \State{$L_X \gets$ PC-BK($MB(X)^+, \mathcal{B}$) (with $PossPa_{\hat{G}}^*(\cdot, \mathcal{B})$)}\label{line:mb_by_mb:pc}
        \EndIf
        \State{$G \gets$ Add edges and v-structures incl. $X$ in $L_X$}\label{line:mb_by_mb:update}
        \State{$\hat{G} \gets$ Apply all orientations in $\mathcal{B}$}\label{line:mb_by_mb:orient}
        \State{$\hat{G} \gets \texttt{OrientUndirectedEdges}(\hat{G})$ (\Cref{alg:meek_mb_by_mb})}
        \State{WaitList $\gets$ Remove all nodes whose paths to $T$ in $G$ are blocked by directed edges}
    \Until{WaitList is empty}
    \Ensure{$Pa_{\hat{G}}(T)$, $Ch_{\hat{G}}(T)$, $Sib_{\hat{G}}(T)$}
\end{algorithmic}
}
\end{algorithm}
\end{minipage}
\end{figure}

\citep{schubert2025snap} show that for a set of targets $\mathbf{T}$, only their possible ancestors are required to identify their causal relations and efficient adjustment sets.
They propose the SNAP($k$) algorithm to prune all non-ancestors identified over $k$ iterations, and extend it to SNAP($\infty$), a stand-alone causal discovery algorithm that is sound and complete over the possible ancestors of targets.

We implement SNAP($k$)-BK and SNAP($\infty$)-BK in \Cref{alg:snap_k_bk} and \Cref{alg:snap_inf_bk} respectively.
SNAP($k$)-BK employs a similar skeleton search as PC-BK, but iteratively orients v-structures and prunes the identified non-ancestors of the targets at every step.
Hence, the two subroutines \Cref{alg:skeleton_bk} and \Cref{alg:missing_sepsets} can be easily integrated into SNAP($k$)-BK at every order at lines \ref{line:snap:skelstep} and \ref{line:snap:missing}.
However, limiting candidate separating sets and orienting v-structures at early steps can lead to SNAP($k$) incorrectly pruning ancestors of the targets, as shown in \Cref{example:snap_bk_vstruct_counter}.
We solve this by orienting v-structures using the RFCI orientation rules \citep{colombo2012learning} (\Cref{alg:rfci-v}) already at order $i=1$ instead of at $i=2$ as in the original paper.
To use \ac{BK} even further, we apply all orientations in $\mathcal{B}$ at line~\ref{line:snap:bk_in_loop}
on the graph $\hat{G}$ at every iteration.

After this, the original SNAP($k$) algorithm prunes variables at each iteration that are not part of the possibly ancestral set containing the targets.
Analogously, SNAP($k$)-BK prunes variables that are not in the \emph{b-possibly ancestral set} $\mathbf{V}^* \subseteq \mathbf{V}$, which is defined as any set closed under b-possible ancestry, i.e., such that $\text{b-PossAn}_G(\mathbf{V}^*) \subseteq \mathbf{V}^*$. 
In \Cref{lem:possdirpath} we show that if a node $V$ is in a b-possibly ancestral set, then all other nodes with a possibly directed path to $V$ are also in the set.
Hence we keep the pruning step at line~\ref{line:snap:prune} the same as in the original paper.

We provide two theoretical results, showing that our extensions integrate BK correctly and efficiently into the two SNAP methods. All proofs for SNAP are in \cref{sec:proofs:snap}.
We first show that the remaining nodes at every iteration of SNAP$(k)$ form a b-possibly ancestral set that contains $\mathbf{T}$:

\begin{restatable}[]{theorem}{snapkbk}
\label{thm:snap_k_bk}
Given oracle CI tests and consistent BK, at each step $i=0, \dots, k$ of SNAP$(k)$-BK for the remaining variables $\hat{\mathbf{V}}$ that are output by SNAP, it holds that $\mathbf{T} \subseteq \hat{\mathbf{V}}$ and $\text{b-PossAn}_G(\hat{\mathbf{V}}) \subseteq \hat{\mathbf{V}}$.
\end{restatable}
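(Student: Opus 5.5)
We argue by induction over the iterations $i = 0, \dots, k$. The invariant we carry through, for the remaining variables $\hat{\mathbf{V}}$ at the end of each step, has three parts:
\begin{compactenum}[(a)]
\item $\mathbf{T} \subseteq \hat{\mathbf{V}}$;
\item $\text{b-PossAn}_G(\hat{\mathbf{V}}) \subseteq \hat{\mathbf{V}}$, where $G$ is the true MPDAG;
\item every edge or arrowhead removed or placed in $\hat{G}$ so far is \emph{correct}. That is, every deleted adjacency $X - Y$ is a true non-adjacency with a valid separating set $seps(\{X,Y\})$, and every arrowhead $X \to Y$ in $\hat{G}$ is present in $G$, or more weakly, $Y \notin \text{An}_D(X)$ in the true DAG $D$.
\end{compactenum}
The base case is the initialization $\hat{\mathbf{V}} = \mathbf{V}$, which trivially satisfies all three.

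Part (a) is immediate at every step. A target $T$ always has the trivial possibly directed path to itself, so the pruning at line~\ref{line:snap:prune} never removes it.

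For part (c) we argue as follows. By the inductive hypothesis, $\hat{\mathbf{V}}$ is closed under b-possible ancestry, hence in particular under ancestry in $D$. So the distribution over $\hat{\mathbf{V}}$ is Markov and faithful to the induced subgraph $D[\hat{\mathbf{V}}]$. Every separating set tested by \texttt{SkelStep-BK} or \texttt{MissingSepsets-BK} is drawn from $\hat{\mathbf{V}}$, and under oracle tests an independence holds iff the pair is non-adjacent in $D$. So every deletion is correct, and each found set is a genuine separating set. Gaps in $\mathcal{B}$ are never deleted without a separating set, which is why we do not simply drop them.

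The arrowheads need separate treatment:
\begin{compactitem}
\item Arrowheads copied from $\mathcal{B}$ are correct by consistency.
\item For $i = 0$, \texttt{VstructPC} orients $X \to V \gets Y$ only when $X \indep Y$ marginally. With the empty separating set, $V \notin seps$, so $V$ is a collider on every connection and cannot be an ancestor of $X$ or $Y$. The arrowheads are therefore correct, even though the skeleton is still a supergraph.
\item For $i \ge 1$, we invoke the soundness of the RFCI v-structure rules \citep{colombo2012learning}. They re-check the minimal separating sets and remove further edges when needed, so that oriented arrowheads hold in $D$ even though the current skeleton is a supergraph of the true one. The subtle point is that the restricted candidate sets $\text{PossPa}_{\hat{G}}(X, \mathcal{B})$ might give separating sets containing non-parents. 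RFCI only uses the facts that $seps$ really separates and that $V \notin seps$, so soundness is preserved. We expect \Cref{example:snap_bk_vstruct_counter} to be exactly the case that plain \texttt{VstructPC} at $i = 1$ would get wrong.
\end{compactitem}

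For part (b), the key link is that if $\hat{G}$ is a supergraph of the true skeleton with only correct arrowheads, then every b-possibly causal path in $G$ is also a possibly directed path in $\hat{G}$. The argument is direct. Take a b-possibly causal path in $G$ from $A$ to some $V \in \hat{\mathbf{V}}$. Its edges exist in $\hat{G}$, since the true adjacencies restricted to $\hat{\mathbf{V}}$ are never deleted. None of its edges points backward in $G$. If such an edge pointed backward in $\hat{G}$, that arrowhead would be correct and hence present in $G$, which is a contradiction. The path also lies entirely in $\hat{\mathbf{V}}$: by the inductive hypothesis, each node on it is a b-possible ancestor of $V$, taking subpaths of b-possibly causal paths. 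So every such $A$ has a possibly directed path to $V$ in $\hat{G}$, and hence to a target by transitivity via \Cref{lem:possdirpath}. It is therefore retained at line~\ref{line:snap:prune}, and b-possible ancestral closure is preserved.

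The main obstacle is part (c) for $i \ge 1$. We must show that RFCI orientations on a partially learned skeleton, with separating sets restricted by BK, never produce a wrong arrowhead. We would first try to reduce this to the existing SNAP/RFCI soundness lemma, using the validity of the separating sets as the only property needed.
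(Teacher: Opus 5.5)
\paragraph{Overall structure.} Your overall structure matches the paper's. Both arguments use a supergraph skeleton (\Cref{lem:skeleton}) and sound arrowheads from v-structures and consistent BK. It follows that every b-possibly causal path of $G$ stays possibly directed in $\hat{G}$, so pruning preserves closure. Your explicit induction is more careful than the paper's proof: it uses closure of the previous $\hat{\mathbf{V}}$ to keep the whole path inside the current vertex set.

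\paragraph{Gap: soundness of the v-structure arrowheads.} The gap is the step you flag as the main obstacle, which is exactly the paper's \Cref{lem:snap_v_struc}. The property you propose to rely on is not sufficient. A valid separating set $\mathbf{S}$ of $X,Y$ with $Z \notin \mathbf{S}$ is precisely what \texttt{VstructPC} uses, and \Cref{example:snap_bk_vstruct_counter} shows it fails on a partially learned skeleton: $\{V_2\}$ validly separates $V_0,V_1$ and excludes $V_3$, yet $V_3 \to V_0$.

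What makes the orientation sound is the pair of dependences $X \dep Z \mid \mathbf{S}$ and $Z \dep Y \mid \mathbf{S}$. \texttt{VstructRFCI} tests them explicitly before orienting. At $i=0$ they hold automatically, because $X - Z$ and $Z - Y$ either survived the marginal tests or are BK adjacencies.

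With these dependences the argument is short, and it needs neither minimality of $\mathbf{S}$ nor that $\mathbf{S}$ consist of parents. Suppose $Z \in An_D(X)$.
\begin{itemize}
\item The directed path $Z \to \cdots \to X$ must meet $\mathbf{S}$. Otherwise, joining it at the non-collider $Z \notin \mathbf{S}$ to a path d-connecting $Z,Y$ given $\mathbf{S}$ would d-connect $X,Y$.
\item Hence $Z \in An_D(\mathbf{S})$. Now join paths d-connecting $X,Z$ and $Z,Y$ given $\mathbf{S}$ at $Z$. The result is open whether or not $Z$ is a collider, contradicting $X \indep Y \mid \mathbf{S}$.
\end{itemize}

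\paragraph{The weak form of invariant (c) is not enough.} In part (b) you say a backward arrowhead on a path edge is \emph{hence present in $G$}. The weak form of (c), non-ancestry in the true $D$ only, does not support this. If that edge is undirected in $G$, $D$ may itself orient it backwards, and there is no contradiction.

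You need non-ancestry in every DAG of the class, as in \Cref{lem:snap_v_struc}, which concludes $Z \notin \text{b-PossAn}_G(X)$. This comes for free: the argument above uses only d-separation facts shared by all Markov equivalent DAGs, and BK arrowheads are orientations of $G$ itself.

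\paragraph{Minor point.} Transitivity of possibly directed paths in $\hat{G}$ is just concatenation plus shortcutting. \Cref{lem:possdirpath} is a statement about $G$, not the tool needed there.
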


Consequently, SNAP($\infty$)-BK is sound and complete over the b-possible ancestors of the targets.

\begin{restatable}[]{corollary}{snapinfbk}
\label{cor:snap_inf_bk}
Given true MPDAG $G$, oracle CI tests and consistent BK, let $\hat{G}$ be the output graph of SNAP$(\infty)$-BK for targets $\mathbf{T}$. Then, SNAP$(\infty)$-BK is sound and complete over the b-possible ancestors $\mathbf{T}$, i.e.
$
\hat{G} = G|_{\text{b-PossAn}(\mathbf{T})}, 
$
and $\hat{G}$ is an MPDAG.
\end{restatable}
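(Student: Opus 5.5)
We would derive the corollary from \Cref{thm:snap_k_bk} applied with $k=|\mathbf{V}|-2$, combined with the soundness of PC-BK (\Cref{thm:pc_bk}) on an ancestrally closed subset. The plan has four steps.

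\textbf{Step 1: the final node set.} By \Cref{thm:snap_k_bk}, after the last iteration the remaining set $\hat{\mathbf{V}}$ contains $\mathbf{T}$ and is b-possibly ancestral in $G$. Hence $\text{b-PossAn}_G(\mathbf{T}) \subseteq \hat{\mathbf{V}}$.

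\textbf{Step 2: the skeleton.} We argue that the skeleton of $\hat{U}$ over $\hat{\mathbf{V}}$ equals the skeleton of $G|_{\hat{\mathbf{V}}}$. This rests on two facts.
\begin{itemize}
\item Each $X \in \hat{\mathbf{V}}$ has $Pa_D(X) \subseteq \hat{\mathbf{V}}$, since parents are b-possible ancestors in every DAG of the MPDAG.
\item Any edge removed by \Cref{alg:skeleton_bk} or \Cref{alg:missing_sepsets} is removed only after an oracle independence, so it is a true non-adjacency.
\end{itemize}
Conversely, suppose $X,Y \in \hat{\mathbf{V}}$ are non-adjacent in $D$. Lemma 3.3.9 of \cite{spirtes2000causation} gives a separating set among $Pa_D(X)$ or $Pa_D(Y)$. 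These parents stay adjacent to $X$ throughout, are never excluded by consistent BK in $\text{PossPa}_{\hat{G}}(X,\mathcal{B})$, and remain inside $\hat{\mathbf{V}}$ when pruning occurs. So at the appropriate order either \Cref{alg:skeleton_bk} (pairs without BK) or \Cref{alg:missing_sepsets} (known gaps) finds it. Pairs with adjacency BK are never tested, which is correct by consistency.

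\textbf{Step 3: orientations.} We show that the v-structures, the BK orientations and Meek's rules produce exactly $G|_{\hat{\mathbf{V}}}$.
\begin{itemize}
\item Unshielded triples within an ancestral set have the same collider status as in $D$, and the recorded sepsets are valid, so the RFCI/PC v-structure steps orient precisely the true v-structures among $\hat{\mathbf{V}}$.
\item Earlier-iteration RFCI orientations are sound by the arguments behind \Cref{thm:snap_k_bk}.
\item Applying $\mathcal{B}$ and then \texttt{Meek} gives the MPDAG of the induced structure.
\end{itemize}
The key claim is that this MPDAG coincides with the restriction of the true MPDAG $G$ to a b-possibly ancestral set. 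No orientation of $G$ inside $\hat{\mathbf{V}}$ can be forced by edges or BK outside it: Meek rules propagate along edges into a node, and nodes outside the set are not b-possible ancestors of nodes inside it. BK orientations involving nodes outside $\hat{\mathbf{V}}$ are dropped, since they only concern edges leaving the set.

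\textbf{Step 4: final pruning.} Line~\ref{line:snap:final_prune} restricts to nodes with b-possibly directed paths to $\mathbf{T}$ in $\hat{G} = G|_{\hat{\mathbf{V}}}$. Since $\hat{\mathbf{V}}$ is closed under b-possible ancestry, such paths in the restriction are exactly those in $G$, so the result is $G|_{\text{b-PossAn}(\mathbf{T})}$. Restricting an MPDAG to a b-possibly ancestral set yields an MPDAG; this would be argued via the Meek-closure characterization of \citet{perkovic2017interpreting}.

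\textbf{Main obstacle.} The hardest step is the claim in Step 3 that MPDAG restriction commutes with closure. Specifically, one must show that Meek rule R3/R4 applications and BK-induced orientations in the full graph never orient an edge inside $\hat{\mathbf{V}}$ using a node outside it. We would first try a case analysis on each Meek rule, showing that the antecedent nodes of any rule firing on an edge into $Y \in \hat{\mathbf{V}}$ are b-possible ancestors of $Y$.
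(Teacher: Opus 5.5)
Your proposal is correct and follows the paper's proof essentially step for step. The skeleton is recovered because true parents survive every pruning step and are never excluded from $\text{PossPa}_{\hat{G}}(X,\mathcal{B})$. The v-structures come out right because the final skeleton and separating sets are valid. What remains is that BK-plus-Meek closure commutes with restriction to the b-possibly ancestral set $\hat{\mathbf{V}}$; the paper isolates this as \Cref{lem:induced_mpdag} and proves it by the same rule-by-rule case analysis you propose.

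One imprecision you share with the paper: $\text{b-PossAn}_G(\mathbf{T})$ need not itself be b-possibly ancestral. In the MPDAG with $T \to W$, $W - X$, $X - T$, one has $\text{b-PossAn}_G(T)=\{T,X\}$ but $W \in \text{b-PossAn}_G(X)$. So the final ``is an MPDAG'' claim cannot rest on the restriction lemma as stated. It does follow from an easy separate fact: any induced subgraph of an MPDAG is still closed under Meek's rules and contains all compelled edges of the correspondingly induced DAG.
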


We now turn to local methods that identify and orient edges around target variables without discovering the whole CPDAG.
To find the variables adjacent to a target $T$, many local methods begin by finding its Markov blanket $MB(T)$ using existing algorithms as a sub-routine.
We show how to utilize \ac{BK} for Markov blanket discovery in \Cref{sec:mb_discovery}, where we provide two extensions for GrowShrink \citep{margaritis1999bayesian} and Total Conditioning \citep{pellet2008using} that we prove are sound and complete.   

As local methods aim to orient edges as soon as possible by exploiting independence patterns, we find that they generally do benefit from searching for separating sets corresponding to known gaps without delay.
Thus, when applying \Cref{alg:skeleton_bk} in local methods, we swap the condition $(X,Y) \notin \mathcal{B}$ at line~\ref{line:skel:skip}, which includes known gaps, with $(X *\kern-4pt-\kern-4pt* Y) \notin \mathcal{B}$, which only considers known adjacencies.

\subsection{MB-by-MB-BK}
\label{sec:mb_by_mb}

MB-by-MB \citep{wang2014discovering} is a local causal discovery method that iteratively discovers the Markov blankets and local structures around variables $X$, starting from the target $X=T$, and combines these local structures to identify the parents, children and siblings of $T$ without discovering the global structure.
MB-by-MB can directly use one of \emph{our extended versions of MB discovery} described in \cref{sec:mb_discovery}, e.g., GrowShrink-BK.
However, naively applying PC-BK instead of the original PC over the learned Markov blankets can lead to erroneous results on the local structures.
As shown in \Cref{example:mb_by_mb}, restricting the candidate separating sets only to possible parents can prevent the separation of variables that could have been otherwise separated without \ac{BK}.
Thus, we run PC-BK to discover the local structure at line~\ref{line:mb_by_mb:pc} with an adjusted definition of possible parents that only uses \ac{BK} around $X$, but ignores it for every $V \in MB(X)$, as
\begin{equation}\label{eq:mb_by_mb_posspa}
PossPa_{\hat{G}}^*(V, \mathcal{B}) =
\begin{cases}
    \text{PossPa}_{\hat{G}}(V, \mathcal{B}) & \text{if}\ V=X\\
    Adj_{\hat{G}}(V) & \text{otherwise}
\end{cases}.
\end{equation}

Similarly to SNAP($k$)-BK, MB-by-MB-BK can also apply orientations in $\mathcal{B}$ at every iteration to terminate quicker. We implement MB-by-MB-BK in \Cref{alg:mb_by_mb_bk} and show that it is sound and complete in \Cref{thm:mb_by_mb_bk} with proof in \cref{sec:proofs:mb_by_mb}. 

\begin{restatable}[]{theorem}{mbbybbk}
\label{thm:mb_by_mb_bk}
Given oracle CI tests and consistent \ac{BK}, MB-by-MB-BK is sound and complete in identifying the parents, children and siblings of the target.
\end{restatable}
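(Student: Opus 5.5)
The plan is a reduction to the soundness and completeness of the original MB-by-MB of \citet{wang2014discovering}, combined with the BK results for PC-BK (\Cref{thm:pc_bk}) and for Markov blanket discovery (\Cref{sec:mb_discovery}). Let $D$ be the true DAG and $G$ the true MPDAG. I would show three things. First, every ingredient the original procedure computes is still computed correctly by the BK version. Second, the only new operations, applying orientations from $\mathcal{B}$ and propagating them with \texttt{OrientUndirectedEdges} (\Cref{alg:meek_mb_by_mb}), are sound. Third, together these yield exactly $Pa_G(T)$, $Ch_G(T)$ and $Sib_G(T)$.

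\textbf{Step 1 (Markov blankets).} Under oracle tests and consistent BK, GrowShrink-BK returns the true $MB(X)$ by the result in \Cref{sec:mb_discovery}. Hence the WaitList and DoneList evolve as in the original algorithm, except possibly for extra early termination, which I treat in Step 3.

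\textbf{Step 2 (local structures).} This is the crux. The original proof requires the local PC run on $MB^+(X)$ to return the correct adjacencies of $X$ and the correct v-structures involving $X$.
\begin{itemize}
\item \emph{Pairs $(X,V)$.} With $PossPa^*$, candidate separating sets for $X$ are restricted to $PossPa_{\hat G}(X,\mathcal{B})$. By Lemma~3.3.9 of \citet{spirtes2000causation}, a non-adjacent pair is separated by $Pa_D(X)$ or $Pa_D(V)$. Since $\mathcal{B}$ is consistent, $Pa_D(X)\cap MB^+(X)$ is never excluded from $PossPa_{\hat G}(X,\mathcal{B})$. The tests are also run from $V$'s side with unrestricted $Adj_{\hat G}(V)$. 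Moreover, within $MB^+(X)$ the set $Pa_D(X)$ separates $X$ from any non-adjacent $V$, by the local Markov property restricted to the blanket. So the restriction never blocks a separation that the original algorithm would find.
\item \emph{Pairs $(V,W)$ with $V,W\ne X$.} Here $PossPa^*$ reduces to plain adjacency, so the search is identical to the original PC run on $MB^+(X)$. I would also check that $\mathcal{B}$ on such pairs is harmless. Known adjacencies only skip tests whose outcome would be dependence anyway. Known gaps are handled by \Cref{alg:missing_sepsets}, or by immediate search in the local variant, so a valid separating set is still recorded. \Cref{example:mb_by_mb} shows why restricting these pairs would fail: the true parents of $V$ may lie outside $MB^+(X)$, so the restriction is dropped there.
\end{itemize}
Consequently the edges and v-structures involving $X$ that are added to $\hat G$ at line~\ref{line:mb_by_mb:update} coincide with those of the original algorithm. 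This is the step I expect to be the main obstacle. The delicate point is that a v-structure $X\to V\gets W$ is decided using the separating set recorded for a pair \emph{inside} the blanket. I must argue that every recorded separating set is a genuine d-separator in $D$, so that $V\notin sepset$ correctly signals a collider. Faithfulness gives this, since each recorded set comes from an oracle independence.

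\textbf{Step 3 (orientation and termination).} Soundness of applying $\mathcal{B}$ follows from consistency of $\mathcal{B}$. Soundness of the Meek-type propagation in \Cref{alg:meek_mb_by_mb} follows from \citet{perkovic2017interpreting}: any orientation implied by the v-structures, BK and Meek's rules holds in every DAG of the restricted class. The WaitList is pruned only when a path to $T$ is blocked by directed edges. Since all such edges are correct orientations in $G$, pruning removes only nodes that cannot influence the orientation of $T$'s edges; this is the original argument of \citet{wang2014discovering}, now with more directed edges.

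For completeness, I would argue that once the loop terminates, every orientation around $T$ in $G$ is derived. The MPDAG $G$ is obtained by applying $\mathcal{B}$ and Meek's rules to the CPDAG, and the original MB-by-MB already recovers every CPDAG orientation at $T$. Additionally applying $\mathcal{B}$ plus closure under the rules restricted to the explored region then recovers the MPDAG orientations at $T$. This relies on the same locality argument as the original completeness proof: any rule firing that affects an edge at $T$ uses only edges among explored nodes.
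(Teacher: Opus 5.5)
Your Steps 1 and 2 follow the paper's route: GrowShrink-BK is exact, and with $PossPa^*$ the local structures give the same adjacencies and v-structures at $X$ as the original MB-by-MB. The gap is in Step 3, which carries the real content and which you only assert. The claim that ``any rule firing that affects an edge at $T$ uses only edges among explored nodes'', together with the claim that WaitList pruning discards only irrelevant nodes, \emph{is} the completeness statement. It is not ``the same locality argument'' as in \citet{wang2014discovering}, for three reasons. First, that argument concerns CPDAG orientations, which come only from v-structures and propagate via R1--R3. Here, orientations from $\mathcal{B}$ can sit anywhere in $T$'s undirected component, and R4 can fire. Second, BK-directed edges block paths to $T$ earlier, so MB-by-MB-BK may prune its WaitList sooner than the original would. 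The explored region is then not a superset of the original one, and you cannot transport ``the original already recovers every CPDAG orientation at $T$''. Third, your soundness argument has a similar hole. \citet{perkovic2017interpreting} justifies Meek's rules on a PDAG with the full, correct skeleton. In MB-by-MB, $\hat G$ only contains edges incident to explored nodes, so non-adjacency in $\hat G$ carries no information; this is exactly why \Cref{alg:meek_mb_by_mb} uses sepset conditions. The paper closes Step 3 by reduction. Once the Markov blankets and local structures coincide with the original, the remaining procedure is exactly Algorithm~1 of \citet{zheng2026local}: applying $\mathcal{B}$ at every iteration, \texttt{OrientUndirectedEdges}, and WaitList pruning. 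That algorithm's soundness and completeness is their Lemma~11. You need either that reduction or an actual proof of the locality and termination claims.

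There is also a smaller flaw in Step 2. You assert that $Pa_D(X)$ separates $X$ from every non-adjacent $V \in MB^+(X)$ ``by the local Markov property''. This is false when the spouse $V$ is a descendant of $X$. Take $X \to W \to U \to V \to C \gets X$: here $MB^+(X)=\{X,W,C,V\}$ and $Pa_D(X)=\emptyset$, yet $X \to W \to U \to V$ is open. Lemma~3.3.9 of \citet{spirtes2000causation} then only offers $Pa_D(V)=\{U\}$, which lies outside $MB^+(X)$. So your argument does not show that a separator is still reachable once known children of $X$ are removed from $X$'s side. You would need to show that some separator (here $\{W\}$) lies in $Adj_{\hat G}(V)$ at the time the pair is tested.
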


\subsection{LDECC-BK}
\label{sec:ldecc}
LDECC \citep{gupta2023local} is another local causal discovery algorithm with the same objective as MB-by-MB.
We implement LDECC-BK in \Cref{alg:ldecc_bk}.
It begins by finding the Markov blanket of the target $T$ and then identifies its neighbors which are not already known, as shown in \Cref{alg:localpc}.
Then, it identifies some of the children and parents of the target using the Markov blanket and the provided \ac{BK}.

After this, the algorithm iterates through the same CI tests that \Cref{alg:skeleton_bk} would perform, but tries to orient edges early using \textit{eager collider checks}.
This means that the same way as earlier BK algorithms, LDECC-BK can skip any CI test for pairs known to be adjacent by $\mathcal{B}$, and restrict candidate separating sets to subsets of possible parents.
The rest of the algorithm is identical to lines 7 to 22 of the original algorithm, where LDECC identifies the parents, children and siblings of the target.

LDECC-BK relies purely on CI relations to orient edges instead of propagating orientation rules.
Thus, as shown in \Cref{example:ldecc_not_complete}, it is unable to utilize orientation \ac{BK} that is outside of the Markov blanket of the target and not implied by the data.
This can lead to parents or children getting misclassified as siblings.
We found no way to propagate orientations without significantly changing the algorithm. 
Thus, we only show the soundness of LDECC-BK in \Cref{cor:ldecc_bk}, which follows from the soundness and completeness of Grow-Shrink-BK in \Cref{thm:gs_bk} and Thm.~5 of \citep{gupta2023local}.

\begin{restatable}[]{corollary}{ldeccbk}
\label{cor:ldecc_bk}
Given oracle CI tests and consistent \ac{BK}, LDECC-BK is sound and complete in identifying adjacencies and sound in identifying parents and children of the target.
\end{restatable}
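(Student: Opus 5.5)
The plan is to reduce the corollary to two ingredients stated earlier: the soundness and completeness of GrowShrink-BK (\Cref{thm:gs_bk}), and the soundness argument of Thm.~5 in \citep{gupta2023local} for the original LDECC. The strategy is to show that every modification LDECC-BK makes relative to LDECC preserves two properties. First, every independence LDECC-BK records, and every separating set it stores, is a genuine d-separation in $D$. Second, every pair that LDECC would separate is still separated by LDECC-BK. Once both hold, the original proof of Thm.~5 can be replayed almost verbatim.

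\textbf{Step 1: adjacencies.} By \Cref{thm:gs_bk}, the Markov blanket returned for $T$ is exactly $MB_D(T)$. For each $Y \in MB_D(T)$ I will distinguish two cases.

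\emph{Known adjacencies.} If $T *\kern-4pt-\kern-4pt* Y \in \mathcal{B}$, the pair is never tested. Since $\mathcal{B}$ is consistent with $D$, keeping this edge is correct.

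\emph{Non-adjacent spouses.} If $Y$ is a spouse of $T$ but not adjacent to it, I need a separating set among the restricted candidates $\text{PossPa}_{\hat{G}}(T,\mathcal{B})\setminus\{Y\}$ or $\text{PossPa}_{\hat{G}}(Y,\mathcal{B})\setminus\{T\}$. By Lemma 3.3.9 of \citep{spirtes2000causation}, $Pa_D(T)$ or $Pa_D(Y)$ separates the pair. A consistent $\mathcal{B}$ never marks a true parent $P$ of $T$ as $T \to P$ or $T \gap P$. Moreover, oracle tests never delete true adjacencies, so $Pa_D(T) \subseteq \text{PossPa}_{\hat{G}}(T,\mathcal{B})$ at every stage. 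Hence the order-by-order search reaches this separating set. For local use, gaps are searched without delay, so known gaps are also separated by a genuine set. Conversely, oracle tests never separate truly adjacent nodes. Together these give soundness and completeness of the neighbourhood of $T$.

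\textbf{Step 2: orientations.} Orientations come from three sources.
\begin{itemize}
\item \emph{Orientations read off from $\mathcal{B}$.} These are correct by consistency of $\mathcal{B}$.
\item \emph{Orientations from the Markov blanket.} Nodes in $MB(T)$ that are not neighbours are spouses, so the neighbour they share with $T$ must be a child of $T$. This uses only true d-separation facts.
\item \emph{Orientations from eager collider checks and lines 7--22.} These use only CI test outcomes together with stored separating sets.
\end{itemize}
Every stored separating set is a true d-separating set, by Step 1 and the oracle assumption. Therefore the collider and non-collider conclusions of Thm.~5 in \citep{gupta2023local} still hold. Those conclusions only require that a recorded $X \indep Y \mid \mathbf{S}$ implies the corresponding d-separation, and that an unshielded collider is detected whenever the middle node is outside some true separating set.

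Orientation rules are never propagated, so $\mathcal{B}$ orientations far from $T$ may be unused. That can leave an edge labelled as a sibling, which is exactly why only soundness is claimed for parents and children.

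\textbf{Main obstacle.} I expect the difficult step to be checking the eager collider checks under restricted conditioning sets. The concern is that pruning candidates to $\text{PossPa}(\cdot,\mathcal{B})$ could make LDECC-BK miss a separation that the original LDECC relied on to declare a non-collider. That would matter if the missed separation made LDECC-BK orient something wrongly rather than merely leave it unoriented. My approach is to show that any orientation LDECC-BK outputs is backed by a separating set, that this set is a true d-separator, and that it is one of the sets LDECC's own proof would accept. For this I will use the Lemma~3.3.9 parent-set argument once more.
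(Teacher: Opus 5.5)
Your top-level reduction is the paper's own one-line argument: take the exact Markov blanket from \Cref{thm:gs_bk}, then replay Thm.~5 of \citep{gupta2023local}, since $\mathcal{B}$ only removes tests or adds consistent orientations. However, two of the details you supply are wrong as written. In Step~1 you analyse the wrong subroutine. $Ne(T)$ is produced by LocalPC-BK (\Cref{alg:localpc}), which tests $T$ against $V$ conditioning on \emph{all} subsets of $MB(T)\setminus\{V\}$ and uses $\mathcal{B}$ only to skip known adjacencies. There is no $\text{PossPa}(\cdot,\mathcal{B})$ restriction there, and the loop at line~\ref{line:ldecc:fullpc} ignores every statement involving $T$. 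Applied to the actual routine, your Lemma~3.3.9 argument covers only spouses that are non-descendants of $T$. For $T\to A\to B\to Y$, $T\to C\gets Y$, the set $Pa_D(T)=\emptyset$ does not separate, and $Pa_D(Y)=\{B\}\not\subseteq MB(T)=\{A,C,Y\}$, yet $\{A\}$ does separate. The correct argument is simpler. Skipping pairs that $\mathcal{B}$ declares adjacent is harmless under consistency, so LocalPC-BK run on the exact $MB(T)$ returns the same neighbours as the original routine, whose correctness is covered by Thm.~5 of \citep{gupta2023local}. For a self-contained version, take $\bigl(Pa_D(T)\cup(Ch_D(T)\cap An_D(Y))\cup Pa_D(Ch_D(T)\cap An_D(Y))\bigr)\setminus\{T\}$. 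This set is the full neighbourhood of $T$ in the moral graph of $D$ restricted to $An_D(\{T,Y\})$, and all its elements are ancestors of $T$ or $Y$. It excludes $Y$ (else there is a cycle) and lies in $MB(T)\setminus\{Y\}$, so by the moralisation criterion it d-separates $T$ and $Y$.

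In Step~2, the claim that a neighbour shared by $T$ and a spouse must be a child of $T$ is false. With $N\to T$, $N\to Y$, $T\to C\gets Y$, the node $N$ is shared but is a parent. UCChildren orients $C$ only through the unshielded-collider test against $sepset(T,Y)$. Its soundness transfers because it receives the same exact $MB(T)$, $Ne(T)$ and genuine separating sets as in LDECC. Finally, your ``main obstacle'' concerns completeness only. The body of the loop at line~\ref{line:ldecc:fullpc} fires only on independences actually found, and each of these is a true d-separation. $Ne(T)$ is exact, and at an unshielded triple any genuine separating set decides collider versus non-collider correctly. So separations missed because of the $\text{PossPa}$ pruning can only leave edges unoriented. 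Your second preservation property (every pair LDECC separates is still separated) is therefore not needed for the soundness claim on parents and children.
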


\begin{figure}[t]
\centering
\begin{minipage}{0.5\textwidth}
\begin{algorithm}[H]
\caption{LDECC-BK}
\label{alg:ldecc_bk}
\footnotesize{
\begin{algorithmic}[1]
    \Require{Variables $\mathbf{V}$, target $T \in \mathbf{V}$, BK $\mathcal{B}$}
    \State{$MB \gets \texttt{Grow-Shrink-BK}(\mathbf{V}, T, \mathcal{B})$} (\Cref{alg:gs_bk})\label{line:ldecc:gs}
    \State{$Ne, sepset \gets \texttt{LocalPC-BK}(T, MB, \mathcal{B})$ (\Cref{alg:localpc})}\label{line:ldecc:localpc}
    \State{$Ch \gets \texttt{UCChildren}(MB, Ne, sepset)$ (Fig.~4 of \citep{gupta2023local})}\label{line:ldecc:uccchildren}
    \State{$Ch \gets Ch \cup \{V \in \mathbf{V} | T \to V \in \mathcal{B}\}$}
    \State{$Pa\gets\{V \in \mathbf{V} | V \to T \in \mathcal{B}\}$}
    \State{$U \gets$ fully connected undirected graph over $\mathbf{V}$}
    \For{$A \indep B | \mathbf{S} \in \texttt{SkelStep-BK}(U,0..,\mathcal{B})$, $A,B \neq T$}\label{line:ldecc:fullpc}
        \State{Identical to lines 7 to 22 of Fig.~5 of \citep{gupta2023local})}
    \EndFor
    \State{$Sib \gets Ne \setminus(Pa \cup Ch)$}
    \Ensure{$Pa$, $Ch$, $Sib$}
\end{algorithmic}
}
\end{algorithm}
\end{minipage}
\hfill
\begin{minipage}{0.47\textwidth}
\begin{algorithm}[H]
\caption{LOAD-BK}
\label{alg:load_bk}
\footnotesize{
\begin{algorithmic}[1]
    \Require{Variables $\mathbf{V}$, targets $X,Y \in \mathbf{V}$, BK $\mathcal{B}$}
    \If{$X \to Y \in \mathcal{B}$}
        \State{$T,O \gets X,Y$}
    \ElsIf{$Y \to X \in \mathcal{B}$}
        \State{$T,O \gets Y,X$}
    \Else
        \State{Step 1 of LOAD (Alg.~3. in \citep{schubert2026local})}
    \EndIf
    \State{Steps 2 and 3 of LOAD (Alg.~3. in \citep{schubert2026local})}
    \State{Steps 4 of LOAD with BK on adjacencies and orientations over nodes not projected out (Alg.~3. in \citep{schubert2026local})}
    \Ensure{Causal relation, identifiability, adjustment sets}
\end{algorithmic}
}
\end{algorithm}

\end{minipage}
\end{figure}

\subsection{LOAD-BK}
LOAD \citep{schubert2026local} uses local causal discovery algorithms, such as MB-by-MB, to first discover the causal relations of a pair of targets, and then the optimal adjustment set in a modified forbidden projection, to estimate the causal effect between them.
Thus, LOAD-BK can utilize our framework indirectly through MB-by-MB-BK.
However, BK on gaps might not hold anymore under the modified forbidden projection.
Thus, we utilize BK only on adjacencies and orientations in the last step of LOAD-BK.
Furthermore, given a target pair $X$ and $Y$, if $X \to Y$ or $Y \to X$ is in $\mathcal{B}$, then LOAD-BK can skip its first step that aims to determine the causal relation between the targets, as shown in \Cref{alg:load_bk}.
We show that LOAD-BK is sound and complete \Cref{thm:load_bk} and provide proof in \Cref{sec:proofs:load}.

\begin{restatable}[]{theorem}{loadbk}
\label{thm:load_bk}
Given oracle CI tests and consistent \ac{BK}, LOAD-BK is sound and complete in finding optimal adjustment set for a pair of target variables.
\end{restatable}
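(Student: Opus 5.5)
The plan is to reduce the statement to the soundness and completeness of the original LOAD algorithm (Thm.~in \citep{schubert2026local}), together with \Cref{thm:mb_by_mb_bk}. We check step by step that every modification in \Cref{alg:load_bk} preserves the input/output guarantees that the original proof relies on. The original LOAD proof is modular. Step 1 determines the causal relation between $X$ and $Y$ (which is the possible cause $T$ and which the outcome $O$). Steps 2--3 compute local structures via a sound and complete local discovery routine. Step 4 runs discovery in the modified forbidden projection to read off the optimal adjustment set. It therefore suffices to show that each BK-augmented step returns exactly what the corresponding original step returns under oracle CI tests.

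\textbf{Step 1.} If $X \to Y \in \mathcal{B}$, consistency of $\mathcal{B}$ with $D$ gives $X \to Y$ in $D$. Hence $X \in An_D(Y)$ and $Y \notin An_D(X)$ by acyclicity, so the original Step 1 would also assign $T,O \gets X,Y$; the case $Y \to X \in \mathcal{B}$ is symmetric. Otherwise we run the original Step 1 unchanged. In either case the assignment of $(T,O)$ coincides with what LOAD would compute.

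\textbf{Steps 2--3.} These steps use local discovery of parents, children and siblings (MB-by-MB). We replace it by MB-by-MB-BK, and by \Cref{thm:mb_by_mb_bk} the output equals the true local structure in the MPDAG. We must argue that identifying the local structure in the MPDAG rather than the CPDAG does not break LOAD's identifiability and adjustment criteria. This holds because those criteria are phrased in terms of (b-)possible ancestry, and the MPDAG is a refinement consistent with $D$. Any additional orientations only shrink the set of compatible DAGs while still containing $D$, so an adjustment set that is valid and optimal for all DAGs in the MPDAG remains correct. The optimality/identifiability results of \citet{perkovic2017interpreting} for MPDAGs supply the required graphical criteria.

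\textbf{Step 4.} This is the main obstacle. Discovery here happens over the modified forbidden projection, a graph over a subset of variables in which some nodes are projected out. We must show that BK on adjacencies and orientations among non-projected nodes stays consistent with the projected graph, while gaps may not, which is why gaps are discarded. I would first prove a small lemma: if $A \to B$ in $D$ and both are kept, then $A \to B$ also appears in the projection. The projection construction only adds edges (via marginalized directed paths) among kept nodes and never removes a direct edge between them; orientations are preserved since a direct causal edge remains direct. With this lemma, the BK restricted to kept nodes is consistent with the projected true graph. We can then invoke \Cref{thm:pc_bk} (or \Cref{thm:mb_by_mb_bk}, depending on which routine Step 4 uses) to conclude that the learned structure equals the true MPDAG of the projection. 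LOAD's optimal adjustment set is then read off correctly, which yields both soundness and completeness.
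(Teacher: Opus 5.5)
\textbf{Gap in Step 4.} Your Step 4 stops where the paper's real work begins. Your lemma shows that adjacency and orientation BK among the kept nodes survives the projection. That is only the first paragraph of the paper's \Cref{lemma:possde}, and two further facts are needed.
\begin{enumerate}
\item To invoke \Cref{thm:pc_bk} or \Cref{thm:mb_by_mb_bk} on the marginal over $\mathbf{V}\setminus\mathbf{D}$, that marginal must be causally sufficient. So the modified forbidden projection must contain no bidirected edges. Otherwise the distribution is not Markov and faithful to any DAG, and those theorems say nothing.
\item To conclude that the adjustment set is ``read off correctly'', you need $Oset_G(T,O) = Pa_{G^{T,O}}(O)\setminus\{T\}$ for the MPDAG $G$.
\end{enumerate}
Neither follows from LOAD's original Lem.~4.4. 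With orientation BK, the projected set $\mathbf{D} = PossDe_G(T)\setminus\{T,O\}$ is computed in the MPDAG and can be strictly smaller than in the CPDAG, so the projection is a different graph.

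The paper proves exactly these facts:
\begin{itemize}
\item A bidirected edge $W_i \leftrightarrow W_j$ would force both endpoints to be possible descendants of a node of $\mathbf{D}$. They would then lie in $\mathbf{D}$ unless they are $T$ and $O$, and a confounding path $T \gets \cdots \to O$ through possible descendants of $T$ is impossible.
\item Marginalising the extra nodes $\Delta$ beyond the forbidden set adds no new edges, by a Meek-rule argument.
\item Prop.~22 of \citet{witte2020efficient}, together with the fact that $\Delta$ consists of non-ancestors of $O$, gives the adjustment-set identity.
\end{itemize}
This is the missing idea in your proof.

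\textbf{Wrong invariant in the overall reduction.} You announce that it suffices to show ``each BK-augmented step returns exactly what the corresponding original step returns.'' If that held, LOAD-BK could never identify more than LOAD, and completeness relative to the true MPDAG would fail. Your own Steps 2--3 already depart from this framing.
\begin{itemize}
\item In Step 1, take $X \to Y \in \mathcal{B}$ but $X - Y$ in the CPDAG. The original Step 1 cannot determine the relation, so your claim that it ``would also assign $T,O \gets X,Y$'' is false. The assignment is correct because it matches the MPDAG, not because it matches LOAD.
\item Your Steps 2--3 argument (extra orientations shrink the class, so valid sets stay valid) gives soundness only. Completeness needs that LOAD's local criteria, applied to the MPDAG local structure returned by MB-by-MB-BK, are complete for MPDAG identifiability and optimality.
\end{itemize}
The paper handles Steps 1--3 briefly, by citing LOAD's Lem.~4.1, Cor.~4.1 and Thm.~2 of \citet{fang2022local} with the MB-by-MB-BK output. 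You should likewise compare against the true MPDAG throughout, not against LOAD's CPDAG-based output.
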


\section{Related Work}
\label{sec:related_work}

We focus on BK about direct causal relations, i.e., whether an edge exists and how it is oriented in the true causal DAG.
\citet{meek1995causal} has done pioneering work on orientation \ac{BK}, and developed the orientation rules also utilized in this work, shown in \Cref{alg:meek}, where the fourth rule is specifically designed for extra orientations arising from \ac{BK}.
This theory was further developed by \citet{perkovic2017interpreting} who show how to maximally orient edges and obtain a MPDAG from an available CPDAG.

\citet{JMLR:v26:23-0624} consider background knowledge on the direction of causal relations such as orientations, and also ancestral and non-ancestral relations, but assume an already given CPDAG.
\citet{zheng2026local} explore how to use BK on orientations, as well as (non-) ancestral relations to refine the local structures discovered by the MB-by-MB algorithm, which may allow MB-by-MB to terminate earlier, but they do not explicitly avoid unnecessary CI tests like \Cref{alg:mb_by_mb_bk}.
We discuss how ancestral BK can be utilized by our approach in \Cref{sec:ancestral_bk}.

A special type of non-ancestral BK is given by ordered tiers of variables arising in, e.g., longitudinal data, which can be represented by tiered MPDAGs \citep{bang2023we}.
\citet{bang2024improving} develop the tiered PC (tPC) algorithm to discover tiered MPDAGs and exploit tiered BK during the discovery process by excluding conditioning sets containing future-tier variables, which stems from the same insight by which we exclude all variables that are not possible parents.
Tiered BK can be also utilized by causal discovery algorithms allowing for latent variables and multiple datasets \citep{pmlr-v108-andrews20a, bang2025constraint} as well as in score-based causal discovery \citep{larsen2025score}.

In the case when latent confounding could be present, \citet{WANG2023103964} consider local \ac{BK}, which fully orients all edges adjacent to any node, and introduce additional orientation rules to utilize them.
\citet{venkateswaran2024towards} extend this to general orientation \ac{BK}, and provide a generalization for the orientation rules of \citet{meek1995causal} for latent variables.
These works utilize \ac{BK} subsequently to causal discovery instead of during the process.
We present preliminary results on integrating our approach into the FCI algorithm in \Cref{sec:latent_vars}.

Logic-based causal discovery methods, e.g., \citep{claassen2011logical,hyttinen2014,JMLR:v16:triantafillou15a,magliacane2016ancestral,mooij2020jci}, can seamlessly integrate rich background knowledge in the form of logical rules or constraints on causal edges or paths in the causal discovery process, including in challenging settings like latent confounding, selection bias, cycles or data from multiple contexts. In particular, \citet{hyttinen2014, magliacane2016ancestral, mooij2020jci} formulate causal discovery as a constraint optimization problem, 
where weights can be added to each constraint, which allows to model noisy or imperfect knowledge.
Since these methods are based on constraints, they can leverage \ac{BK} during the discovery process to reduce the search space. On the other hand, these methods can only scale up to tens of variables, as opposed to the methods we consider, which can scale up to hundreds or even a few thousand variables.

\citet{pmlr-v186-gonzales22a} utilize imperfect experts that provide a corresponding confidence probability for each orientation BK.
In particular, their algorithm employs a similar strategy to tPC, where descendants of a variable according to the BK are excluded from separating sets.
\citet{chen2025mitigating} also consider possibly imperfect orientation BK and develop methods to detect and self-correct when they lead to structural inconsistencies during causal discovery.

There have been several proposals on how to integrate large language models (LLMs) into causal discovery as imperfect experts \citep{wan2025large}.
\citet{long2023causal} and \citet{takayama2025integrating} use LLMs to refine the learned graph after the discovery process. \citet{kampani2024llm} and \citet{10872923} use LLMs for prior knowledge to initialize causal discovery, but learn a DAG through heuristics or continuous optimization, instead of a MPDAG with theoretical guarantees.
\citet{hiremath2025guess2graph} use \ac{BK} during causal discovery to reorder CI tests instead of avoiding them.

\section{Experiments}
\label{sec:experiments}

We evaluate the effects of utilizing \ac{BK} for all methods described in \Cref{sec:method}.
For all algorithms except PC, we randomly sample a treatment-outcome target pair where the treatment is an ancestor of the outcome in the true graph.
Since we focus on \ac{BK} about direct causal relations, we do not provide this ancestral relation as \ac{BK}.
Following \citet{schubert2026local}, we extend MB-by-MB-BK and LDECC-BK to also determine the ancestral relation of the targets, denoted as MB-by-MB$^+$-BK and LDECC$^+$-BK.

We report the number of CI tests as an implementation independent measure of computational efficiency, while execution time provides a practical metric.
Since not all methods output a causal graph, we only report structural metrics for PC-BK and SNAP($\infty$)-BK in \Cref{sec:shd} and instead measure the intervention distance \citep{schubert2025snap}, which compares the true causal effect between the targets and the causal effect estimated based on the output of the algorithm using 10000 new data points \citep{gradu2022valid}.
This aligns better with the purpose of many of these algorithms, i.e., downstream causal effect estimation.
We repeat each experiment 100 times and average results over 90 with the best and worst 5 results removed.
We provide additional details on libraries and hardware used for all experiments in \Cref{sec:experimental_details}.

\paragraph{Synthetic Data.}
We evaluate all algorithms on data sampled from random Erdős–Rényi causal models with 100 nodes, expected degree of 3, and maximum degree of 10.
We evaluate on three different data domains and corresponding CI tests:
we use oracle d-separation CI tests to demonstrate the theoretical advantages of our framework, and also run experiments with 10000 samples of linear Gaussian and binary data.
For the linear Gaussian data we sample edge weights from the range of $[-3, 0.5] \cup [0.5,3]$ and use standard Gaussian deviation, and employ Fisher-Z CI tests.
The binary data is generated from a conditional probability table based on the causal graph with uniform probabilities, and we use $G^2$ CI tests.
We use a significance level of $\alpha = 0.01$ or all CI tests.

For each experiment, we sample \ac{BK} between a fraction of all variable pairs, according to the \emph{rate} of BK.
If two variables are non-adjacent and selected for BK, then we register it as a known gap.
Otherwise, it can be registered in $\mathcal{B}$ either as an adjacency or as an orientation with uniform probability.
This means that a BK rate of 1 provides information about all variable pairs, but not necessarily all orientations, and hence may not identify the true causal DAG.

Results in \Cref{fig:over_bk} demonstrate that our strategy of utilizing \ac{BK} can improve both computational requirements and output quality over most data types and algorithms.
The first two rows show the number of CI tests and running time, and indicate that \ac{BK} can consistently reduce the cost of all causal discovery algorithms (see \Cref{fig:bk_type_snap} for SNAP($\infty$)-BK and \Cref{fig:bk_type_load} for LOAD-BK).
These improvements can reach an order of magnitude for PC-BK and LDECC$^+$-BK.

The third row of \Cref{fig:over_bk} shows our results for the intervention distance.
Orientation BK can constrain the equivalence class of the true graph, and thus the set of possible adjustment sets, and sometimes turn a causal effect fully identifiable.
This benefit can be clearly observed from the third row of \Cref{fig:over_bk} with oracle d-separation CI tests, where all methods obtain more accurate estimates of the true causal effect.
On linear Gaussian data, similar benefits can be observed for PC-BK, SNAP($\infty$)-BK and LDECC$^+$-BK.
On binary data, the intervention distance is quite low throughout \ac{BK} rates.

\begin{figure*}[ht]
    \centering
    \hspace{7mm}\includegraphics[width=.87\linewidth]{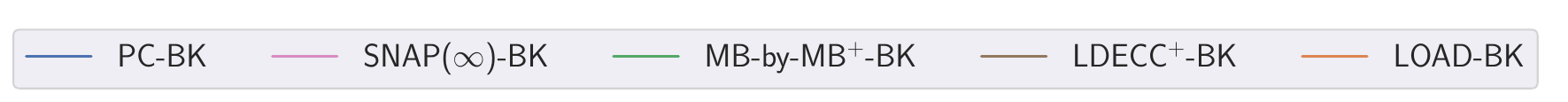}
    \begin{subfigure}[b]{.95\linewidth}
        \begin{subfigure}[b]{0.32\linewidth}
            \centering $\quad\quad$d-separation tests
        \end{subfigure}
        \begin{subfigure}[b]{0.32\linewidth}
            \centering $\quad$ Fisher-Z tests
        \end{subfigure}
        \begin{subfigure}[b]{0.32\linewidth}
            \centering $G^2$ tests
        \end{subfigure}
        \includegraphics[width=\linewidth]{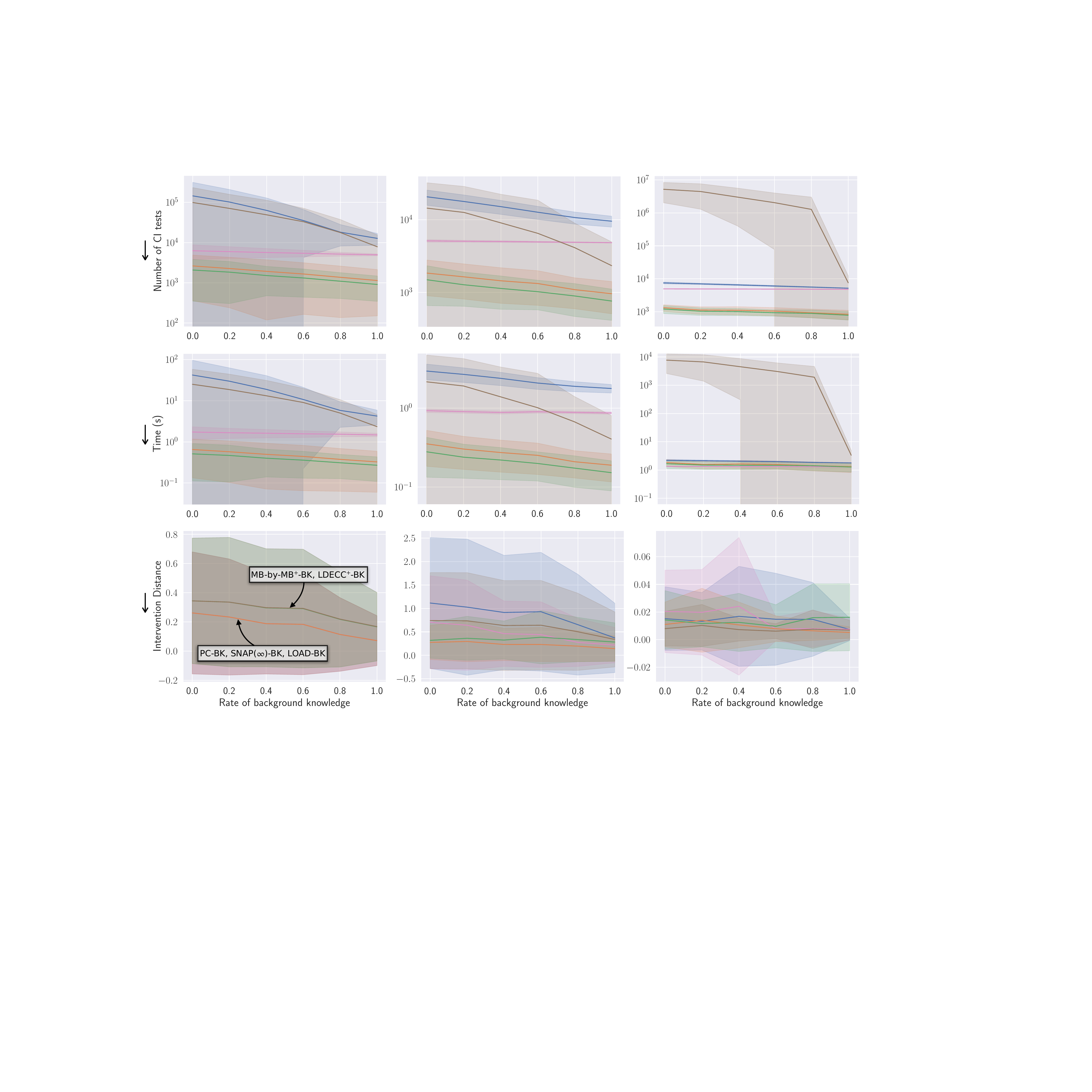}
    \end{subfigure}
    \caption{Results over rate of \ac{BK} between variable pairs.
    The shadow area denotes standard deviation. 
    }
    \label{fig:over_bk}
\end{figure*}

\paragraph{Ablations.}
In \Cref{sec:bk_types_abl} we study which type of BK benefits different algorithms the most.
Our results show that PC-BK and SNAP($\infty$)-BK benefit much more from known gaps to reduce computational costs than local methods, which benefit the most from orientation BK.
As expected, orientation \ac{BK} is also the main catalyst for better intervention distance for all models and data modalities.

We consider \emph{imperfect \ac{BK}} in \Cref{sec:bk_error} by introducing errors into 10\% of adjacencies and orientations, and 5\% of gaps to keep errors balanced between \ac{BK} types, and show the results in \Cref{fig:fixed_error}.
While MB-by-MB$^+$-BK, and LDECC$^+$-BK still maintain decreasing computational costs as \ac{BK} increases despite errors, SNAP($\infty$)-BK becomes increasingly slow at higher \ac{BK} rates.
Intervention distances increase with more imperfect \ac{BK} due to the growing number of incorrect orientations, accompanied by larger variance.
We also vary the error rate from 10\%, 20\% and 30\% for a fixed BK rate of 0.2.
Our results in \Cref{fig:bk_over_error} show that for most methods the computational cost and intervention distance increase with higher error rates, although MB-by-MB$^+$-BK, which gets slightly cheaper.

In \Cref{sec:post_proc}, we compare our approach to post-processing, such as \citep{perkovic2017interpreting} or \citep{zheng2026local}.
Our theoretical results imply that the outputs of our methods and these post-processing approaches are the same with perfect BK and oracle d-separation tests.
Additionally, our empirical results in \Cref{fig:post_proc} and \Cref{fig:post_proc_noisy} show that as the rate of BK increases our approach reduces the number of CI tests and running time compared to post-processing, without compromising causal effect estimation, even with erroneous BK.

\paragraph{Realistic Data.}
We also evaluate all algorithms on data generated by the MAGIC-NIAB, ECOLI70, ARTH150 and ANDES networks from the \texttt{bnlearn} repository \citep{scutari2010bnlearn}.
Our results in \Cref{sec:semi-synth} show the same benefits of integrating \ac{BK} as for synthetic data.
The computational costs of all methods improve over all settings, especially on linear Gaussian data.
Without any background knowledge, the PC(-BK) and LDECC$^+$(-BK) algorithms took prohibitively long to run on the ARTH150 network without any BK, demonstrating that utilizing \ac{BK} during discover can be necessary in some scenarios.

\section{Conclusions}
\label{sec:conclusions}

In this work, we develop methods to integrate background knowledge about direct causal relations \emph{during} the causal discovery process. This allows us to improve not only the accuracy and identifiability of the learned causal relations, but also the scalability of current methods. We implement our insights in two global causal discovery methods, three local causal discovery methods and two Markov blanket discovery algorithms from literature. 
We prove that our extensions are sound in terms of propagating the background knowledge, and that all, except LDECC-BK, are also complete. Our empirical results show that integrating background knowledge improves substantially the scalability of these methods.
We plan to extend our preliminary results for handling latent confounding and selection bias in future work, and consider theoretically grounded approaches for noisy background knowledge.

\clearpage
\newpage
\bibliographystyle{unsrtnat}
\bibliography{bibliography}

\begin{thebibliography}{56}
\providecommand{\natexlab}[1]{#1}
\providecommand{\url}[1]{\texttt{#1}}
\expandafter\ifx\csname urlstyle\endcsname\relax
  \providecommand{\doi}[1]{doi: #1}\else
  \providecommand{\doi}{doi: \begingroup \urlstyle{rm}\Url}\fi

\bibitem[Pearl(2009)]{pearl2009causality}
Judea Pearl.
\newblock \emph{Causality}.
\newblock Cambridge university press, 2009.

\bibitem[Glymour et~al.(2019)Glymour, Zhang, and Spirtes]{glymour2019review}
Clark Glymour, Kun Zhang, and Peter Spirtes.
\newblock Review of causal discovery methods based on graphical models.
\newblock \emph{Frontiers in Genetics}, 10, 2019.
\newblock ISSN 1664-8021.
\newblock \doi{10.3389/fgene.2019.00524}.

\bibitem[Perkovi\'c et~al.(2018)Perkovi\'c, Textor, Kalisch, and Maathuis]{perkovic2018complete}
Emilija Perkovi\'c, Johannes Textor, Markus Kalisch, and Marloes~H. Maathuis.
\newblock Complete graphical characterization and construction of adjustment sets in markov equivalence classes of ancestral graphs.
\newblock \emph{Journal of Machine Learning Research}, 18\penalty0 (220):\penalty0 1--62, 2018.
\newblock URL \url{http://jmlr.org/papers/v18/16-319.html}.

\bibitem[Runge et~al.(2019)Runge, Bathiany, Bollt, Camps-Valls, Coumou, Deyle, Glymour, Kretschmer, Mahecha, Mu{\~n}oz-Mar{\'\i}, et~al.]{runge2019inferring}
Jakob Runge, Sebastian Bathiany, Erik Bollt, Gustau Camps-Valls, Dim Coumou, Ethan Deyle, Clark Glymour, Marlene Kretschmer, Miguel~D Mahecha, Jordi Mu{\~n}oz-Mar{\'\i}, et~al.
\newblock Inferring causation from time series in earth system sciences.
\newblock \emph{Nature communications}, 10\penalty0 (1):\penalty0 2553, 2019.

\bibitem[Maasch et~al.(2025)Maasch, Gan, Chen, Orfanoudaki, Akpinar, and Wang]{maasch2025discrimination}
Jacqueline Maasch, Kyra Gan, Violet Chen, Agni Orfanoudaki, Nil-Jana Akpinar, and Fei Wang.
\newblock Local causal discovery for structural evidence of direct discrimination.
\newblock \emph{Proceedings of the AAAI Conference on Artificial Intelligence}, 39\penalty0 (18):\penalty0 19349--19357, Apr. 2025.
\newblock \doi{10.1609/aaai.v39i18.34130}.
\newblock URL \url{https://ojs.aaai.org/index.php/AAAI/article/view/34130}.

\bibitem[Watson and Silva(2022)]{watson2022causal}
David~S Watson and Ricardo Silva.
\newblock Causal discovery under a confounder blanket.
\newblock In \emph{Uncertainty in Artificial Intelligence}, pages 2096--2106. PMLR, 2022.

\bibitem[Maasch et~al.(2024)Maasch, Pan, Gupta, Kuleshov, Gan, and Wang]{maasch2024local}
Jacqueline R. M.~A. Maasch, Weishen Pan, Shantanu Gupta, Volodymyr Kuleshov, Kyra Gan, and Fei Wang.
\newblock Local discovery by partitioning: Polynomial-time causal discovery around exposure-outcome pairs.
\newblock In \emph{The 40th Conference on Uncertainty in Artificial Intelligence}, 2024.

\bibitem[Schubert et~al.(2025)Schubert, Claassen, and Magliacane]{schubert2025snap}
M{\'a}ty{\'a}s Schubert, Tom Claassen, and Sara Magliacane.
\newblock Snap: Sequential non-ancestor pruning for targeted causal effect estimation with an unknown graph.
\newblock In Yingzhen Li, Stephan Mandt, Shipra Agrawal, and Emtiyaz Khan, editors, \emph{Proceedings of The 28th International Conference on Artificial Intelligence and Statistics}, volume 258 of \emph{Proceedings of Machine Learning Research}, pages 3340--3348. PMLR, 03--05 May 2025.
\newblock URL \url{https://proceedings.mlr.press/v258/schubert25a.html}.

\bibitem[Wang et~al.(2014)Wang, Zhou, Zhao, and Geng]{wang2014discovering}
Changzhang Wang, You Zhou, Qiang Zhao, and Zhi Geng.
\newblock Discovering and orienting the edges connected to a target variable in a dag via a sequential local learning approach.
\newblock \emph{Computational statistics \& data analysis}, 77:\penalty0 252--266, 2014.

\bibitem[Gupta et~al.(2023)Gupta, Childers, and Lipton]{gupta2023local}
Shantanu Gupta, David Childers, and Zachary~Chase Lipton.
\newblock Local causal discovery for estimating causal effects.
\newblock In \emph{Conference on Causal Learning and Reasoning}, pages 408--447. PMLR, 2023.

\bibitem[Schubert et~al.(2026)Schubert, Claassen, and Magliacane]{schubert2026local}
M{\'a}ty{\'a}s Schubert, Tom Claassen, and Sara Magliacane.
\newblock Local causal discovery for statistically efficient causal inference.
\newblock In \emph{The 29th International Conference on Artificial Intelligence and Statistics}, 2026.
\newblock URL \url{https://openreview.net/forum?id=FlWl20PFd7}.

\bibitem[Perkovi\'c et~al.(2017)Perkovi\'c, Kalisch, and Maathuis]{perkovic2017interpreting}
Emilija Perkovi\'c, Markus Kalisch, and Marloes~H. Maathuis.
\newblock Interpreting and using cpdags with background knowledge.
\newblock In Gal Elidan, Kristian Kersting, and Alexander Ihler, editors, \emph{Proceedings of the Thirty-Third Conference on Uncertainty in Artificial Intelligence, {UAI} 2017, Sydney, Australia, August 11-15, 2017}. {AUAI} Press, 2017.
\newblock URL \url{http://auai.org/uai2017/proceedings/papers/120.pdf}.

\bibitem[Fang et~al.(2025)Fang, Zhao, Liu, and He]{JMLR:v26:23-0624}
Zhuangyan Fang, Ruiqi Zhao, Yue Liu, and Yangbo He.
\newblock On the representation of pairwise causal background knowledge and its applications in causal inference.
\newblock \emph{Journal of Machine Learning Research}, 26\penalty0 (229):\penalty0 1--73, 2025.
\newblock URL \url{http://jmlr.org/papers/v26/23-0624.html}.

\bibitem[Bang and Didelez(2023)]{bang2023we}
Christine~W Bang and Vanessa Didelez.
\newblock Do we become wiser with time? on causal equivalence with tiered background knowledge.
\newblock In \emph{Uncertainty in Artificial Intelligence}, pages 119--129. PMLR, 2023.

\bibitem[Magliacane et~al.(2016)Magliacane, Claassen, and Mooij]{magliacane2016ancestral}
Sara Magliacane, Tom Claassen, and Joris~M Mooij.
\newblock Ancestral causal inference.
\newblock \emph{Advances in Neural Information Processing Systems}, 29, 2016.

\bibitem[Greenewald et~al.(2019)Greenewald, Katz, Shanmugam, Magliacane, Kocaoglu, Boix~Adsera, and Bresler]{NEURIPS2019_5ee56059}
Kristjan Greenewald, Dmitriy Katz, Karthikeyan Shanmugam, Sara Magliacane, Murat Kocaoglu, Enric Boix~Adsera, and Guy Bresler.
\newblock Sample efficient active learning of causal trees.
\newblock In H.~Wallach, H.~Larochelle, A.~Beygelzimer, F.~d\textquotesingle Alch\'{e}-Buc, E.~Fox, and R.~Garnett, editors, \emph{Advances in Neural Information Processing Systems}, volume~32. Curran Associates, Inc., 2019.
\newblock URL \url{https://proceedings.neurips.cc/paper_files/paper/2019/file/5ee5605917626676f6a285fa4c10f7b0-Paper.pdf}.

\bibitem[Wan et~al.(2025)Wan, Lu, Wu, Hu, and Li]{wan2025large}
Guangya Wan, Yunsheng Lu, Yuqi Wu, Mengxuan Hu, and Sheng Li.
\newblock Large language models for causal discovery: Current landscape and future directions.
\newblock In James Kwok, editor, \emph{Proceedings of the Thirty-Fourth International Joint Conference on Artificial Intelligence, {IJCAI-25}}, pages 10687--10695. International Joint Conferences on Artificial Intelligence Organization, 8 2025.
\newblock \doi{10.24963/ijcai.2025/1186}.
\newblock URL \url{https://doi.org/10.24963/ijcai.2025/1186}.
\newblock Survey Track.

\bibitem[Feng et~al.(2025)Feng, Qu, Tandon, Li, Kang, and Haffari]{feng2025reliability}
Tao Feng, Lizhen Qu, Niket Tandon, Zhuang Li, Xiaoxi Kang, and Gholamreza Haffari.
\newblock On the reliability of large language models for causal discovery.
\newblock In \emph{Proceedings of the 63rd Annual Meeting of the Association for Computational Linguistics (Volume 1: Long Papers)}, pages 9565--9590, 2025.

\bibitem[Wang et~al.(2020)Wang, Hu, Yuan, Ye, Chen, Cui, Zhang, and Qian]{9095242}
Wei Wang, Gangqiang Hu, Bo~Yuan, Shandong Ye, Chao Chen, Yayun Cui, Xi~Zhang, and Liting Qian.
\newblock Prior-knowledge-driven local causal structure learning and its application on causal discovery between type 2 diabetes and bone mineral density.
\newblock \emph{IEEE Access}, 8:\penalty0 108798--108810, 2020.
\newblock \doi{10.1109/ACCESS.2020.2994936}.

\bibitem[Shen et~al.(2020)Shen, Ma, Vemuri, and Simon]{shen2020challenges}
Xinpeng Shen, Sisi Ma, Prashanthi Vemuri, and Gyorgy Simon.
\newblock Challenges and opportunities with causal discovery algorithms: application to alzheimer’s pathophysiology.
\newblock \emph{Scientific reports}, 10\penalty0 (1):\penalty0 2975, 2020.

\bibitem[Foraita et~al.(2024)Foraita, Witte, B{\"o}rnhorst, Gwozdz, Pala, Lissner, Lauria, Reisch, Moln{\'a}r, De~Henauw, Moreno, Veidebaum, Tornaritis, Pigeot, and Didelez]{foraita2024longitudinal}
Ronja Foraita, Janine Witte, Claudia B{\"o}rnhorst, Wencke Gwozdz, Valeria Pala, Lauren Lissner, Fabio Lauria, Lucia~A. Reisch, D{\'e}nes Moln{\'a}r, Stefaan De~Henauw, Luis Moreno, Toomas Veidebaum, Michael Tornaritis, Iris Pigeot, and Vanessa Didelez.
\newblock A longitudinal causal graph analysis investigating modifiable risk factors and obesity in a european cohort of children and adolescents.
\newblock \emph{Scientific Reports}, 14\penalty0 (1):\penalty0 6822, 2024.
\newblock \doi{10.1038/s41598-024-56721-y}.
\newblock URL \url{https://doi.org/10.1038/s41598-024-56721-y}.

\bibitem[Hyttinen et~al.(2014)Hyttinen, Eberhardt, and J{\"a}rvisalo]{hyttinen2014}
Antti Hyttinen, Frederick Eberhardt, and Matti J{\"a}rvisalo.
\newblock Constraint-based causal discovery: Conflict resolution with answer set programming.
\newblock In Jin Tian and \{Nevin L.\} Zhang, editors, \emph{Proceedings of the 30th Conference on Uncertainty in Artificial Intelligence (UAI 2014)}, pages 340--349, International, 2014. AUAI Press.
\newblock Conference on Uncertainty in Artificial Intelligence ; Conference date: 01-01-1800.

\bibitem[Triantafillou and Tsamardinos(2015)]{JMLR:v16:triantafillou15a}
Sofia Triantafillou and Ioannis Tsamardinos.
\newblock Constraint-based causal discovery from multiple interventions over overlapping variable sets.
\newblock \emph{Journal of Machine Learning Research}, 16\penalty0 (66):\penalty0 2147--2205, 2015.
\newblock URL \url{http://jmlr.org/papers/v16/triantafillou15a.html}.

\bibitem[Mooij et~al.(2020)Mooij, Magliacane, and Claassen]{mooij2020jci}
Joris~M. Mooij, Sara Magliacane, and Tom Claassen.
\newblock Joint causal inference from multiple contexts.
\newblock \emph{Journal of Machine Learning Research}, 21\penalty0 (99):\penalty0 1--108, 2020.
\newblock URL \url{http://jmlr.org/papers/v21/17-123.html}.

\bibitem[Spirtes et~al.(2000)Spirtes, Glymour, and Scheines]{spirtes2000causation}
Peter Spirtes, Clark Glymour, and Richard Scheines.
\newblock \emph{Causation, Prediction, and Search}.
\newblock MIT Press, 2nd edition, 2000.

\bibitem[Meek(1995)]{meek1995causal}
Christopher Meek.
\newblock Causal inference and causal explanation with background knowledge.
\newblock In \emph{Proceedings of the Eleventh Conference on Uncertainty in Artificial Intelligence}, UAI'95, page 403–410, San Francisco, CA, USA, 1995. Morgan Kaufmann Publishers Inc.
\newblock ISBN 1558603859.

\bibitem[Kalisch et~al.(2012)Kalisch, M\"achler, Colombo, Maathuis, and B\"uhlmann]{kalisch2012pcalg}
Markus Kalisch, Martin M\"achler, Diego Colombo, Marloes~H. Maathuis, and Peter B\"uhlmann.
\newblock Causal inference using graphical models with the {R} package {pcalg}.
\newblock \emph{Journal of Statistical Software}, 47\penalty0 (11):\penalty0 1--26, 2012.
\newblock \doi{10.18637/jss.v047.i11}.

\bibitem[Zheng et~al.(2024)Zheng, Huang, Chen, Ramsey, Gong, Cai, Shimizu, Spirtes, and Zhang]{zheng2024causal}
Yujia Zheng, Biwei Huang, Wei Chen, Joseph Ramsey, Mingming Gong, Ruichu Cai, Shohei Shimizu, Peter Spirtes, and Kun Zhang.
\newblock Causal-learn: Causal discovery in python.
\newblock \emph{Journal of Machine Learning Research}, 25\penalty0 (60):\penalty0 1--8, 2024.

\bibitem[Ramsey et~al.(2018)Ramsey, Zhang, Glymour, Romero, Huang, Ebert-Uphoff, Samarasinghe, Barnes, and Glymour]{ramsey2018tetrad}
Joseph~D Ramsey, Kun Zhang, Madelyn Glymour, Ruben~Sanchez Romero, Biwei Huang, Imme Ebert-Uphoff, Savini Samarasinghe, Elizabeth~A Barnes, and Clark Glymour.
\newblock Tetrad—a toolbox for causal discovery.
\newblock \emph{8th international workshop on climate informatics}, 2018.

\bibitem[Ankan and Textor(2024)]{ankan2024pgmpy}
Ankur Ankan and Johannes Textor.
\newblock pgmpy: A python toolkit for bayesian networks.
\newblock \emph{Journal of Machine Learning Research}, 25\penalty0 (265):\penalty0 1--8, 2024.
\newblock URL \url{http://jmlr.org/papers/v25/23-0487.html}.

\bibitem[Colombo et~al.(2012)Colombo, Maathuis, Kalisch, and Richardson]{colombo2012learning}
Diego Colombo, Marloes~H Maathuis, Markus Kalisch, and Thomas~S Richardson.
\newblock Learning high-dimensional directed acyclic graphs with latent and selection variables.
\newblock \emph{The Annals of Statistics}, pages 294--321, 2012.

\bibitem[Margaritis and Thrun(1999)]{margaritis1999bayesian}
Dimitris Margaritis and Sebastian Thrun.
\newblock Bayesian network induction via local neighborhoods.
\newblock \emph{Advances in neural information processing systems}, 12, 1999.

\bibitem[Pellet and Elisseeff(2008)]{pellet2008using}
Jean-Philippe Pellet and Andr{\'e} Elisseeff.
\newblock Using markov blankets for causal structure learning.
\newblock \emph{Journal of Machine Learning Research}, 9\penalty0 (7), 2008.

\bibitem[Zheng et~al.(2026)Zheng, Liu, and He]{zheng2026local}
Qingyuan Zheng, Yue Liu, and Yangbo He.
\newblock { Local Causal Discovery with Background Knowledge }.
\newblock \emph{IEEE Transactions on Pattern Analysis \& Machine Intelligence}, pages 1--12, February 2026.
\newblock ISSN 1939-3539.
\newblock \doi{10.1109/TPAMI.2026.3667409}.
\newblock URL \url{https://doi.ieeecomputersociety.org/10.1109/TPAMI.2026.3667409}.

\bibitem[Bang et~al.(2024)Bang, Witte, Foraita, and Didelez]{bang2024improving}
Christine~W Bang, Janine Witte, Ronja Foraita, and Vanessa Didelez.
\newblock Improving finite sample performance of causal discovery by exploiting temporal structure.
\newblock \emph{arXiv preprint arXiv:2406.19503}, 2024.

\bibitem[Andrews et~al.(2020)Andrews, Spirtes, and Cooper]{pmlr-v108-andrews20a}
Bryan Andrews, Peter Spirtes, and Gregory~F. Cooper.
\newblock On the completeness of causal discovery in the presence of latent confounding with tiered background knowledge.
\newblock In Silvia Chiappa and Roberto Calandra, editors, \emph{Proceedings of the Twenty Third International Conference on Artificial Intelligence and Statistics}, volume 108 of \emph{Proceedings of Machine Learning Research}, pages 4002--4011. PMLR, 26--28 Aug 2020.
\newblock URL \url{https://proceedings.mlr.press/v108/andrews20a.html}.

\bibitem[Bang and Didelez(2025)]{bang2025constraint}
Christine~W Bang and Vanessa Didelez.
\newblock Constraint-based causal discovery with tiered background knowledge and latent variables in single or overlapping datasets.
\newblock \emph{Proceedings of Machine Learning Research}, 275:\penalty0 1--31, 2025.

\bibitem[Larsen et~al.(2025)Larsen, Ekstr{\o}m, and Petersen]{larsen2025score}
Tobias~Ellegaard Larsen, Claus~Thorn Ekstr{\o}m, and Anne~Helby Petersen.
\newblock Score-based causal discovery with temporal background information.
\newblock \emph{arXiv preprint arXiv:2502.06232}, 2025.

\bibitem[Wang et~al.(2023)Wang, Qin, and Zhou]{WANG2023103964}
Tian-Zuo Wang, Tian Qin, and Zhi-Hua Zhou.
\newblock Sound and complete causal identification with latent variables given local background knowledge.
\newblock \emph{Artificial Intelligence}, 322:\penalty0 103964, 2023.
\newblock ISSN 0004-3702.
\newblock \doi{https://doi.org/10.1016/j.artint.2023.103964}.
\newblock URL \url{https://www.sciencedirect.com/science/article/pii/S0004370223001108}.

\bibitem[Venkateswaran and Perkovi{\'c}(2024)]{venkateswaran2024towards}
Aparajithan Venkateswaran and Emilija Perkovi{\'c}.
\newblock Towards complete causal explanation with expert knowledge.
\newblock \emph{arXiv preprint arXiv:2407.07338}, 2024.

\bibitem[Claassen and Heskes(2011)]{claassen2011logical}
Tom Claassen and Tom Heskes.
\newblock A logical characterization of constraint-based causal discovery.
\newblock In \emph{Proceedings of the Twenty-Seventh Conference on Uncertainty in Artificial Intelligence}, pages 135--144, 2011.

\bibitem[Gonzales et~al.(2022)Gonzales, Journe, and Mabrouk]{pmlr-v186-gonzales22a}
Christophe Gonzales, Axel Journe, and Ahmed Mabrouk.
\newblock A hybrid algorithm for learning causal networks using uncertain experts’ knowledge.
\newblock In Antonio Salmerón and Rafael Rumí, editors, \emph{Proceedings of The 11th International Conference on Probabilistic Graphical Models}, volume 186 of \emph{Proceedings of Machine Learning Research}, pages 241--252. PMLR, 05--07 Oct 2022.
\newblock URL \url{https://proceedings.mlr.press/v186/gonzales22a.html}.

\bibitem[Chen et~al.(2025)Chen, Ban, Wang, Lyu, and Chen]{chen2025mitigating}
Lyuzhou Chen, Taiyu Ban, Xiangyu Wang, Derui Lyu, and Huanhuan Chen.
\newblock Mitigating prior errors in causal structure learning: A resilient approach via bayesian networks.
\newblock \emph{IEEE Transactions on Pattern Analysis and Machine Intelligence}, 2025.

\bibitem[Long et~al.(2023)Long, Pich{\'e}, Zantedeschi, Schuster, and Drouin]{long2023causal}
Stephanie Long, Alexandre Pich{\'e}, Valentina Zantedeschi, Tibor Schuster, and Alexandre Drouin.
\newblock Causal discovery with language models as imperfect experts.
\newblock In \emph{ICML 2023 Workshop on Structured Probabilistic Inference $\{$$\backslash$\&$\}$ Generative Modeling}, 2023.

\bibitem[Takayama et~al.(2025)Takayama, Okuda, Pham, Ikenoue, Fukuma, Shimizu, and Sannai]{takayama2025integrating}
Masayuki Takayama, Tadahisa Okuda, Thong Pham, Tatsuyoshi Ikenoue, Shingo Fukuma, Shohei Shimizu, and Akiyoshi Sannai.
\newblock Integrating large language models in causal discovery: A statistical causal approach.
\newblock \emph{Transactions on Machine Learning Research}, 2025.

\bibitem[Kampani et~al.(2024)Kampani, Hidary, van~der Poel, Ganahl, and Miao]{kampani2024llm}
Shiv Kampani, David Hidary, Constantijn van~der Poel, Martin Ganahl, and Brenda Miao.
\newblock Llm-initialized differentiable causal discovery.
\newblock In \emph{Causality and Large Models@ NeurIPS 2024}, 2024.

\bibitem[Wang et~al.(2025)Wang, Ban, Chen, Lyu, Zhu, and Chen]{10872923}
Xiangyu Wang, Taiyu Ban, Lyuzhou Chen, Derui Lyu, Qinrui Zhu, and Huanhuan Chen.
\newblock Large-scale hierarchical causal discovery via weak prior knowledge.
\newblock \emph{IEEE Transactions on Knowledge and Data Engineering}, 37\penalty0 (5):\penalty0 2695--2711, 2025.
\newblock \doi{10.1109/TKDE.2025.3537832}.

\bibitem[Hiremath et~al.(2025)Hiremath, Janzing, Faller, Bl{\"o}baum, Kirschbaum, Kasiviswanathan, and Gan]{hiremath2025guess2graph}
Sujai Hiremath, Dominik Janzing, Philipp Faller, Patrick Bl{\"o}baum, Elke Kirschbaum, Shiva~Prasad Kasiviswanathan, and Kyra Gan.
\newblock From guess2graph: When and how can unreliable experts safely boost causal discovery in finite samples?
\newblock \emph{arXiv preprint arXiv:2510.14488}, 2025.

\bibitem[Gradu et~al.(2025)Gradu, Zrnic, Wang, and Jordan]{gradu2022valid}
Paula Gradu, Tijana Zrnic, Yixin Wang, and Michael~I. Jordan.
\newblock Valid inference after causal discovery.
\newblock \emph{Journal of the American Statistical Association}, 120\penalty0 (550):\penalty0 1127--1138, 2025.
\newblock URL \url{https://EconPapers.repec.org/RePEc:taf:jnlasa:v:120:y:2025:i:550:p:1127-1138}.

\bibitem[Scutari(2010)]{scutari2010bnlearn}
Marco Scutari.
\newblock Learning bayesian networks with the {bnlearn} {R} package.
\newblock \emph{Journal of Statistical Software}, 35\penalty0 (3):\penalty0 1--22, 2010.
\newblock \doi{10.18637/jss.v035.i03}.

\bibitem[Lauritzen(1996)]{lauritzen1996graphical}
Steffen~L Lauritzen.
\newblock \emph{Graphical models}, volume~17.
\newblock Clarendon Press, 1996.

\bibitem[Guo et~al.(2023)Guo, Perkovi{\'c}, and Rotnitzky]{guo2023variable}
F~Richard Guo, Emilija Perkovi{\'c}, and Andrea Rotnitzky.
\newblock Variable elimination, graph reduction and the efficient g-formula.
\newblock \emph{Biometrika}, 110\penalty0 (3):\penalty0 739--761, 2023.

\bibitem[Witte et~al.(2020)Witte, Henckel, Maathuis, and Didelez]{witte2020efficient}
Janine Witte, Leonard Henckel, Marloes~H. Maathuis, and Vanessa Didelez.
\newblock On efficient adjustment in causal graphs.
\newblock \emph{Journal of Machine Learning Research}, 21\penalty0 (246):\penalty0 1--45, 2020.
\newblock URL \url{http://jmlr.org/papers/v21/20-175.html}.

\bibitem[Fang et~al.(2022)Fang, Liu, Geng, Zhu, and He]{fang2022local}
Zhuangyan Fang, Yue Liu, Zhi Geng, Shengyu Zhu, and Yangbo He.
\newblock A local method for identifying causal relations under markov equivalence.
\newblock \emph{Artificial Intelligence}, 305:\penalty0 103669, 2022.
\newblock ISSN 0004-3702.
\newblock \doi{https://doi.org/10.1016/j.artint.2022.103669}.
\newblock URL \url{https://www.sciencedirect.com/science/article/pii/S0004370222000091}.

\bibitem[Csardi and Nepusz(2006)]{csardi2006igraph}
Gabor Csardi and Tamas Nepusz.
\newblock The igraph software package for complex network research.
\newblock \emph{InterJournal}, Complex Systems:\penalty0 1695, 2006.

\bibitem[Hagberg et~al.(2008)Hagberg, Swart, and S~Chult]{hagberg2008exploring}
Aric Hagberg, Pieter Swart, and Daniel S~Chult.
\newblock Exploring network structure, dynamics, and function using networkx.
\newblock Technical report, Los Alamos National Lab.(LANL), Los Alamos, NM (United States), 2008.

\end{thebibliography}

\newpage
\appendix
\onecolumn
\crefalias{section}{appendix}
\crefalias{subsection}{appendix}

\section{Markov blanket discovery with Background Knowledge} 
\label{sec:mb_discovery}

Markov blanket discovery algorithms input a target variable $T$ and return the set of nodes in their Markov blanket, denoted as $MB(T)$.
We can use background knowledge to initialize the MB of $T$ in the beginning of MB discovery algorithms with all variables adjacent to $T$ according to $\mathcal{B}$, as well as known spouses, i.e. all variables $V$ such that $T \to V'$ and $V \to V'$ are in $\mathcal{B}$, as described in \Cref{alg:poss-mb}.
This allows the algorithms to skip testing whether these variables are in the MB of $T$.
We implement this for the Grow-Shrink \citep{margaritis1999bayesian} algorithm in \Cref{alg:gs_bk} and Total Conditioning \citep{pellet2008using} in \Cref{alg:tc_bk}.

\begin{algorithm}
\caption{PossMB($\mathbf{V}, T, \mathcal{B})$}
\footnotesize{
\label{alg:poss-mb}
\begin{algorithmic}[1]
    \Require{Variables $\mathbf{V}$, target $T \in \mathbf{V}$, background knowledge $\mathcal{B}$}
    \State{$PossMB(T) \gets \{V \in \mathbf{V} | V - T \in \mathcal{B}\}$}
    \State{$PossMB(T) \gets PossMB(T) \cup \{V \in \mathbf{V} | V \to T \in \mathcal{B} \}$}
    \State{$PossMB(T) \gets PossMB(T) \cup \{V \in \mathbf{V} | V \gets T \in \mathcal{B}\}$}
    \State{$PossMB(T) \gets PossMB(T) \cup \{V \in \mathbf{V} | \exists V' \in \mathbf{V}: T \to V' \in \mathcal{B} \wedge V'\gets V \in \mathcal{B}\}$}
    \Ensure{Initial Markov blanket of $T: MB(T)$ given $\mathcal{B}$}
\end{algorithmic}
}
\end{algorithm}

\begin{algorithm}
\caption{Grow-Shrink-BK}
\footnotesize{
\label{alg:gs_bk}
\begin{algorithmic}[1]
    \Require{Variables $\mathbf{V}$, target $T \in \mathbf{V}$, background knowledge $\mathcal{B}$}
    \State{MB(T) $\gets$ PossMB($\mathbf{V}, T, \mathcal{B}$)}
    \While{$\exists V \in \mathbf{V} \setminus(\{T\} \cup MB(T)): T \dep V | MB(T)$}\label{line:gs:grow}
        \State{$MB(T) \gets MB(T) \cup \{V\}$}
    \EndWhile
    \While{$\exists V \in MB(T) : T \indep V | MB(T)$}
        \State{$MB(T) \gets MB(T) \setminus \{V\}$}
    \EndWhile
    \Ensure{Markov blanket of $T: MB(T)$}
\end{algorithmic}
}
\end{algorithm}

\begin{restatable}[]{theorem}{gsbk}
\label{thm:gs_bk}
Given oracle CI tests and consistent background knowledge $\mathcal{B}$, \Cref{alg:gs_bk} is sound and complete in finding the true Markov blanket of a target $T$.
\end{restatable}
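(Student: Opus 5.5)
The plan is to reduce the claim to the classical correctness argument for Grow-Shrink \citep{margaritis1999bayesian}, with a different initialization. Two ingredients carry it: the initialization from \Cref{alg:poss-mb} returns a subset of the true Markov blanket $MB_D(T)$, and the Grow/Shrink correctness argument never uses the empty initial set. Throughout we use the Markov and faithfulness assumptions, which make the oracle CI statements in $p$ equivalent to d-separations in $D$.

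\textbf{Step 1 (initialization is sound).} We show $PossMB(T)\subseteq MB_D(T)$. Since $\mathcal{B}$ is consistent with $D$, there are three cases.
\begin{itemize}
\item If $V-T$, $V\to T$ or $T\to V$ is in $\mathcal{B}$, then $V$ is adjacent to $T$ in $D$, so $V$ is a parent or a child of $T$.
\item If $T\to V'\in\mathcal{B}$ and $V\to V'\in\mathcal{B}$, then $V'$ is a child of $T$ and $V$ is a parent of that child. So $V$ is a spouse of $T$ (and $V\neq T$, since $V$ is not in its own blanket).
\end{itemize}
Hence every initialized node lies in $MB_D(T)$.

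\textbf{Step 2 (grow phase).} We show that when the grow loop terminates, the current set $\mathbf{M}$ satisfies $MB_D(T)\subseteq \mathbf{M}$. At termination, $T\indep V\mid \mathbf{M}$ for every $V\notin \mathbf{M}\cup\{T\}$. Suppose some $W\in MB_D(T)\setminus \mathbf{M}$ exists. We derive a contradiction with the standard argument: in a faithful DAG, a member of the Markov blanket cannot be d-separated from $T$ by any set that excludes it.
\begin{itemize}
\item If $W$ is a parent or child of $T$, adjacency alone gives $T\dep W\mid \mathbf{M}$.
\item If $W$ is a spouse through a common child $C$, split on whether $C\in\mathbf{M}$. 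If $C\in \mathbf{M}$, the collider path $T\to C\gets W$ is open given $\mathbf{M}$, so $T\dep W\mid\mathbf{M}$. If $C\notin\mathbf{M}$, then $C$ is a child of $T$ outside $\mathbf{M}$, which is already a dependent node by the previous bullet.
\end{itemize}
Either way the loop condition still holds, a contradiction. Termination is immediate because $\mathbf{M}$ strictly grows inside the finite set $\mathbf{V}$. The only role of the initialization here is that $\mathbf{M}$ starts at some set and only grows; a nonempty start does not harm the argument.

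\textbf{Step 3 (shrink phase).} Write $\mathbf{M}$ for the superset of $MB_D(T)$ produced by Step 2. Removing a node $V\in \mathbf{M}$ with $T\indep V\mid \mathbf{M}$ (oracle tests re-evaluated on the updated $\mathbf{M}$) never removes a true blanket member. Any $V\in MB_D(T)$ is dependent on $T$ given $\mathbf{M}$ by the same argument as above: $V$ is either adjacent to $T$ or a spouse whose common child is in $\mathbf{M}$, since that child is itself a blanket member and was never removed. Conversely, every $V\in\mathbf{M}\setminus MB_D(T)$ satisfies $T\indep V\mid \mathbf{M}$, because the local Markov property gives $T\indep \mathbf{V}\setminus(MB_D(T)\cup\{T\})\mid MB_D(T)$. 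Weak union then yields $T\indep V\mid \mathbf{M}\setminus\{V\}$ whenever $MB_D(T)\subseteq \mathbf{M}\setminus\{V\}$. Therefore the shrink loop removes exactly the false positives and outputs $MB_D(T)$, which is soundness and completeness together.

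\textbf{Main obstacle.} I expect the main obstacle to be a notational mismatch in the shrink phase. The loop tests $T\indep V\mid MB(T)$ with $V$ itself in the conditioning set, whereas the meaningful test is conditioning on $MB(T)\setminus\{V\}$. I would interpret the test as the latter, matching the original Grow-Shrink. Once that reading is fixed, the remaining care is a simple invariant: nodes placed into the blanket by $\mathcal{B}$ are true members, so they are never removed and never need to be tested. This needs only Step 1.
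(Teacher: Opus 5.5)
Your proof is correct, but it takes a different route from the paper.

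\textbf{The paper's route.} It uses a simulation argument. The grow phase of the original Grow-Shrink leaves the order of additions unspecified, so the paper picks a run of the original algorithm whose first additions are exactly $PossMB(T)$. From that point on the two executions coincide, and soundness and completeness are inherited from \citet{margaritis1999bayesian}. This is shorter and delegates all d-separation reasoning to the original proof.

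However, it genuinely needs consistency of $\mathcal{B}$ so that $PossMB(T)$ can be the opening moves of an original run. Its justification, that these nodes ``are always dependent on $T$'', also holds only for neighbors. A spouse $V$ can be added only after the known common child $V'$ (which enters $PossMB(T)$ via the known edge $T \to V'$) is already in the set. So the simulated ordering must place the known children before the spouses.

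\textbf{Your route.} You re-derive Grow-Shrink correctness from an arbitrary starting set using two invariants:
\begin{itemize}
\item after the grow phase, $MB_D(T)\subseteq\mathbf{M}$;
\item during the shrink phase this inclusion is preserved, and every false positive remains removable.
\end{itemize}
This sidesteps the ordering subtlety. It also makes explicit that the shrink test must condition on $MB(T)\setminus\{V\}$, which the pseudocode misstates.

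Your argument in fact proves more than you claim. Neither Step 2 nor Step 3 uses anything about the contents of the initial set, apart from excluding $T$ itself, as you correctly do. So with oracle tests the output equals $MB_D(T)$ even if $\mathcal{B}$ is inconsistent, because wrongly initialized nodes are pruned in the shrink phase. Step 1 is therefore not needed for correctness of the output. Its role is to guarantee that the nodes placed by $\mathcal{B}$ survive the shrink phase, i.e.\ that the knowledge actually saves work.

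\textbf{A minor wording point.} The statement $T\indep \mathbf{V}\setminus(MB_D(T)\cup\{T\})\mid MB_D(T)$ is the Markov-blanket consequence of the global Markov property, not the local Markov property. Extracting the single-variable independence also uses decomposition alongside weak union.
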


\begin{proof}
Background knowledge is utilized in \Cref{alg:poss-mb}, where variables known by background knowledge to be in the Markov blanket of $T$ by definition (neighbors and spouses of $T$) are added to $MB(T)$.
These variables would also be added to $MB(T)$ by the original Grow-Shrink algorithm (Fig.~2 of \citep{margaritis1999bayesian}) starting at line~\ref{line:gs:grow}, as they are always dependent on $T$.
Since the Grow-Shrink algorithm does not specify an order for considering variables in the grow phase, we can assume an ordering where it starts with exactly the variables that are added to $MB(T)$ in the first four lines of \Cref{alg:gs_bk}.
Then from this point on, the execution of the two algorithms are equivalent.
Then, the soundness and completeness of \Cref{alg:gs_bk} is given by the soundness and completeness of the original Grow-Shrink algorithm as shown by \citet{margaritis1999bayesian}.
\end{proof}

\begin{algorithm}
\caption{Total Conditioning-BK}
\footnotesize{
\label{alg:tc_bk}
\begin{algorithmic}[1]
    \Require{Variables $\mathbf{V}$, target $T \in \mathbf{V}$, background knowledge $\mathcal{B}$}
    \State{MB(T) $\gets$ PossMB($\mathbf{V}, T, \mathcal{B}$)}
    
    \For{$V \in \mathbf{V} \setminus(\{T\} \cup MB(T))$}\label{line:tc:loop}
        \If{$T \dep V | \mathbf{V} \setminus \{T,V\}$}
            \State{$MB(T) \gets MB(T) \cup \{V\}$}
        \EndIf
    \EndFor
    \Ensure{Markov blanket of $T: MB(T)$}
\end{algorithmic}}
\end{algorithm}

\begin{restatable}[]{theorem}{tcbk}
\label{thm:tc_bk}
Given oracle CI tests and consistent background knowledge $\mathcal{B}$, \Cref{alg:tc_bk} is sound and complete in finding the true Markov blanket of a target $T$.
\end{restatable}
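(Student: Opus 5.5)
The plan is to mirror the argument for \Cref{thm:gs_bk} and reduce Total Conditioning-BK to the original Total Conditioning algorithm of \citet{pellet2008using}. That algorithm is known to be sound and complete under Markov and faithfulness. It decides membership of each $V \neq T$ independently: $V \in MB(T)$ if and only if $T \dep V \mid \mathbf{V} \setminus \{T,V\}$. Because each decision depends only on the pair $(T,V)$, and not on the order of the loop or on the current content of $MB(T)$, it suffices to check that every variable is classified correctly.

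The first step handles the variables returned by \Cref{alg:poss-mb}. Since $\mathcal{B}$ is consistent with $D$, any $V$ with $V - T$, $V \to T$ or $T \to V$ in $\mathcal{B}$ is adjacent to $T$ in $D$. Any $V$ with $T \to V'$ and $V \to V'$ in $\mathcal{B}$ is a parent of a child of $T$. By definition, all of these lie in $MB_D(T)$.

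It remains to confirm that the original algorithm would also include them. I will do this with the standard d-connection argument, not by appealing to faithfulness abstractly. If $V$ is adjacent to $T$, no set separates them. If $V$ is a spouse through a common child $V'$, then conditioning on $\mathbf{V}\setminus\{T,V\}$ includes $V'$, which activates the collider path $T \to V' \gets V$. So $T \dep V \mid \mathbf{V}\setminus\{T,V\}$ by faithfulness. This means initializing with PossMB only skips tests whose outcome is already determined to be ``include''. Hence the set of nodes not in $MB_D(T)$ is never added through background knowledge, which gives soundness of the initialization.

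The second step covers the remaining variables. For $V \notin PossMB(T)$, line~\ref{line:tc:loop} performs exactly the same test as the original algorithm, and the outcome does not depend on which other variables were already added. So these variables are classified exactly as in the original Total Conditioning. Combining the two steps, the output equals the output of the original algorithm, which is $MB_D(T)$ by \citet{pellet2008using}; this gives both soundness and completeness.

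The only delicate point is the spouse case. There I must make sure that the collider $V'$ is in the conditioning set and that no other path blocking matters; one active path suffices for d-connection. Beyond that, the argument is a direct equivalence of executions.
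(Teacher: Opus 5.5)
Your proposal is correct and follows essentially the same route as the paper: nodes returned by PossMB lie in $MB_D(T)$ because $\mathcal{B}$ is consistent with $D$, and every remaining node is classified by the total-conditioning test, which is sound and complete by Property 7 of \citet{pellet2008using}. Your extra d-connection check, showing that the skipped tests would have returned dependence, is not needed. The paper argues directly rather than through equivalence of executions, but your check is correct and does no harm.
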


\begin{proof}
Background knowledge is utilized in \Cref{alg:poss-mb}, where variables known by background knowledge to be in the Markov blanket of $T$ by definition (neighbors and spouses of $T$) are added to $MB(T)$.
The rest of the algorithm adds all variables $V \in \mathbf{V}$ to $MB(T)$ if and only if $T \dep V | \mathbf{V} \setminus \{T,V\}$, which is a sound and complete criterion as shown by Property 7 of \citep{pellet2008using}.
\end{proof}

\section{Additional Algorithms}
\label{sec:algorithms}

\Cref{alg:meek} implements the orientation rules of \citet{meek1995causal}.

\begin{algorithm}[H]
\caption{Meek \citep{meek1995causal}}
\label{alg:meek}
\footnotesize
\begin{algorithmic}[1]
\Require{Partially directed acyclic graph $\hat{G}$}
\Repeat
    \If{$Z \to X - Y$ in $\hat{G}$ and $Z \notin Adj_{\hat{G}}(Y)$}
        \State Orient $X - Y$ as $X \to Y$ in $\hat{G}$
    \EndIf
    
    \If{$X \to Z \to Y$, and $X - Y$ in $\hat{G}$}
        \State Orient $X - Y$ as $X \to Y$ in $\hat{G}$
    \EndIf
    
    \If{$V \to Y \gets Z, V - X - Z$ and $X - Y$ in $\hat{G}$, and $V \notin Adj_{\hat{G}}(Z)$}
        \State Orient $X - Y$ as $X \to Y$ in $\hat{G}$
    \EndIf

    \If{$X - Y, X - V \to Z$ and $X - Z \to Y$ in $\hat{G}$, and $V \notin Adj_{\hat{G}}(Y)$}
        \State Orient $X - Y$ as $X \to Y$ in $\hat{G}$
    \EndIf
\Until{no edges can be oriented}
\Ensure{Completed partially directed acyclic graph $\hat{G}$}
\end{algorithmic}
\end{algorithm}

\Cref{alg:meek_mb_by_mb} shows the OrientUndirectedEdges procedure from the original MB-by-MB algorithm in \citep{wang2014discovering}, with the addition of the fourth Meek rule.

\begin{algorithm}[H]
\caption{OrientUndirectedEdges \citep{wang2014discovering}}
\label{alg:meek_mb_by_mb}
\footnotesize
\begin{algorithmic}[1]
\Require{Partially directed acyclic graph $\hat{G}$}
\Repeat
    \If{$Z \to X - Y$ in $\hat{G}$ and $\exists\ sepset(Z,Y)$ s.t. $X \in sepset(Z,Y)$}
        \State Orient $X - Y$ as $X \to Y$ in $\hat{G}$
    \EndIf
    \If{$X \to Z \to Y$ and $X - Y$ in $\hat G$}
        \State Orient $X - Y$ as $X \to Y$ in $\hat{G}$
    \EndIf
    \If{$X - Y$ and $X - V \to Z$ and $X - Z \to Y$ in $\hat{G}$ and $\exists\ sepset(X,Y)$ s.t. $V \in sepset(X,Y)$}
        \State Orient $V - Z$ as $V \to Z$ in $\hat{G}$
    \EndIf
    \If{$X - Y$ and $X - V \to Z$ and $X - Z \to Y$ in $\hat{G}$, and $V \notin Adj_{\hat{G}}(Y)$}
        \State Orient $X - Y$ as $X \to Y$ in $\hat{G}$
    \EndIf
\Until{no edges can be oriented}
\Ensure{Completed partially directed acyclic graph $\hat{G}$}
\end{algorithmic}
\end{algorithm}

Algorithms~\ref{alg:pc-v} and \ref{alg:rfci-v} correspond to orienting v-structures as done in the PC algorithm \citep{spirtes2000causation} and the RFCI algorithm \citep{colombo2012learning} algorithms. Both algorithms use the meta-symbol $*$ as a wildcard to denote both edge tails and arrowheads. This allows edges to be oriented as bi-directed.

\begin{algorithm}
\centering
\caption{VstructPC: Orienting v-structures in the PC algorithm \citep{spirtes2000causation}}
\label{alg:pc-v}
\begin{algorithmic}[1]
    \Require{Skeleton $\hat{U}$, separating sets $sepset$}
    \State{$\hat{G} \gets \hat{U}$}
    \For{$X - Z - Y$ in $\hat{G}$ such that $X \notin Adj_{\hat{G}}(Y)$}
        \If{$Z \notin sepset(X,Y)$}
            \State Orient $X \starleft-\starright Z \starleft-\starright Y$ as $X \starleft\to Z \gets\starright Y$ in $\hat{G}$
        \EndIf
    \EndFor
    \Ensure{Partially oriented DAG $\hat{G}$}
\end{algorithmic}
\end{algorithm}

\begin{algorithm}
\centering
\caption{VstructRFCI: Orienting v-structures in the RFCI algorithm \citep{colombo2012learning}}
\label{alg:rfci-v}
\begin{algorithmic}[1]
    \Require{Skeleton $\hat{U}$, separating sets $sepset$}
    \State $M \gets \{(X,Z,Y) \text{ such that } X - Z - Y$ in $\hat{\mathbf{E}}$ \text{ and } $X \notin Adj_{\hat{U}}(Y)\}$
    \State $L \gets \{\}$
    \Repeat
        \State $X,Z,Y \gets$ choose an unshielded triple from $M$
        \If{$Z \notin sepset(X,Y)$}\label{line:rfciv:sepset}
            \If{$X \dep Z | sepset(X,Y)$ and $Z \dep Y | sepset(X,Y)$}\label{line:rfciv:deptest}
                \State $L \gets L \cup \{(X,Y,Z)\}$ 
            \Else
                \For{$V \in \{X,Y\}$}
                    \If{$V \indep Z | sepset(X,Y)$}
                        \State $\mathbf{S} \gets sepset(X,Y)$
                        \State done $\gets$ False
                        \While{not done}
                            \State done $\gets$ True
                            \For{$S \in \mathbf{S}$}
                                \If{$V \indep Z | \mathbf{S} \setminus \{S\}$}
                                    \State $\mathbf{S} \gets \mathbf{S} \setminus \{S\}$
                                    \State done $\gets$ False
                                    \State \textbf{break}
                                \EndIf
                            \EndFor
                        \EndWhile
                        \State $sepset(V,Z) \gets sepset(Z,V) \gets \mathbf{S}$
                        \State $M \gets M \cup$ all triangles $(\min(V,Z),\cdot,\max(V,Z))$ in $\hat{U}$ 
                        \State $M \gets M \setminus$ all triples in $M$ of the form $(V,Z,\cdot), (Z,V,\cdot), (\cdot,V,Z)$ and $(\cdot,Z,V)$
                        \State $L \gets L \setminus$ all triples in $L$ of the form $(V,Z,\cdot), (Z,V,\cdot), (\cdot,V,Z)$ and $(\cdot,Z,V)$
                        \State Delete the edge $V - Z$ from $\hat{U}$
                    \EndIf
                \EndFor
            \EndIf
        \EndIf
    \Until{$M$ is empty}
    \State$\hat{G} \gets \hat{U}$
    \For{$(X, Z, Y) \in L$}
        \State Orient $X \starleft-\starright Z \starleft-\starright Y$ as $X \starleft\to Z \gets\starright Y$ in $\hat{G}$
    \EndFor
\Ensure{Partially oriented DAG $\hat{G}$ and separating sets $sepset$}
\end{algorithmic}
\end{algorithm}

We implement LocalPC-BK in \Cref{alg:localpc} according to Sec~A.1.1 in \citep{schubert2026local} by conditioning on all subsets of the Markov blanket when trying to separate variables.

\begin{algorithm}
\centering
\caption{LocalPC-BK}
\label{alg:localpc}
\begin{algorithmic}[1]
    \Require{Target $T$, Markov blanket $MB(T)$ and background knowledge $\mathcal{B}$}
    \State{$Ne(T) \gets MB(T)$}
    \State{$sepset \gets \emptyset$}
    \For{$i \in 0,\dots,|MB(T)|:$}
        \For{$V \in Ne(X)$ s.t. $V *\kern-3pt-\kern-3pt* \notin \mathcal{B}$}\label{line:ldecc:known_nb}
            \For{$\mathbf{S} \subseteq MB(X) \setminus\{V\}$ s.t. $|\mathbf{S}| = i$}
                \If{$V \indep T | \mathbf{S}$}
                    \State{$Ne(T) \gets Ne(T) \setminus \{V\}$}
                    \State{$sepset(\{T,V\}) \gets \mathbf{S}$}
                \EndIf
            \EndFor
        \EndFor
    \EndFor
    \Ensure{Neighbors $Ne(T)$ and separating sets $sepset$}
\end{algorithmic}
\end{algorithm}


\section{Proofs}
\label{sec:proofs}

\subsection{Proof for \texorpdfstring{\Cref{thm:pc_bk}}{}}
\label{sec:proofs:pc}

\pcbk*
\begin{proof}
Given a true DAG $D$ and consistent background knowledge $\mathcal{B}$, let $G^*$ be the true MPDAG and let $\hat{G}$ be the graph discovered by \Cref{alg:pc_bk} using oracle CI tests.
We now prove that $\hat{G}$ is identical to $G^*$.

We first prove that the skeleton (i.e., adjacencies) of $\hat{G}$ is identical to the skeleton of $G^*$.
The adjacencies of $\hat{G}$ are established in two consecutive procedures, using \Cref{alg:skeleton_bk} at line~\ref{line:pc:skeleton} and \Cref{alg:missing_sepsets} at line~\ref{line:pc:find_missing_sepsets}.
\Cref{alg:skeleton_bk} is the same as the skeleton search of the standard PC algorithm with two differences:
\begin{compactenum}
    \item At line~\ref{line:skel:skip}, all CI tests for pairs of $(X,Y) \in \mathcal{B}$ are skipped, and thus kept adjacent. 
    \item At line~\ref{line:skel:limit}, the candidate sets to separate $X$ from $Y$ are the $PossPa_{\hat{G}}(X, \mathcal{B}) = Adj_{\hat{G}}(X) \setminus (\{V \in \mathbf{V} | X \to V \in \mathcal{B} \vee X \gap V \in \mathcal{B} \})$.
\end{compactenum}

The first difference impacts all pairs $(X,Y) \in \mathcal{B}$.
If $X-Y \in \mathcal{B}$ or $X \to Y \in \mathcal{B}$, then $X$ and $Y$ are adjacent in $G^*$, thus leaving them adjacent without any CI tests keeps their adjacency correct.
The gaps $X \gap Y \in \mathcal{B}$ are made non-adjacent in $\hat{G}$ later on by \Cref{alg:missing_sepsets} at line~\ref{line:pc:find_missing_sepsets}.

The second difference impacts all other pairs $(X,Y) \notin \mathcal{B}$. 
In particular, this filters from the candidate sets to separate $X$ from some other variable $Y$ the known children of $X$ or variables known to be non-adjacent to $X$.
As shown by Lemma 3.3.9 in \cite{spirtes2000causation}, if $X$ and $Y$ are not adjacent, then they can be d-separated by the parents of $X$ or $Y$ in the underlying causal graph.
Since we usually do not know the true parents during the skeleton search, in standard PC we test subsets of the currently adjacent nodes, which will be a superset of the true parents.
In our setting, we assume that the background knowledge $\mathcal{B}$ is consistent, so this allows us to filter these nodes and reduce the number of separating sets to be tested. Thus, if $(X,Y) \notin \mathcal{B}$, and $X$ and $Y$ are still adjacent in $\hat{G}$ by line~\ref{line:pc:skel_end} then $X$ and $Y$ are also adjacent in $G^*$.

This means that by line~\ref{line:pc:skel_end}, the adjacencies of $\hat{G}$ and $G^*$ are the same except for $X \gap Y \in \mathcal{B}$ which are still adjacent in $\hat{G}$ and not adjacent in $G^*$.
These superfluous adjacencies are taken care of by \Cref{alg:missing_sepsets} at line~\ref{line:pc:find_missing_sepsets}, which finds a corresponding separating set for each of them.
Similarly to \Cref{alg:skeleton_bk}, the candidate separating sets do not involve known children or non-adjacencies, which is correct for the same reasoning as described earlier.
Thus, this procedure finds a separating set for every gap $X \gap Y \in \mathcal{B}$, which are then also made non-adjacent in $\hat{G}$.

In case \Cref{alg:missing_sepsets} does not find a corresponding separating set for a known gap $X \gap Y \in \mathcal{B}$, either due to incorrect CI test results or inconsistent background knowledge, then it simply ignores the known gap and keeps the edge $X - Y$ in $\hat{G}$.
This ensures that all non-adjacencies have a corresponding separating set, which is crucial for correct v-structure orientation.
However, this situation should never arise given oracle CI tests and consistent background knowledge.

This means that by line~\ref{line:pc:missing_end}, the skeleton of $\hat{G}$ and $G^*$ and all non-adjacencies have a corresponding separating set.
At line~\ref{line:pc:vstruc}, we orient v-structures the same way as standard PC.
Since we found a separating corresponding for every non-adjacency, $sepset(\{X,Y\})$ is always well-defined.
Then, the correctness of orienting v-structures is given by Lemmas 5.1.2 and 5.1.3 in \citep{spirtes2000causation}.

Finally, we maximally orient $\hat{G}$ using \Cref{alg:meek}, which is the same procedure as described by Algorithm 1 in \citep{meek1995causal} and \citep{perkovic2017interpreting}, except that we do not check for cases when $\mathcal{B}$ and $\hat{G}$ disagree, since this can never happen assuming oracle CI tests and consistent background knowledge $\mathcal{B}$.

Thus, the final output $\hat{G}$ is identical to the true MPDAG $G^*$.
\end{proof}

\subsection{Proof for \texorpdfstring{\Cref{thm:snap_k_bk}}{} and \texorpdfstring{\Cref{cor:snap_inf_bk}}{}}
\label{sec:proofs:snap}

A crucial step of SNAP($k$)-BK (\Cref{alg:snap_k_bk}) is pruning variables at every iteration.
We prune variables at line~\ref{line:snap:prune} that have no possibly directed paths to the targets $\mathbf{T}$, i.e., we keep their possible ancestors.
This step is identical to the original algorithm.
We show why it is not enough to keep only the b-possible ancestors of the targets at lower orders in \Cref{example:snap_bpossanc_not_enough}.

\begin{example}\label{example:snap_bpossanc_not_enough}

\begin{figure}[ht]
\begin{subfigure}{.38\linewidth}
    \centering
    \includegraphics[width=.8\linewidth]{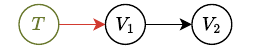}
    \caption{DAG with $\mathcal{B} = \{T \to V_1\}$.}
    \label{fig:snap_bk_in_loop_dag}
\end{subfigure}
\begin{subfigure}{.20\linewidth}
    \centering
    \includegraphics[width=\linewidth]{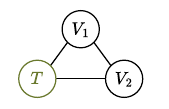}
    \caption{Skeleton.}
    \label{fig:snap_bk_in_loop_skel}
\end{subfigure}
\begin{subfigure}{.20\linewidth}
    \centering
    \includegraphics[width=\linewidth]{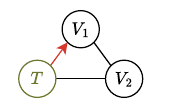}
    \caption{Oriented graph.}
    \label{fig:snap_bk_in_loop_mpdag}
\end{subfigure}
\begin{subfigure}{.20\linewidth}
    \centering
    \includegraphics[width=\linewidth]{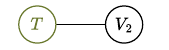}
    \caption{Pruned graph.}
    \label{fig:snap_bk_in_loop_pruned}
\end{subfigure}
\caption{
\Cref{fig:snap_bk_in_loop_dag}: Example DAG with target $T$ where utilizing background knowledge $\mathcal{B}$, denoted in red, and naively pruning variables that are not b-possible ancestors of $T$ can lead to incorrect results in SNAP($k$)-BK.
\Cref{fig:snap_bk_in_loop_skel}: The skeleton discovered at order 0.
\Cref{fig:snap_bk_in_loop_mpdag}: The graph after orienting v-structures and orienting edges according to $\mathcal{B}$.
\Cref{fig:snap_bk_in_loop_pruned}: The graph after pruning the variables that are not b-possible ancestors of $T$.
}
\label{fig:snap_bpossanc_not_enough}
\end{figure}

Consider the DAG in \Cref{fig:snap_bk_in_loop_dag} with target $T$ and background knowledge $\mathcal{B} = \{T \to V_1\}$.
The corresponding true MPDAG is identical to the DAG, and the induced subgraph of the MPDAG over the b-possible ancestors of $T$ is the graph with the single node of $T$.
However, applying Algorithm 1 of \citep{perkovic2017interpreting} at every iteration of SNAP would yield an incorrect MPDAG as follows.
\begin{compactenum}
    \item The skeleton step at order 0 produces the graph shown in \Cref{fig:snap_bk_in_loop_skel}, since all variables are marginally dependent.
    \item The graph after orienting v-structures (none) and Algorithm 1 of \citep{perkovic2017interpreting} is shown in \Cref{fig:snap_bk_in_loop_mpdag}.
    \item In this graph, only $V_2$ is a b-possible ancestor of $T$, hence $V_1$ is pruned, resulting in the graph shown in \Cref{fig:snap_bk_in_loop_pruned}.
    \item At this point, $T$ and $V_2$ can never be separated, resulting in an \textbf{incorrect} final MPDAG shown in \Cref{fig:snap_bk_in_loop_pruned}.
\end{compactenum}
\end{example}

\Cref{example:snap_bpossanc_not_enough} shows why keeping only b-possible ancestors of the targets is not enough at lower orders.
Instead, similarly to possibly ancestral sets in \citep{schubert2025snap}, we define b-possibly ancestral sets as $\mathbf{V}^* \subseteq \mathbf{V}$ such that $\text{b-PossAn}_G(\mathbf{V}^*) \subseteq \mathbf{V}^*$, and prune variables that are not in this set.
\Cref{lem:possdirpath} shows that the b-possibly ancestral set is exactly the same as the ancestral set, and thus the pruning step at line~\ref{line:snap:prune} should remain the same.

\begin{restatable}[]{lemma}{possdirpath}
\label{lem:possdirpath}
Let $G$ be a full MPDAG over variables $\mathbf{V}$ and Let $\mathbf{V}^* \subseteq \mathbf{V}$ be a b-possibly ancestral set of nodes, i.e. $\text{b-PossAn}_G(\mathbf{V}^*) \subseteq \mathbf{V}^*$. If $V \in \mathbf{V}^*$ then all nodes with a possibly directed path to $V$ is also in $\mathbf{V}^*$.
\end{restatable}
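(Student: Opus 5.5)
We prove the statement by induction on the length of a possibly directed path $p = \langle W = V_0, V_1, \dots, V_k = V\rangle$ in $G$ ending at $V \in \mathbf{V}^*$. A possibly directed path is one with no edge $V_i \gets V_{i+1}$, i.e., every consecutive edge is either $V_i \to V_{i+1}$ or $V_i - V_{i+1}$. The key observation is that a single edge of this form already gives a b-possibly causal path of length one. For the path $\langle V_i, V_{i+1}\rangle$, the condition of \Cref{def:b-possan} only concerns the pair $(i,j) = (0,1)$. It requires that $V_i \gets V_{i+1}$ is not in $G$, which holds by assumption. Hence $V_i \in \text{b-PossAn}_G(V_{i+1})$ whenever $V_i \to V_{i+1}$ or $V_i - V_{i+1}$ in $G$.

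The induction proceeds backwards along the path.
\begin{compactenum}
    \item Base case: $V_k = V \in \mathbf{V}^*$ holds by hypothesis.
    \item Inductive step: suppose $V_{i+1} \in \mathbf{V}^*$. By the observation, $V_i \in \text{b-PossAn}_G(V_{i+1}) \subseteq \text{b-PossAn}_G(\mathbf{V}^*)$, using that b-PossAn on sets is defined as a union.
    \item Closure: the assumption $\text{b-PossAn}_G(\mathbf{V}^*) \subseteq \mathbf{V}^*$ then gives $V_i \in \mathbf{V}^*$.
\end{compactenum}
Iterating down to $i = 0$ yields $W \in \mathbf{V}^*$. This argument never needs to check the global condition of \Cref{def:b-possan}, which rules out edges $V_i \gets V_j$ for non-consecutive $j > i$ along the whole path. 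That is the point: closure under b-possible ancestry is transitive through single edges, even though b-possible ancestry itself is stricter than possible ancestry.

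The main obstacle is definitional rather than technical. We must check that in an MPDAG an edge between consecutive nodes is unique, so "no edge $V_i \gets V_{i+1}$" is exactly the possibly directed condition. We must also confirm that the length-one path trivially satisfies the b-possibly causal condition, including that a node counts as its own b-possible ancestor, which covers the trivial path $k=0$.

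For the converse remark in the text (the b-possibly ancestral sets coincide with possibly ancestral sets), note that every b-possibly causal path is in particular possibly directed. Take $j = i+1$ in \Cref{def:b-possan}. So any set closed under possibly directed paths is closed under b-possible ancestry, and combined with the lemma the two notions of closed sets coincide. This justifies leaving line~\ref{line:snap:prune} unchanged.
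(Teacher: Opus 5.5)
Your proof is correct and follows the same route as the paper. The paper also argues that every parent and sibling of a node in $\mathbf{V}^*$ lies in $\text{b-PossAn}_G$ of that node and hence in $\mathbf{V}^*$, then iterates backwards along the possibly directed path; you make that iteration an explicit induction and spell out the step the paper leaves implicit, namely that a single edge $V_i \to V_{i+1}$ or $V_i - V_{i+1}$ is itself a b-possibly causal path.
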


\begin{proof}
Let $V^0 = V \in \mathbf{V}^*$.
Since $\text{b-PossAn}_G(\mathbf{V}^*) \subseteq \mathbf{V}^*$, then $\text{b-PossAn}_G(V) \subseteq \mathbf{V}^*$.
This means that all of its parents $V^1 \to V$ and siblings $V^1 - V$ are also elements of $\mathbf{V}^*$.
Since $V^1 \in \mathbf{V}^*$, then all of its parents $V^2 \to V^1$ and siblings $V^2 - V^1$ are also elements of $\mathbf{V}^*$. We have to continue adding all elements to $\mathbf{V}^*$ like this until all nodes with possible directed paths $V^n -\kern-4pt* V^{n-1} -\kern-4pt* \cdots -\kern-4pt* V^1 -\kern-4pt* V^0$ are in $\mathbf{V}^*$. Hence, all nodes with possibly directed paths to $V$ are elements of $\mathbf{V}^*$.
\end{proof}

\Cref{lem:induced_mpdag} shows that unlike b-possible ancestors, b-possibly ancestral sets form a sound and complete objective, i.e., the induced subgraph of a full MPDAG over a b-possibly ancestral set retains all edges and orientations.
This is analogous to Lem.~3.1 in \citep{schubert2025snap} for possibly ancestral sets and CPDAGs.

\begin{restatable}[]{lemma}{leminducedmpdag}
\label{lem:induced_mpdag}
Let $G$ be a full MPDAG over variables $\mathbf{V}$ with background knowledge $\mathcal{B}$. 
Let $\mathbf{V}^* \subseteq \mathbf{V}$ be a b-possibly ancestral set of nodes, i.e. $\text{b-PossAn}_G(\mathbf{V}^*) \subseteq \mathbf{V}^*$.
Let $G|_{\mathbf{V}^*}$ be the induced subgraph of $G$ over $\mathbf{V}^*$ and let $G(\mathbf{V}^*)$ be the MPDAG over variables $\mathbf{V}^*$ with the same background knowledge $\mathcal{B}$. Then we have $G|_{\mathbf{V}^*} = G(\mathbf{V}^*)$.
\end{restatable}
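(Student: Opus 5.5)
The plan is to compare the two graphs through the sets of DAGs they represent. Write $[G]$ for the set of DAGs represented by the MPDAG $G$, i.e.\ the DAGs Markov equivalent to the true DAG $D$ and consistent with $\mathcal{B}$. Write $[G(\mathbf{V}^*)]$ for the DAGs over $\mathbf{V}^*$ that are Markov equivalent to $D|_{\mathbf{V}^*}$ and consistent with $\mathcal{B}$ restricted to pairs inside $\mathbf{V}^*$. An MPDAG is determined by its skeleton together with the rule that an edge is directed iff it has the same orientation in every represented DAG. So it suffices to show
\[
\{D'|_{\mathbf{V}^*} : D' \in [G]\} = [G(\mathbf{V}^*)],
\]
and that the skeletons agree. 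I am reading $G(\mathbf{V}^*)$ as the MPDAG of the marginal of $D$ over $\mathbf{V}^*$. Once $\mathbf{V}^*$ is shown to be ancestral in $D$ (Step 1), this marginal has no latent confounding, so it is simply $D|_{\mathbf{V}^*}$.

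\textbf{Step 1: $\mathbf{V}^*$ is ancestral in every $D' \in [G]$.} Take a directed path $V_0 \to \dots \to V_k$ in $D'$ with $V_k \in \mathbf{V}^*$. In $G$ each edge on this path is either undirected or directed forward. Any chord $V_i \gets V_j$ with $i<j$ in $G$ would also be present in $D'$ and would close a directed cycle with the path, which is impossible. Hence the path is b-possibly causal in the sense of \Cref{def:b-possan}, so $V_0 \in \text{b-PossAn}_G(\mathbf{V}^*) \subseteq \mathbf{V}^*$. The same argument, applied to single edges, shows the following. Every edge of $G$ between some $W \notin \mathbf{V}^*$ and some $V \in \mathbf{V}^*$ must be directed $V \to W$ in $G$. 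Otherwise $W$ would be a b-possible ancestor of $V$, which is also consistent with \Cref{lem:possdirpath}.

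\textbf{Step 2: the inclusion ``$\subseteq$''.} Let $D' \in [G]$. Its restriction $D'|_{\mathbf{V}^*}$ has the same skeleton as $D|_{\mathbf{V}^*}$. It also has the same v-structures, because a v-structure of an induced subgraph is exactly a v-structure of the full graph whose three nodes all lie in $\mathbf{V}^*$. Consistency with the restricted $\mathcal{B}$ is inherited from $D'$. Hence $D'|_{\mathbf{V}^*} \in [G(\mathbf{V}^*)]$.

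\textbf{Step 3: the inclusion ``$\supseteq$'', which is the main obstacle.} Given $D'' \in [G(\mathbf{V}^*)]$, I would build $D'$ by taking $D''$ on $\mathbf{V}^*$ and copying $D$ on all remaining edges. By Step 1 applied to $D$, every cross edge points out of $\mathbf{V}^*$. The checks are then as follows.
\begin{compactenum}
\item Acyclicity: no edge enters $\mathbf{V}^*$, so any cycle lies entirely inside $D''$ or entirely inside $D$, and neither contains one.
\item Skeleton: it agrees with that of $D$.
\item V-structures: a collider inside $\mathbf{V}^*$ has all its parents in $\mathbf{V}^*$, so its v-structures are those of $D''$, which match those of $D|_{\mathbf{V}^*}$. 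A collider outside $\mathbf{V}^*$ has all incident edges copied from $D$, so its v-structures match $D$.
\item Consistency with $\mathcal{B}$: pairs inside $\mathbf{V}^*$ are handled by $D''$, and all other pairs agree with $D$, which is consistent with $\mathcal{B}$.
\end{compactenum}
Therefore $D' \in [G]$, and its restriction to $\mathbf{V}^*$ is $D''$.

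\textbf{Step 4: conclusion.} The skeletons of $G|_{\mathbf{V}^*}$ and $G(\mathbf{V}^*)$ coincide, since both equal the skeleton of $D|_{\mathbf{V}^*}$. For an edge $X - Y$ with $X, Y \in \mathbf{V}^*$, the following are equivalent by Steps 2 and 3:
\begin{compactenum}
\item $X \to Y$ in $G$;
\item $X \to Y$ in every $D' \in [G]$;
\item $X \to Y$ in every restriction $D'|_{\mathbf{V}^*}$ with $D' \in [G]$;
\item $X \to Y$ in every DAG of $[G(\mathbf{V}^*)]$;
\item $X \to Y$ in $G(\mathbf{V}^*)$.
\end{compactenum}
This gives $G|_{\mathbf{V}^*} = G(\mathbf{V}^*)$.

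The delicate points are the two uses of Step 1: that b-possible ancestry (rather than ordinary possible ancestry) is exactly what forces every cross edge to point outward in all represented DAGs, and that the gluing in Step 3 creates no new v-structure and no cycle.
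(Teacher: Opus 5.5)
Your proof is correct, and it takes a genuinely different route from the paper. Your reading of $G(\mathbf{V}^*)$ (the MPDAG of $D|_{\mathbf{V}^*}$ with $\mathcal{B}$ restricted to pairs inside $\mathbf{V}^*$) matches the paper's.

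The paper argues syntactically, in three moves:
\begin{itemize}
\item it shows that $\mathbf{V}^*$ is ancestral in $D$;
\item it imports the CPDAG statement $C|_{\mathbf{V}^*} = C(\mathbf{V}^*)$ from Lauritzen (Prop.~3.22) and Guo et al.\ (Lemma~D.1);
\item it argues that applying $\mathcal{B}$ and Meek's rules R1--R4 cannot carry an orientation from outside $\mathbf{V}^*$ onto an edge with both endpoints inside. It does this rule by rule, using \Cref{lem:possdirpath} to check that every node in the antecedent of such a rule lies in $\mathbf{V}^*$.
\end{itemize}

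You instead argue semantically. Meek's maximality theorem says an edge is directed in an MPDAG iff it has that orientation in every represented DAG. With it, you reduce the claim to showing that restriction maps $[G]$ onto $[G(\mathbf{V}^*)]$. You get surjectivity from a gluing construction whose only nontrivial input is that every edge between $\mathbf{V}^*$ and its complement is directed out of $\mathbf{V}^*$ in $G$. That is exactly where b-possible ancestral closure, rather than ordinary possible ancestral closure, does the work.

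Your route buys three things:
\begin{itemize}
\item it handles the CPDAG part and the background-knowledge part in one stroke;
\item it needs no external CPDAG lemma;
\item it avoids the induction over the order of rule applications that the paper's rule-by-rule check implicitly requires but leaves unstated. Knowing that the antecedent nodes lie in $\mathbf{V}^*$ only suffices once one also knows the antecedent orientations were themselves derivable inside $\mathbf{V}^*$.
\end{itemize}

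The paper's route, in exchange, stays closer to the orientation procedure SNAP-BK actually executes, and shows explicitly which rule applications could in principle leak information across the boundary. Both arguments ultimately rest on the same completeness result for Meek's rules with background knowledge. The paper uses it implicitly when it identifies an MPDAG with its set of represented DAGs.
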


\begin{proof}
    Let $D$ be the true underlying causal DAG, and $C$ the corresponding CPDAG of the MPDAG $G$.
    $\mathbf{V}^*$ is a b-possibly ancestral set of nodes, i.e. for every $V \in \mathbf{V}^*$ it holds that $\text{b-PossAn}_G(V) \subseteq \mathbf{V}^*$.
    By definition, $\text{An}_D(V) \subseteq \text{b-PossAn}_G(V)$.
    Hence, $\mathbf{V}^*$ is an ancestral set in $D$.
    Then, \citet{lauritzen1996graphical} (Proposition 3.22) and \citet{guo2023variable} (Lemma D.1) show that if $C|_{\mathbf{V}^*}$ is the induced subgraph of $C$ over $\mathbf{V}^*$ and $C(\mathbf{V}^*)$ is the CPDAG over variables $\mathbf{V}^*$, then it holds that $C|_{\mathbf{V}^*} = C(\mathbf{V}^*)$.
    The MPDAG corresponding to $C(\mathbf{V}^*)$ with background knowledge $\mathcal{B}$ (between pairs in $\mathbf{V}^*$) is exactly $G(\mathbf{V}^*)$.

    Now we show that the MPDAG corresponding to $C|_{\mathbf{V}^*}$ with background knowledge $\mathcal{B}$, denoted as $G'$, equals $G|_{\mathbf{V}^*}$.

    Since $G$ is a MPDAG corresponding to the CPDAG $C$ and $G'$ is a MPDAG corresponding to CPDAG $C|_{\mathbf{V}^*}$, the adjacencies in $G|_{\mathbf{V}^*}$ and $G'$ have to be identical, and every edge oriented in $C|_{\mathbf{V}^*}$ is oriented the same in $G|_{\mathbf{V}^*}$ and $G'$.
    Thus, $G|_{\mathbf{V}^*}$ and $G'$ can only differ in edge orientations due to background knowledge.
    Since $G|_{\mathbf{V}^*}$ and $G'$ are closed under orientation by definition correspond to the same background knowledge, $G|_{\mathbf{V}^*}$ and $G'$ cannot have contradicting orientations, i.e., $X \to Y$ in one and $X \gets Y$ in the other.
    Furthermore, since $G|_{\mathbf{V}^*}$ is the induced subgraph of the full MPDAG $G$ over variables $V \supseteq \mathbf{V}^*$, and thus its edges can be oriented by at least as much or more background knowledge in $\mathcal{B}$, it also cannot happen that the edge is oriented in $G'$ but not in $G|_{\mathbf{V}^*}$.
    
    Hence, if there is a differing edge between $G|_{\mathbf{V}^*}$ and $G'$, it has to be oriented in $G|_{\mathbf{V}^*}$ but not in $G'$, due to some orientation background knowledge in $\mathcal{B}$ that is not in $\mathcal{B}$.
    This orientation background knowledge has to be between variables where at least one is not in $\mathbf{V}^*$, which then propagated through $G$ via orientation rules shown in \Cref{alg:meek} and oriented an edge in $G|_{\mathbf{V}^*}$, but it could not appear in $G'$.
    In the following, we show that this cannot happen, because if an edge $X \to Y$ such that $X,Y \in \mathbf{V}^*$ is oriented via orientation rules, then all other variables appearing in the orientation also have to be in $\mathbf{V}^*$.
    Note, that \Cref{lem:possdirpath}, if $X,Y \in \mathbf{V}^*$, then every node with a possibly directed path to $X$ or $Y$ is also in $\mathbf{V}^*$.
    \begin{compactenum}
        \item The first rule involves the structure $Z \to X - Y$.
        Then, because $Z \to X$, it follows that $Z \in \mathbf{V}^*$.
        \item The second rule involves the structure $X \to Z \to Y$.
        Then, because $Z \to Y$, it follows that $Z \in \mathbf{V}^*$.
        \item  The third rule involves the structures $V \to Y \gets Z, V - X - Z$ and $X - Y$.
        Then, due to $V \to Y \gets Z$, it follows that $V,Z \in \mathbf{V}^*$.
        \item The fourth rule involves the structures  $X - Y, X - V \to Z$ and $X - Z \to Y$.
        Then, because $X - V$ and $Z \to Y$, it follows that $V,Z \in \mathbf{V}^*$.
    \end{compactenum}
    Thus, no edge in $G|_{\mathbf{V}^*}$ could have been oriented by orientations between edges involving variables not in $\mathbf{V}^*$.
    This means that $G|_{\mathbf{V}^*} = G'$.

    We have shown earlier that $C|_{\mathbf{V}^*} = C(\mathbf{V}^*)$, and that
    the MPDAG corresponding to $C(\mathbf{V}^*)$ with background knowledge $\mathcal{B}$ is $G(\mathbf{V}^*)$.
    If two CPDAGs are equal, then their corresponding MPDAG with the same background knowledge $\mathcal{B}$ is also equal.
    Thus, $G(\mathbf{V}^*) = G'$.
    Then, it follows that $G|_{\mathbf{V}^*} = G(\mathbf{V}^*)$.
\end{proof}

With the objective of recovering $G|_{\mathbf{V}^*}$ with possibly ancestral set $\mathbf{V}^*$ established, we use the following lemmas to prove \Cref{thm:snap_k_bk}.
Lem.~B.1. in \citep{schubert2025snap} is directly transferrable to SNAP($k$)-BK.

\begin{lemma}[Lemma~B.1 in \citep{schubert2025snap}]
\label{lem:skeleton}
Given oracle conditional independence tests, at any iteration $i = 0,..,k$ of \Cref{alg:snap_k_bk}, the undirected graph $\hat{U}$ is a supergraph of $U|_{\hat{\mathbf{V}}}$, the induced subgraph of the true skeleton $U$ over $\hat{\mathbf{V}}$.

\begin{proof}
In \Cref{alg:snap_k_bk} we only remove edges in $\hat{U}$ between two nodes $X, Y \in \hat{\mathbf{V}}$ for which we can find a separating set.
By faithfulness, we assume that these nodes are therefore also non-adjacent in the ground truth skeleton $U$.
Hence, the resulting skeleton $\hat{U}$ over the variables $\hat{\mathbf{V}}$ is a supergraph of the induced subgraph of the true skeleton, denoted by  $U|_{\hat{\mathbf{V}}}$.
\end{proof}
\end{lemma}

Next, we show that no variable from the possibly ancestral set containing the targets is wrongly eliminated at any order $i$, starting with v-structures.
The only difference between SNAP($k$) and SNAP($k$)-BK for orienting v-structures is that SNAP($k$)-BK already applies the RFCI v-structure orientation rules (\Cref{alg:rfci-v}) starting at iteration $i=1$ instead of $i=2$.
\Cref{example:snap_bk_vstruct_counter} shows how orienting v-structures with the PC v-structure orientation rules (\Cref{alg:pc-v}) can lead to wrong results.

\begin{example}\label{example:snap_bk_vstruct_counter}
\begin{figure}[ht]
\begin{subfigure}{.5\linewidth}
    \centering
    \includegraphics[width=.8\linewidth]{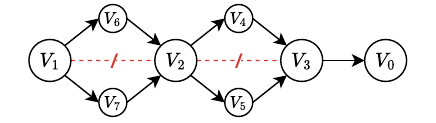}
    \caption{DAG with $\mathcal{B} = \{V_1 \gap V_2, V_2 \gap V_3\}$.}
    \label{fig:snap_gaps}
\end{subfigure}
\begin{subfigure}{.5\linewidth}
    \centering
    \includegraphics[width=.8\linewidth]{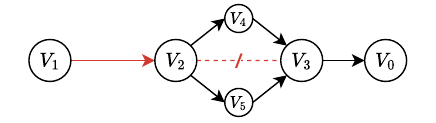}
    \caption{DAG with $\mathcal{B} = \{V_1 \to V_2, V_2 \gap V_3\}$.}
    \label{fig:snap_gap_orient}
\end{subfigure}
\caption{Example DAGs where utilizing $\mathcal{B}$, denoted in red,  during skeleton search can lead to incorrect results in SNAP. In both cases, the skeleton step up to order 1 never finds $V_1 \indep V_3 | V_2$ due to the utilization of background knowledge, keeping  $V_1$ and $V_3$ adjacent. On the other hand, it may find $V_0 \indep V_1 | V_2$ before $V_0 \indep V_1 | V_3$. In that case, at order 1, SNAP would orient a v-structure $V_1 \to V_3 \gets V_0$, which contradicts $V_3 \to V_0$ in the true DAGs.
}
\label{fig:snap_bk_vstruct_counter}
\end{figure}

Consider the DAGs in \Cref{fig:snap_bk_vstruct_counter} with background knowledge $\mathcal{B}$ denoted in red.
A naive implementation of SNAP($k$)-BK that uses the PC v-structure orientation rules at order $i=1$ may conclude that $V_3$ is a non-ancestor of $V_0$ at, even though $V_3 \to V_0$ in both DAGs.

The incorrect naive implementation would proceed as follows:
\begin{compactenum}
    \item At order 0, no variables can be made marginally independent.
    \item At order 1, the skeleton step may find $V_0 \indep V_1 | V_2$ before $V_0 \indep V_1 | V_3$, resulting in $sepset(V_0,V_1) = \{V_2\}$.
    \item $\mathcal{B}$ indicates that $V_2 \notin \text{PossPa}(V_1)$ and $V_2 \notin \text{PossPa}(V_3)$, and thus $V_2$ is not required to separate $V_1$ from $V_3$ and $V_1 \indep V_3 | V_2$ is never tested, rendering $V_1$ and $V_3$ still adjacent after the skeleton step at order 1.
    \item At order 1, the v-structure step orients $V_1 \to V_3 \gets V_0$, since $V_3 \notin sepset(V_0,V_1) = \{V_2\}$.
    \item At order 1, SNAP($k$)-BK \textbf{incorrectly} concludes that $V_3$ is a definite non-ancestor of $V_0$.
\end{compactenum}
\end{example}

The reason why using the PC v-structure orientation rules at order $i=1$ in SNAP($k$)-BK can fail is because the case of order $i=1$ in Lem. B.4. of \citep{schubert2025snap} does not hold anymore.

We now show in \Cref{lem:snap_v_struc} that correctly orienting v-structures using PC v-structure orientation rules only at order $i=0$ at line~\ref{line:snap:pc_v} and using RFCI v-structure orientation rules at line~\ref{line:snap:rfci_v} otherwise, will never contradict ancestry in the true causal DAG $D$, i.e., they orient $X \to Z \gets Y$ only if $Z \notin An_D(X)$ (and $Z \notin An_D(Y)$).

\begin{lemma}[Analogous to Lemmas~B.2 and B.4 in \citep{schubert2025snap}]\label{lem:snap_v_struc}
    Given the true MPDAG $G$ with background knowledge $\mathcal{B}$, at any iteration $i = 0,\dots,k$ of SNAP($k$)-BK (\Cref{alg:snap_k_bk}), lines \ref{line:snap:pc_v} and \ref{line:snap:rfci_v} orient  $X \starleft\to Z \gets\starright Y$ only if $Z \notin \text{b-PossAn}_G(X)$.

\begin{proof}
    Our proof follows the general structure of Lemmas~B.2 and B.4 in \citep{schubert2025snap}.
    We first show that at every iteration, $X \starleft\to Z \gets\starright Y$ is oriented only if the following three conditions hold: $Z \notin sepset(X,Y)$, $X \dep Z | sepset(X,Y)$ and $Z \dep Y | sepset(X,Y)$.

    At iteration $i=0$ SNAP($k$)-BK orients v-structures using \texttt{VstructPC} (\Cref{alg:pc-v}), which orients unshielded triples as $X \to Z \gets Y$ if and only if $Z \notin sepset(X,Y) = \emptyset$.
    At this point, every pair of variables that may be separated (not known as adjacent by $\mathcal{B}$) has been tested for independence with the empty conditioning set during the skeleton step.
    Thus, if there is still an edge between $X$ and $Z$ and between $Z$ and $Y$, then they are dependent given the empty set, or in other words, $X \dep Z | \emptyset$ and $Y \dep Z | \emptyset$ always hold.

    At every iteration $i > 0$, SNAP($k$)-BK orients v-structures using \texttt{VstructRFCI} (\Cref{alg:rfci-v}), which explicitly tests 
    the three conditions of $Z \notin sepset(X,Y)$, $X \dep Z | sepset(X,Y)$ and $Z \dep Y | sepset(X,Y)$ at lines \ref{line:rfciv:sepset} and \ref{line:rfciv:deptest}. 

    Then, the following proof by contradiction is directly reproduced from Lem~B.2 in \citep{schubert2025snap}.
    For the sake of contradiction, let us assume that there exists a DAG $D$ in the MEC of $G$ such that $Z \in An_D(X)$.
    Then, there is a directed path $Z \to \dots \to X$ in $D$.
    Since $Z \dep Y | sepset(X,Y)$, there exists at least one open path between $Y$ and $Z$ not blocked by $sepset(X,Y)$ in $D$.
    However, since $X \indep Y | sepset(X,Y)$, there has to be a node $W \in sepset(X,Y)$ on the directed path from $Z \to \dots  W \dots \to X$ that blocks this directed path in $D$.
    Otherwise, the open path between $Y$ and $Z$ and the directed path from $Z$ to $X$ would not be blocked by $sepset(X,Y)$.
    However, we also know that $X \dep Z | sepset(X,Y)$ implying that there is some other path between $X$ and $Z$ that is not blocked by $sepset(X,Y)$.
    Thus, there is a path between $X$ and $Z$ and between $Z$ and $Y$ that are not blocked by $sepset(X,Y)$ in  $D$.
    In this case, $X \indep Y | sepset(X,Y)$ can only hold if $Z$ is a collider when connecting these two paths, because $Z \notin sepset(X,Y)$.
    However, this path is then unblocked by $W$ since it is a descendant of $Z$ and it is in $sepset(X,Y)$.
    It follows that $X \indep Y | sepset(X,Y)$ cannot hold in $D$, where $Z \in An_{D}(X)$, hence we get a contradiction.
    Since no such DAG $D$ in the MEC of $G$ can exist where $Z \in An_{D}(X)$, this proves that $Z \notin \text{b-PossAn}_G(X)$.
\end{proof}
\end{lemma}

Using the lemmas above, we can prove the main result of \Cref{thm:snap_k_bk} and consequently \Cref{cor:snap_inf_bk}.

\snapkbk*
\begin{proof}
    The proof is analogous to the proof of Lem~B.2 in \citep{schubert2025snap}.
    When pruning variables at line~\ref{line:snap:prune}, all edges are oriented either by v-structures, which are shown to be sound by \Cref{lem:snap_v_struc}, or by consistent orientation background knowledge $\mathcal{B}$.
    From this and \Cref{lem:skeleton} it follows that if there is a possibly directed path in the true MPDAG $G$ from some $V \in \hat{\mathbf{V}}$ to $V' \in \hat{\mathbf{V}}$, then a possibly directed path with the same sequence of nodes also exists in $\hat{G}$ and it is also possibly directed.
    Since $\hat{\mathbf{V}}$ is constructed by taking all nodes with a possibly directed path to $\mathbf{T}$, then by \Cref{lem:possdirpath} it holds that $\hat{\mathbf{V}}$ forms a b-possibly ancestral set that contains $\mathbf{T}$.
\end{proof}

\snapinfbk*
\begin{proof}
    We show that the graph $\hat{G}$ by the end of SNAP($k$)-BK running with $k = |\mathbf{V}|-2$ has the same skeleton and v-structures as the induced sub-graph of the true MPDAG $G|_{\hat{\mathbf{V}}}$.
    This part of the proof is analogous to the proof of Lem.~B.5 in \citep{schubert2025snap}.
    \Cref{lem:skeleton} shows that the discovered skeleton $\hat{U}$ is always a supergraph of the skeleton of $G|_{\hat{\mathbf{V}}}$.
    Furthermore, since SNAP$k$-BK only removes an edge if a corresponding separating set is found, then by faithfulness, every node is adjacent to its parents.
    Furthermore, since the background knowledge $\mathcal{B}$ is consistent, the parents of each node $V$ are a subset of its possible parents $PossPa_{\hat{\mathbf{G}}}(V, \mathcal{B})$.
    Then, by line~\ref{line:snap:loop_end} with $k = |\mathbf{V}|-2$, all subsets of the parents have been tried as conditioning sets when trying to separate a variable from another.
    Hence, by faithfulness, if two nodes are still adjacent in $\hat{G}$ then they are also adjacent in $G|_{\hat{\mathbf{V}}}$.
    From this and \Cref{lem:skeleton} it follows that the skeleton of $\hat{G}$ and $G|_{\hat{\mathbf{V}}}$ are identical.

    Then, their unshielded triples are also identical and by \Cref{lem:snap_v_struc} it follows that their v-structures are also identical.

    Given $\hat{G}$ obtained from SNAP($k$)-BK with $k = |\mathbf{V}|-2$, the final step of the proof is analogous to the proof of Thm.~3.2 in \citep{schubert2025snap}.
    If $\hat{G}$ and $G|_{\hat{\mathbf{V}}}$ have the same skeleton and v-structures, then they are Markov equivalent.
    Then, as shown by \citet{perkovic2017interpreting}, applying consistent orientation background knowledge in $\mathcal{B}$ and orientation rules in \Cref{alg:meek} on $\hat{G}$ yields exactly the true MPDAG $G|_{\hat{\mathbf{V}}}$.
    The final output graph is obtained at line~\ref{line:snap:final_prune}, where the final b-possibly ancestral set is obtained according to the $\hat{G} = G|_{\hat{\mathbf{V}}}$.
    By \Cref{lem:induced_mpdag}, $\hat{G}$ is an MPDAG.
\end{proof}

\subsection{Proof for \texorpdfstring{\Cref{thm:mb_by_mb_bk}}{}}
\label{sec:proofs:mb_by_mb}

\begin{example}\label{example:mb_by_mb}
\begin{figure}[ht]
\begin{subfigure}{.32\linewidth}
    \centering
    \includegraphics[width=.8\linewidth]{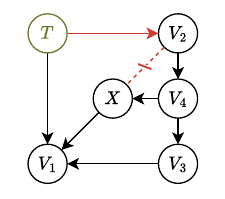}
    \caption{True DAG/MPDAG.}
    \label{fig:mb_by_mb_pc_dag}
\end{subfigure}
\begin{subfigure}{.32\linewidth}
    \centering
    \includegraphics[width=.8\linewidth]{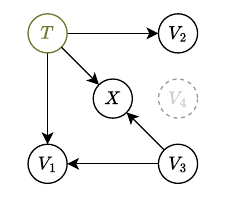}
    \caption{Naive MB-by-MB-BK.}
    \label{fig:mb_by_mb_naive}
\end{subfigure}
\begin{subfigure}{.32\linewidth}
    \centering
    \includegraphics[width=.8\linewidth]{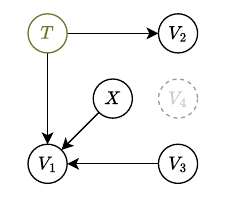}
    \caption{Correct MB-by-MB-BK.}
    \label{fig:mb_by_mb_correct}
\end{subfigure}
\caption{The example DAG/MPDAG shown in \Cref{fig:mb_by_mb_pc_dag} with $\mathcal{B} = \{T \to V_2, V_2 \gap X\}$ denoted in red, can lead a wrongly implemented MB-by-MB to incorrectly identify $X$ and a child of target $T$, as shown in \Cref{fig:mb_by_mb_naive}.
}
\label{fig:mb_by_mb_pc_example}
\end{figure}

Consider the example DAG/MPDAG shown in \Cref{fig:mb_by_mb_pc_dag} with background knowledge $\mathcal{B} = \{T \to V_2, V_2 \gap X\}$ denoted in red.
Naively using PC-BK when when discovering the local structure over the Markov blanket of $T$ can incorrectly identify $X$ as adjacent to $T$, as shown in \Cref{fig:mb_by_mb_naive}.
In particular, the MB-by-MB-BK algorithm would then proceed as follows:

\begin{compactenum}
    \item In the beginning of the algorithm $\text{WaitList = [T]}$ and $\hat{G} = (\{T, X, V_1, V_2, V_3, V_4\}, \emptyset)$.
    \item In the first iteration, at line~\ref{line:mb_by_mb:gs}, MB-by-MB-BK correctly correctly finds $MB(T) = \{X,V_1,V_2,V_3\}$ 
    \item Then, at line~\ref{line:mb_by_mb:pc}, MB-by-MB proceeds to discover the local structure $L_T$ over $MB^+(T)$ naively using PC-BK:
    \begin{compactenum}
        \item When trying to separate nodes from $T$, PC-BK does not find $T \indep X | \{V_2\}$ because $V_2 \notin \text{PossPa}_{L_T}(T,\mathcal{B})$ due to $T \to V_2 \in \mathcal{B}$.
        \item When trying to separate nodes from $X$, PC-BK also does not find $X \indep T | \{V_2\}$ because $V_2 \notin \text{PossPa}_{L_T}(X,\mathcal{B})$ due to $X \gap V_2 \in \mathcal{B}$.
        \item The only other way to separate $T$ and $X$ would be $T \indep X | \{V_4\}$, but $V_4 \notin MB^+(T)$ and thus it is not considered in the local structure.
        \item For the same reason, $V_3 \indep X | \{V_4\}$ is also not found.
        \item At the end of the skeleton search by line~\ref{line:pc:missing_end}, $T$ and $X$ (as well as $V_3$ and $X$) are still \textbf{incorrectly} adjacent.
        \item At line~\ref{line:pc:vstruc}, PC-BK orients the v-structure $T \to X \gets V_3$  (along with $T \to V_1 \gets V_3$),
    \end{compactenum}
    \item MB-by-MB-BK updates $\hat{G}$ with edges and v-structures containing $T$ in $L_T$ at line~\ref{line:mb_by_mb:update} including $T \to X \gets V_3$.
    \item However, the edge between $X$ and $V_1$ in $L_T$ is discarded as it is not part of a v-structure, due to the extra edge between $T$ and $X$.
    \item MB-by-MB-BK applies background knowledge $T \to V_2$ at line~\ref{line:mb_by_mb:orient}, resulting in the graph shown in \Cref{fig:mb_by_mb_naive}
    \item As all neighbors of $T$ are oriented, MB-by-MB-BK terminates and \textbf{incorrectly} returns $X$ as a child of $T$.
\end{compactenum}

Instead, if PC-BK uses the adjusted definition of possible parents defined in \Cref{eq:mb_by_mb_posspa}, then the MB-by-MB-BK algorithm would proceed as follows:
\begin{compactenum}
    \item In the beginning of the algorithm $\text{WaiList = [T]}$ and $\hat{G} = (\{T, X, V_1, V_2, V_3, V_4\}, \emptyset)$.
    \item In the first iteration, at line~\ref{line:mb_by_mb:gs}, MB-by-MB-BK correctly correctly finds $MB(T) = \{X,V_1,V_2,V_3\}$ 
    \item Then, at line~\ref{line:mb_by_mb:pc}, MB-by-MB proceeds to discover the local structure $L_T$ over $MB^+(T)$ using PC-BK:
    \begin{compactenum}
        \item When trying to separate nodes from $T$, PC-BK does not find $T \indep X | \{V_2\}$ because $V_2 \notin \text{PossPa}_{\hat{G}}^*(T,\mathcal{B})$ due to $T \to V_2 \in \mathcal{B}$.
        \item When trying to separate nodes from $X$ at $i=1$ of the skeleton search of PC-BK, at this point $X$ and $V_2$ are adjacent, and so $V_2 \in \text{PossPa}^*_{\hat{G}}(X,\mathcal{B}) = Adj_{\hat{G}}(X)$. Thus PC-BK can \textbf{correctly} find $X \indep T | \{V_2\}$.
        \item $V_3 \indep X | \{V_4\}$ is again not found, but in this case it is not an issue.
        \item At the end of the skeleton search by line~\ref{line:pc:missing_end}, $T$ and $X$ are \textbf{correctly} non-adjacent.
        \item At line~\ref{line:pc:vstruc}, PC-BK only orients v-structure $T \to V_1 \gets V_3$.
    \end{compactenum}
    \item MB-by-MB updates its global graph with edges and v-structures containing $T$ at line~\ref{line:mb_by_mb:update}, including $T \to V_1 \gets X$.
    \item MB-by-MB-BK applies background knowledge $T \to V_2$ at line~\ref{line:mb_by_mb:orient}, resulting in the graph shown in \Cref{fig:mb_by_mb_correct}.
    \item As all neighbors of $T$ are oriented, MB-by-MB-BK terminates and \textbf{correctly} returns $X$ as non-adjacent of $T$.
\end{compactenum}

\end{example}

\mbbybbk*

\begin{proof}
    \Cref{alg:mb_by_mb_bk} utilizes background knowledge in Markov blanket discovery, learning the local structure over the Markov blanket, and orienting edges.
    Markov blanket discovery is performed by GS-BK (\Cref{alg:gs_bk}) at line~\ref{line:mb_by_mb:gs}, which is sound and complete by \Cref{thm:gs_bk}.
    
    The local structure over Markov blankets is learned at line~\ref{line:mb_by_mb:pc} using PC-BK (\Cref{alg:pc_bk}) with the updated definition of possible parents given in \Cref{eq:mb_by_mb_posspa}.
    An example for why this adjustment is necessary is shown in \Cref{example:mb_by_mb}.
    In particular, the adjustment definition in \Cref{eq:mb_by_mb_posspa} make sure that the limiting of candidate separating sets in \Cref{alg:skeleton_bk} and \Cref{alg:missing_sepsets} is only applied for the variable $X$ when discovering the local structure over $MB(X)$.
    Limiting the candidate separating for $X$ is sound, because all of its neighbors are in its Markov blanket, including its possible parents. 
    However, this might be not true for any other node in $MB(X)$, hence we do not limit their separating sets (as shown by \Cref{example:mb_by_mb}).

    Hence both the Markov blankets as well as the adjacencies and v-structures learned in the local structures are identical to the ones MB-by-MB would learn without background knowledge.
    
    Finally, orientation background knowledge is applied on the learned graph at line~\ref{line:mb_by_mb:orient}.
    This is the same step as line 3 of Algorithm 1 of \citep{zheng2026local}, another approach for MB-by-MB with background knowledge.
    As the other steps of the two algorithms are the same the original MB-by-MB algorithm \citep{wang2014discovering}, \Cref{alg:mb_by_mb_bk} is equivalent to Algorithm 1 of \citep{zheng2026local} in terms of orientations.
    Thus, the soundness and completeness of \Cref{alg:mb_by_mb_bk} follows from Lem.~11 of \citep{zheng2026local}.
\end{proof}

\subsection{Example of when LDECC-BK (\texorpdfstring{\Cref{sec:ldecc}}{}) is not complete}

The following example shows how LDECC-BK fails to identify a parent of a target even though it is oriented in the true MPDAG.

\begin{example}\label{example:ldecc_not_complete}
    Consider the true DAG $X \to Y \to T$ with \ac{BK} $\mathcal{B} = \{X \to Y\}$.
    The MPDAG of this DAG equals to the true DAG, since after applying the orientation $X \to Y$ to the corresponding CPDAG $X - Y - T$, as $X \to Y - T$, the edge $Y - T$ is oriented by the first Meek rule into $Y \to T$, resulting in a fully oriented MPDAG $X \to Y \to T$.

    Applying LDECC-BK in \Cref{alg:ldecc_bk} on the example above with target $T$ however cannot identify $Y$ as a parent of $T$ because it has no way of propagating orientation \ac{BK} outside of the Markov blanket of $T$.
    In particular, the algorithm executes as follows.
    \begin{compactenum}
        \item At line~\ref{line:ldecc:gs}, LDECC-BK finds $MB(T) = \{Y\}$.
        \item At line~\ref{line:ldecc:localpc}, LDECC-BK finds $Ne(T) = \{Y\}$.
        \item At line~\ref{line:ldecc:uccchildren}, LDECC-BK identifies $Ch(T) = \{\}$.
        \item The loop at line~\ref{line:ldecc:fullpc} does not find more independences due to $X *\kern-4pt-\kern-4pt* Y) \in \mathcal{B}$ and all other CI tests including target $T$.
        \item Finally, LDECC-BK \textbf{incorrectly} returns $Y$ as a sibling of $T$.
    \end{compactenum}
    Note, that LDECC-BK would also return $Y$ as a sibling of $T$ if it did not skip CI tests for $X$ and $Y$.
    In that case, it would simply find $X \dep Y | \emptyset$ and $X \dep Y | \{T\}$, which do not allow it to orient any edges.
\end{example}

\subsection{Proof for \texorpdfstring{\Cref{thm:load_bk}}{}}
\label{sec:proofs:load}

To prove \Cref{thm:load_bk}, we first adapt Def. D.1 and Lem 4.4 in \citep{schubert2026local} to MPDAGs.

\begin{definition}[Projection for MPDAGs with single treatment and outcome given $\mathbf{F}$, analogous to Def. D.1 in \citep{schubert2026local}]
    Let $G$ be a MPDAG with nodes $\mathbf{V}$ and background knowledge $\mathcal{B}$, and let $T, O \in \mathbf{V}$, and $\mathbf{F} \subset \mathbf{V} \setminus \{T,O\}$.    
    A projection $\hat{G}^{T,O}$ of $G$ is a graph with nodes $\mathbf{V} \setminus \mathbf{F}$ and edges as follows. For distinct nodes $W_i, W_j \in \mathbf{V} \setminus \mathbf{F}$,
    \begin{compactenum}
        \item $\hat{G}^{T,O}$ contains a directed edge $W_i \to W_j$ if and only if $G$ contains a directed path $W_i \to \cdots \to W_j$ on which all non-endpoint nodes are in $\mathbf{F}$,
        \item $\hat{G}^{T,O}$ contains a bi-directed edge $Wi \leftrightarrow Wj$ if and only if $G$ contains a path, with at least one non-endpoint node, of the form $W_i \gets \cdots \to W_j$ on which all non-endpoints are non-colliders and in $\mathbf{F}$,
        \item $\hat{G}^{T,O}$ contains an undirected edge $W_i - W_j$ if and only if $G$ contains $W_i - W_j$.
    \end{compactenum}
\end{definition}

The following lemma, analogous to Lem. 4.4 in \citep{schubert2026local}, shows that the modified forbidden projection in LOAD-BK results in an MPDAG in which the optimal adjustment set are the parents of the outcome.

\begin{restatable}[Analogous to Lem. 4.4 in \citep{schubert2026local}]{lemma}{possde}
\label{lemma:possde}
    For treatment $T$ and outcome $O$ in an amenable MPDAG $G$ with background knowledge $\mathcal{B}$, let $\mathbf{D} = PossDe_G(T)) \setminus \{T,O\}$ and $G^{T,O}$ be the modified forbidden projection with nodes $\mathbf{V}\setminus \mathbf{D}$ and background knowledge on adjacencies and orientations over $\mathbf{V}\setminus \mathbf{D}$ in $\mathcal{B}$. 
    We show that $G^{T,O}$ is a MPDAG and the  optimal adjustment set $Oset_G(X,Y)$ is given by
    $$
    Oset_G(T,O) = Pa_{G^{T,O}}(O) \setminus \{T\}.
    $$
\end{restatable}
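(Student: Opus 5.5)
The plan is to reduce the statement to Lem.~4.4 of \citep{schubert2026local}, which proves the same claim for CPDAGs. The extra ingredients are the MPDAG optimal adjustment set results of \citet{witte2020efficient} and \citet{perkovic2017interpreting}, and a closure argument for the orientations in $\mathcal{B}$ that mirrors \Cref{lem:induced_mpdag}.

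\textbf{Step 1: characterize the optimal set in $G$.} Since $G$ is amenable, every possibly directed path from $T$ to $O$ starts with a directed edge out of $T$. The optimal adjustment set for MPDAGs is then
\[
Oset_G(T,O)=Pa_G(\mathrm{cn}_G(T,O))\setminus \mathrm{forb}_G(T,O),
\]
where $\mathrm{cn}$ denotes the causal nodes and $\mathrm{forb}$ the forbidden set (the b-possible descendants of the causal nodes, together with $T$). This form is valid for MPDAGs by the results of \citet{perkovic2017interpreting} and \citet{witte2020efficient}. Every causal node other than $O$ lies in $\mathbf{D}$. Every node of $\mathbf{D}$ is a possible descendant of $T$ and hence forbidden. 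Therefore
\[
Oset_G(T,O)\cap \mathbf{D}=\emptyset \quad\text{and}\quad Oset_G(T,O)\subseteq \mathbf{V}\setminus\mathbf{D}.
\]

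\textbf{Step 2: show $Pa_{G^{T,O}}(O)\setminus\{T\}=Oset_G(T,O)$.} I will use the projection definition above.

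For $\supseteq$: take $W\in Oset_G(T,O)$. Then $W\to C$ in $G$ for some causal node $C$, and $C\to\cdots\to O$ is directed through $\mathbf{D}$. This is a directed path from $W$ to $O$ with all interior nodes in $\mathbf{D}$, so $W\to O$ in $G^{T,O}$.

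For $\subseteq$: take $W\to O$ in $G^{T,O}$ with $W\neq T$. Then there is a directed path $W\to D_1\to\cdots\to O$ in $G$ with every $D_j\in\mathbf{D}$. I need each $D_j$ to be a causal node. Since $D_j$ is a possible descendant of $T$ and $D_j$ is an ancestor of $O$ along a directed path, amenability together with the b-possible-path characterization in \Cref{def:b-possan} should make $D_j$ a definite causal node. It remains to check that $W$ is not forbidden; if it were, $W$ would lie in $\mathbf{D}$ or equal $T$, contradicting $W\in\mathbf{V}\setminus\mathbf{D}$ and $W\neq T$. This case, where possible-descendant status has to be upgraded to definite causal status, is where the MPDAG setting (b-possibly causal paths rather than possibly directed ones) needs care. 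I expect it to require a lemma stating that in an amenable MPDAG any possible descendant of $T$ with a directed path to $O$ is a causal node. I would prove that lemma by contradiction, using Lemma 3.6 of \citet{perkovic2017interpreting} on shortening b-possibly causal paths.

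\textbf{Step 3: show $G^{T,O}$ is an MPDAG.} This is the main obstacle. I will argue in three parts.

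1. The projection of the CPDAG of $G$ is the object that \citet{schubert2026local} show is a valid (Markov equivalence class) graph over $\mathbf{V}\setminus\mathbf{D}$. The argument there uses that $\mathbf{D}$ is closed under possible descendants, so marginalizing it introduces no undirected ambiguity.

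2. I then add the restricted background knowledge on $\mathbf{V}\setminus\mathbf{D}$ and close under the rules of \Cref{alg:meek}. Following the pattern of \Cref{lem:induced_mpdag}, I show that any orientation of an edge between nodes of $\mathbf{V}\setminus\mathbf{D}$ present in $G$ is reproduced. For each Meek rule, the auxiliary nodes used in an orientation either lie outside $\mathbf{D}$, or are replaced by projected directed edges that trigger the same rule.

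3. Consistency of $\mathcal{B}$ guarantees that no new contradicting orientations appear.

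I will not use gap knowledge here, because gaps may fail to hold in the projection, exactly as noted in the description of LOAD-BK. Applying the rule-by-rule case analysis to the projected edges, especially the bidirected edges produced by the projection, is the step I expect to be delicate.
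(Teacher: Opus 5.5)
\paragraph{Step 1 skips the set $\Delta$.} Your inference ``every node of $\mathbf{D}$ is a possible descendant of $T$ and hence forbidden'' is false. The forbidden set is $\mathrm{forb}_G(T,O) = PossDe_G(\mathrm{cn}_G(T,O)) \cup \{T\}$, not $PossDe_G(T)$. So $\mathbf{D}$ also contains the non-forbidden set $\Delta = \mathbf{D} \setminus PossDe_G(\mathrm{cn}_G(T,O))$; for example, a child $A$ of $T$ with no other edges lies in $\mathbf{D}$ but is not forbidden. To get $Oset_G(T,O)\cap\mathbf{D}=\emptyset$ you need a separate fact: no node of $\Delta$ is a parent of a causal node, since it would otherwise be a possible causal node itself. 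Equivalently, $\Delta$ consists of non-ancestors of $O$. The paper's proof is organized around exactly this. It first forms the forbidden projection $\tilde{G}^{T,O}$, which by Prop.~22 of \citet{witte2020efficient} is an MPDAG with $Oset_G(T,O)=Pa_{\tilde{G}^{T,O}}(O)\setminus\{T\}$. It then marginalizes $\Delta$ without changing the parents of $O$.

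Your Step 2 has two smaller holes. In the $\supseteq$ direction, a possible causal node need not have a directed path to $O$. In $T \to A$, $T \to B$, $A - B$, $B \to O$, the node $A$ lies on the b-possibly causal path $T \to A - B \to O$ but has no directed path to $O$. So $C$ must be chosen with care and its existence proved. The $\subseteq$ direction rests on a lemma you only announce.

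\paragraph{Step 3 is the serious gap.} Part~1 starts from the CPDAG result of \citet{schubert2026local}, which cannot be your starting object. That result projects out the possible descendants of $T$ in the CPDAG, which is in general a strictly larger set than $\mathbf{D}$: orientation knowledge turns undirected neighbors of $T$ into parents, as the paper itself notes. The CPDAG also need not be amenable, e.g.\ $T - O$ in the CPDAG with $T \to O \in \mathcal{B}$. So that result concerns a graph over a different node set.

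Part~2 only checks that orientations already in $G$ are reproduced. It does not control the new edges the projection creates. Planning a Meek-rule case analysis for ``the bidirected edges produced by the projection'' is a misconception: an MPDAG has no bidirected edges, so you must show that none arise. The missing idea is a closure argument:
\begin{itemize}
\item A bidirected edge $W_i \leftrightarrow W_j$ requires a path $W_i \gets \cdots \to W_j$ whose non-endpoints are non-colliders in $\mathbf{D}$.
\item Then $W_i$ and $W_j$ are descendants of a node of $\mathbf{D}$, so they lie in $\mathbf{D}$ unless $\{W_i,W_j\}=\{T,O\}$.
\item A path $T \gets \cdots \to O$ through possible descendants of $T$ would make its source node both a definite ancestor and a possible descendant of $T$, which is impossible.
\end{itemize}
After that you still have to show that the modified projection adds no edge relative to $\tilde{G}^{T,O}$. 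The paper does this by examining possibly directed paths through $\Delta$ and the shields that Meek's rules would require.
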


\begin{proof}
    We first note, that the background knowledge on adjacencies and orientations over $\mathbf{V}\setminus \mathbf{D}$ in $\mathcal{B}$ is valid under the modified projection.
    This holds because the modified forbidden projection does not remove any adjacencies between nodes that are not projected out, hence their direct causal relations remain the same as in the original MPDAG $G$ over all variables.

    Then, the only difference between the modified forbidden projection of LOAD-BK and LOAD arises from orientation BK, which may orient some additional undirected neighbors of $T$ and parents, thus reducing the set of projected variables $\mathbf{D}$.

    Now, the same proof applies as for em. 4.4 in \citep{schubert2026local}, which we also present here for completeness.

    As shown in Prop.~22 by \citet{witte2020efficient} projecting out the forbidden nodes $\mathbf{F}= PossDe_G(PossCn_G(T,O)) \setminus \{T, O\}$, where $PossCn_G(T,O)$ are the nodes on possibly directed paths from $T$ to $O$, excluding $T$, from $G$ results in an MPDAG $\tilde{G}^{T,O}$, where optimal adjustment set are the parents of $O$ except $T$ in $\tilde{G}^{T,O}$.

    We can further project out the remaining variables in $\tilde{G}^{T,O}$, $\Delta = PossDe_{G}(T) \setminus PossDe_{G}(PossCn_{G}(T,O)) \cup \{T, O\}$.
    These will be still the remaining possible descendants of $T$ in this new graph, i.e., $PossDe_{\tilde{G}^{T,O}}(T)$, which are definite non-ancestors of $O$ and its possible ancestors, e.g., its parents, in both graphs, since otherwise they would be possible causal nodes in the original graph $G$ and removed in the first projection. As such, following \citet{schubert2025snap}, they can be safely marginalized out in $G^{T,O}$ without changing any orientation in the graph regarding $O$, its possible ancestors, e.g., parents, i.e., without changing the parent set of $O$, and hence the optimal adjustment set.

    To prove that $G^{T,O}$ is still a MPDAG, we first show that it does not contain bi-directed edges, and then show that we do not add any other edge to the remaining variables with respect to the forbidden projection $\tilde{G}^{T,O}$.

    In order for a bi-directed edge to appear in $G^{T,O}$, there have to exist two nodes $W_i, W_j \notin \mathbf{D}$ such that $W_i \gets \cdots \to W_j$ in $G$ and all non-endpoint nodes on this path are non-colliders and in $\mathbf{D}$. This means that both $W_i$ and $W_j$ have to be possible descendants of a node in $\mathbf{D}$. By transitivity, possible descendants of nodes in $\mathbf{D}$ are also in $\mathbf{D}$, except for $T$ and $O$, which are then the only possible candidates for $W_i$ and $W_j$.
    On the other hand, there cannot exists such a path $W_i=T \gets \cdots \to O=W_j$ in $G$ such that all non-endpoint nodes on this path are possible descendants of $T$ due to the definition of $\mathbf{D}$, since this would be a contradiction based on the orientation of the path.

We now show that the modified projection does not add any other directed or undirected edge compared to the original forbidden projection $\tilde{G}^{T,O}$.
A fully undirected path or (possibly) directed path from $T$ to other variables $V \in \mathcal{V}\setminus \{\Delta \cup O, T\}$ that passes through some variable $D \in \Delta$ would imply that $V$ is also in $\Delta$ by definition of possible descendants, which is a contradiction. Moreover, it is not possible to have a  directed path from $V$ to $T$ through some $D \in \Delta$, since this would contradict the fact that $\Delta$ are possible descendants of $T$.

We will therefore consider the case when there is a possibly directed path $p=(V, \dots, D_1, \dots, D_k, \dots, T)$ in $\mathcal{G}$ for $V \in \mathcal{V}\setminus \{\Delta \cup \{O, T\}\}$ where all non-endpoint nodes $D_1, \dots, D_k$ are in $\Delta$. The subpath $p'$ from $D_1 - D_2 \dots - D_k - T$ has to undirected, since it cannot be possibly directed from any $D_i$ to $T$ by definition of $\Delta$ and it cannot be possibly directed in the other direction by assumption. Additionally, the subpath $p''$ from $V \dots D_1$ cannot be completely undirected, otherwise $V$ would be part of $\Delta$. In order for $p'$ to stay undirected, we need that the variable $V_j$ on $p''$ closest to $D_1$ that has an orientation is part of a shielded triple $V_i \to V_j - D_1$ with additionally $V_i - D_1$ or $V_i \to D_1$ , otherwise the orientation will propagate further due to Meek's rules \citep{meek1995causal}, but we cannot use $V_i -D_1$ , since this would mean that $V_i \in \Delta$, because it is connected by an undirected path to $T$, which is a contradiction. So the only possible shield is $V_i \to D_1$, but then since we have $V_i  \to D_1 - D_2$ this would imply that there needs to be another shield $V_i \to D_2$, on which we can apply the same argument on each node $D_i$, until we would get a shield $V_i \to T$. This means that in the projection we would not need to add any edge from $V_i$ to $T$. A simplified case is when we only have three variables $V \to D - T$, which would require $V - T$, which is again a contradiction, or $V \to T$ which would mean we do not need to add a new edge after the projection.
The only possible path from $T$ to other variables $V \in \mathcal{V}\setminus \{\Delta \cup O, T\}$ that passes through some variable $D \in \Delta$ would have $D$ as a collider, which means we can safely remove this variable and still have a valid MPDAG, which means we do not need to add any edge to the original $\tilde{G}^{T,O}$.
\end{proof}

\loadbk*
\begin{proof}
    \Cref{thm:mb_by_mb_bk} it follows that every step of LOAD-BK uses sound and complete information according to the true MPDAG.
    Then, the soundness and completeness of Steps 1 and 2 follows from Lem 4.1. and Cor 4.1  of \citep{schubert2026local}, while Step 4 follows from Thm. 2 by \citep{fang2022local}.
    Finally, \Cref{lemma:possde} shows that step 4 of LOAD-BK is also sound and complete.
\end{proof}

\section{Code Examples for Handling Gaps}
\label{sec:gaps_code}

The following examples shows how, for the true DAG $X \to Y \to Z$ with consistent background knowledge $\mathcal{B} = \{X \gap Z\}$ and oracle CI tests, the PC algorithm behaves implemented by different causal discovery libraries.

In the \texttt{pcalg} library \citep{kalisch2012pcalg}, PC returns an incorrect CPDAG $X \to Y \gets Z$ instead of the correct CPDAG $X - Y - Z$.
\begin{minted}{R}
library(pcalg)

# Define true DAG as X -> Y -> Z
dag <- new("graphNEL",
    nodes = c("X", "Y", "Z"),
    edgeL = list("X" = "Y", "Y" = "Z", "Z" = character(0)),
    edgemode = "directed"
)
# Define background knowledge with forbidden edge between X and Z
fixedGaps <- matrix(FALSE, nrow = 3, ncol = 3)
fixedGaps[1, 3] <- TRUE
fixedGaps[3, 1] <- TRUE
# Run PC algorithm with background knowledge
pc_res <- pc(
    suffStat = list(g = dag, jp = RBGL::johnson.all.pairs.sp(dag)),
    indepTest = pcalg::dsepTest,
    alpha = 0.05,
    labels = c("X", "Y", "Z"),
    fixedGaps = fixedGaps
)
# Testing if result is X -- Y -- Z fails!
stopifnot(
    "Y" %in% graph::adj(pc_res@graph, "X") &&
    "X" %in% graph::adj(pc_res@graph, "Y") &&
    "Y" %in% graph::adj(pc_res@graph, "Z") &&
    "Z" %in% graph::adj(pc_res@graph, "Y")
)
\end{minted}

\newpage
The following code snippet shows the same behavior in the \texttt{causal-learn} library \citep{zheng2024causal}.
\begin{minted}{python}
from causallearn.graph.GraphClass import CausalGraph
from causallearn.search.ConstraintBased.PC import pc
from causallearn.utils.PCUtils.BackgroundKnowledge import BackgroundKnowledge
import networkx as nx
import numpy as np

# Define true DAG as 0 -> 1 -> 2
dag = nx.DiGraph([(0, 1), (1, 2)])
# Define background knowledge with forbidden edge between 0 and 2
nodes = CausalGraph(3).G.get_nodes()
bk = BackgroundKnowledge()
bk.add_forbidden_by_node(nodes[0], nodes[2])
bk.add_forbidden_by_node(nodes[2], nodes[0])
# Run PC algorithm with background knowledge
pc_res = pc(
    data=np.zeros((10, 3)),
    alpha=0.05,
    indep_test="d_separation",
    true_dag=dag,
    background_knowledge=bk,
)
# Testing if result is 0 -- 1 -- 2 fails!
assert pc_res.G.is_undirected_from_to(
    nodes[0], nodes[1]
) and pc_res.G.is_undirected_from_to(nodes[1], nodes[2])
\end{minted}

In the \texttt{pgmpy} library \citep{ankan2024pgmpy}, PC fails (i.e., raises an exception) if the \ac{BK} is utilized during skeleton search (\texttt{enforce\_expert\_knowledge=True}) due to the missing separating set $sepset(X,Z)$.
Otherwise, PC returns the correct CPDAG $X - Y - Z$. 
\begin{minted}{python}
import networkx as nx
from pgmpy.base import DAG
from pgmpy.estimators import ExpertKnowledge, PC
import pandas as pd


# Define true DAG as X -> Y -> Z
dag = DAG([("X", "Y"), ("Y", "Z")])
data = pd.DataFrame({"X": [0] * 10, "Y": [0] * 10, "Z": [0] * 10})
# Define background knowledge with forbidden edge between X and Z
bk = ExpertKnowledge(forbidden_edges=[("X", "Z"), ("Z", "X")])
# Run PC algorithm with background knowledge
ci_test = lambda x, y, S, **kwargs: float(
    nx.is_d_separator(dag, x, y, set(S) if S else set())
)
pc_res = PC(data).estimate(
    ci_test=ci_test, expert_knowledge=bk, enforce_expert_knowledge=True
)
\end{minted}

\newpage
Unlike the previous implementations of PC, the \texttt{tetrad} library \citep{ramsey2018tetrad} instead fails to orient v-structures $X \to Y \gets Z$ when the consistent \ac{BK} $\mathcal{B} = \{X \gap Z\}$ is provided.
\begin{minted}{java}
package edu.cmu.tetrad;

import edu.cmu.tetrad.data.Knowledge;
import edu.cmu.tetrad.graph.*;
import edu.cmu.tetrad.search.Pc;
import edu.cmu.tetrad.search.test.IndependenceTest;
import edu.cmu.tetrad.search.test.MsepTest;

public class Main {
    public static void main(String[] args) {
        // Define true DAG as X -> Y <- Z
        Graph dag = GraphUtils.convert("X-->Y,Z-->Y");
        // Define background knowledge with forbidden edge between X and Z
        Knowledge knowledge = new Knowledge();
        knowledge.setForbidden("X", "Z");
        knowledge.setForbidden("Z", "X");
        // Run PC algorithm with background knowledge
        IndependenceTest ciTest = new MsepTest(dag);
        Pc pc = new Pc(ciTest);
        pc.setKnowledge(knowledge);
        Graph pcRes = null;
        try {
            pcRes = pc.search();
        } catch (InterruptedException e) {
            throw new RuntimeException(e);
        }
        // Testing if result is X -> Y <- Z fails!
        if (!pcRes.equals(dag)) {
            throw new AssertionError();
        }
    }
}
\end{minted}

\section{Limitations and Extensions}
\label{sec:limits_and_extens}
In this work we focus on background knowledge about direct causal relationships between pairs of variables, and how it can be utilized during causal discovery.
However, our approach is also compatible with any type of post-processing approach that is appropriate to the output structures, e.g. MPDAG for PC-BK/SNAP($\infty$)-BK or local structures for MB-by-MB-BK/LDECC-BK and LOAD-BK.

\subsection{Ancestral relations.}
\label{sec:ancestral_bk}
Other types of BK, such as  ancestral relationships, often have implications in terms of direct relations that we can still exploit in our methods.
For example, if $X$ is an ancestor of $Y$, this obviously implies that $Y$ is not a direct cause of $X$. 
Additionally, if we know that $X$ and $Y$ are adjacent, then in this case we can also infer that $X$ is a direct cause of $Y$.
Finally, if $X$ and $Y$ are not ancestors of each other then we know that there is a gap between them.

Utilizing ancestral BK in a more complete way during discovery is not trivial, as \citet{JMLR:v26:23-0624} showed that ancestral BK cannot even be fully represented by MPDAGs.

\subsection{Latent variables.}
\label{sec:latent_vars}
\citet{venkateswaran2024towards} showed that integrating BK in a complete way in the causally insufficient setting is not trivial, even as a postprocessing step. 

On the other hand, since FCI \citep{glymour2019review} does use the PC skeleton search as a first stage of its skeleton search, we can still apply a similar approach there as in PC-BK lines \ref{line:pc:skel_start}-\ref{line:pc:missing_end}.
In particular, we can avoid some CI tests for known adjacencies and avoid known children in the separating sets.
Using background knowledge on gaps is not straightforward, similarly to the reasons discussed for LOAD-BK, since nodes that are not adjacent in the true DAG might be adjacent in the 
PAG, so we do not integrate this BK in this step.
We provide preliminary experiments comparing FCI and this limited version of FCI-BK for random graphs with average degree of 3, for 40 observed nodes and average 4.4 latent confounders in \Cref{fig:fci}.

As expected, this has a limited effect on computational efficiency, except in the $G^2$ setting, since it is well known that it is the \emph{second} stage, called the 'Possible-D-Sep' search, that is the main contributor to the exponential scaling of FCI.
Integrating BK more completely in this and later stages requires a more complex strategy similar to RFCI \citep{colombo2012learning} and integrations of rules like \citet{venkateswaran2024towards}, so we leave this for future work.

\begin{figure*}[t]
    \centering
    \includegraphics[width=.6\linewidth]{experiments/legend.pdf}
    \begin{subfigure}[b]{.9\linewidth}
        \begin{subfigure}[b]{0.32\linewidth}
            \centering $\quad\quad$d-separation tests
        \end{subfigure}
        \begin{subfigure}[b]{0.32\linewidth}
            \centering $\quad$ Fisher-Z tests
        \end{subfigure}
        \begin{subfigure}[b]{0.32\linewidth}
            \centering $G^2$ tests
        \end{subfigure}
        \includegraphics[width=\linewidth]{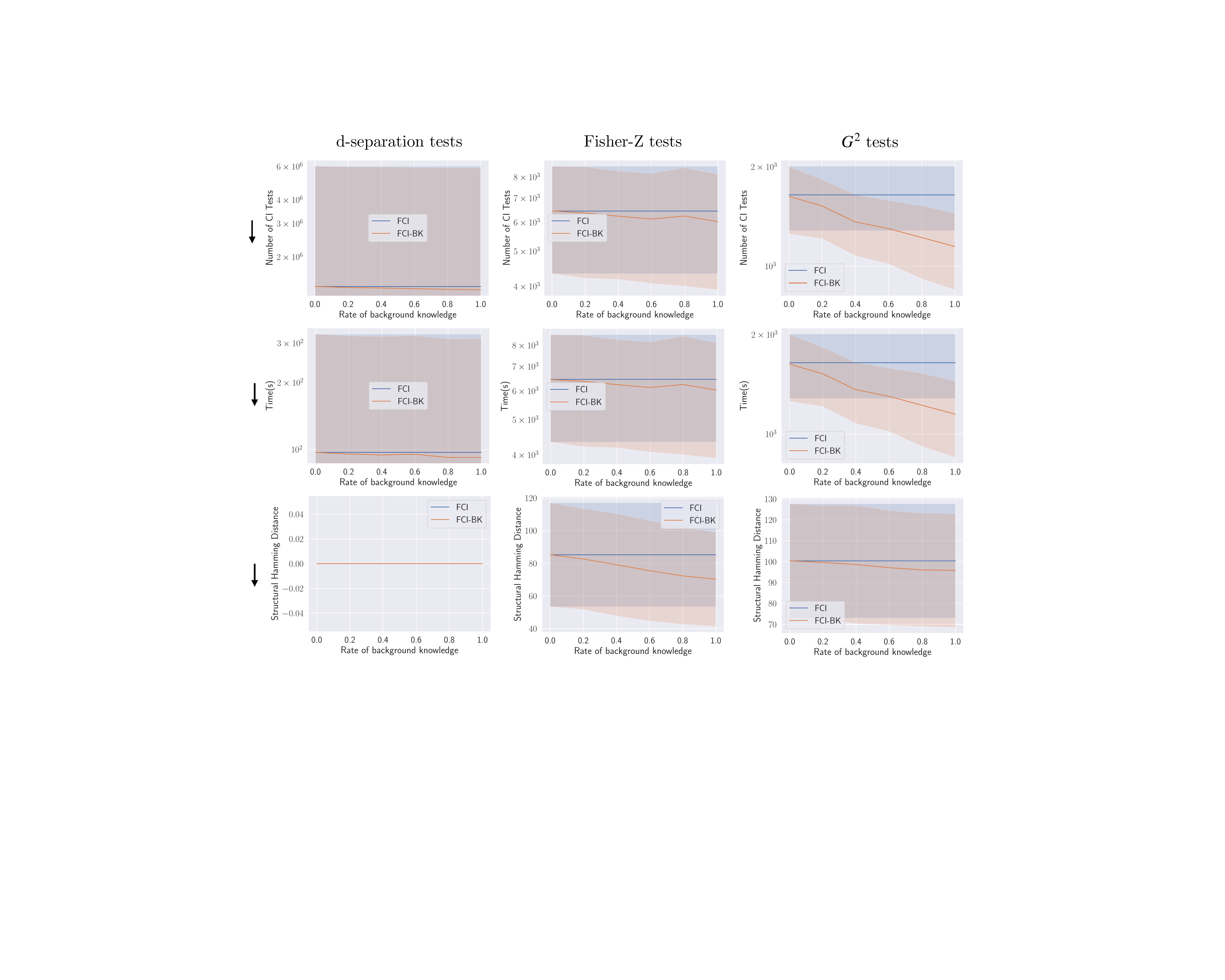}
    \end{subfigure}
    \caption{Results over rate of \ac{BK} between variable pairs for FCI compared to a preliminary version of FCI-BK as described in \Cref{sec:latent_vars}, on random graphs with an average degree of 3, 40 observed nodes and average 4.4 latent confounders.
    The shadow area denotes standard deviation. 
    Lower is better for each metric.}
    \label{fig:fci}
\end{figure*}

\section{Experimental Details}
\label{sec:experimental_details}

For our experiments, we used the following libraries: igraph \citep{csardi2006igraph} (GNU GPL version 2 or later), networkx \citep{hagberg2008exploring} (3-Clause BSD License), bnlearn \citep{scutari2010bnlearn} (MIT License), causal-learn \citep{zheng2024causal} (MIT License).
In particular, we used the CI test implementations from the causal-learn package \citep{zheng2024causal}.

All experiments were performed on AMD Rome CPUs, using 48 CPU cores and 84 GiB of memory.
We let each experiment run for at most 24 hours.
If an experiment over 100 seeds did not finish in the given time or did not fit in the given memory, then we do not report any results for it.

\section{Results on Structural Hamming Distance}
\label{sec:shd}

The various methods we extend recover different types of subgraphs with different sets of nodes or even just local structures, such as LOAD-BK and LDECC$^+$-BK.
Because of this it would not be fair or even possible to compare all algorithms using structural metrics. 
Instead, we generally compare all methods based on the downstream task we are interested in, i.e., causal effect estimation for a set of target variables, which we measure through the intervention distance.

\begin{figure*}[t]
    \centering
    \begin{subfigure}[b]{0.49\linewidth}
        \centering
        \caption*{$\quad$ Fisher-Z tests}
        \includegraphics[width=.7\linewidth]{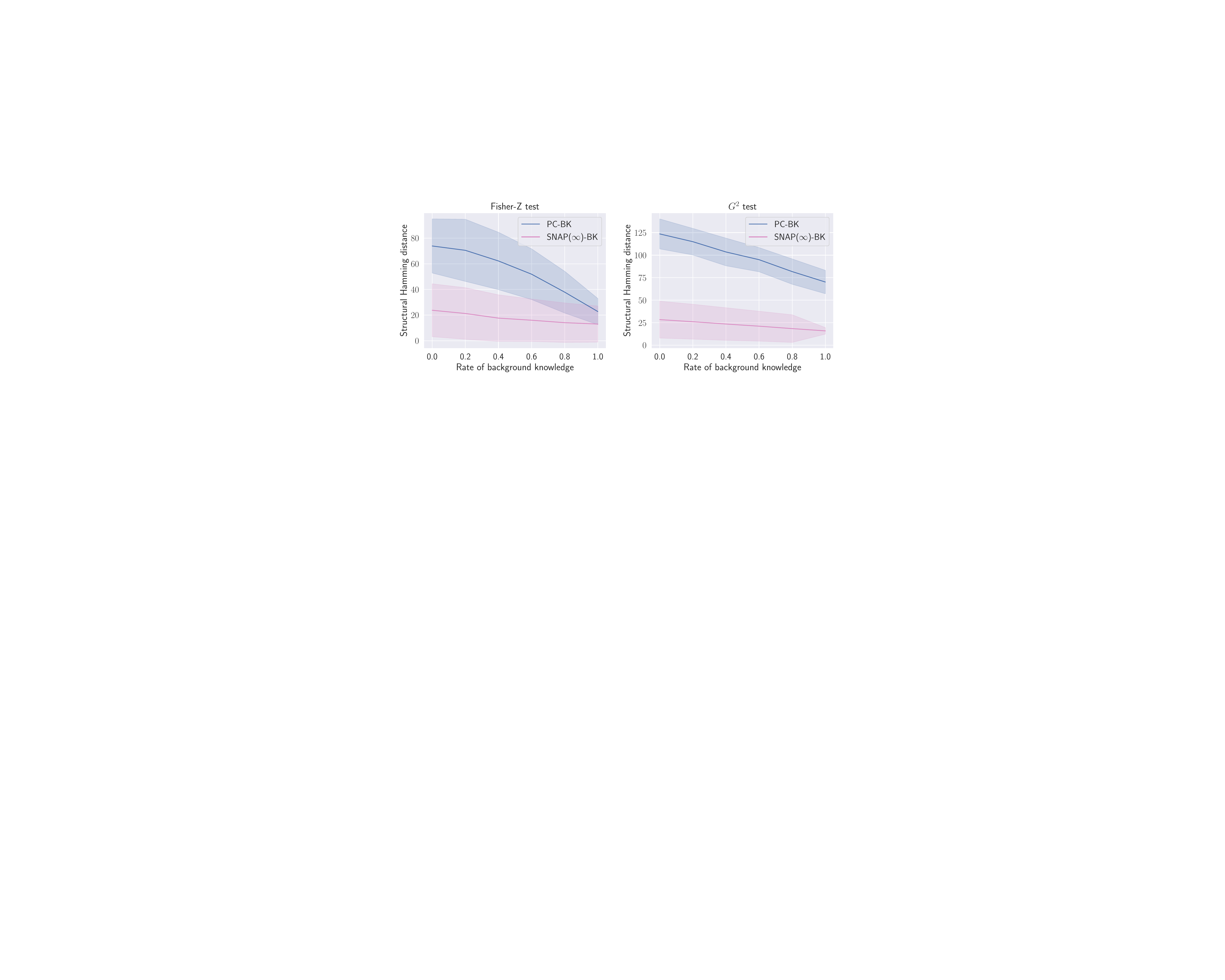}
    \end{subfigure}
    \begin{subfigure}[b]{0.49\linewidth}
        \centering
        \caption*{$G^2$ tests}
        \includegraphics[width=.7\linewidth]{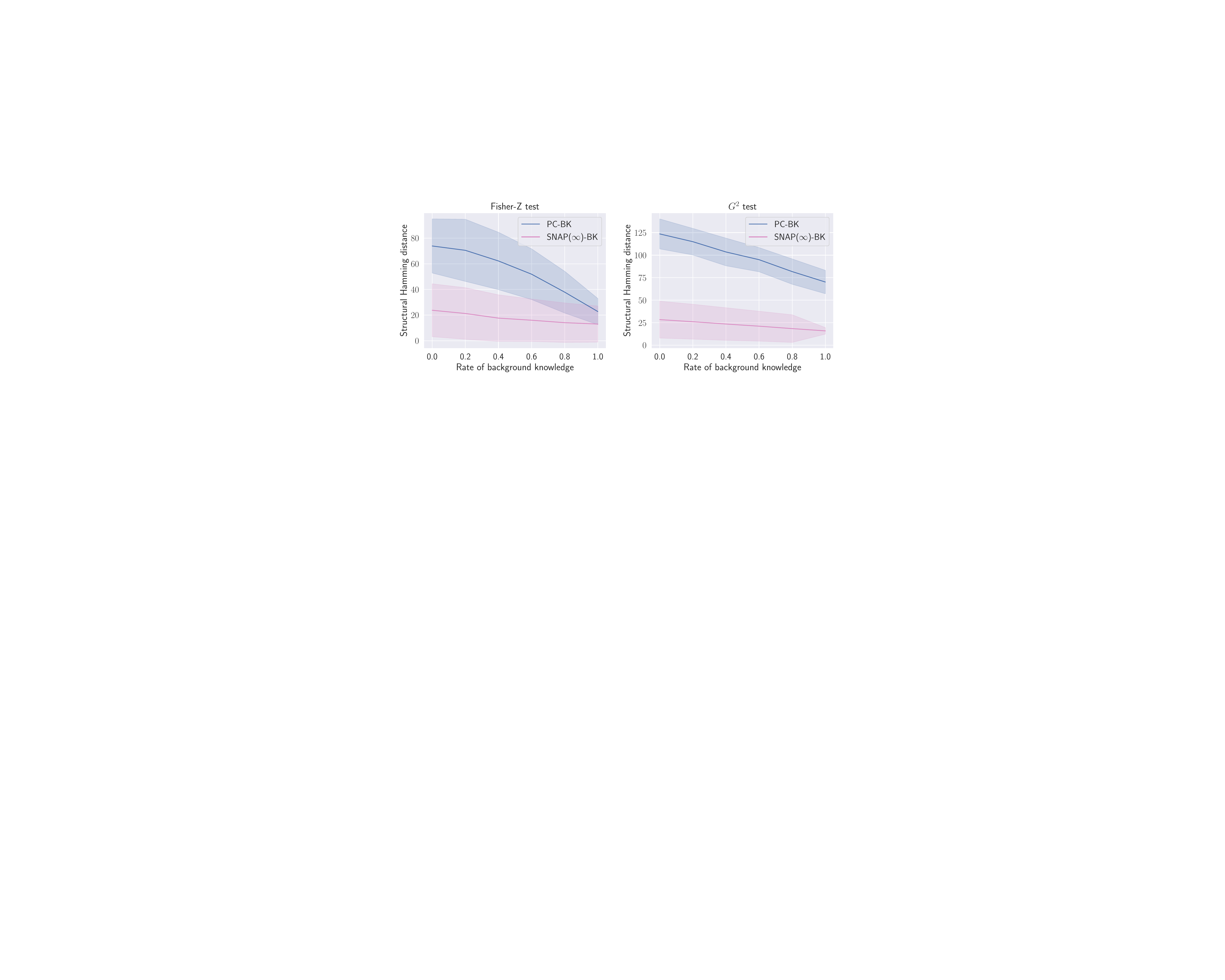}
    \end{subfigure}
    \caption{SHD for PC-BK and SNAP($\infty$)-BK over rate of \ac{BK} between variable pairs.
    The shadow area denotes standard deviation.}
    \label{fig:shd}
\end{figure*}

However, as PC-BK and SNAP($\infty$)-BK do recover MPDAGs, for completeness in this section we report their SHD under the same finite data settings as in \Cref{fig:over_bk}.
In \Cref{fig:shd} we measure the SHD compared to the true MPDAG for PC-BK, and compared to the induced subgraph of the true MPDAG over the b-possible ancestors of the targets in the true MPDAG for SNAP($\infty$)($\infty$)-BK, so these results are not directly comparable across the methods, but these results can be used to evaluate how the SHD changes based on the BK rate.
As expected, our results show a clear trend of decreasing (improving) SHD with increasing rate of BK for both methods in both data settings.

\section{Comparison to Post-Processing}
\label{sec:post_proc}

While we propose methods for integrating the BK as soon as possible, so either before or during the learning process, our work is compatible with other approaches for BK in postprocessing, such as \citep{perkovic2017interpreting} or \citep{zheng2026local}. So in principle one could use different sets of BK for our approaches and for postprocessing. If we consider perfect BK and oracle d-separation tests, the outputs should be the same, as shown by our theoretical results. In terms of pre-processing, we already integrate as much as possible of the knowledge before starting the learning procedure and show that doing so naively (like in the current implementations described in \Cref{sec:gaps_code}) might create errors.

In general, the computational efficiency of using BK in postprocessing would be very similar, if not in some cases identical, to the results shown in \Cref{fig:over_bk} for a BK rate of zero.
For PC and SNAP($\infty$), integrating BK in postprocessing with the approach by \citet{perkovic2017interpreting} gives exactly the same CI tests as these results, since there are no CI tests in the postprocessing. The execution time is also dominated by the tests, so this is quite similar. For local methods like MB-by-MB and LOAD, we can follow \citep{zheng2026local} for postprocessing to orient the local structures, which also gives similar results. For LDECC, we can simply add or remove parents and children to its output based on the BK. While this strategy is sound, it may identify fewer children than our approach, as using orientations during discovery can orient additional children at line 19 of LDECC (Fig. 5 in \citep{gupta2023local}).

To make this point even clearer, we provide empirical results, showing the improvement of our methods in comparison with using BK in postprocessing.
In \Cref{fig:post_proc} we report the \emph{improvement} of our approach compared to the aforementioned baselines, quantified as the difference between the baselines and our approach, in the same setting as \Cref{fig:over_bk}, where higher is better. The results show that as the rate of BK increases our methods increasingly reduce the number of CI tests compared to post-processing. Similarly, they increasingly reduce the average conditioning set size for PC-BK and SNAP($\infty$)-BK, while for MB-by-MB$^+$-BK, LDECC$^+$-BK and LOAD-BK this depends on the data setting and the rate of BK. Similar to the number of tests, the running time is also consistently reduced. While our main focus is on computational efficiency, the intervention distance in finite sample data seems to improve for PC-BK and SNAP($\infty$)-BK in Fisher-Z tests, while the trend is less clear for other methods and settings.  

\begin{figure*}[ht]
    \centering
    \includegraphics[width=.8\linewidth]{experiments/legend.pdf}
    \begin{subfigure}[b]{\linewidth}
        \begin{subfigure}[b]{0.32\linewidth}
            \centering
            \caption*{ $\quad$ d-separation tests}
            \includegraphics[width=\linewidth]{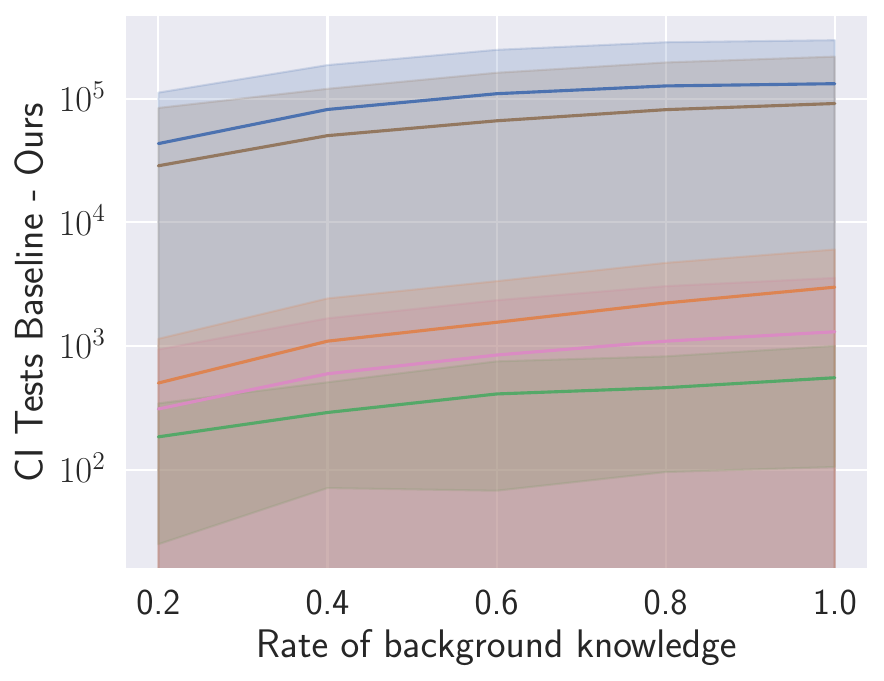}
            \includegraphics[width=\linewidth]{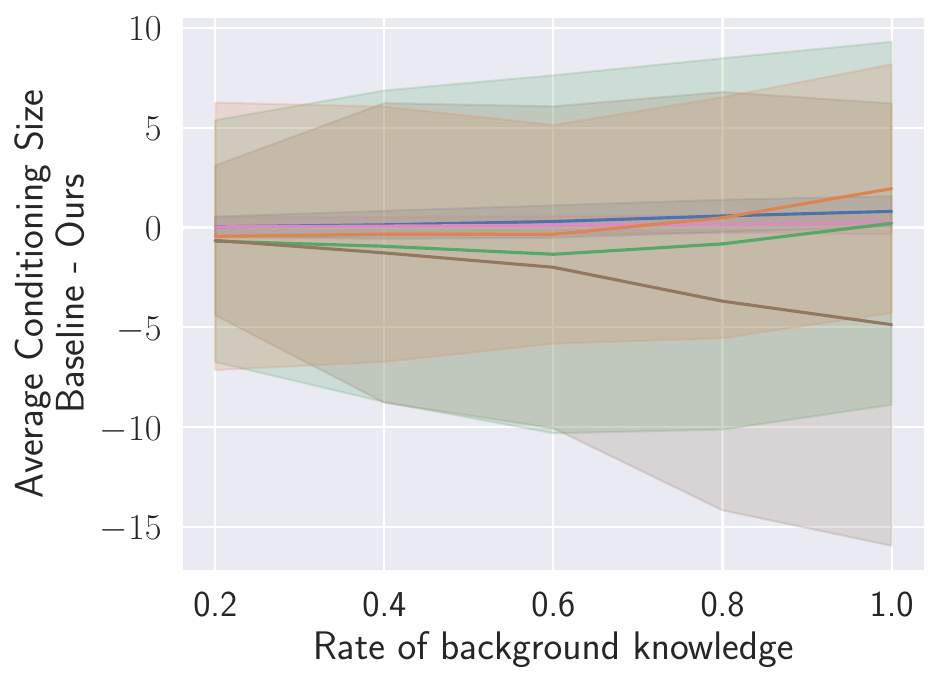}
            \includegraphics[width=\linewidth]{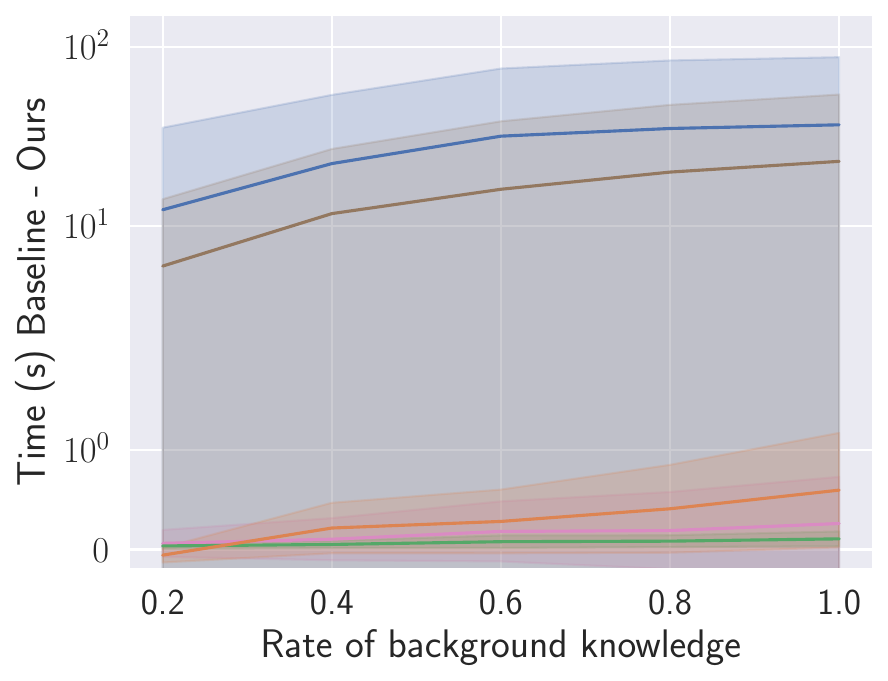}
            \includegraphics[width=\linewidth]{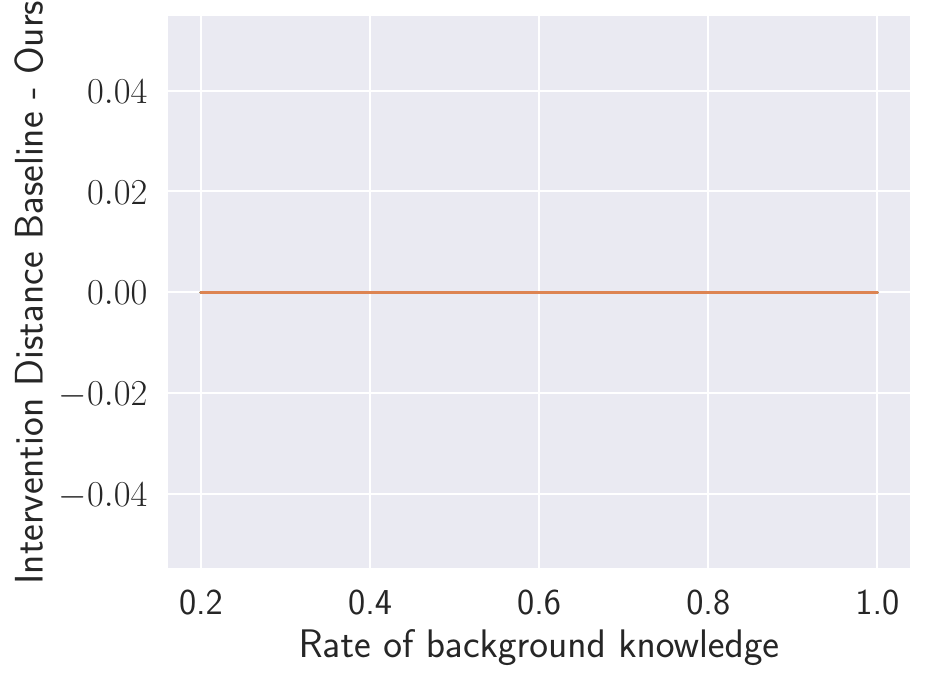}
        \end{subfigure}
        \begin{subfigure}[b]{0.32\linewidth}
            \centering
            \caption*{$\quad$ Fisher-Z tests}
            \includegraphics[width=\linewidth]{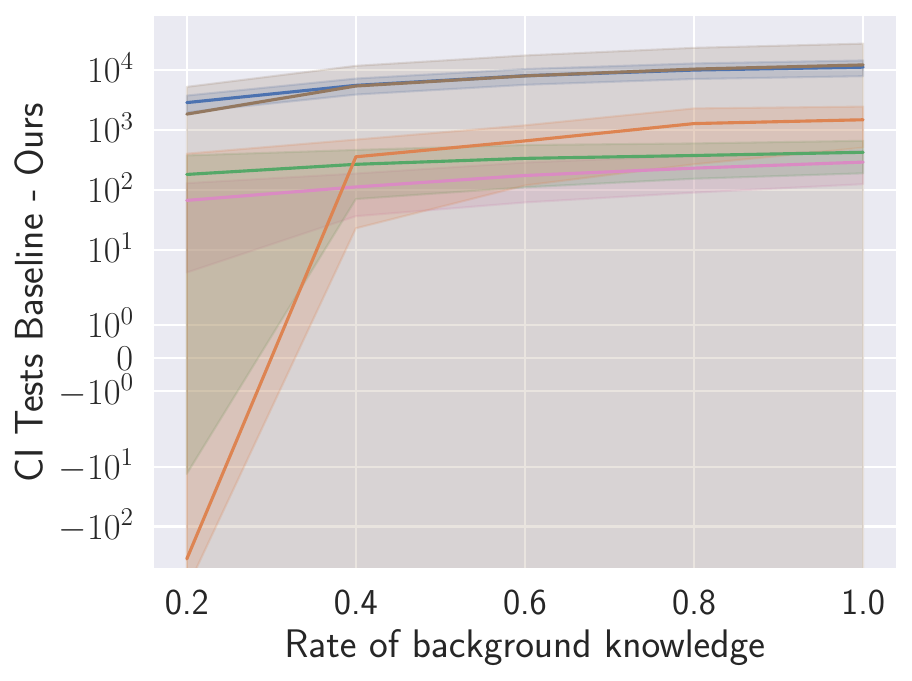}
            \includegraphics[width=\linewidth]{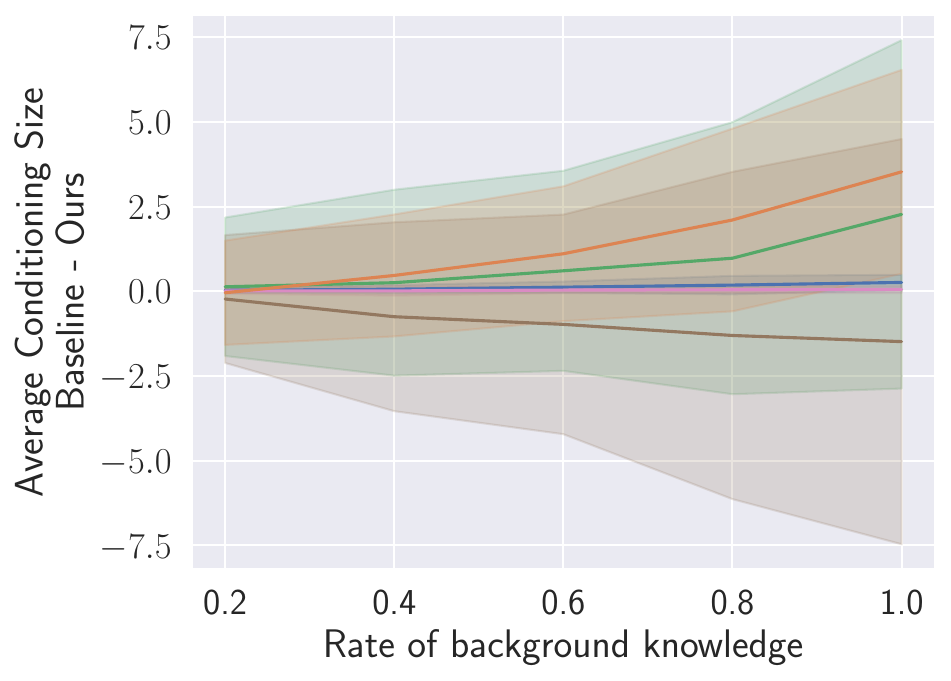}
            \includegraphics[width=\linewidth]{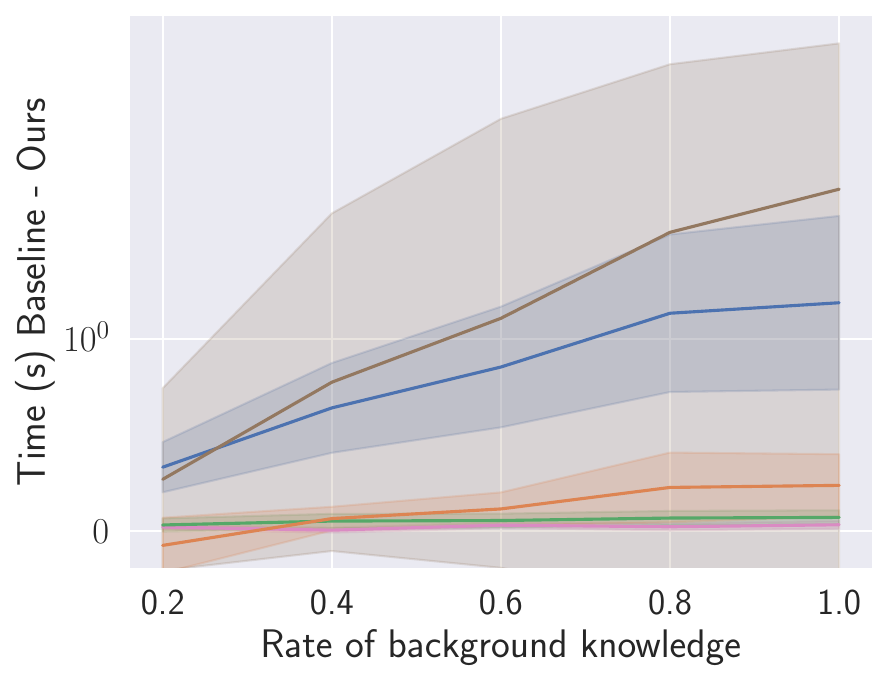}
            \includegraphics[width=\linewidth]{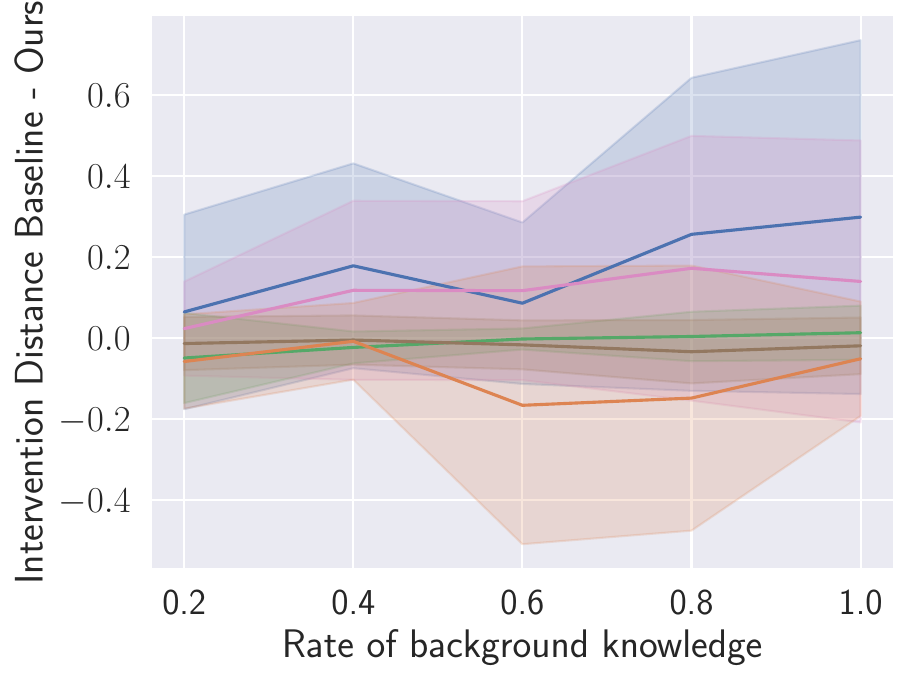}
        \end{subfigure}
        \begin{subfigure}[b]{0.32\linewidth}
            \centering
            \caption*{$G^2$ tests}
            \includegraphics[width=\linewidth]{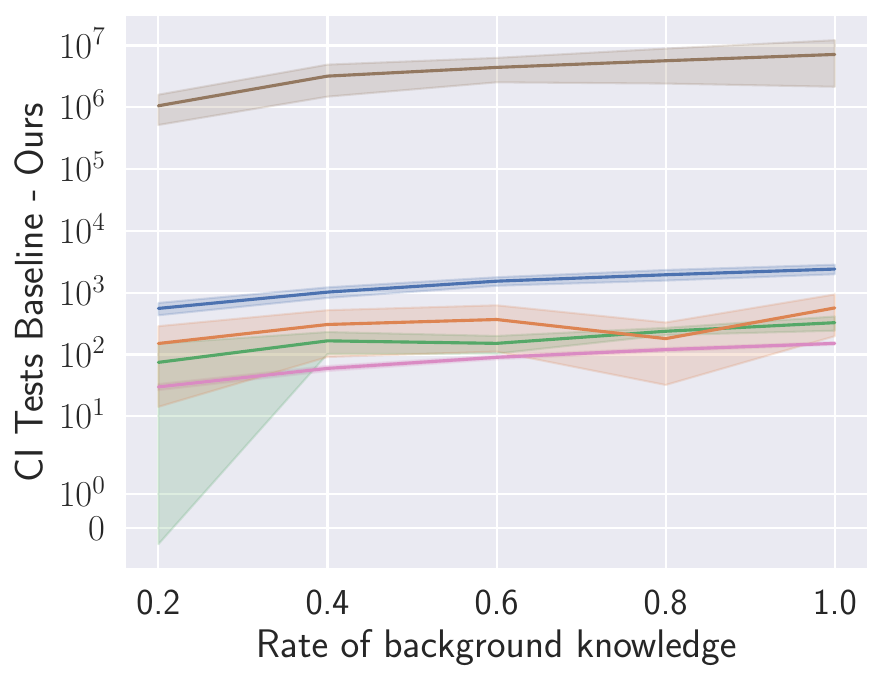}
            \includegraphics[width=\linewidth]{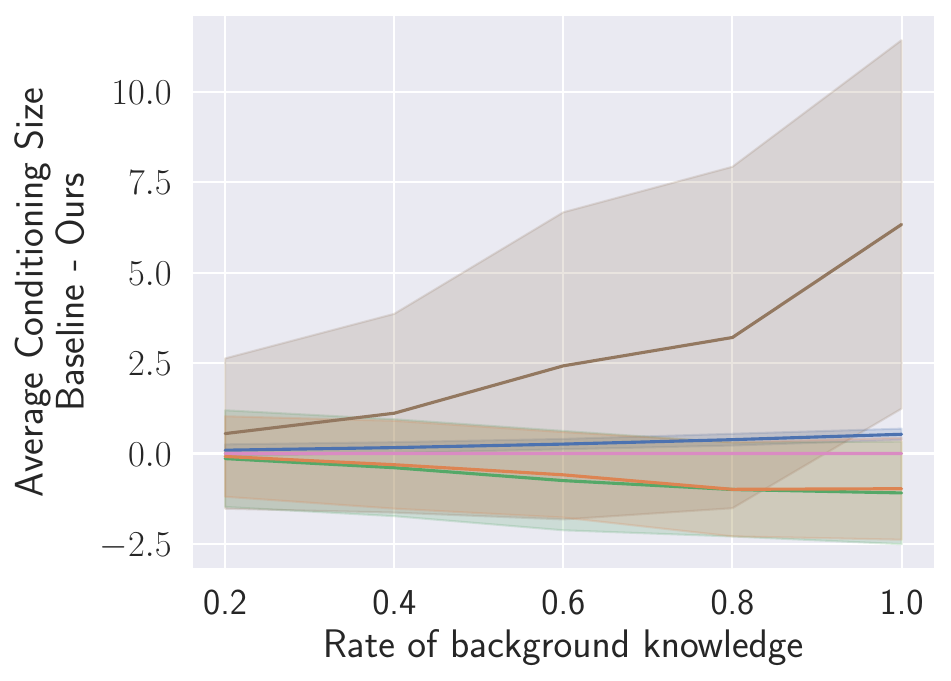}
            \includegraphics[width=\linewidth]{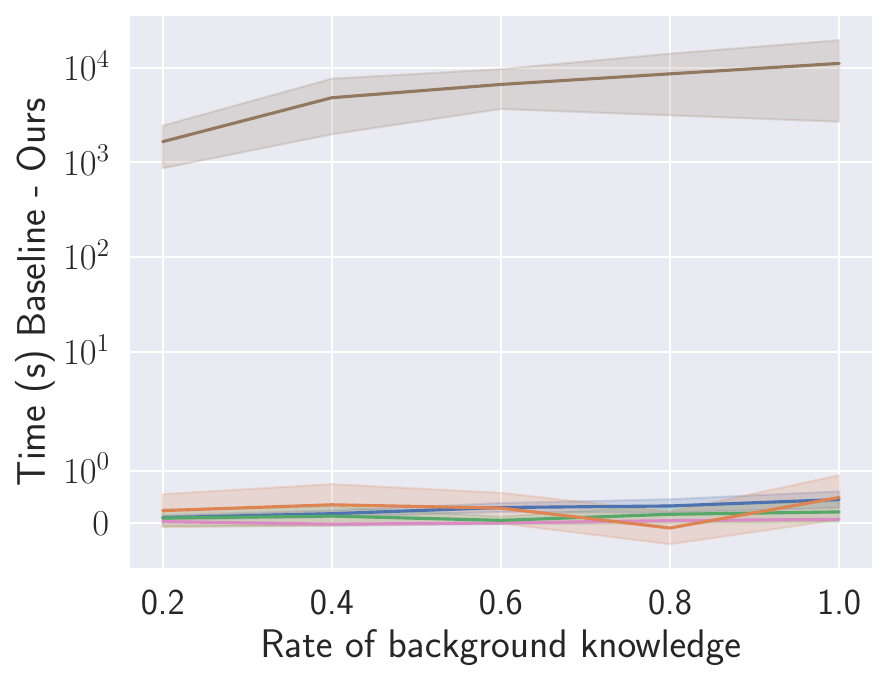}
            \includegraphics[width=\linewidth]{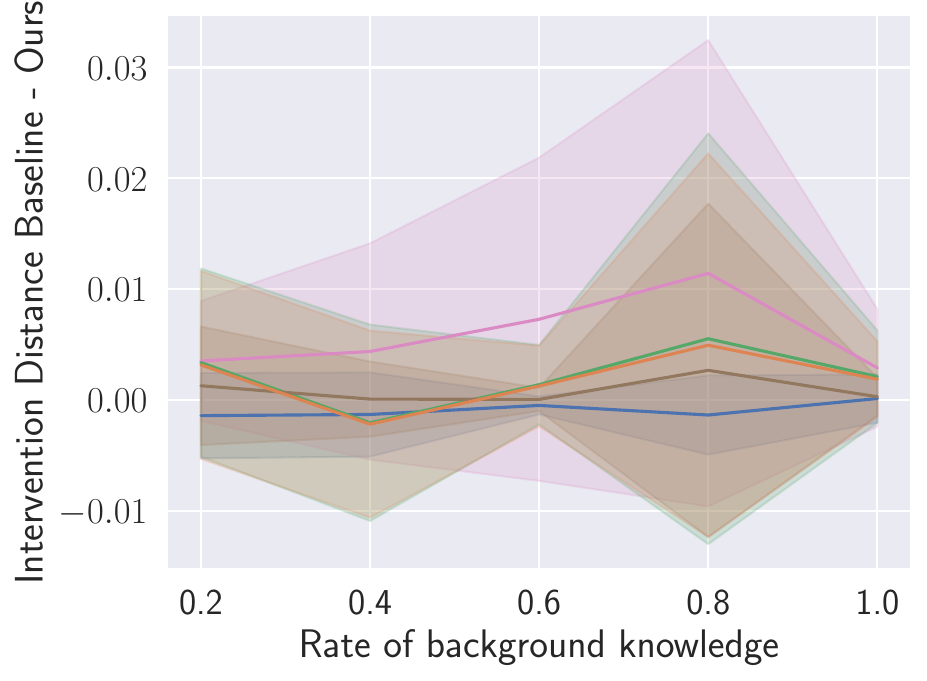}
        \end{subfigure}
    \end{subfigure}
    \caption{Results for showing the \emph{improvement} of our approach compared to the post-processing baselines described in \Cref{sec:post_proc}, quantified as the difference between the baselines and our approach, in the same setting as \Cref{fig:over_bk}.
    The shadow area denotes standard deviation. 
    Higher is better for each metric.}
    \label{fig:post_proc}
\end{figure*}

Additionally, In \Cref{fig:post_proc_noisy} we also compare how sensitive the two stages of applying BK are to noisy data by measuring the difference in intervention distance over different rates of BK error (as in \Cref{fig:bk_over_error}).
This figure shows that the intervention distance gets worse with a higher rate of BK error for both types of methods (first two rows), but as the rate of BK error increases, for $G^2$ tests there seems to be a higher improvement in intervention distance for our methods vs postprocessing (last row), while for Fisher-Z this is less clear.

\begin{figure*}[ht]
    \centering
    \includegraphics[width=.8\linewidth]{experiments/legend.pdf}
    \begin{subfigure}[b]{\linewidth}
        \centering
        \begin{subfigure}[b]{0.35\linewidth}
            \centering
            \caption*{$\quad$ Fisher-Z tests}
            \includegraphics[width=\linewidth]{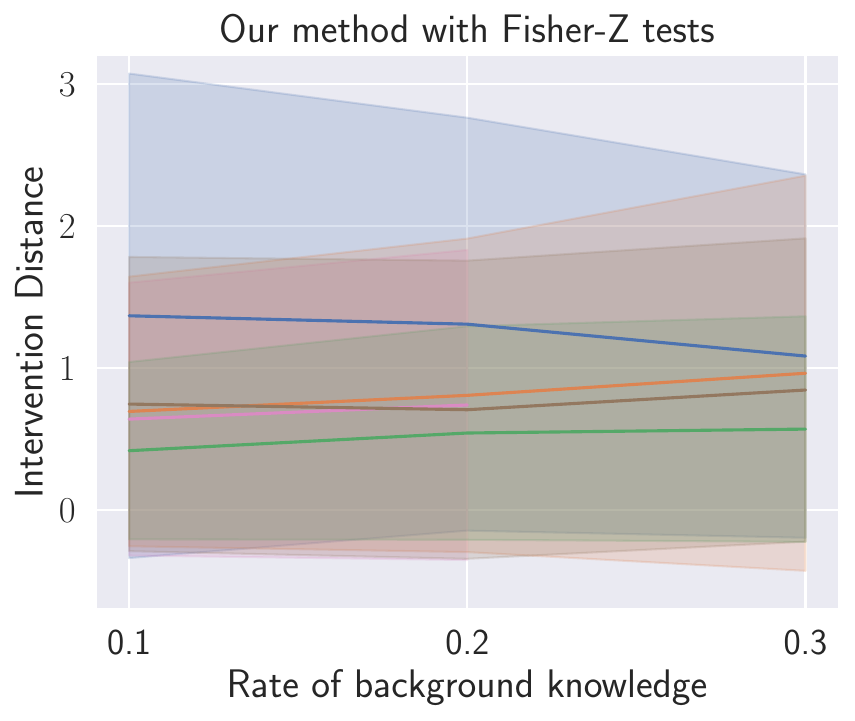}
            \includegraphics[width=\linewidth]{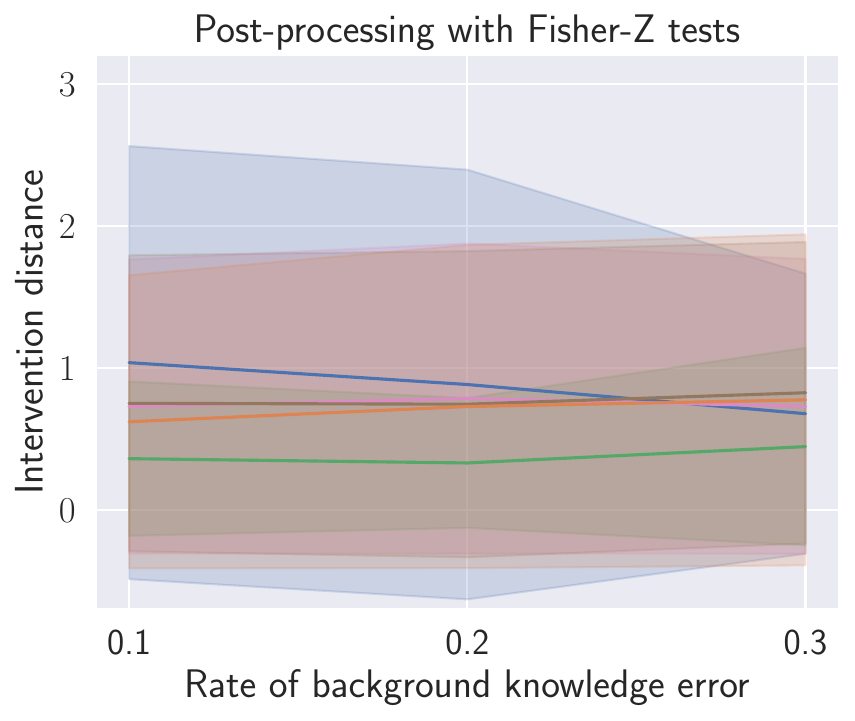}
            \includegraphics[width=\linewidth]{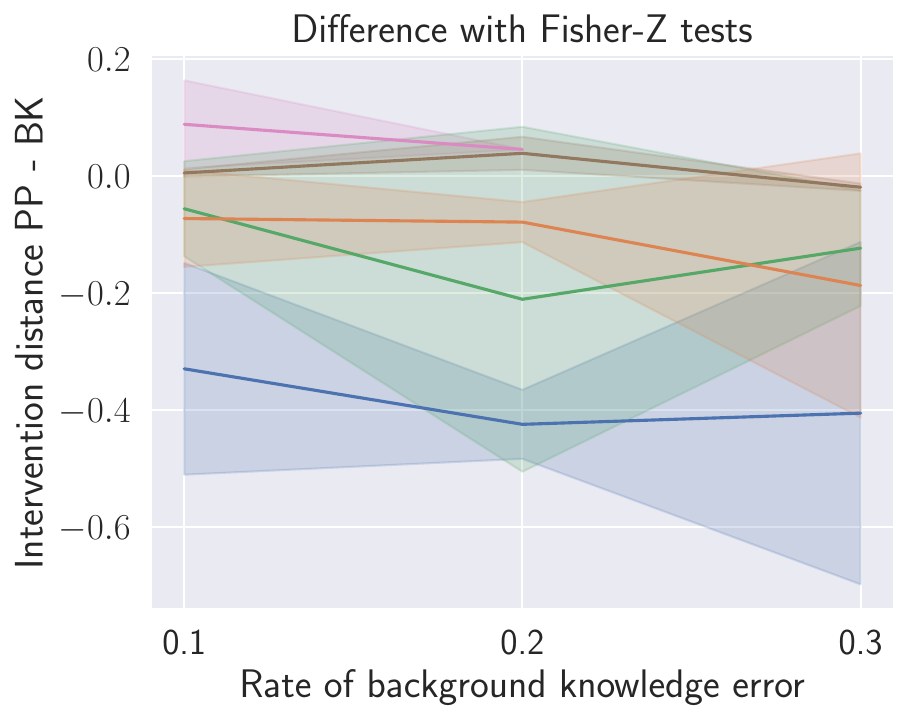}
        \end{subfigure}
        \hspace{8mm}
        \begin{subfigure}[b]{0.37\linewidth}
            \centering
            \caption*{$G^2$ tests}
            \includegraphics[width=\linewidth]{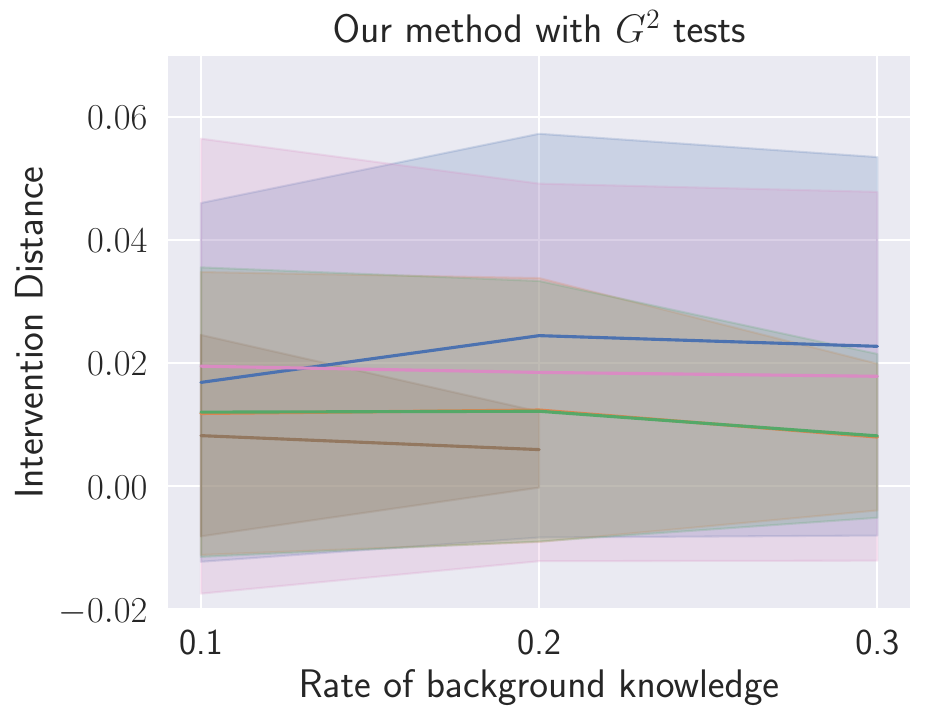}
            \includegraphics[width=\linewidth]{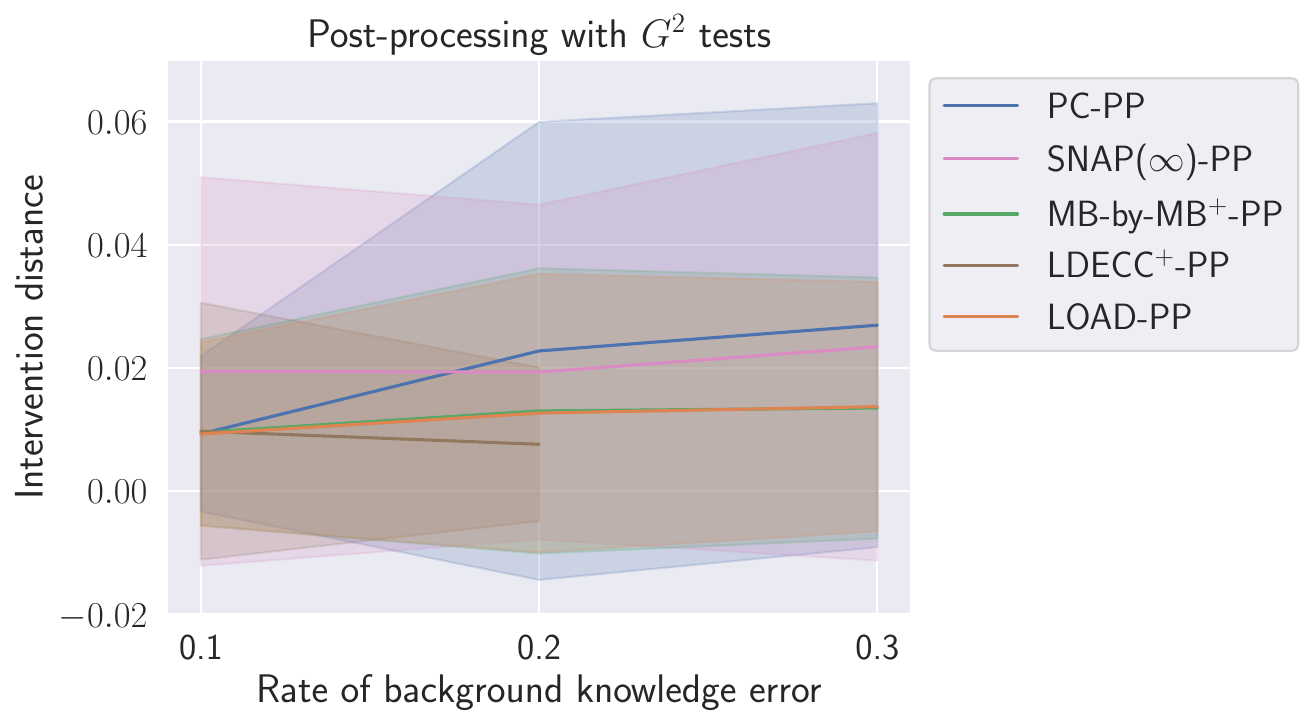}
            \includegraphics[width=\linewidth]{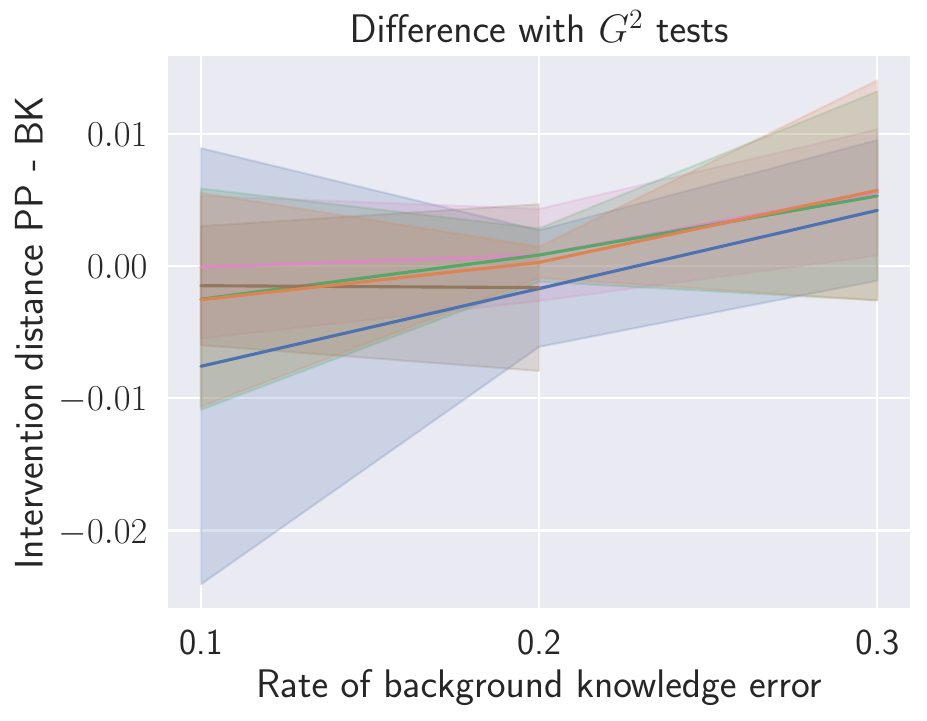}
        \end{subfigure}
    \end{subfigure}
    \caption{
    Results comparing our approach to post-processing baselines described in \Cref{sec:post_proc} under different amounts of background knowledge error in the same setting as \Cref{fig:bk_over_error}.
    The shadow area denotes standard deviation. 
    Higher is better for each metric.}
    \label{fig:post_proc_noisy}
\end{figure*}

\section{Types of Background Knowledge}
\label{sec:bk_types_abl}
In this section we study the importance of different types of background knowledge, i.e., adjacencies, orientations and gaps.
We show our results for each algorithm separately.

Our results for PC-BK in \Cref{fig:bk_type_pc} show that all types of \ac{BK} are beneficial.
On almost all figures we can see the same trend of more \ac{BK} implying better performance both in terms of speed and quality.
The only exceptions are on binary data with $G^2$ CI tests, where more known gaps \ac{BK} increase computation time and known adjacencies increase intervention distance.

Results for SNAP($\infty$)-BK are shown in \Cref{fig:bk_type_snap}.
The number of CI tests steadily decreases over all \ac{BK} types and data domains except for gaps on binary data with $G^2$ CI tests.
As expected, orientation \ac{BK} consistently helps the most to push down intervention distance.
While running time generally decreases when utilizing all types of \ac{BK} at once over all domains, the \ac{BK} types separately show less consistency.

The results for MB-by-MB$^+$-BK in \Cref{fig:bk_type_mb_by_mb} show that all types of \ac{BK} over all domains, except for gaps, decrease computational costs.
As usual, orientation BK helps the most with intervention distance, where results are less regular for other types of BK.

LDECC$^+$-BK, shown in \Cref{fig:bk_type_ldecc}, shows the most clear trends over all metrics with increasing amount of BK over all BK types, metrics and data domains.

Results for LOAD-BK in \Cref{fig:bk_type_load} are very similar to MB-by-MB$^+$-BK, the local causal discovery sub-routine it uses.

\begin{figure*}[t]
    \centering
    \begin{subfigure}[b]{\linewidth}
        \centering
        \large{PC-BK algorithm}
    \end{subfigure}
    \begin{subfigure}[b]{.8\linewidth}
        \begin{subfigure}[b]{0.32\linewidth}
            \centering
            \caption*{ $\quad$ d-separation tests}
            \includegraphics[width=\linewidth]{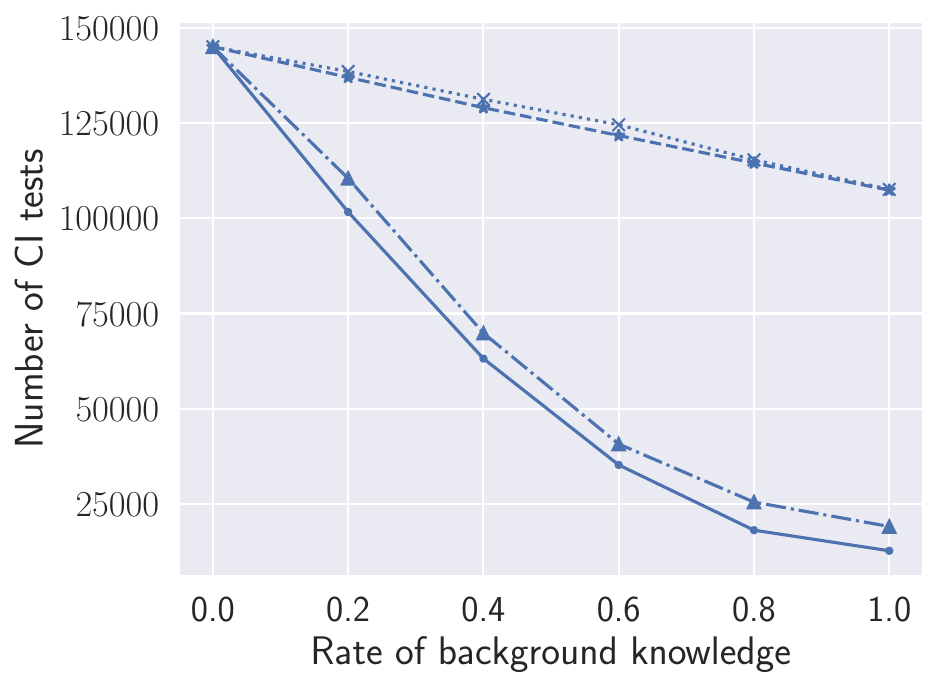}
            \includegraphics[width=\linewidth]{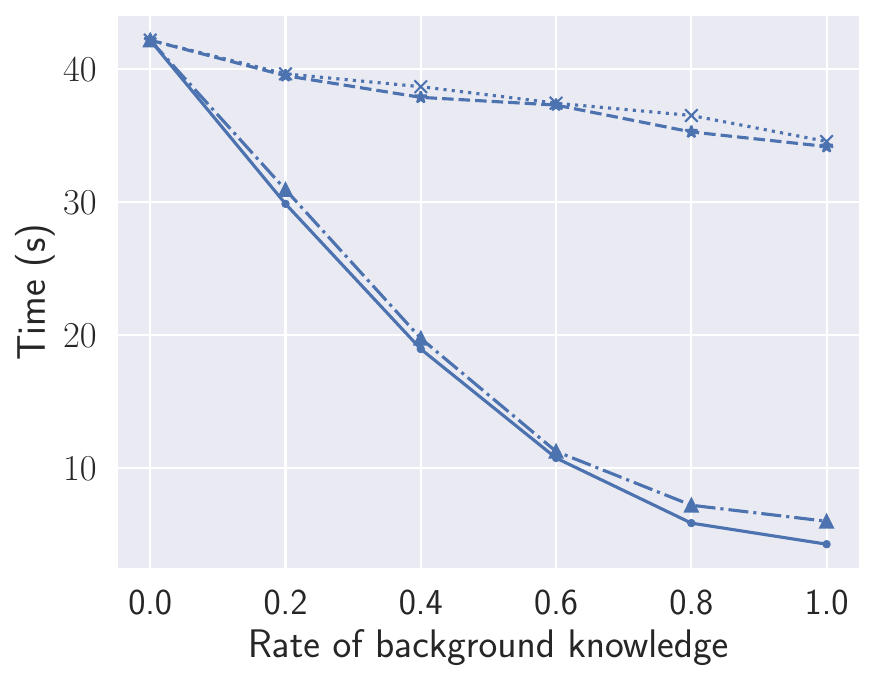}
            \includegraphics[width=\linewidth]{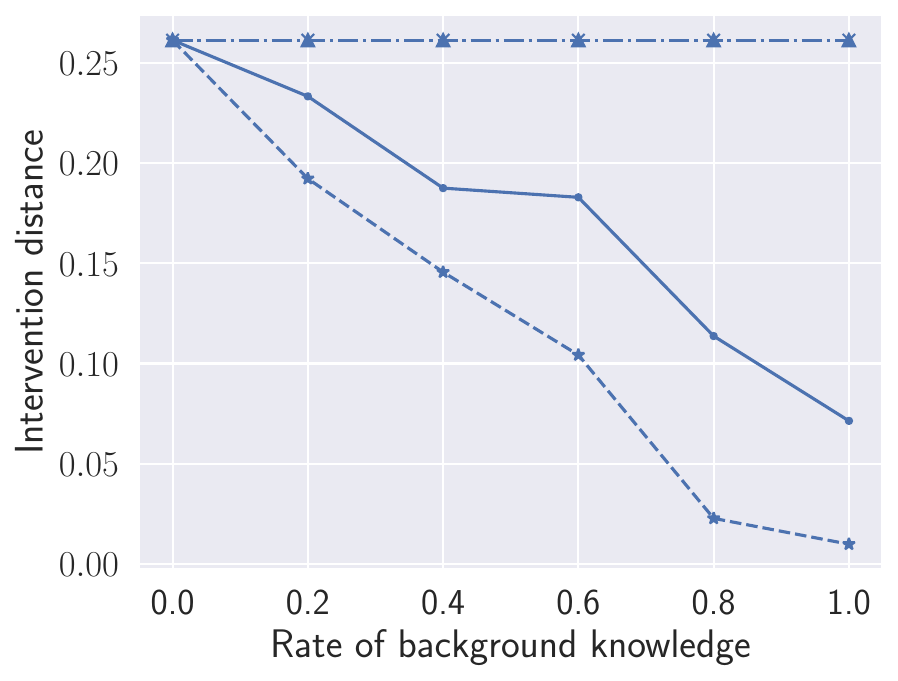}
        \end{subfigure}
        \begin{subfigure}[b]{0.32\linewidth}
            \centering
            \caption*{$\quad$ Fisher-Z tests}
            \includegraphics[width=\linewidth]{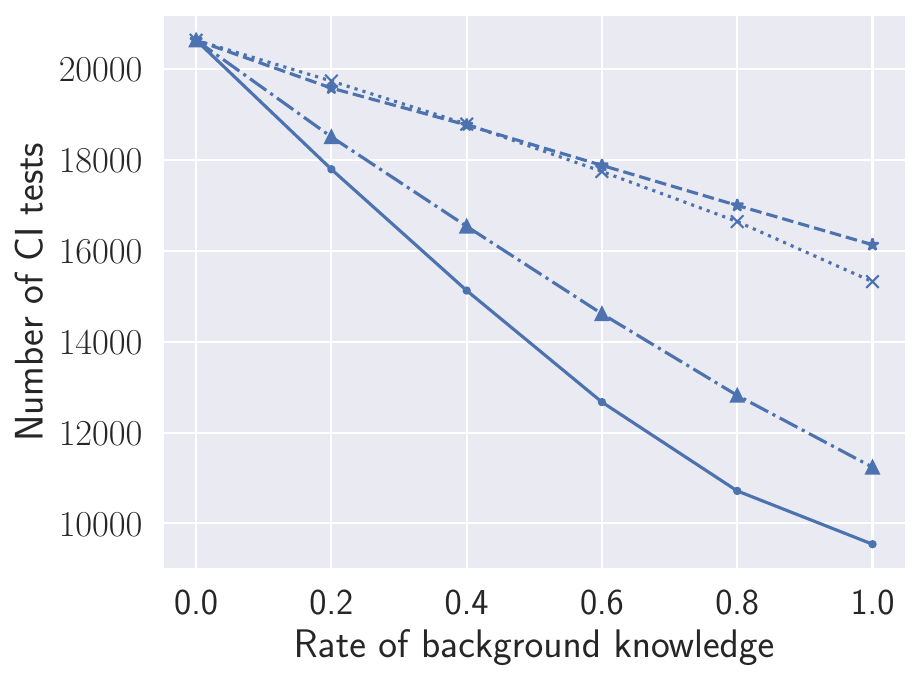}
            \includegraphics[width=\linewidth]{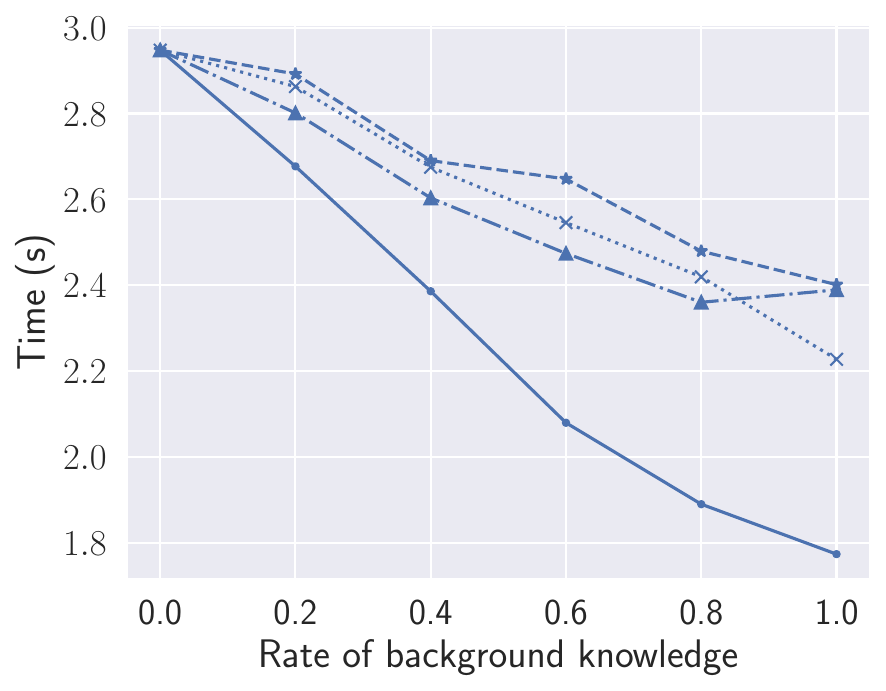}
            \includegraphics[width=\linewidth]{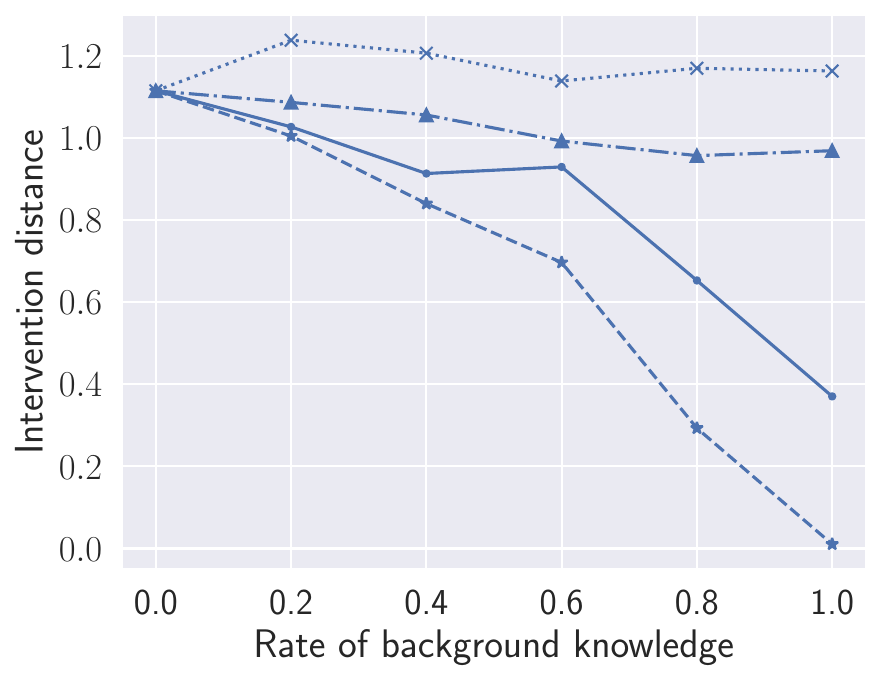}
        \end{subfigure}
        \begin{subfigure}[b]{0.32\linewidth}
            \centering
            \caption*{$G^2$ tests}
            \includegraphics[width=\linewidth]{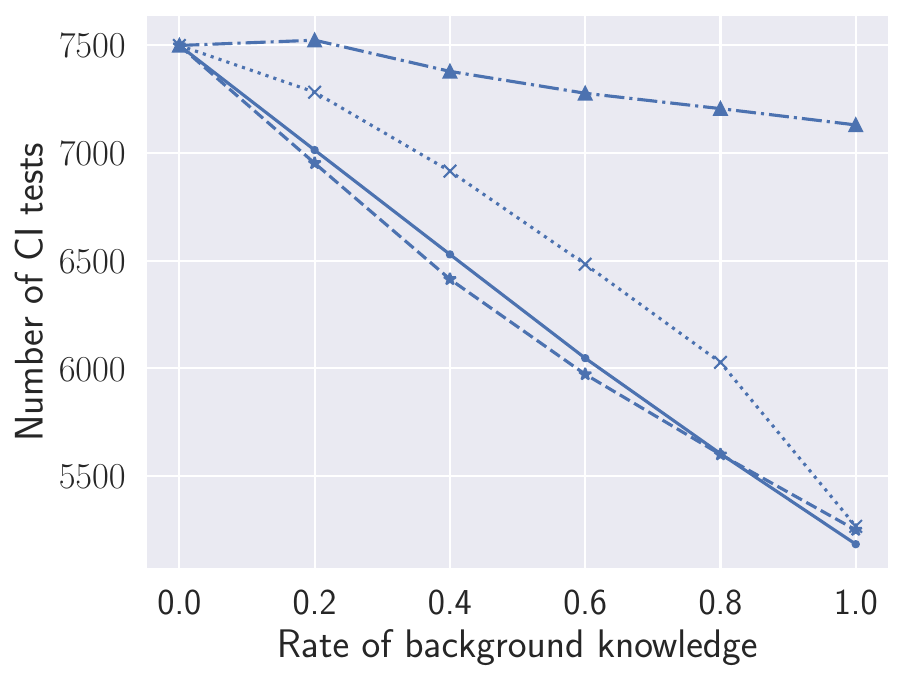}
            \includegraphics[width=\linewidth]{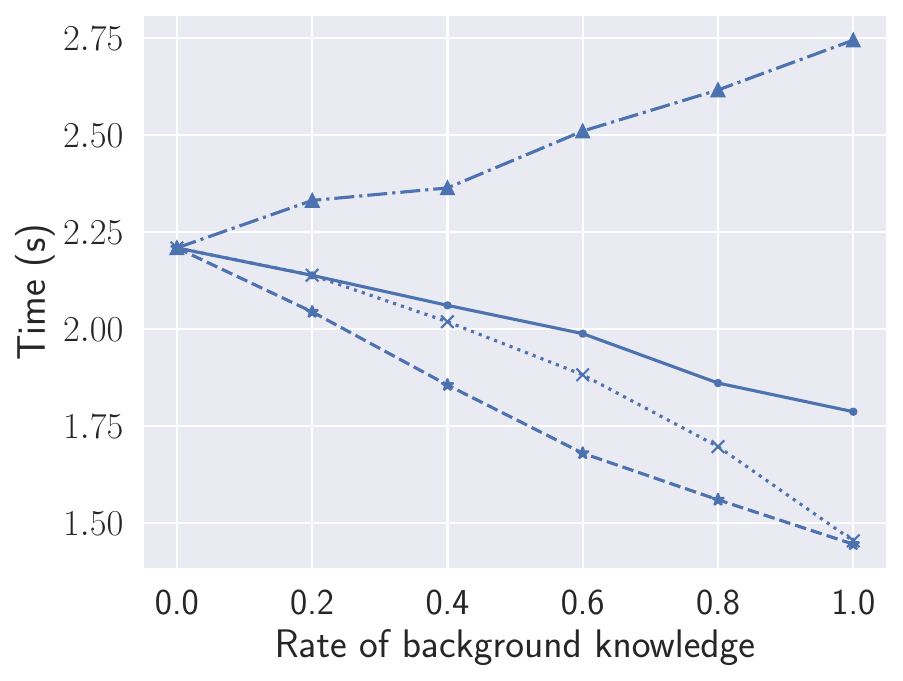}
            \includegraphics[width=\linewidth]{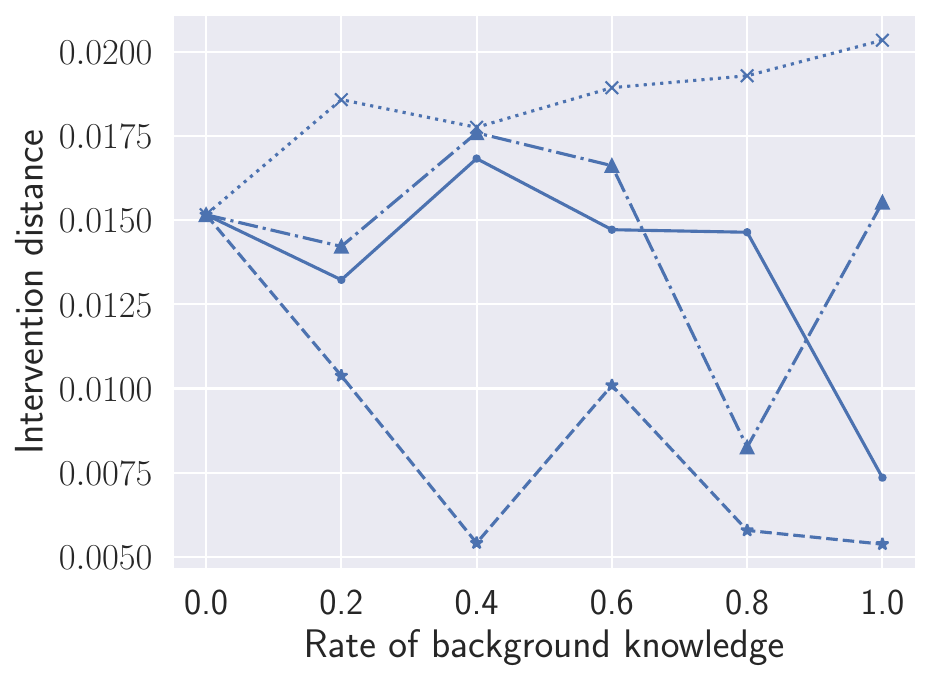}
        \end{subfigure}
    \end{subfigure}
    \begin{subfigure}[b]{\linewidth}
        \centering
        \includegraphics[width=.65\linewidth]{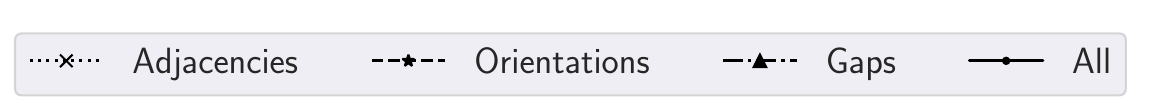}
    \end{subfigure}
    \caption{Results for the PC-BK algorithm over rates of different types background knowledge between variable pairs.}
    \label{fig:bk_type_pc}
\end{figure*}

\begin{figure*}[ht]
    \centering
    \begin{subfigure}[b]{\linewidth}
        \centering
        \large{SNAP($\infty$)-BK algorithm}
    \end{subfigure}
    \begin{subfigure}[b]{.8\linewidth}
        \begin{subfigure}[b]{0.32\linewidth}
            \centering
            \caption*{ $\quad$ d-separation tests}
            \includegraphics[width=\linewidth]{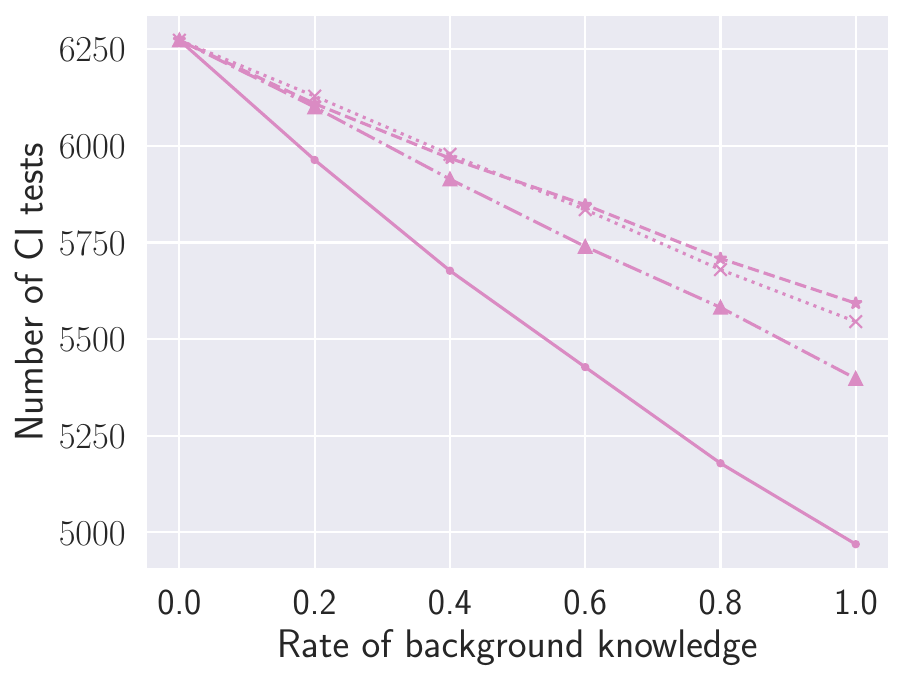}
            \includegraphics[width=\linewidth]{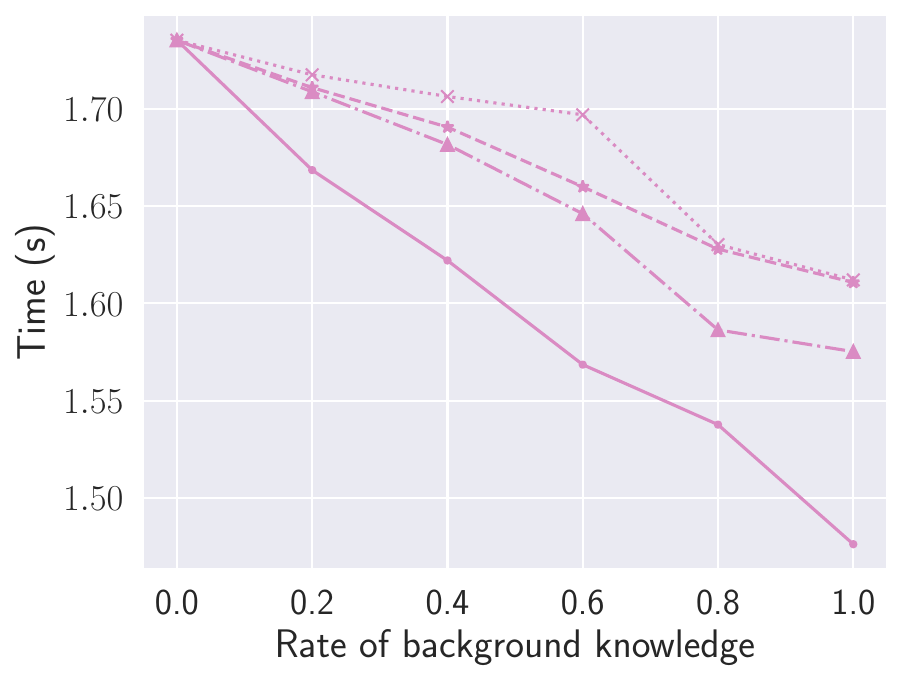}
            \includegraphics[width=\linewidth]{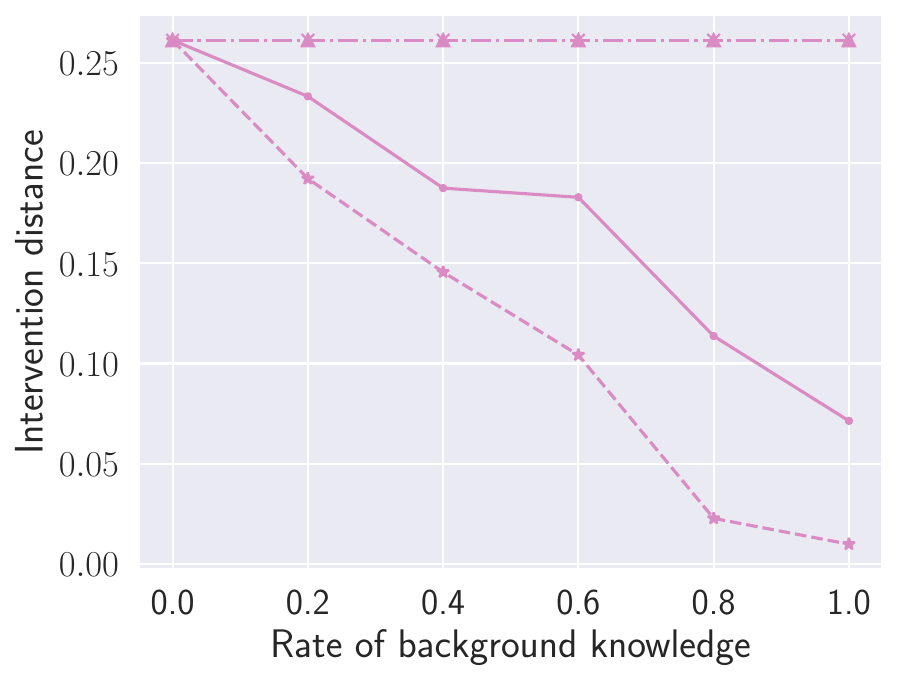}
        \end{subfigure}
        \begin{subfigure}[b]{0.32\linewidth}
            \centering
            \caption*{$\quad$ Fisher-Z tests}
            \includegraphics[width=\linewidth]{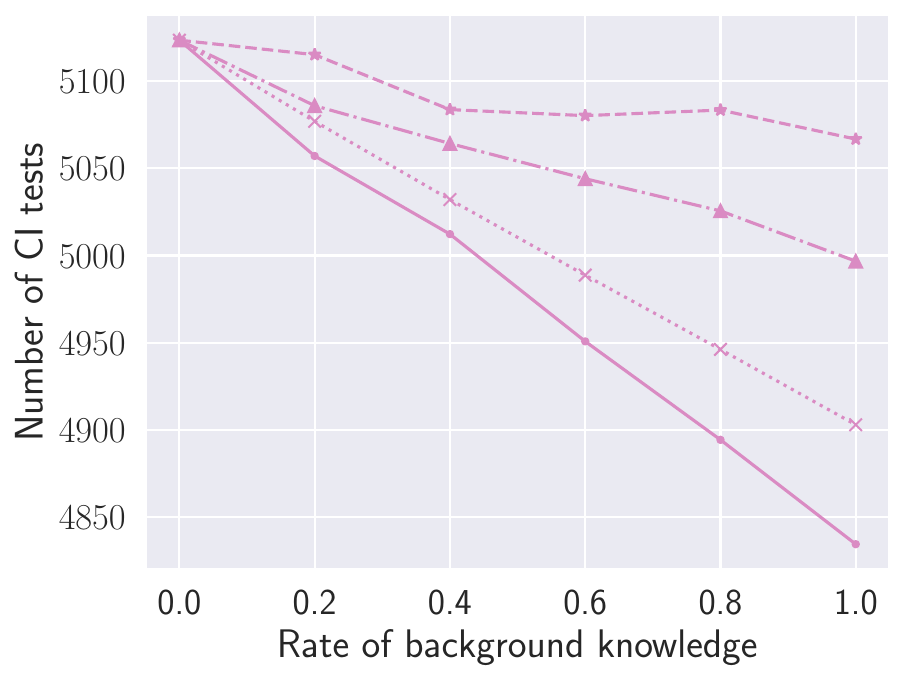}
            \includegraphics[width=\linewidth]{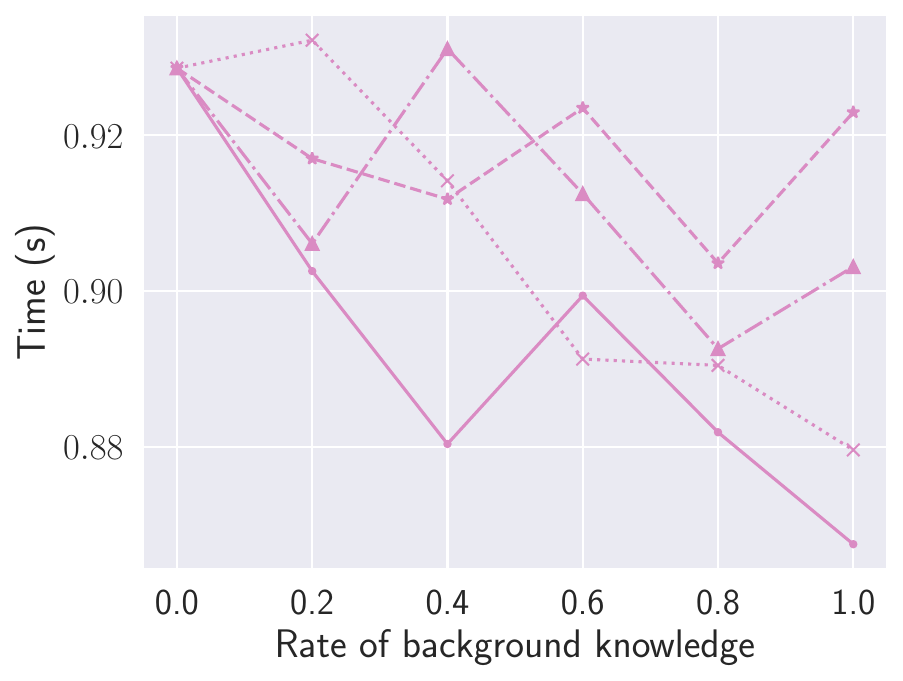}
            \includegraphics[width=\linewidth]{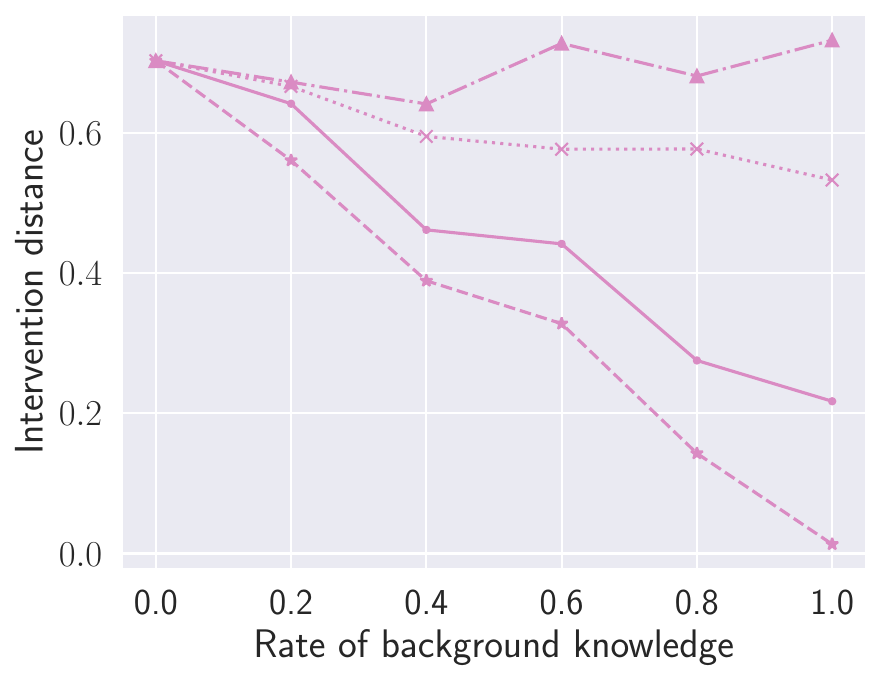}
        \end{subfigure}
        \begin{subfigure}[b]{0.32\linewidth}
            \centering
            \caption*{$G^2$ tests}
            \includegraphics[width=\linewidth]{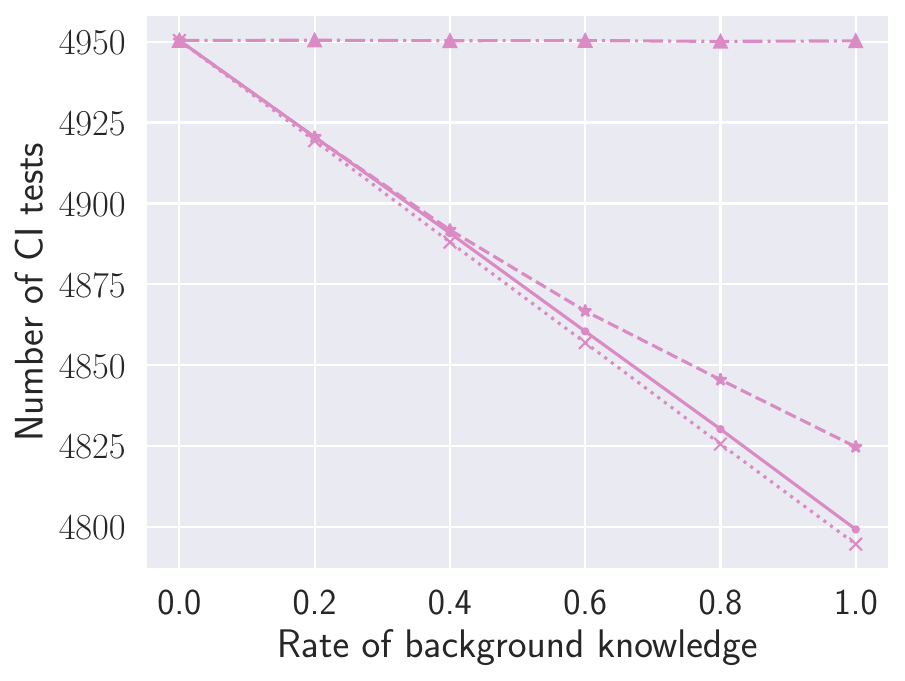}
            \includegraphics[width=\linewidth]{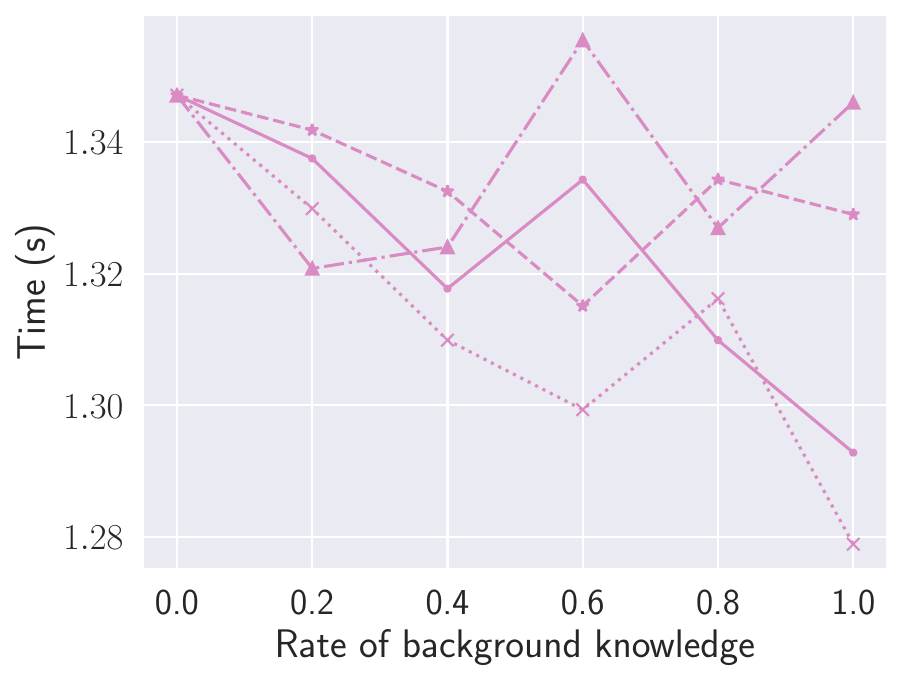}
            \includegraphics[width=\linewidth]{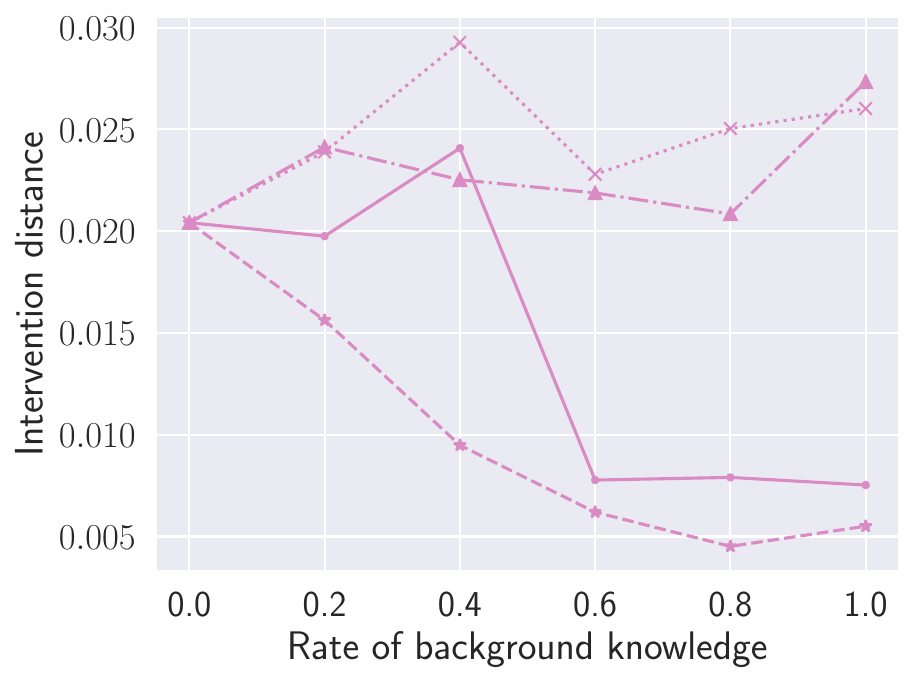}
        \end{subfigure}
    \end{subfigure}
    \begin{subfigure}[b]{\linewidth}
        \centering
        \includegraphics[width=.65\linewidth]{experiments/bk_types/bk_types_legend.pdf}
    \end{subfigure}
    \caption{Results for the SNAP($\infty$)-BK algorithm over rates of different types background knowledge between variable pairs.}
    \label{fig:bk_type_snap}
\end{figure*}

\begin{figure*}[ht]
    \centering
    \begin{subfigure}[b]{\linewidth}
        \centering
        \large{MB-by-MB$^+$-BK algorithm}
    \end{subfigure}
    \begin{subfigure}[b]{.8\linewidth}
        \begin{subfigure}[b]{0.32\linewidth}
            \centering
            \caption*{ $\quad$ d-separation tests}
            \includegraphics[width=\linewidth]{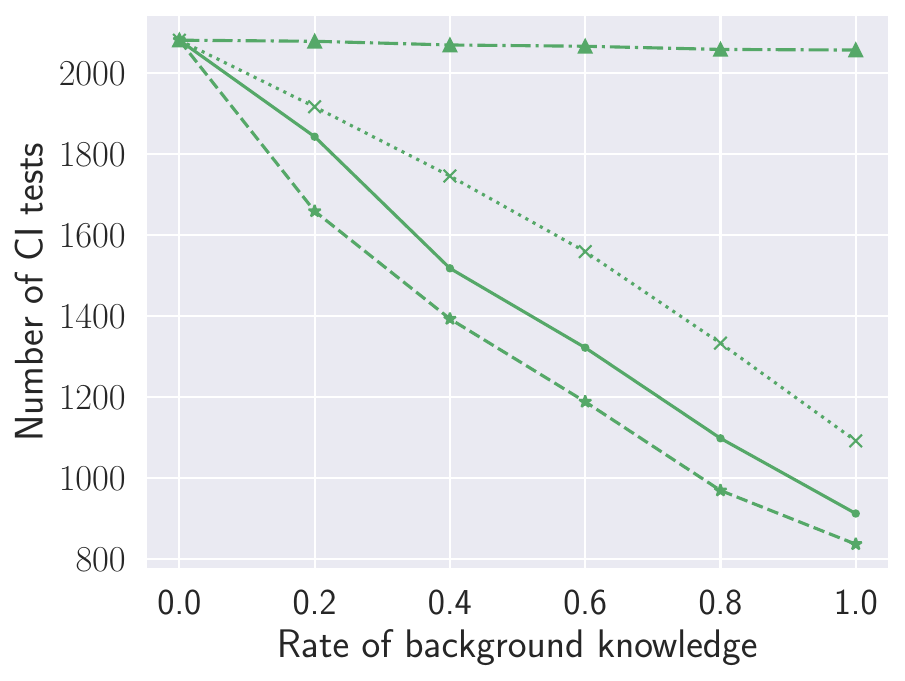}
            \includegraphics[width=\linewidth]{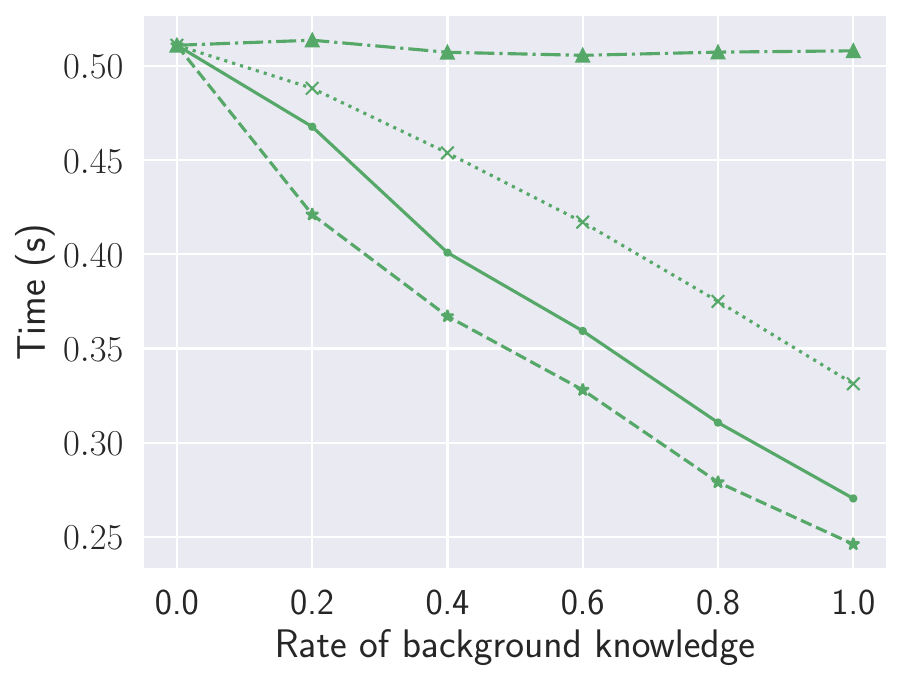}
            \includegraphics[width=\linewidth]{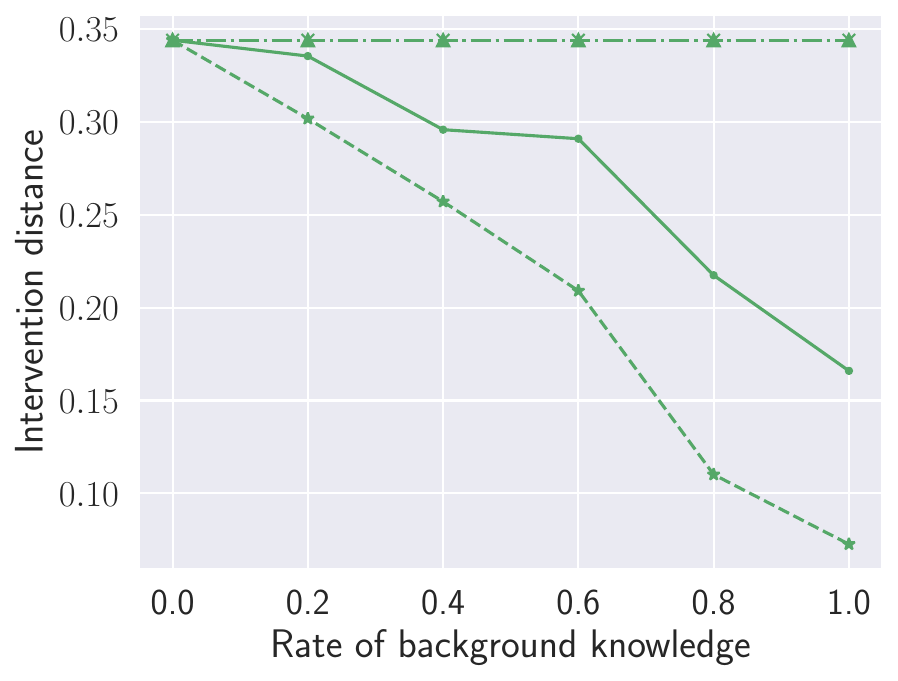}
        \end{subfigure}
        \begin{subfigure}[b]{0.32\linewidth}
            \centering
            \caption*{$\quad$ Fisher-Z tests}
            \includegraphics[width=\linewidth]{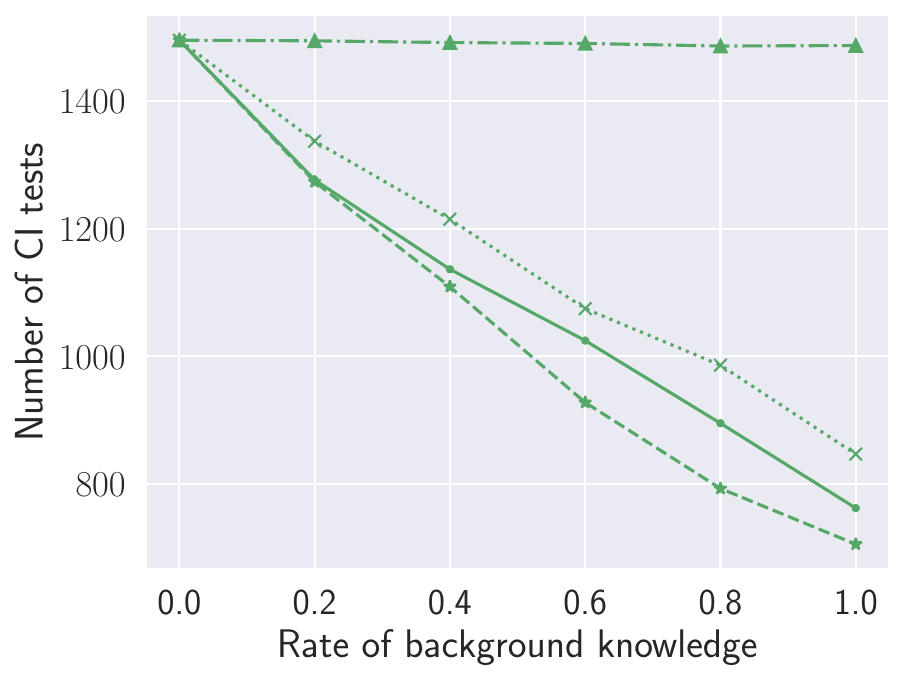}
            \includegraphics[width=\linewidth]{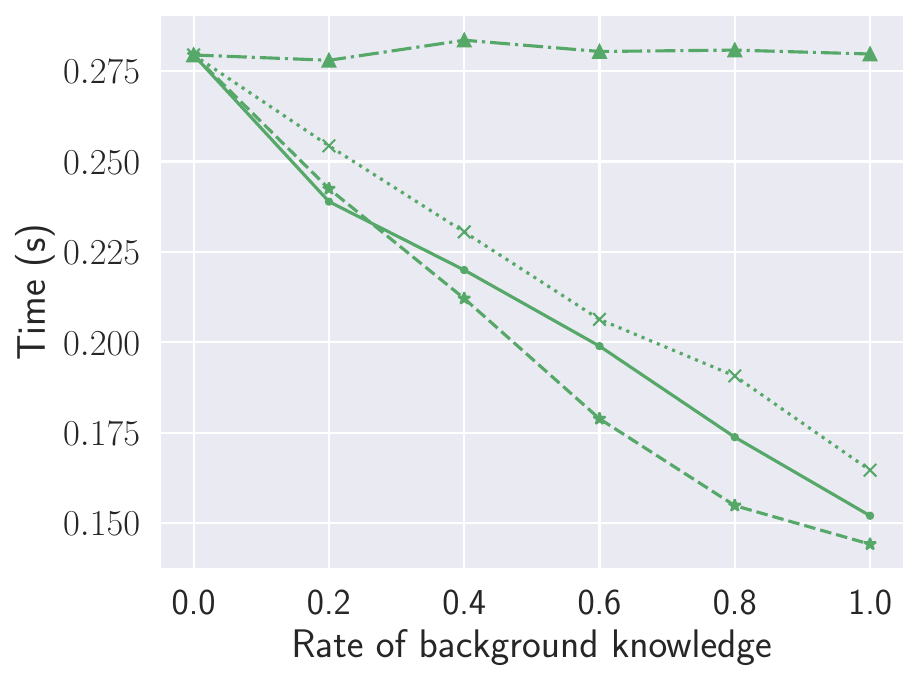}
            \includegraphics[width=\linewidth]{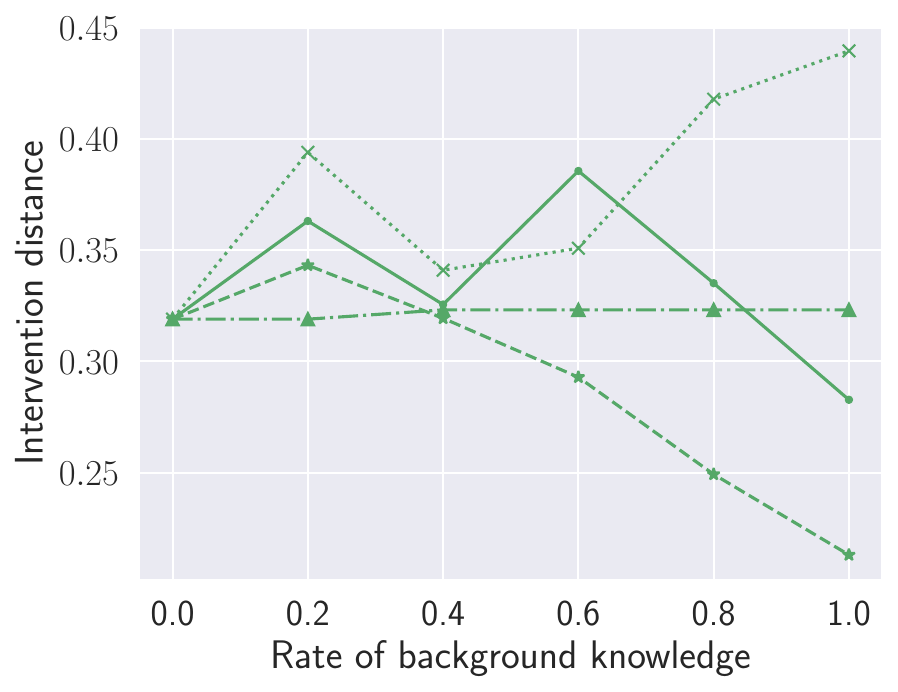}
        \end{subfigure}
        \begin{subfigure}[b]{0.32\linewidth}
            \centering
            \caption*{$G^2$ tests}
            \includegraphics[width=\linewidth]{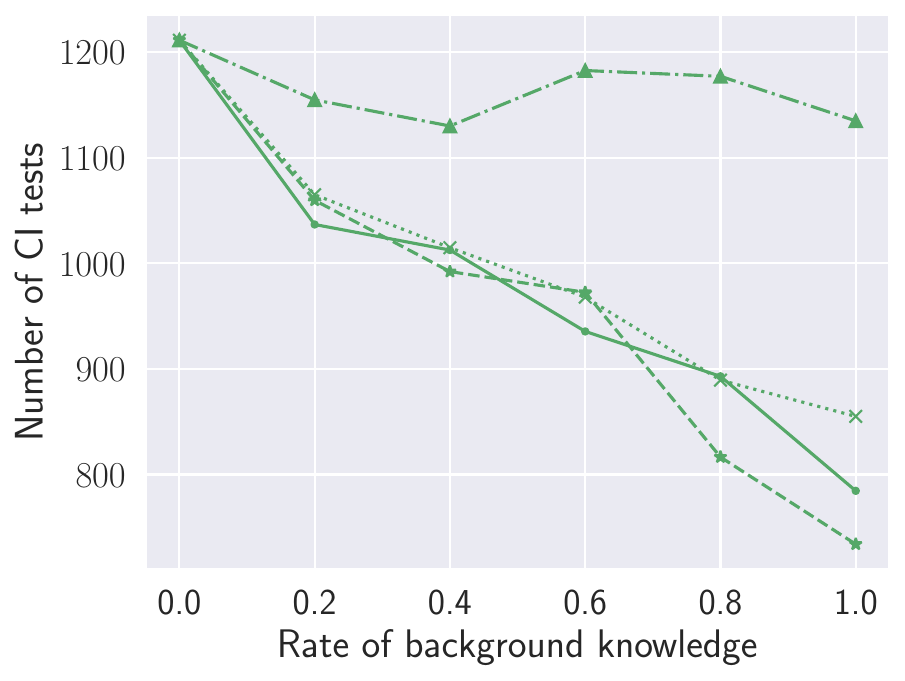}
            \includegraphics[width=\linewidth]{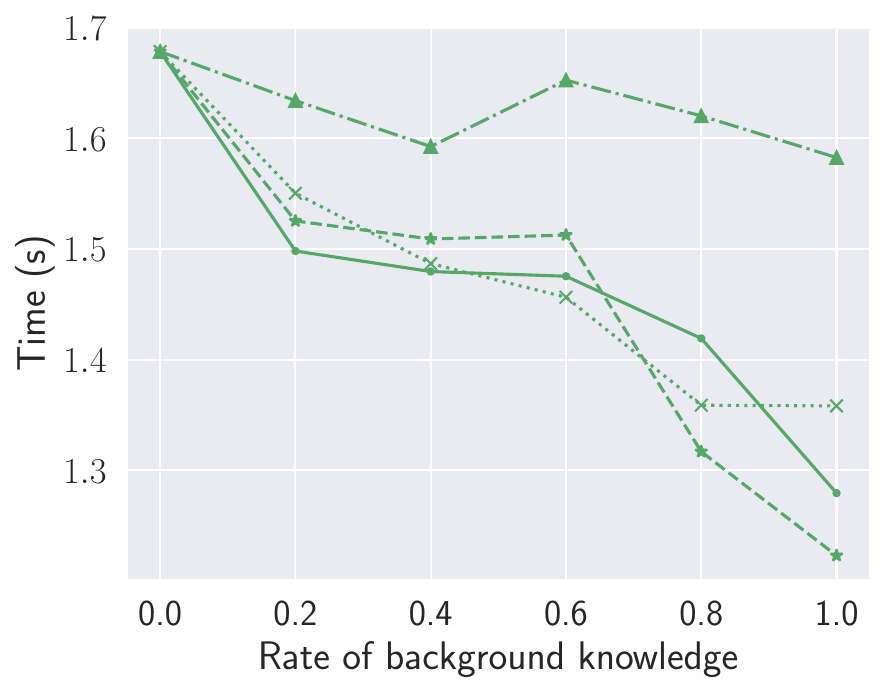}
            \includegraphics[width=\linewidth]{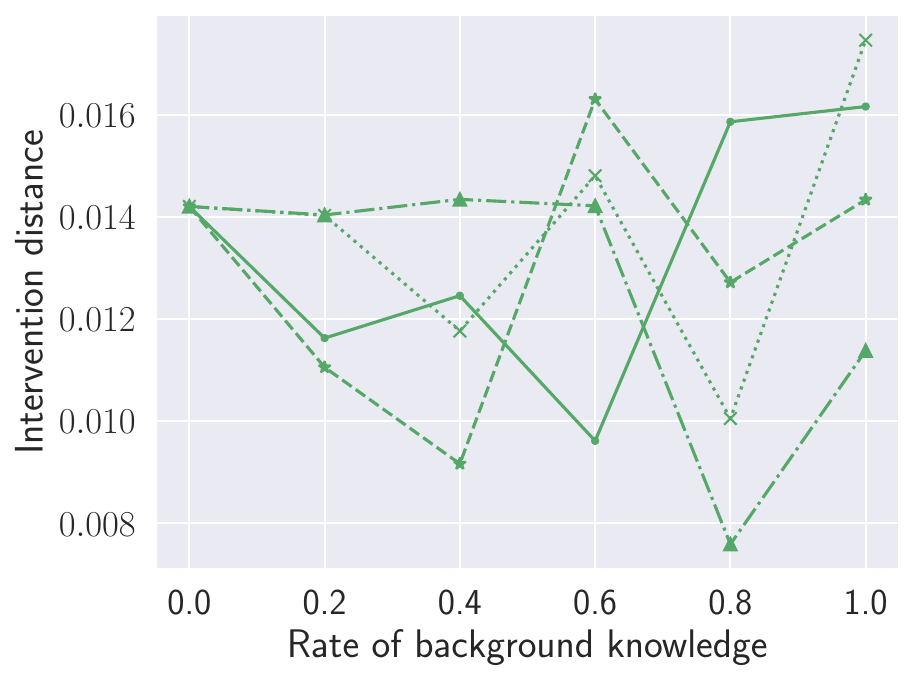}
        \end{subfigure}
    \end{subfigure}
    \begin{subfigure}[b]{\linewidth}
        \centering
        \includegraphics[width=.65\linewidth]{experiments/bk_types/bk_types_legend.pdf}
    \end{subfigure}
    \caption{Results for the MB-by-MB$^+$-BK algorithm over rates of different types background knowledge between variable pairs.}
    \label{fig:bk_type_mb_by_mb}
\end{figure*}

\begin{figure*}[ht]
    \centering
    \begin{subfigure}[b]{\linewidth}
        \centering
        \large{LDECC$^+$-BK algorithm}
    \end{subfigure}
    \begin{subfigure}[b]{.8\linewidth}
        \begin{subfigure}[b]{0.32\linewidth}
            \centering
            \caption*{ $\quad$ d-separation tests}
            \includegraphics[width=\linewidth]{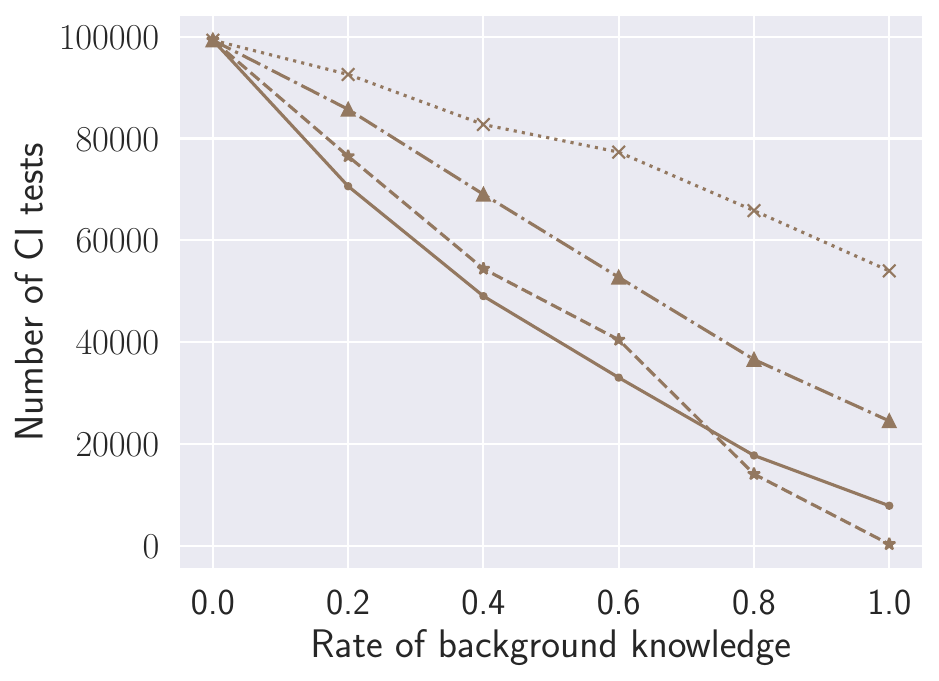}
            \includegraphics[width=\linewidth]{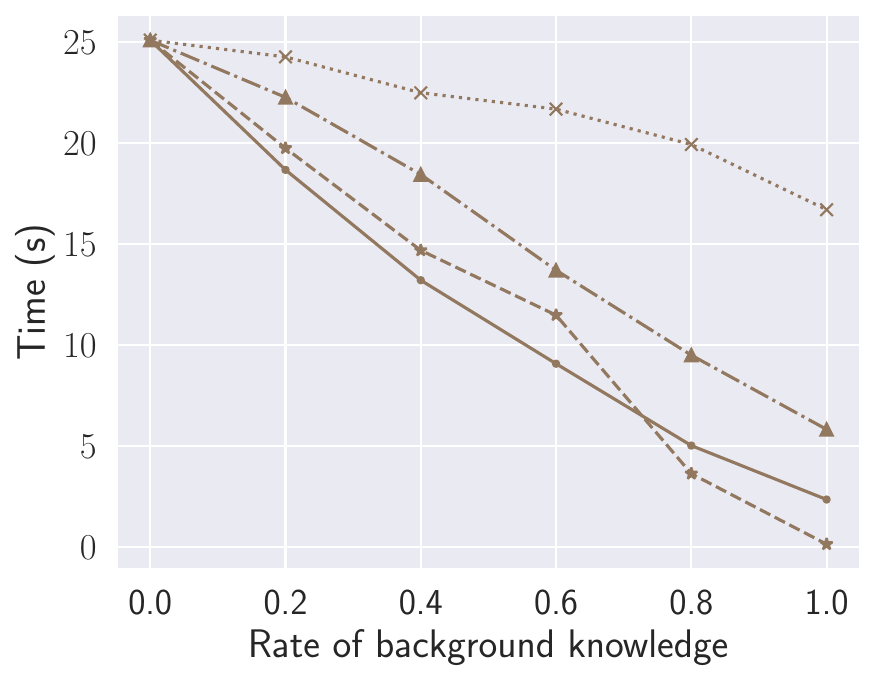}
            \includegraphics[width=\linewidth]{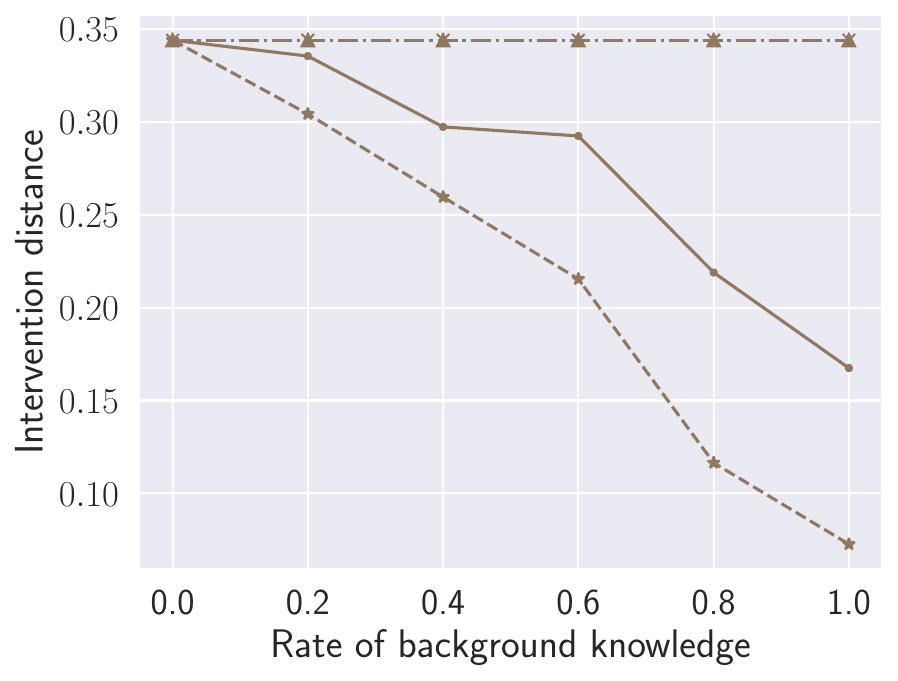}
        \end{subfigure}
        \begin{subfigure}[b]{0.32\linewidth}
            \centering
            \caption*{$\quad$ Fisher-Z tests}
            \includegraphics[width=\linewidth]{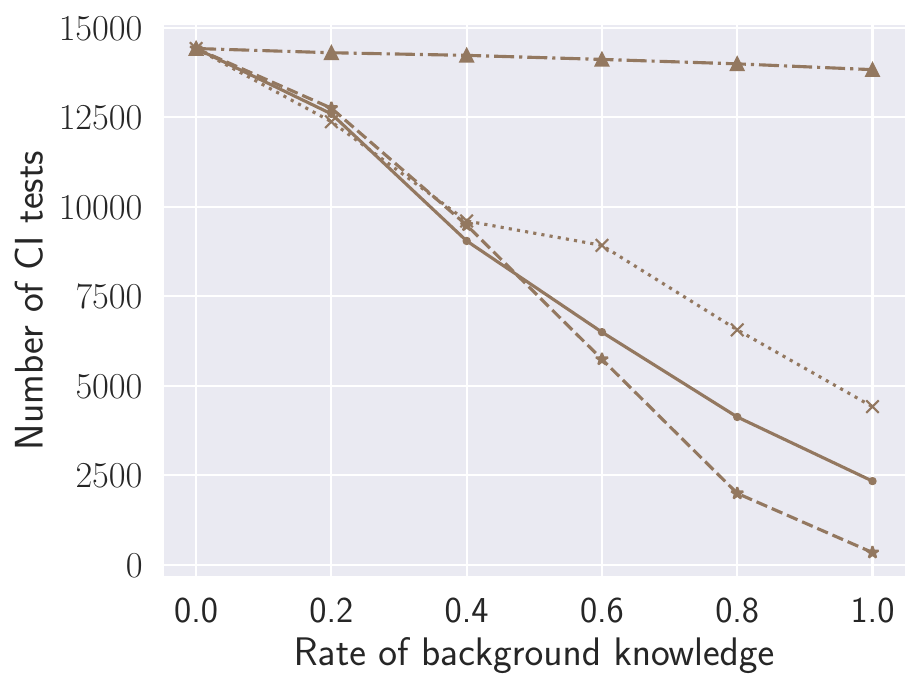}
            \includegraphics[width=\linewidth]{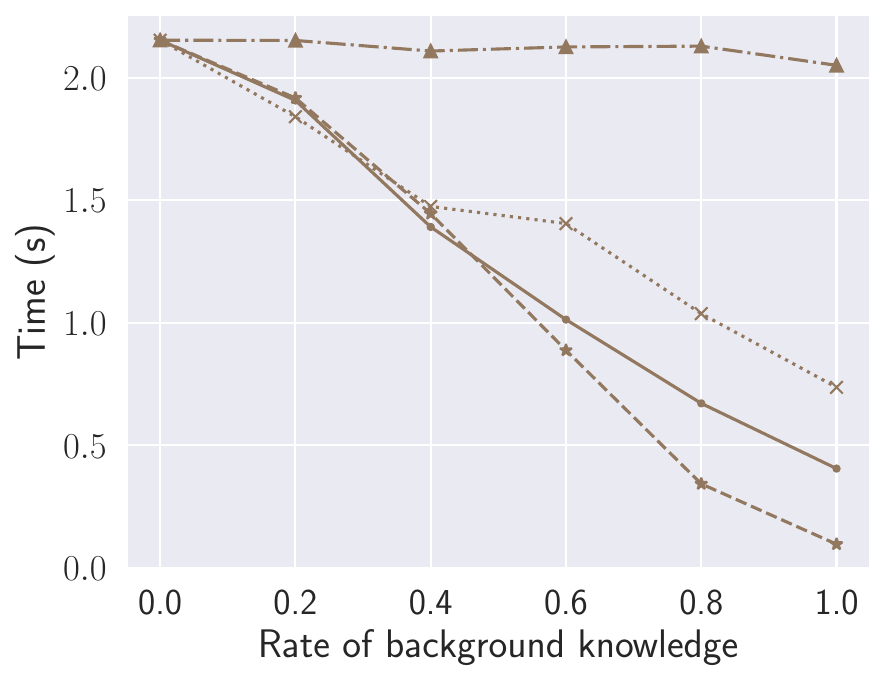}
            \includegraphics[width=\linewidth]{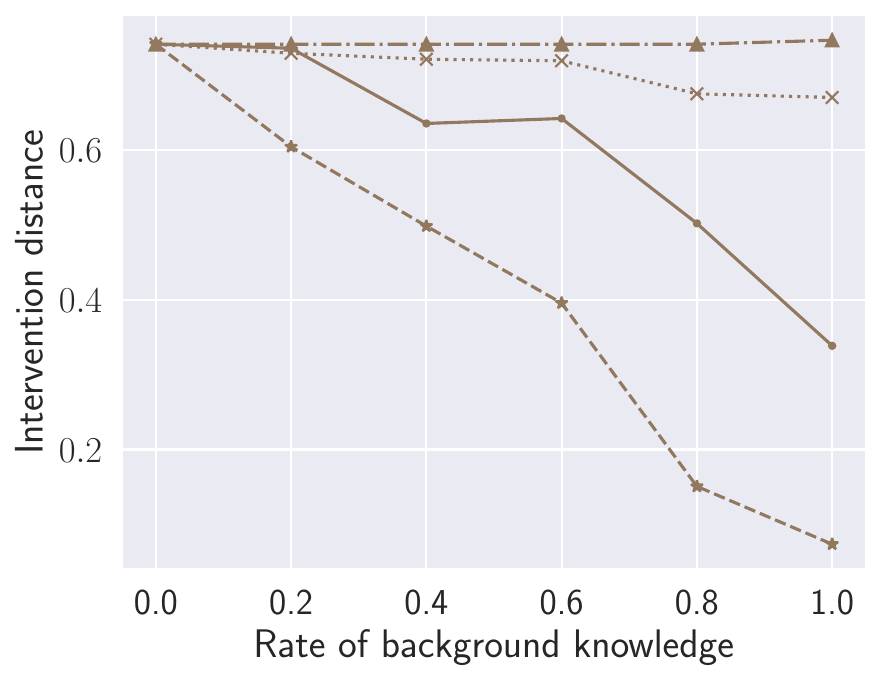}
        \end{subfigure}
        \begin{subfigure}[b]{0.32\linewidth}
            \centering
            \caption*{$G^2$ tests}
            \includegraphics[width=\linewidth]{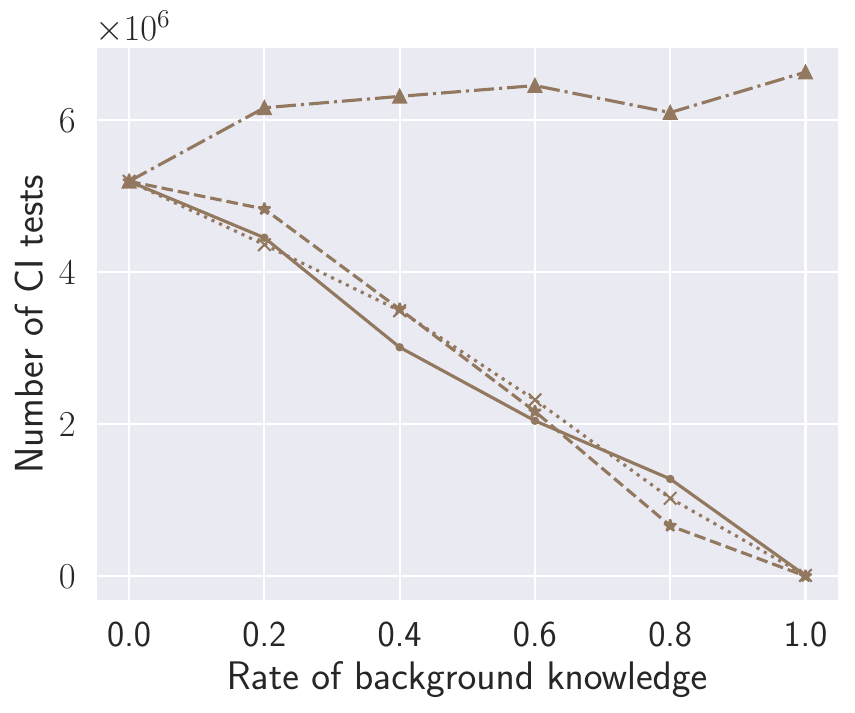}
            \includegraphics[width=\linewidth]{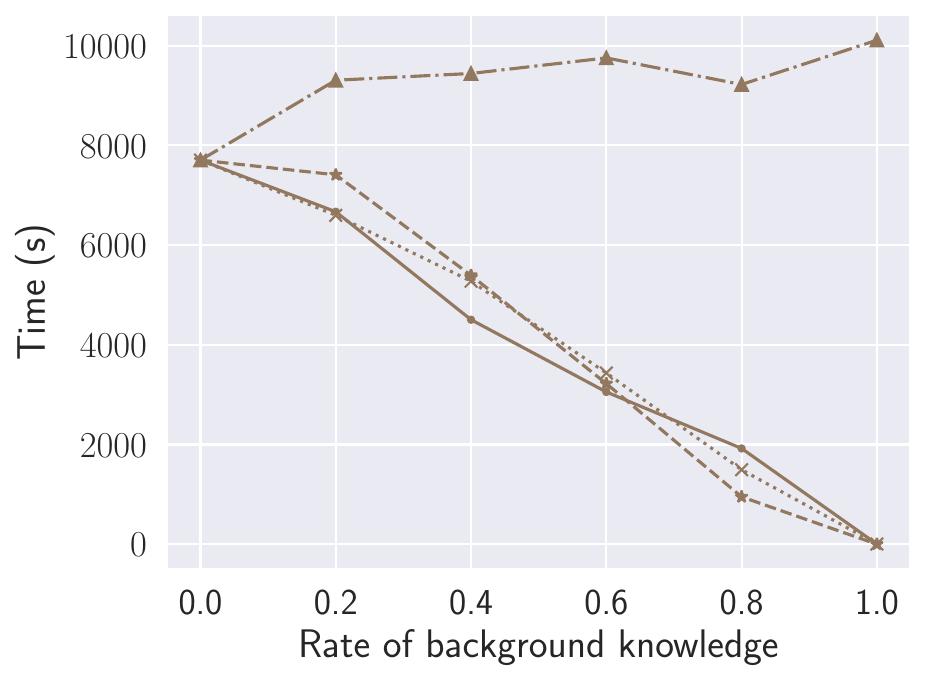}
            \includegraphics[width=\linewidth]{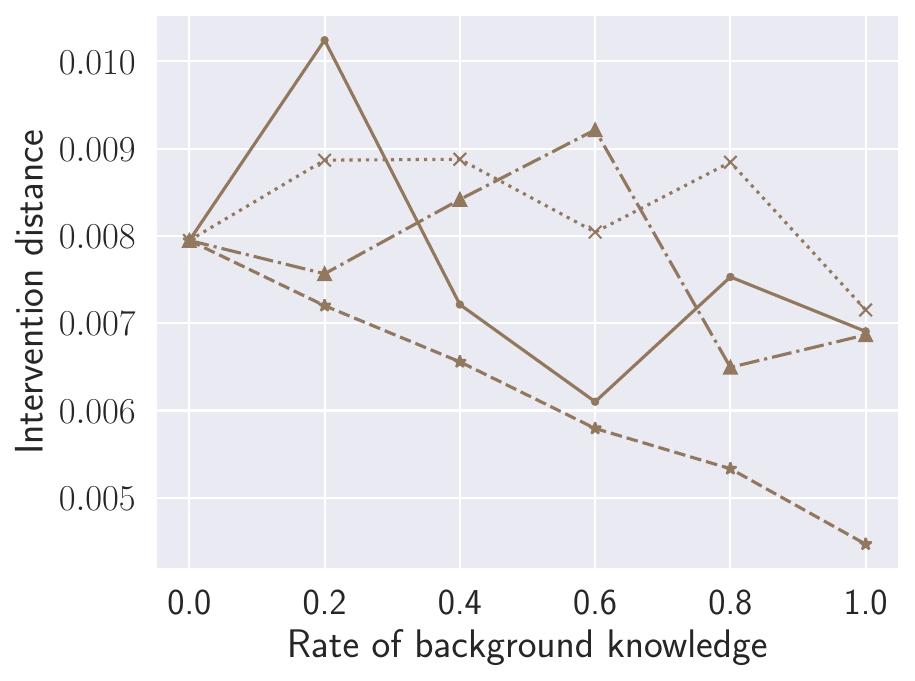}
        \end{subfigure}
    \end{subfigure}
    \begin{subfigure}[b]{\linewidth}
        \centering
        \includegraphics[width=.65\linewidth]{experiments/bk_types/bk_types_legend.pdf}
    \end{subfigure}
    \caption{Results for the LDECC$^+$-BK algorithm over rates of different types background knowledge between variable pairs.}
    \label{fig:bk_type_ldecc}
\end{figure*}

\begin{figure*}[ht]
    \centering
    \begin{subfigure}[b]{\linewidth}
        \centering
        \large{LOAD-BK algorithm}
    \end{subfigure}
    \begin{subfigure}[b]{.8\linewidth}
        \begin{subfigure}[b]{0.32\linewidth}
            \centering
            \caption*{ $\quad$ d-separation tests}
            \includegraphics[width=\linewidth]{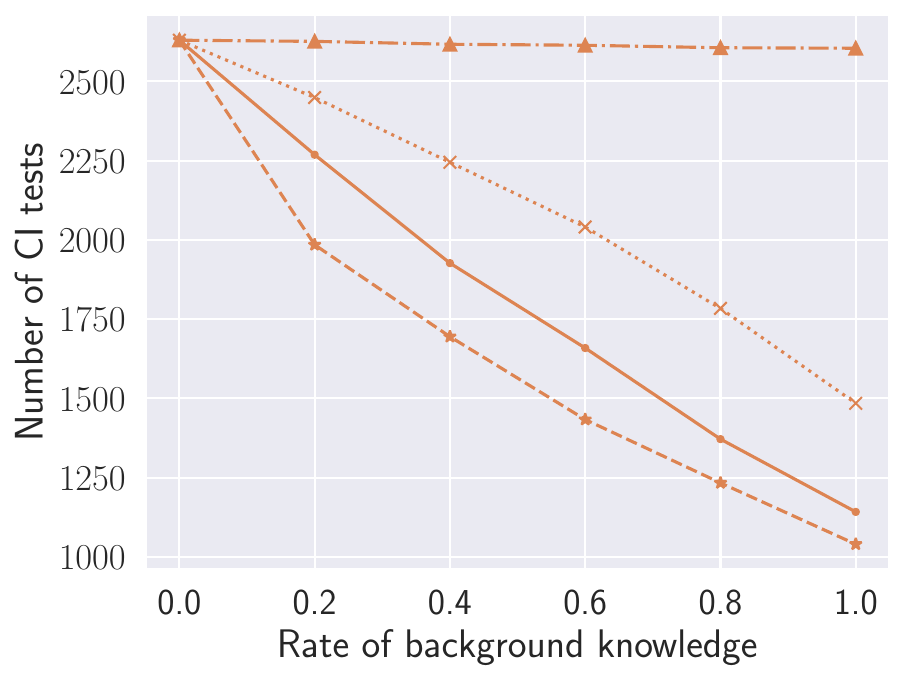}
            \includegraphics[width=\linewidth]{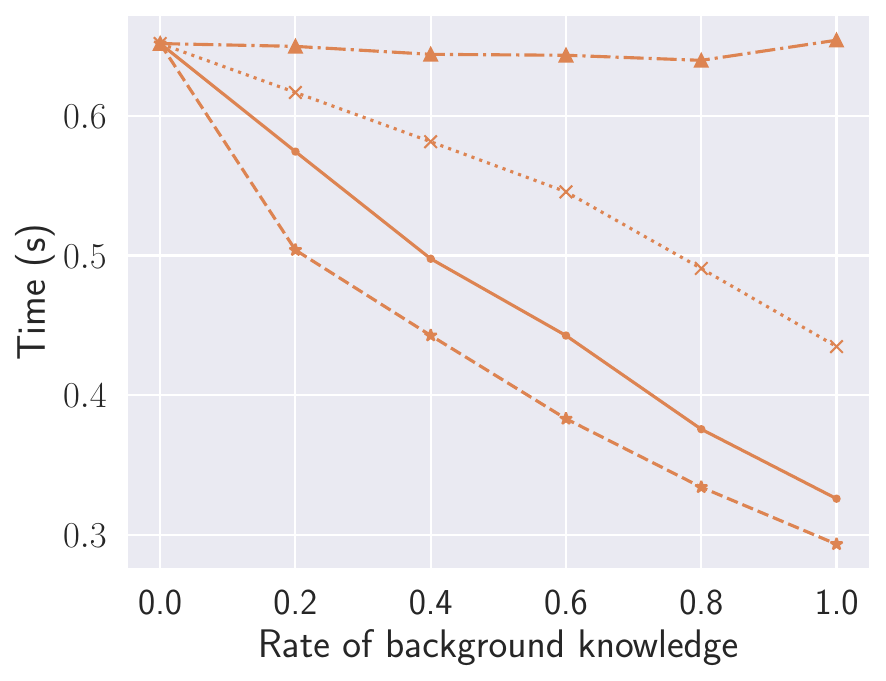}
            \includegraphics[width=\linewidth]{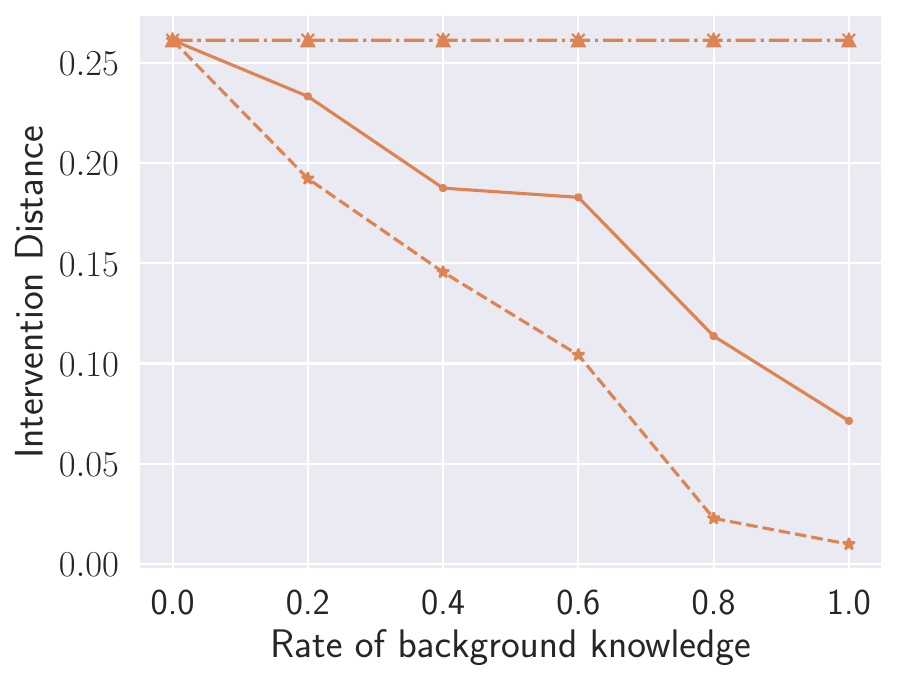}
        \end{subfigure}
        \begin{subfigure}[b]{0.32\linewidth}
            \centering
            \caption*{$\quad$ Fisher-Z tests}
            \includegraphics[width=\linewidth]{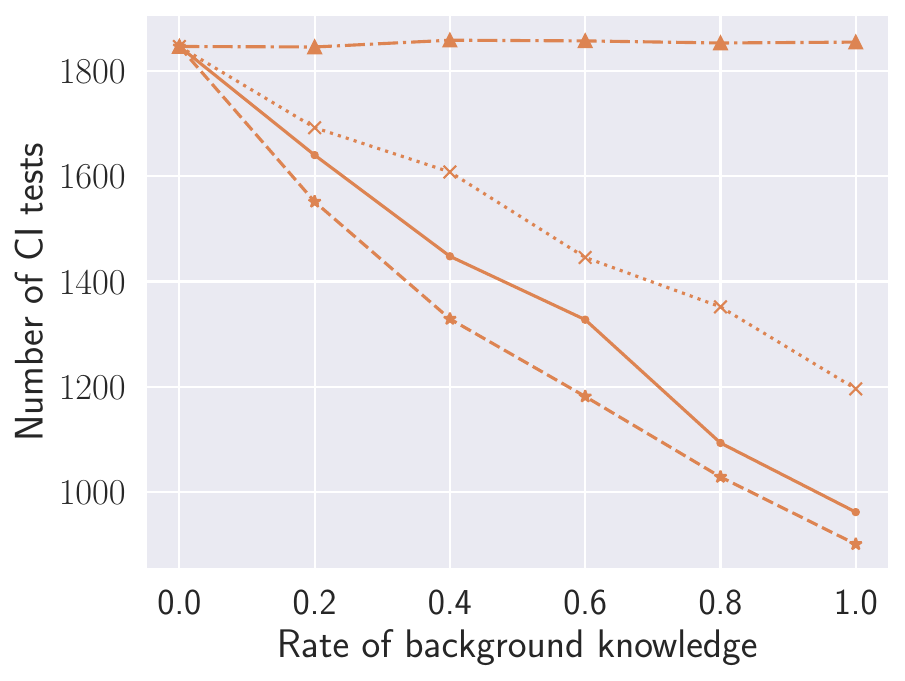}
            \includegraphics[width=\linewidth]{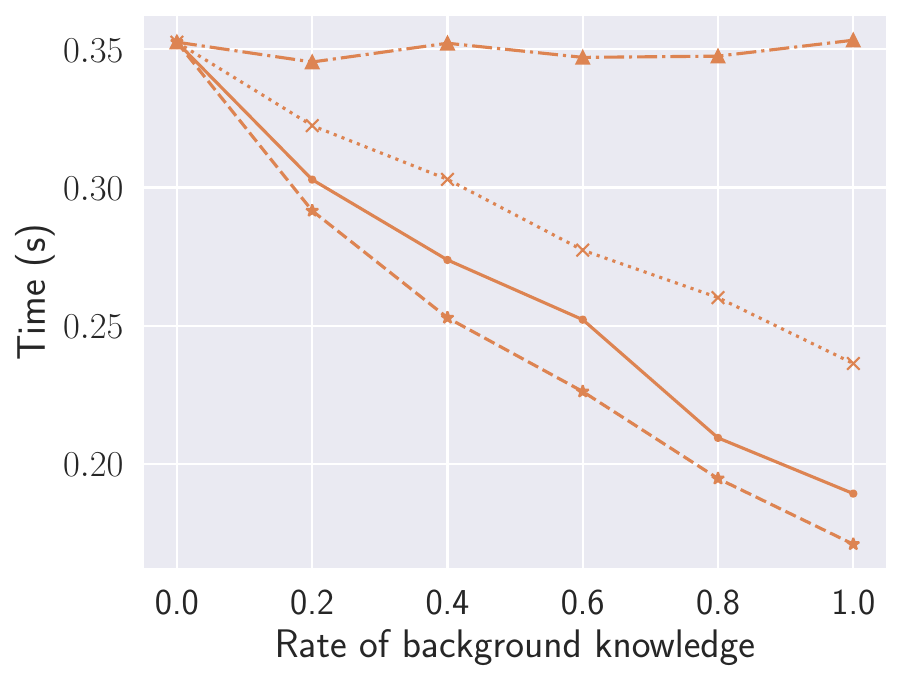}
            \includegraphics[width=\linewidth]{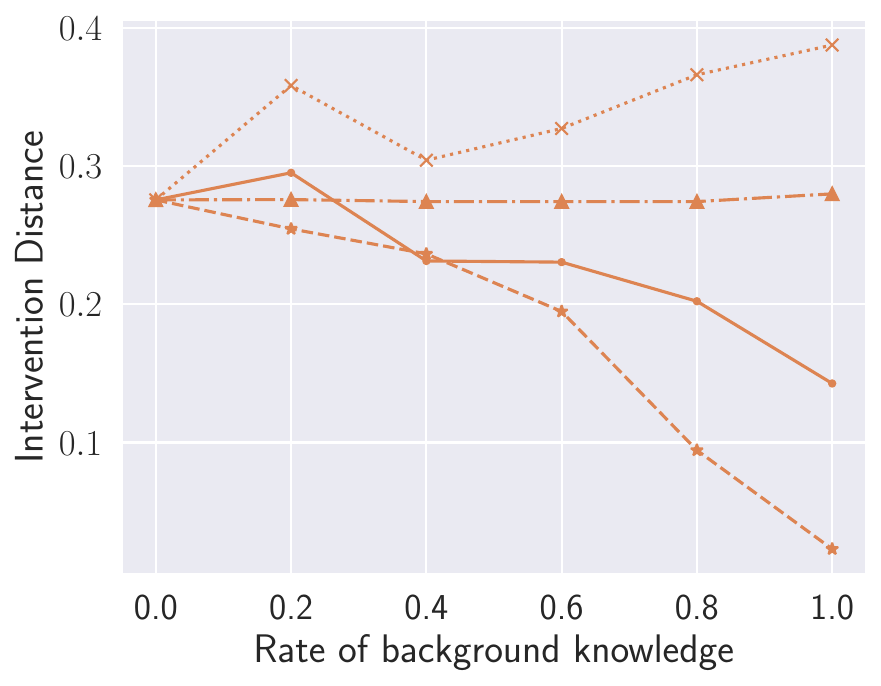}
        \end{subfigure}
        \begin{subfigure}[b]{0.32\linewidth}
            \centering
            \caption*{$G^2$ tests}
            \includegraphics[width=\linewidth]{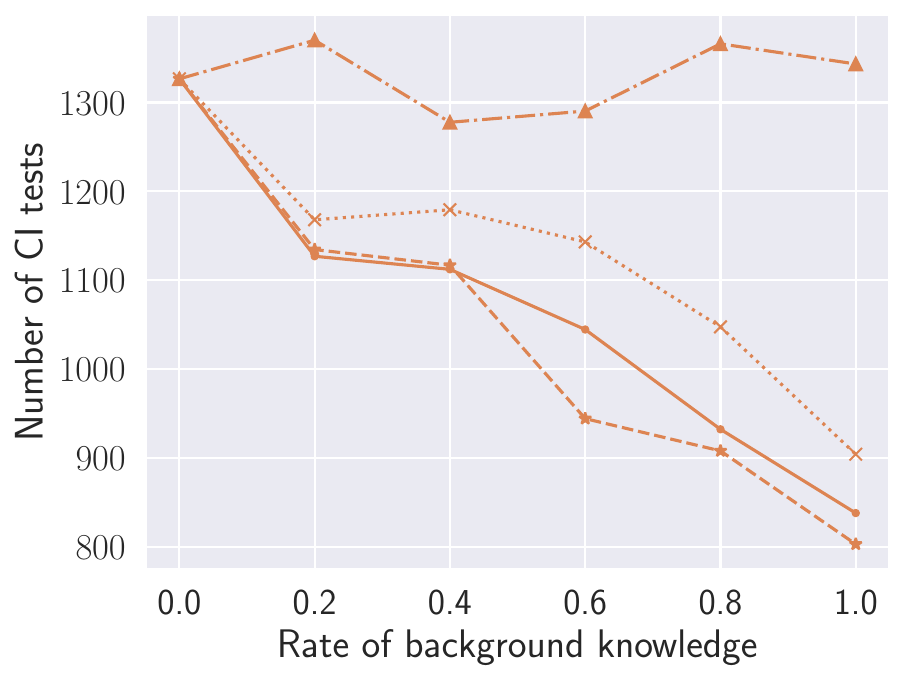}
            \includegraphics[width=\linewidth]{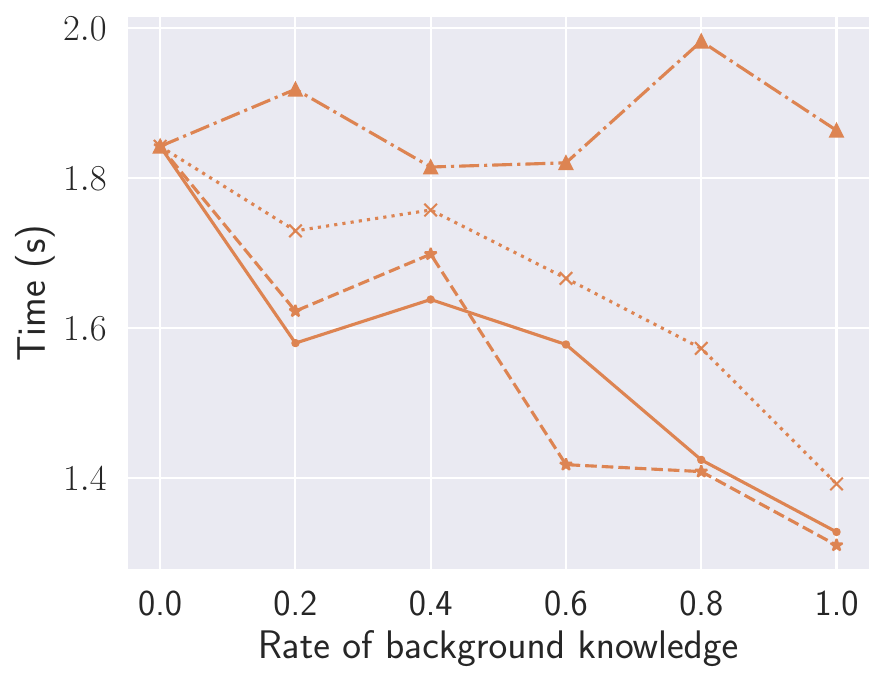}
            \includegraphics[width=\linewidth]{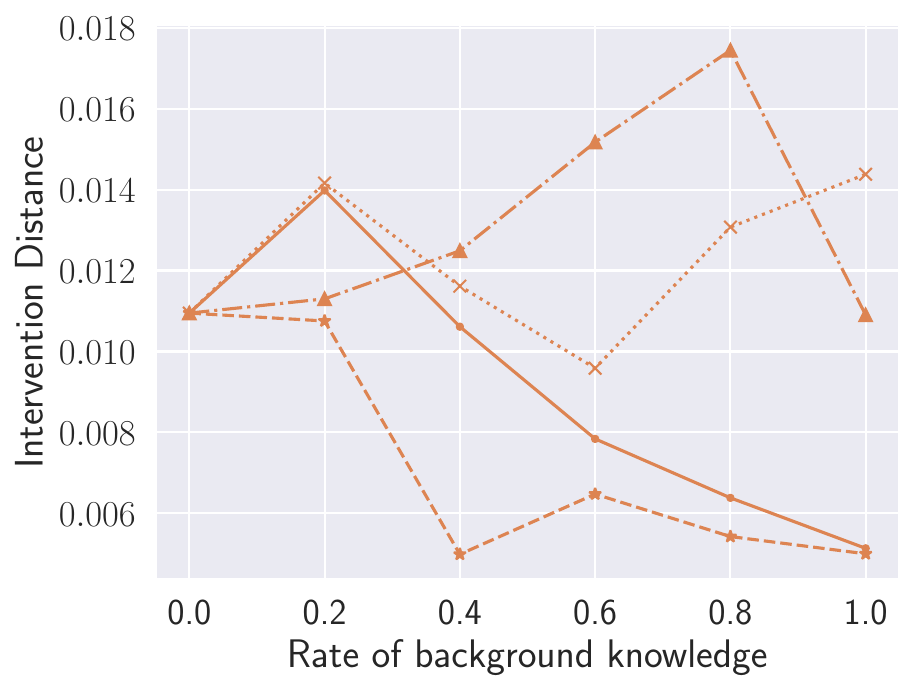}
        \end{subfigure}
    \end{subfigure}
    \begin{subfigure}[b]{\linewidth}
        \centering
        \includegraphics[width=.65\linewidth]{experiments/bk_types/bk_types_legend.pdf}
    \end{subfigure}
    \caption{Results for the LOAD-BK algorithm over rates of different types background knowledge between variable pairs.}
    \label{fig:bk_type_load}
\end{figure*}

\section{Imperfect Background Knowledge}
\label{sec:bk_error}
In this section we experiment with imperfect \ac{BK} by deliberately introduce errors into it.
For a given pair $(X,Y) \in \mathcal{B}$, we generate an erroneous BK as follows.
If $X$ and $Y$ are adjacent, then, we $\mathcal{B}$ will either indicate a gap between them, or an orientation that contradicts the true DAG, with uniform probability.
If $X$ and $Y$ are non-adjacent, then $\mathcal{B}$ will indicate that they are adjacent or oriented in one of two directions, with uniform probability over all three options.
Throughout our experiments we select half the amount of gaps to have errors so that the errors between \ac{BK} types are more balanced.

In \Cref{fig:fixed_error} we repeat the experiments of the main results shown \Cref{fig:over_bk} but with a fixed rate of errors by select 10\% of adjacencies and orientations, and 5\% of gaps to have errors.
Our results show that different algorithms display different amounts of sensitivity to \ac{BK} errors.
Even with errors, the computational costs of MB-by-MB$^+$-BK, LDECC$^+$-BK and LOAD-BK are decreasing with more \ac{BK}, along with all domains, along with PC-BK with d-separation CI tests.
On the other hand, SNAP($\infty$)-BK is much more sensitive to errors, resulting in prohibitive execution times for higher rate of \ac{BK} across all domains.
With a higher rate of \ac{BK} and a constant rate of errors, the number of incorrect orientations grows.
Hence, most algorithms generally exhibit a growing intervention distance with larger variance.
Interestingly, different algorithms achieve different intervention distances with the exact same \ac{BK} errors and oracle d-separation CI tests, further demonstrating that error sensitivity is highly algorithm dependent.
Nonetheless, the achieved intervention distances stay reasonably low across algorithms and domains, with local methods MB-by-MB$^+$-BK, LDECC$^+$-BK and LOAD-BK, staying the most robust.
In general, the performance of most algorithms, except for SNAP($\infty$)-BK, degrades gradually rather than sharply, demonstrating that they are resilient against higher rates of \ac{BK} error.

\begin{figure*}[ht]
    \centering
    \includegraphics[width=.8\linewidth]{experiments/legend.pdf}
    \begin{subfigure}[b]{\linewidth}
        \begin{subfigure}[b]{0.32\linewidth}
            \centering
            \caption*{ $\quad$ d-separation tests}
            \includegraphics[width=\linewidth]{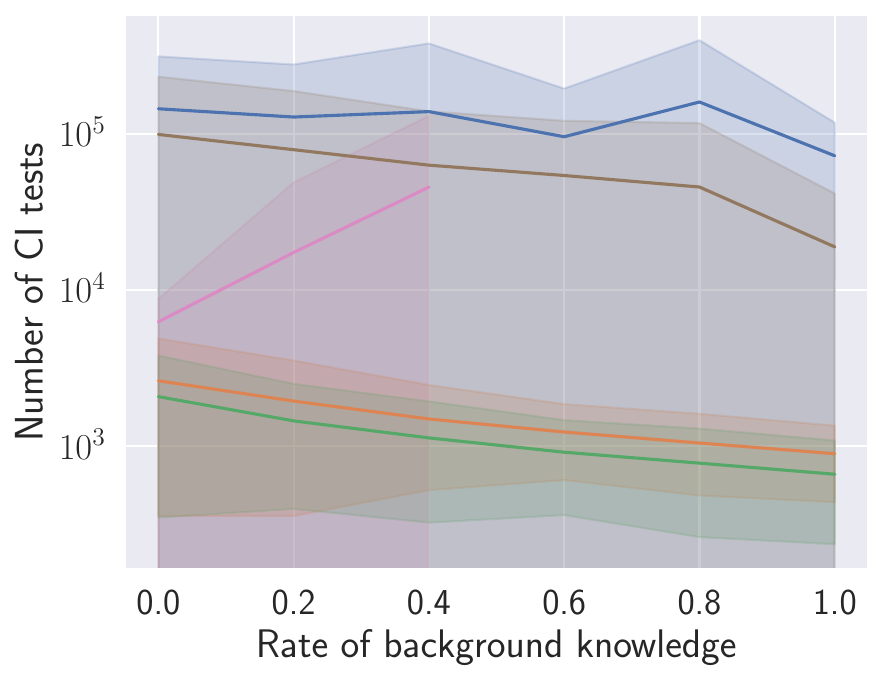}
            \includegraphics[width=\linewidth]{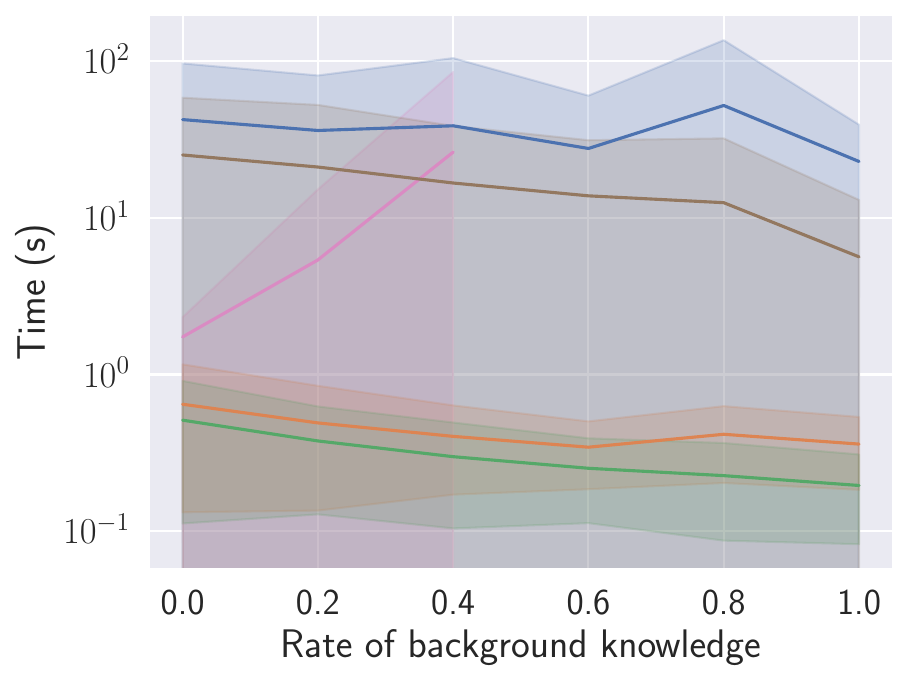}
            \includegraphics[width=\linewidth]{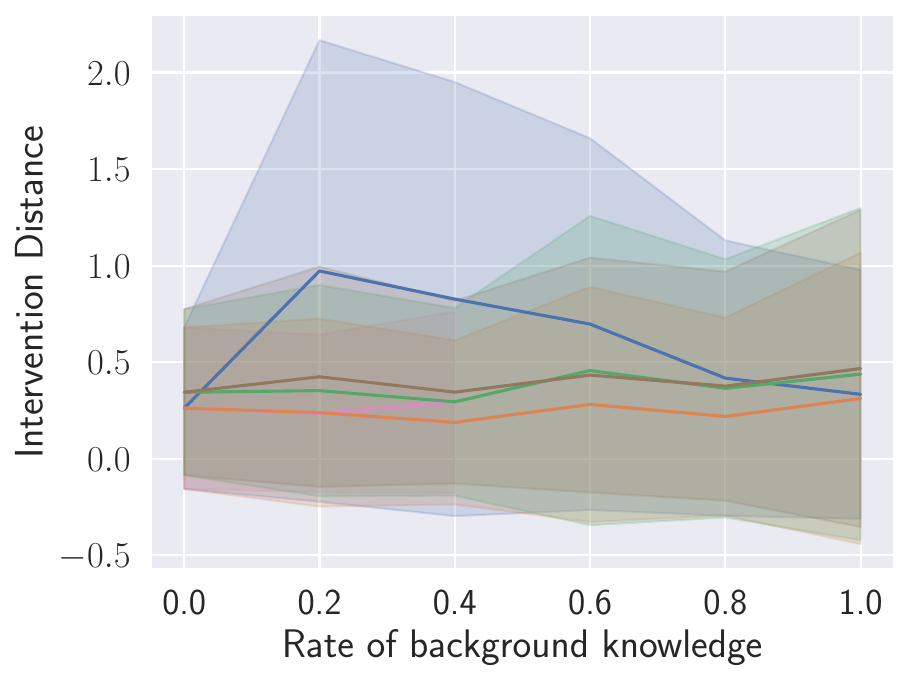}
        \end{subfigure}
        \begin{subfigure}[b]{0.32\linewidth}
            \centering
            \caption*{$\quad$ Fisher-Z tests}
            \includegraphics[width=\linewidth]{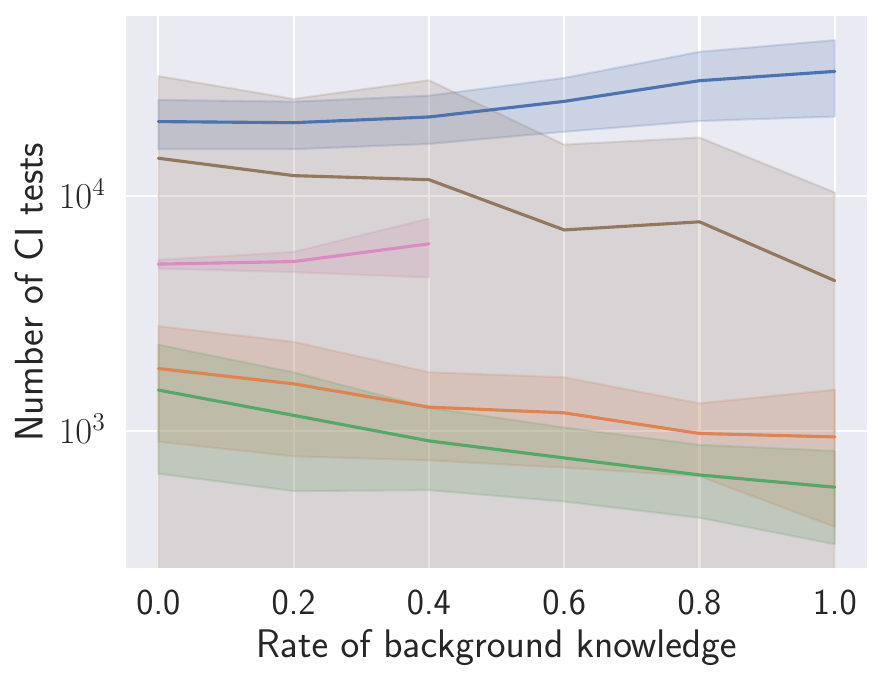}
            \includegraphics[width=\linewidth]{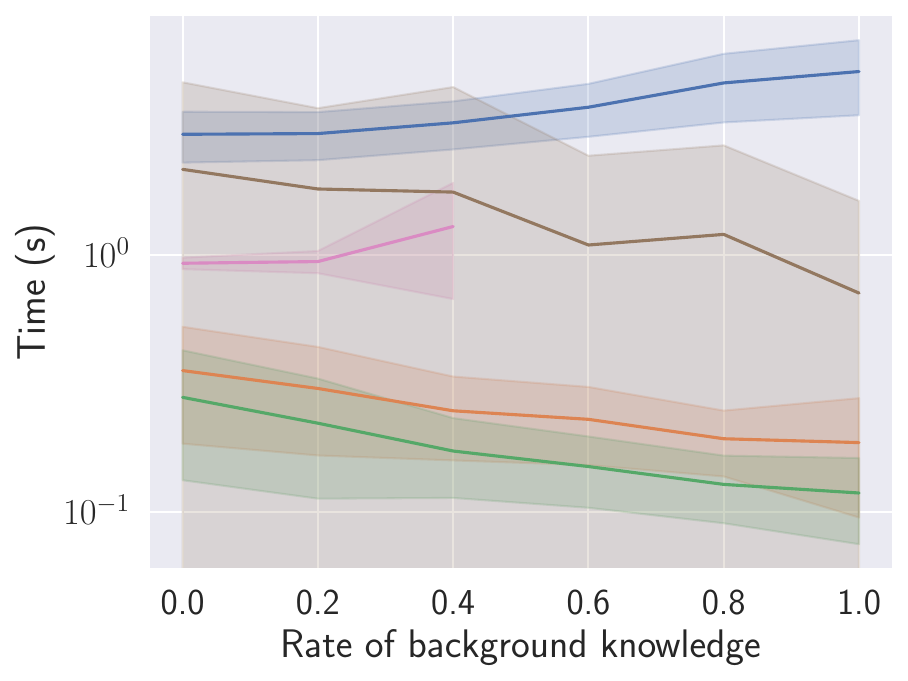}
            \includegraphics[width=.94\linewidth]{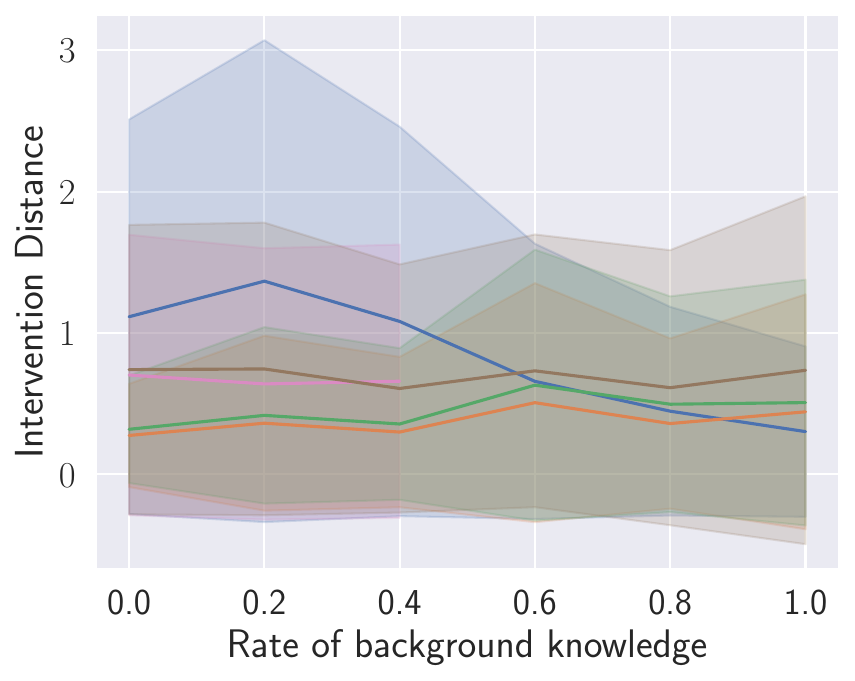}
        \end{subfigure}
        \begin{subfigure}[b]{0.32\linewidth}
            \centering
            \caption*{$G^2$ tests}
            \includegraphics[width=\linewidth]{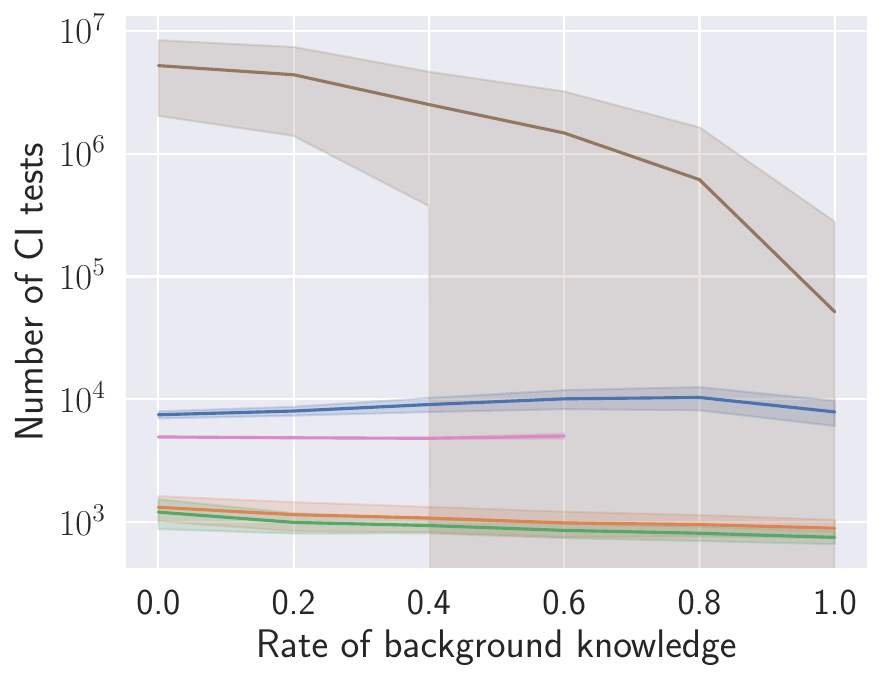}
            \includegraphics[width=\linewidth]{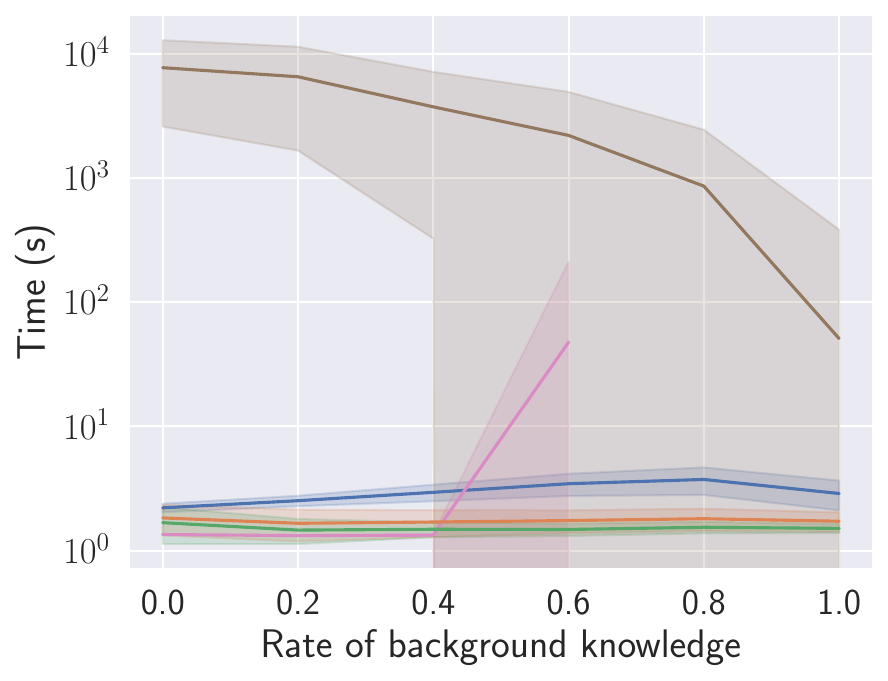}
            \includegraphics[width=\linewidth]{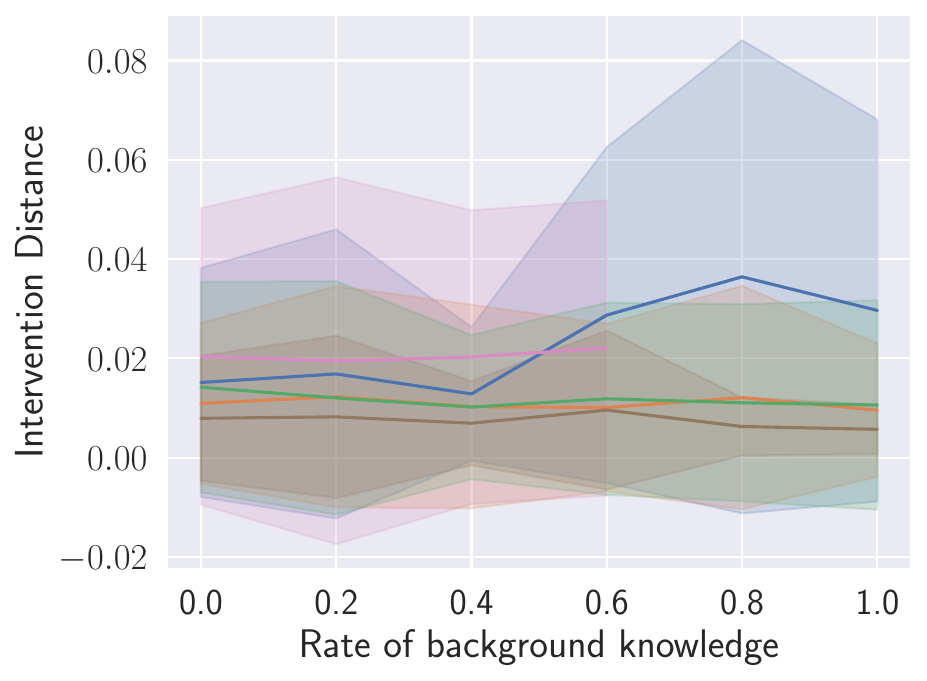}
        \end{subfigure}
    \end{subfigure}
    \caption{Results over rate of \ac{BK} between variable pairs, where \ac{BK} contains errors at a fixed rate of 10\% for adjacencies and orientations, and 5\% for gaps.
    The shadow area denotes standard deviation.}
    \label{fig:fixed_error}
\end{figure*}

In our experiments shown in \Cref{fig:bk_over_error}, we instead keep the BK rate fixed at 0.2 and change the error rate between 0.0 to 0.3, corresponding to errors in 30\% of adjacencies and orientations, and 15\% of gaps.
As expected, the number of CI tests and computation time required by most algorithms increase, but not substantially.
One exception is SNAP($\infty$)-BK which, similarly to the results with fixed error shown in \Cref{fig:fixed_error}, is much more sensitive to errors compared to other algorithms.
The other exception is MB-by-MB$^+$-BK, which gets slightly cheaper as the rate of errors grow.
As expected, intervention distance generally grows with higher error rate due to incorrect orientations.
PC-BK is a notable exception, as its intervention distance seems to drop for a \ac{BK} error of 0.3 both with d-separation and Fisher-Z CI tests.
However, the variance of the intervention distance achieved by PC-BK is so large that this trend may be only due to noise.

\begin{figure*}[ht]
    \centering
    \includegraphics[width=.8\linewidth]{experiments/legend.pdf}
    \begin{subfigure}[b]{\linewidth}
        \begin{subfigure}[b]{0.32\linewidth}
            \centering
            \caption*{ $\quad$ d-separation tests}
            \includegraphics[width=\linewidth]{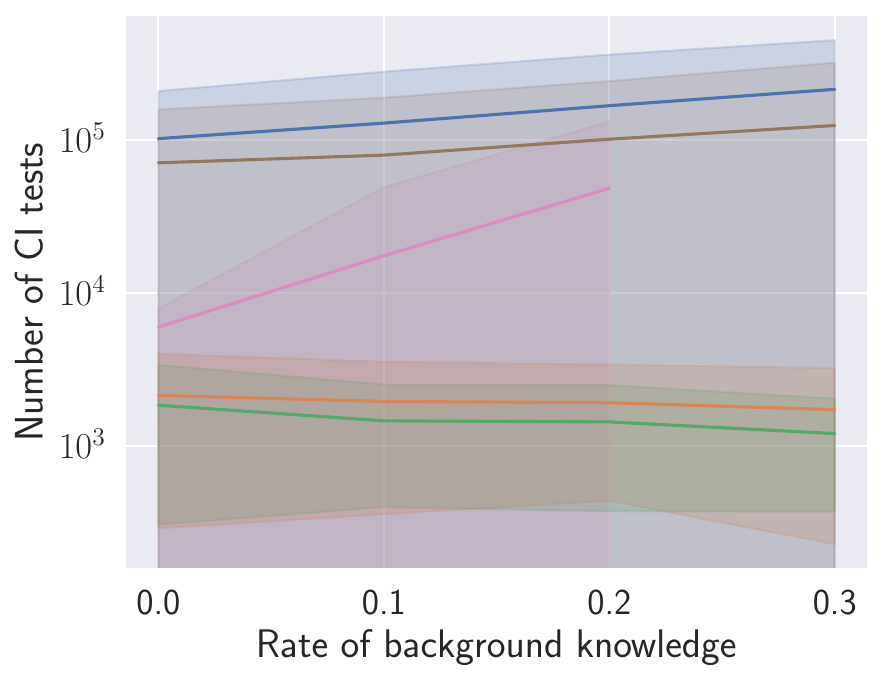}
            \includegraphics[width=\linewidth]{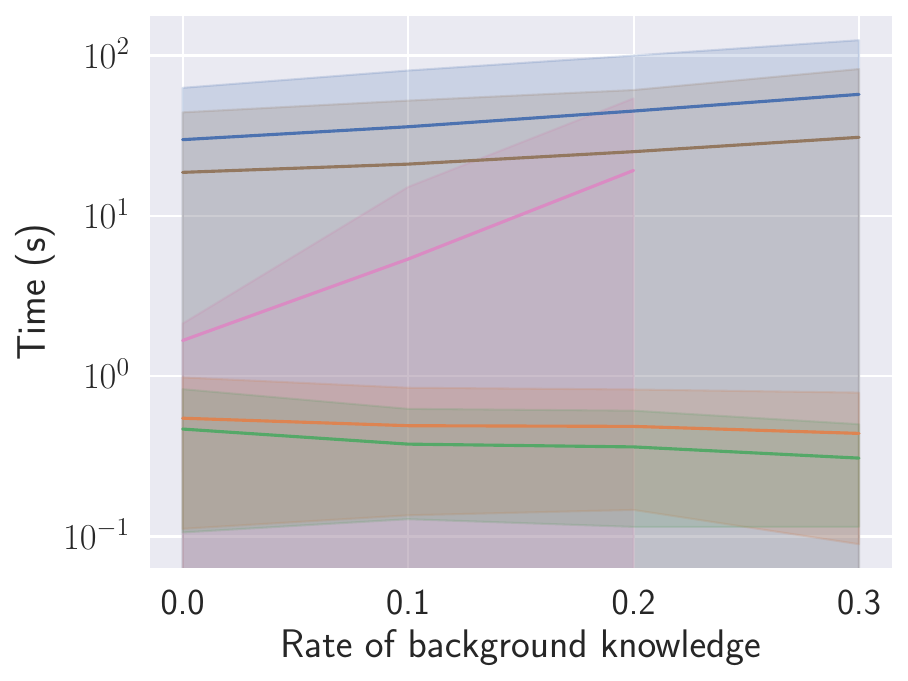}
            \includegraphics[width=\linewidth]{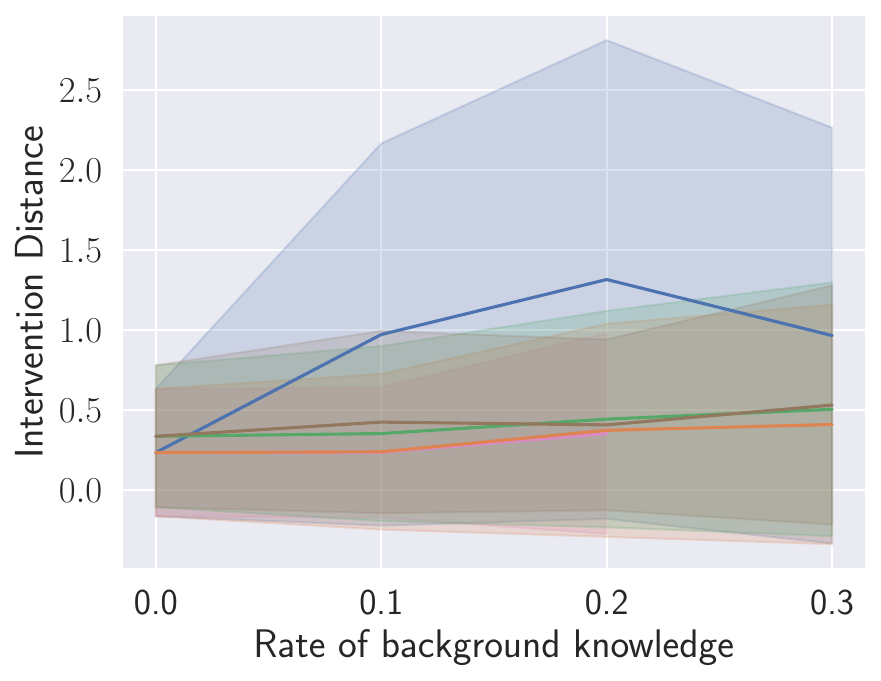}
        \end{subfigure}
        \begin{subfigure}[b]{0.32\linewidth}
            \centering
            \caption*{$\quad$ Fisher-Z tests}
            \includegraphics[width=\linewidth]{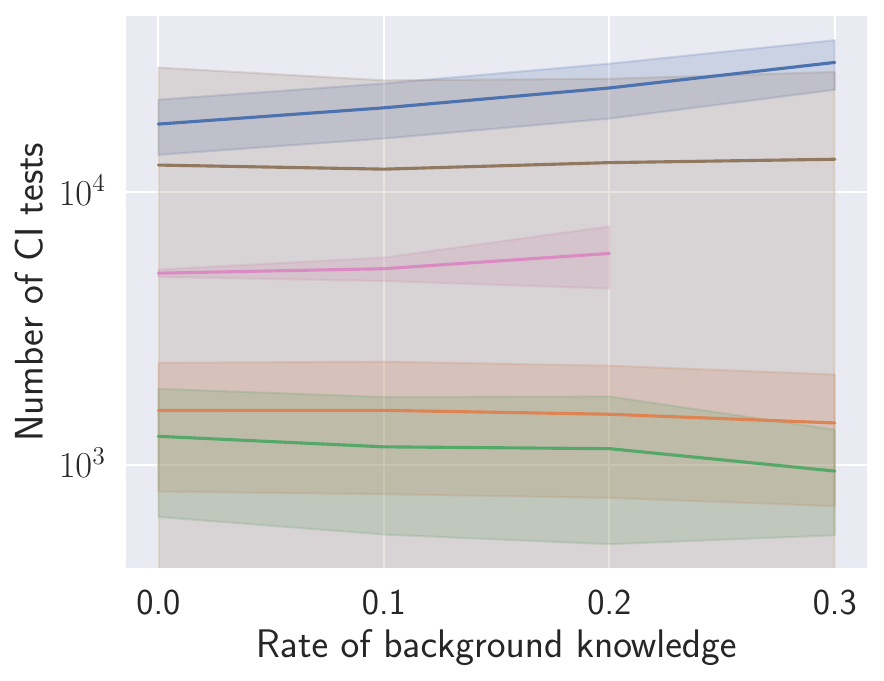}
            \includegraphics[width=\linewidth]{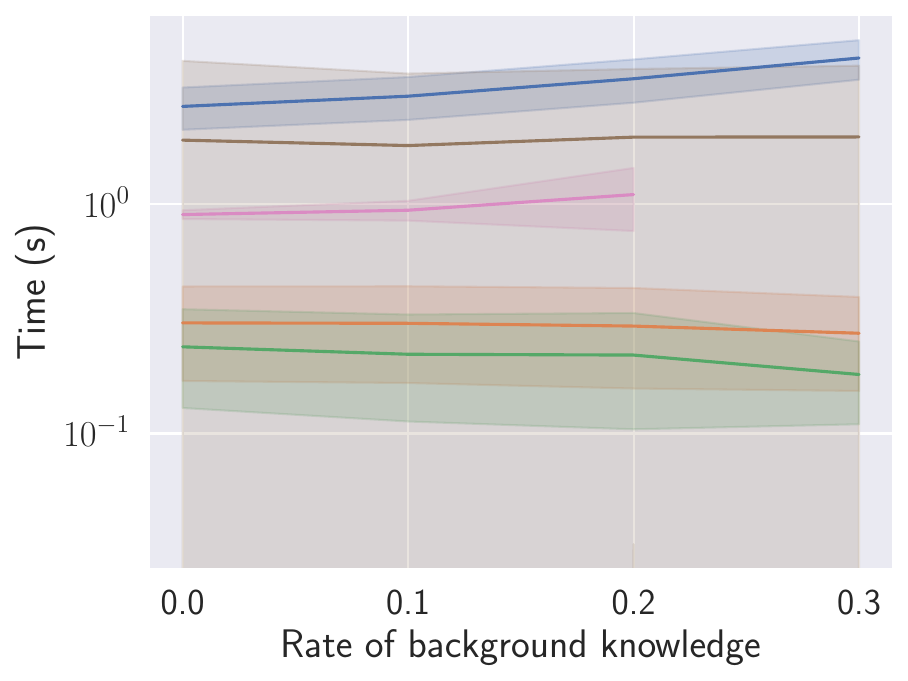}
            \includegraphics[width=\linewidth]{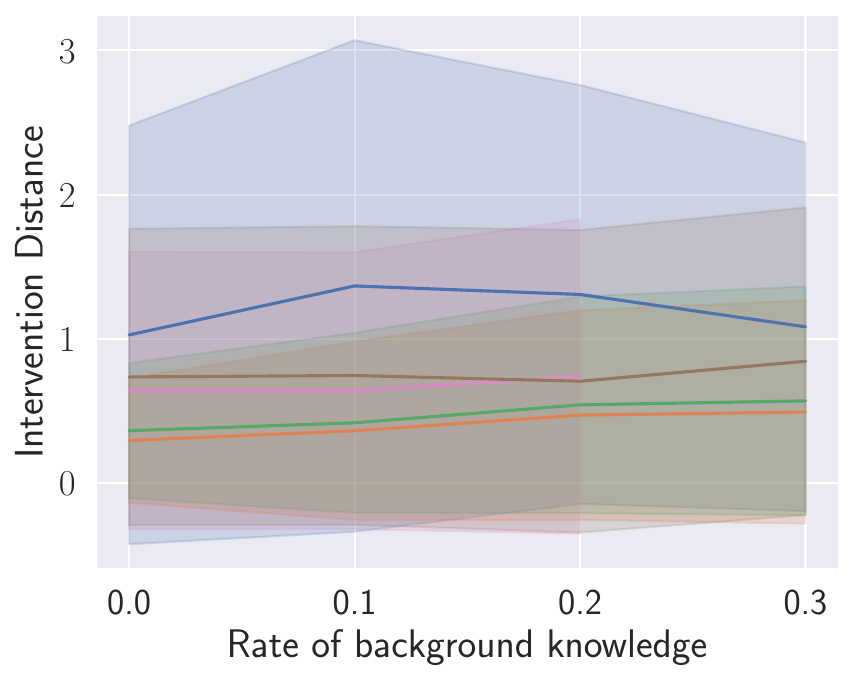}
        \end{subfigure}
        \begin{subfigure}[b]{0.32\linewidth}
            \centering
            \caption*{$G^2$ tests}
            \includegraphics[width=\linewidth]{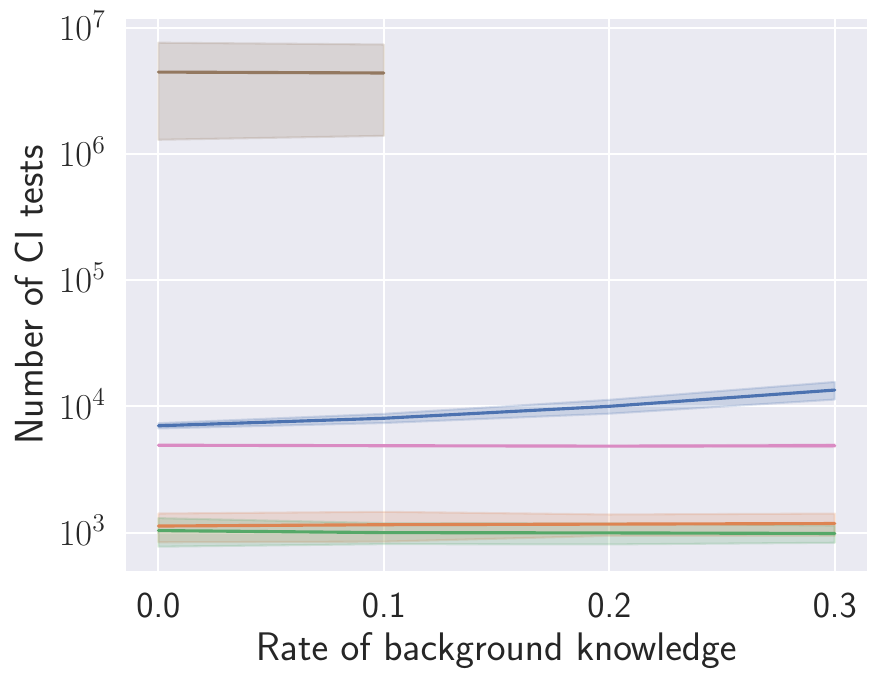}
            \includegraphics[width=\linewidth]{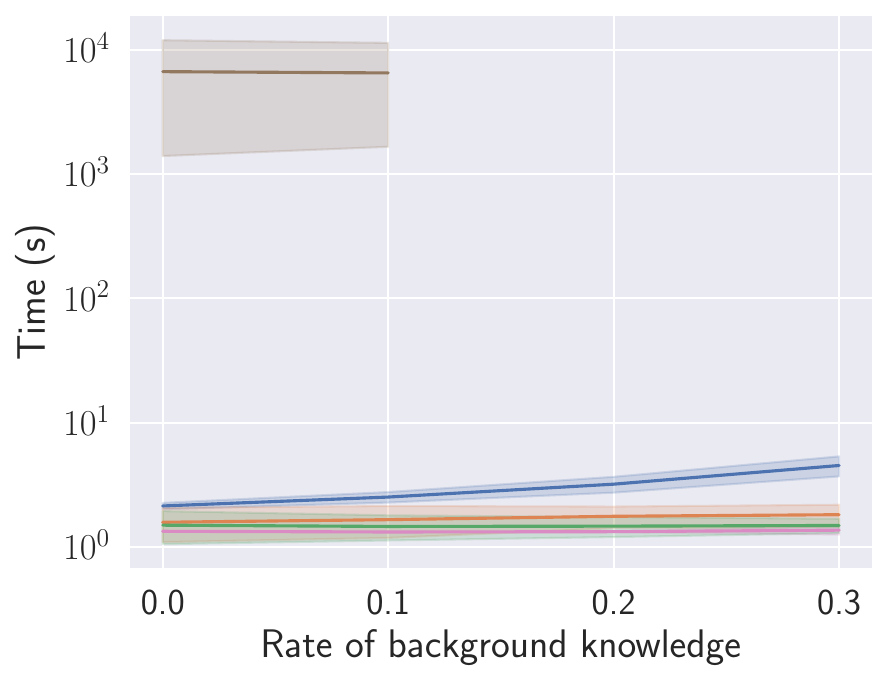}
            \includegraphics[width=\linewidth]{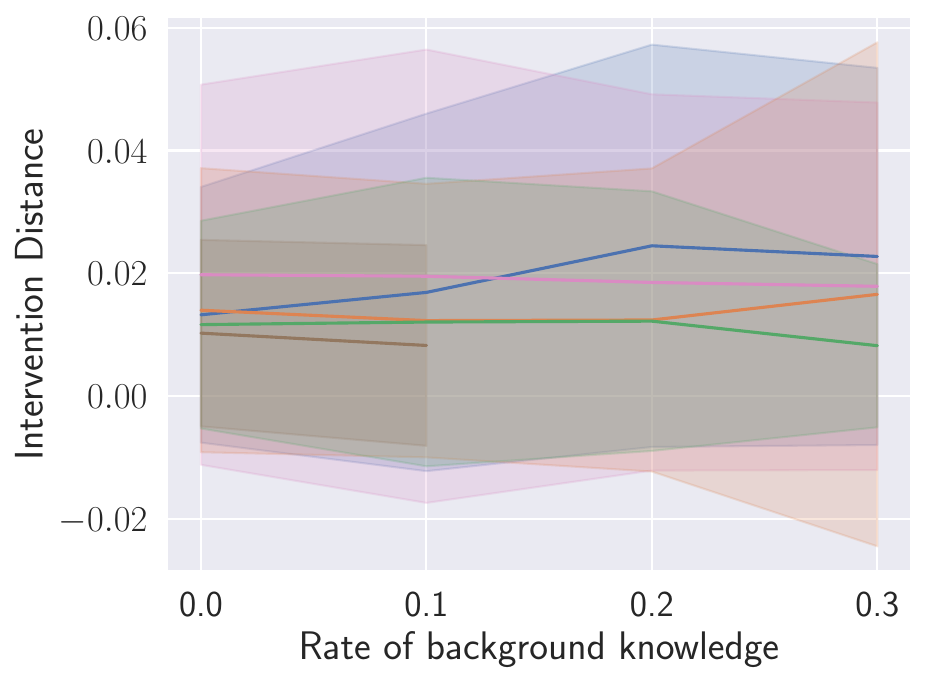}
        \end{subfigure}
    \end{subfigure}
    \caption{Results over rate of \ac{BK} error, with \ac{BK} rate kept fixed at 0.2.
    The error rate for gaps is always a half of the reported number (e.g., at a rate of 0.1, BK contains errors for 10\% of adjacencies and orientations, and for 5\% of gaps) to keep the errors balanced between error types.
    The shadow area denotes standard deviation.}
    \label{fig:bk_over_error}
\end{figure*}

\section{Realistic Data}
\label{sec:semi-synth}

We perform experiments on realistic semi-synthetic data sampled according to MAGIC-NIAB and ANDES causal graphs from the \texttt{bnlearn} package \citep{scutari2010bnlearn} using the \texttt{pgmpy} library \citep{ankan2024pgmpy}.
We use the same experimental setup as described in \Cref{sec:experiments}.
Our results are shown in \Cref{fig:semi_synthetic}.

The MAGIC-NIAB network is causal graph with 44 nodes and 66 edges that generates linear Gaussian data, hence we use Fisher-Z CI tests for all algorithms.
Results for MAGIC-NIAB are shown in the first column, where more \ac{BK} improves all metrics, especially the number of CI tests and computation time.
For intervention distance, a difference between no BK and a BK rate of 0.2 leads to a much smaller variance.

The ECOLI70 is another causal graph generating linear Gaussian data over 46 variables and 70 edges.
Our results in the second column of \Cref{fig:semi_synthetic} show that more BK improves all metrics, especially in terms of computational efficiency, where the improvements are exponential (shown in log-scale).

The largest linear Gaussian causal graph is ARTH150 with 107 variables 150 edges.
Without any background knowledge, the original PC and LDECC algorithms (corresponding to PC-BK and LDECC$^+$-BK with BK rate of 0.0) took prohibitively long to run.
Utilizing some amount of \ac{BK} makes it possible to apply these algorithms on ARTH150, demonstrating that the utilization of BK can be necessary is some scenarios.
Computational costs improve for all other algorithms as well (it is difficult to see for SNAP($\infty$) due to scale).

The ANDES network generates discrete binary data over 223	variables with 338 direct causal relations.
We were not able to run LDECC$^+$-BK due to memory issues.
The results for the ANDES network, shown in the fourth column, are less straightforward.
The computational effort of all methods decrease with more BK except for SNAP($\infty$)-BK, which improves only a tiny amount.
The intervention distance remains negligible throughout rates of \ac{BK}.

\begin{figure*}[ht]
    \centering
    \begin{subfigure}[b]{\linewidth}
        \centering
        \includegraphics[width=.7\linewidth]{experiments/legend.pdf}
    \end{subfigure}
    \begin{subfigure}[b]{\linewidth}
        \begin{subfigure}[b]{0.24\linewidth}
            \centering
            \caption*{$\quad$MAGIC-NAB}
            \includegraphics[width=\linewidth]{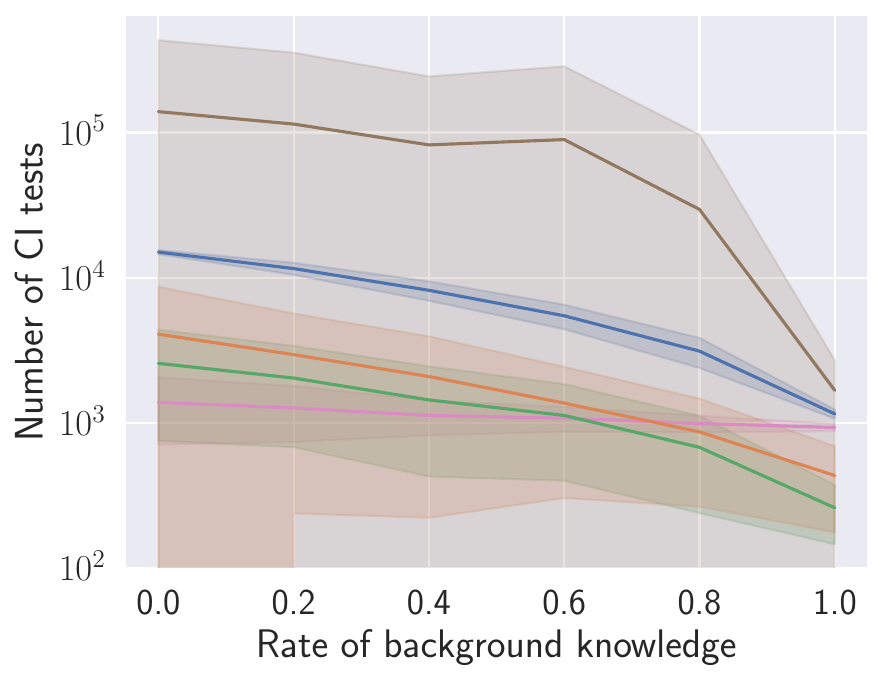}
            \includegraphics[width=\linewidth]{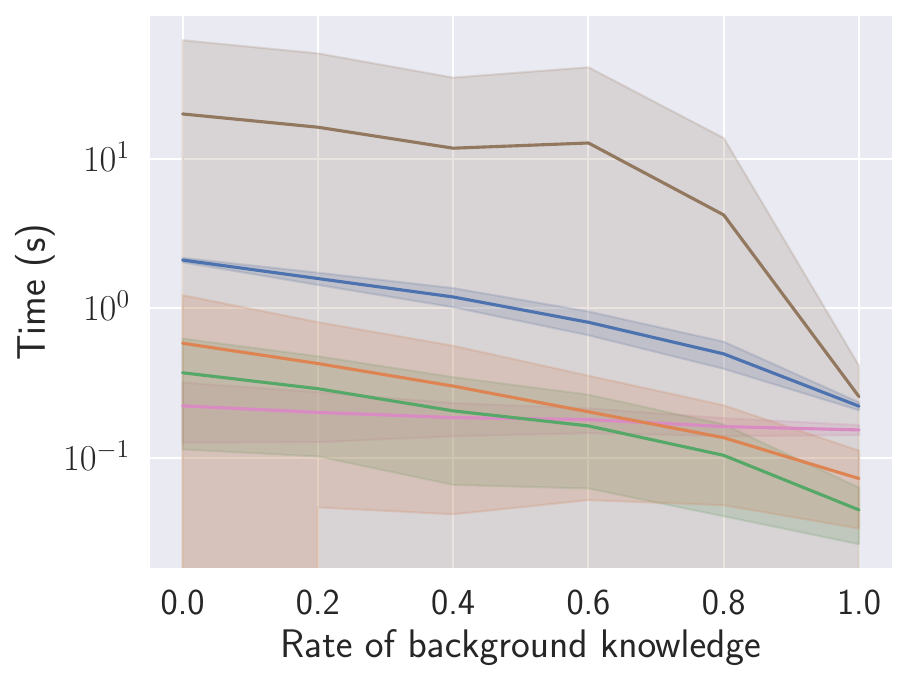}
            \includegraphics[width=\linewidth]{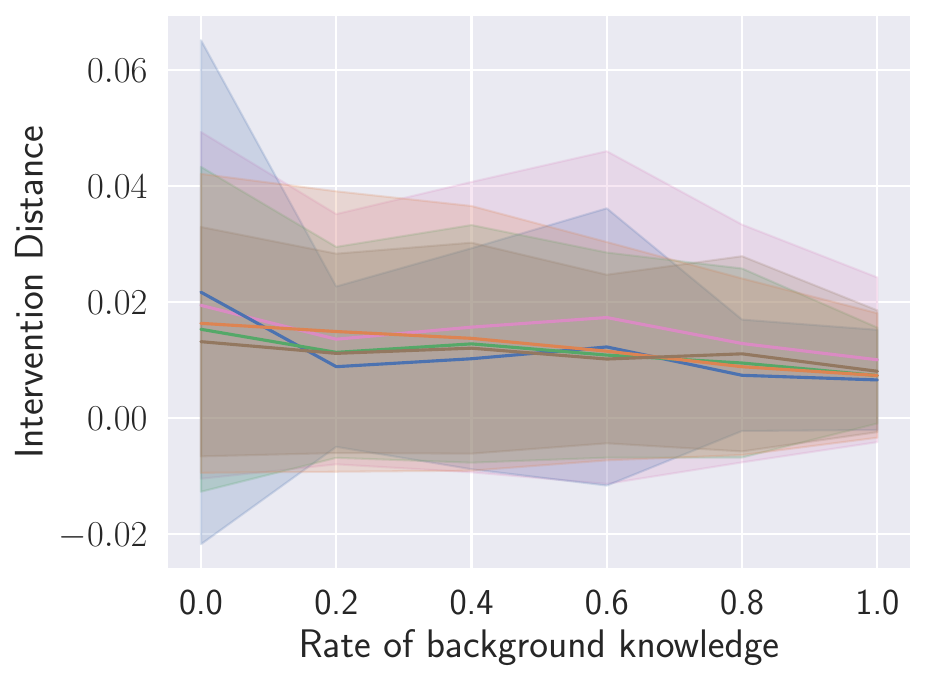}
        \end{subfigure}
        \begin{subfigure}[b]{0.24\linewidth}
            \centering
            \caption*{$\quad$ECOLI70}
            \includegraphics[width=\linewidth]{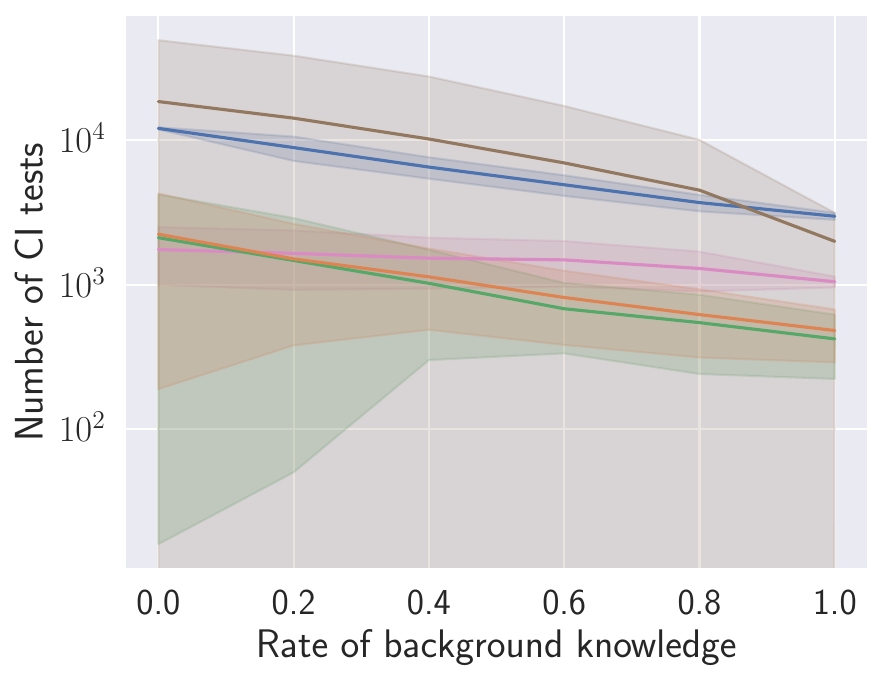}
            \includegraphics[width=\linewidth]{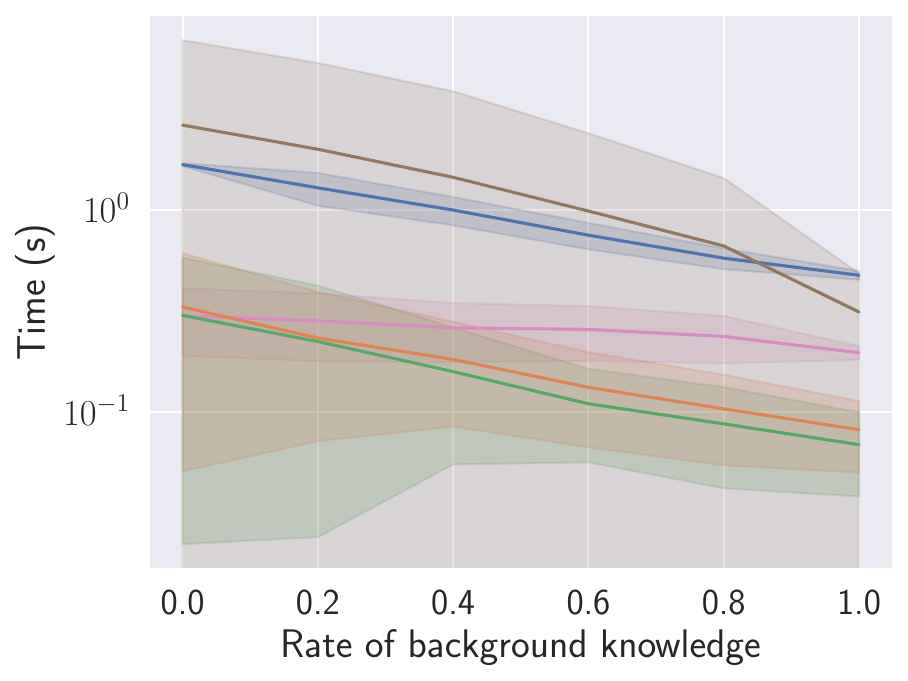}
            \includegraphics[width=\linewidth]{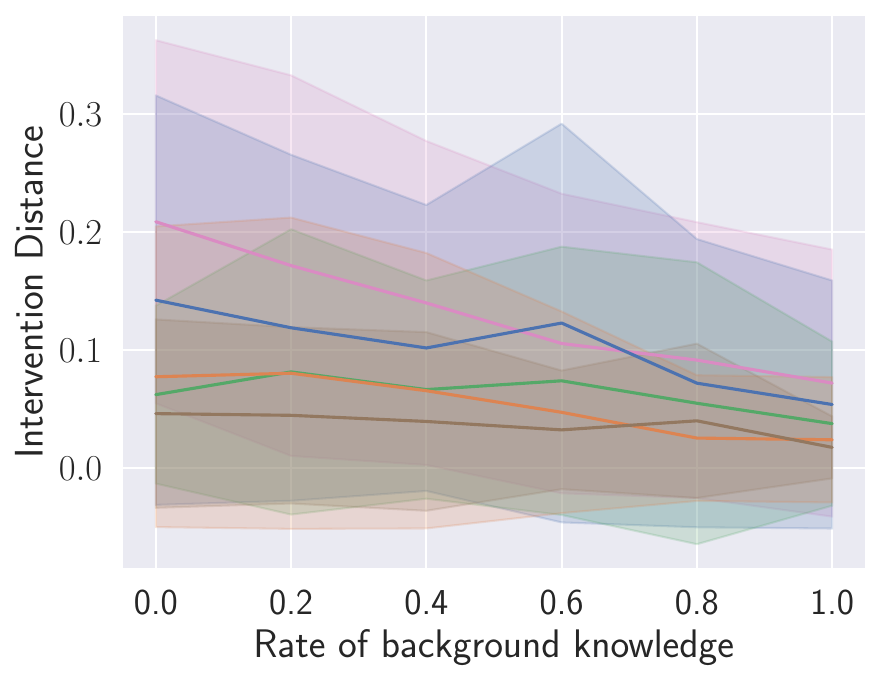}
        \end{subfigure}
        \begin{subfigure}[b]{0.24\linewidth}
            \centering
            \caption*{$\quad$ARTH150}
            \includegraphics[width=\linewidth]{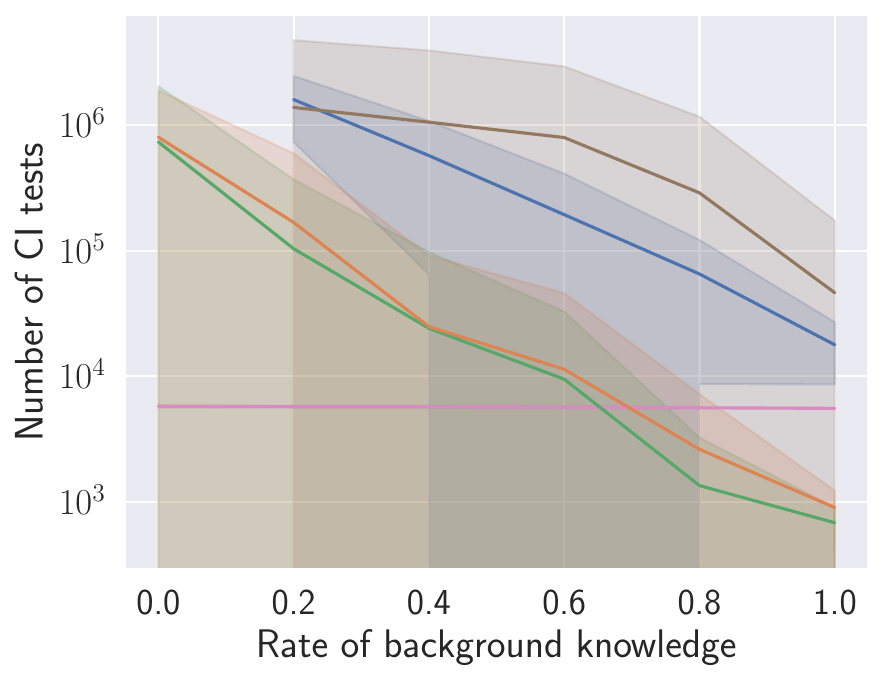}
            \includegraphics[width=\linewidth]{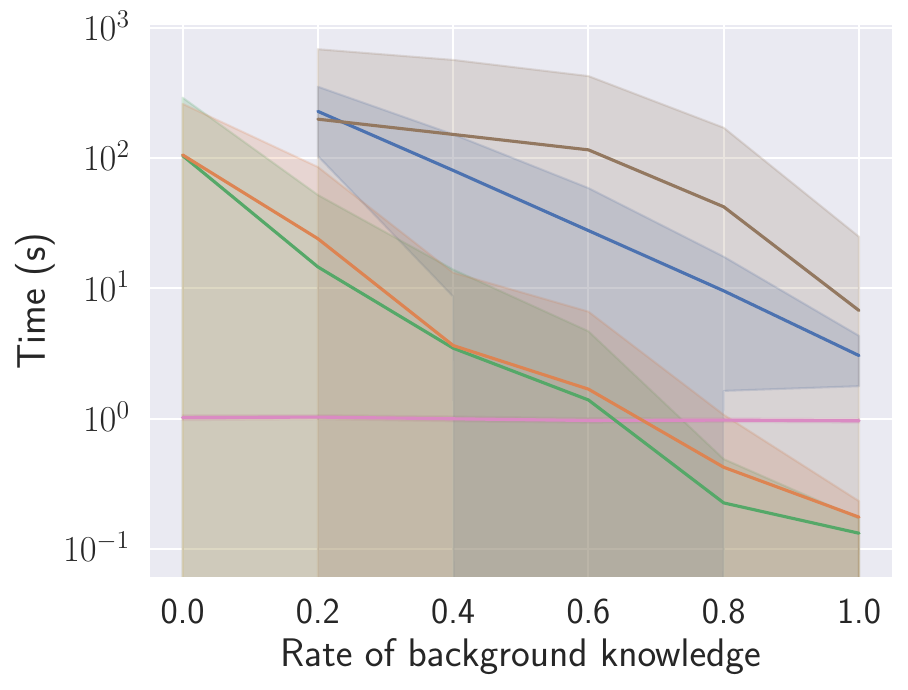}
            \includegraphics[width=\linewidth]{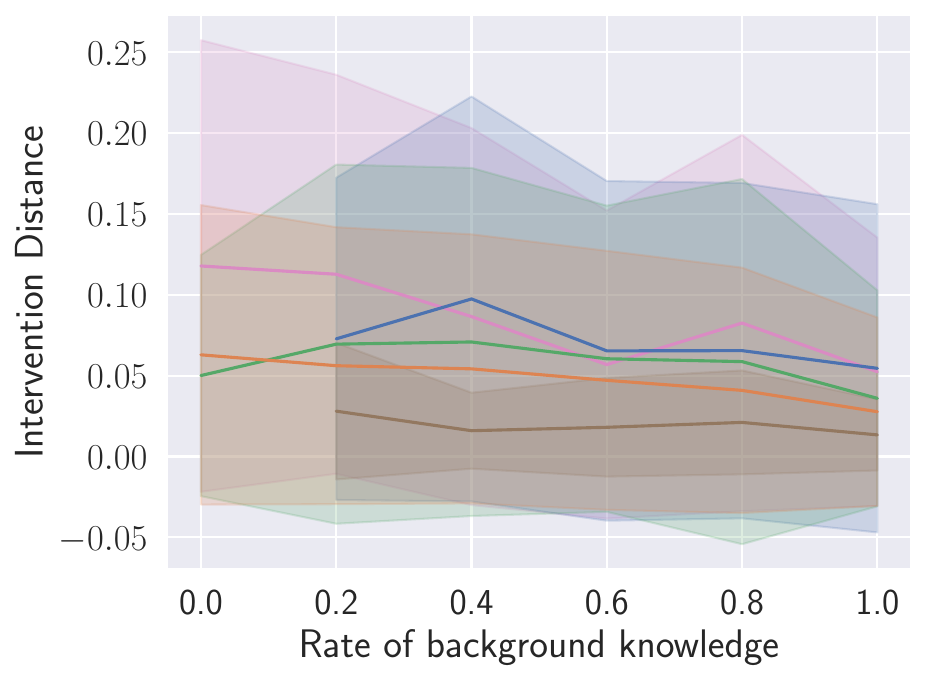}
        \end{subfigure}
        \begin{subfigure}[b]{0.24\linewidth}
            \centering
            \caption*{ANDES}
            \includegraphics[width=\linewidth]{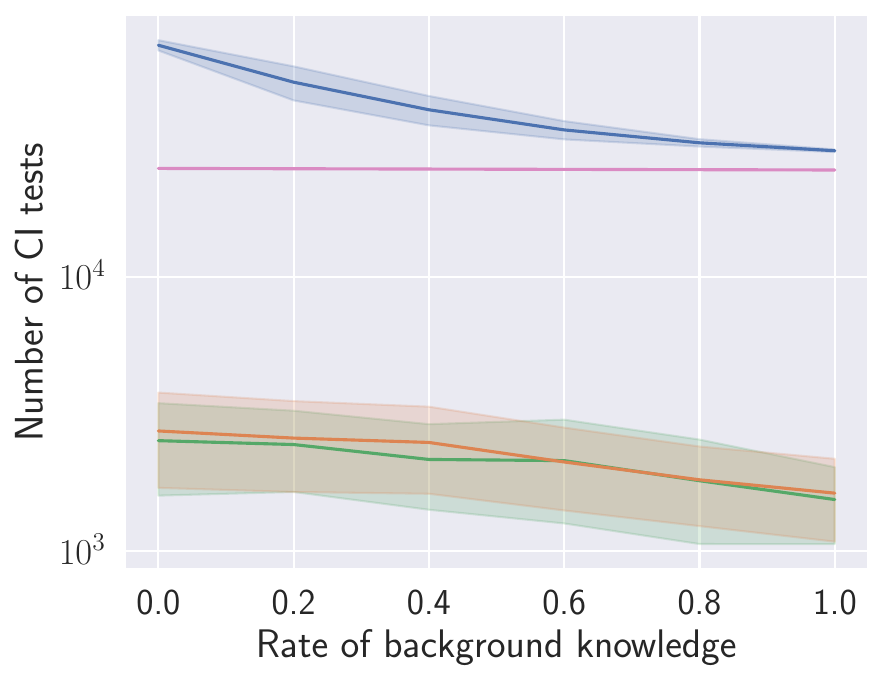}
            \includegraphics[width=\linewidth]{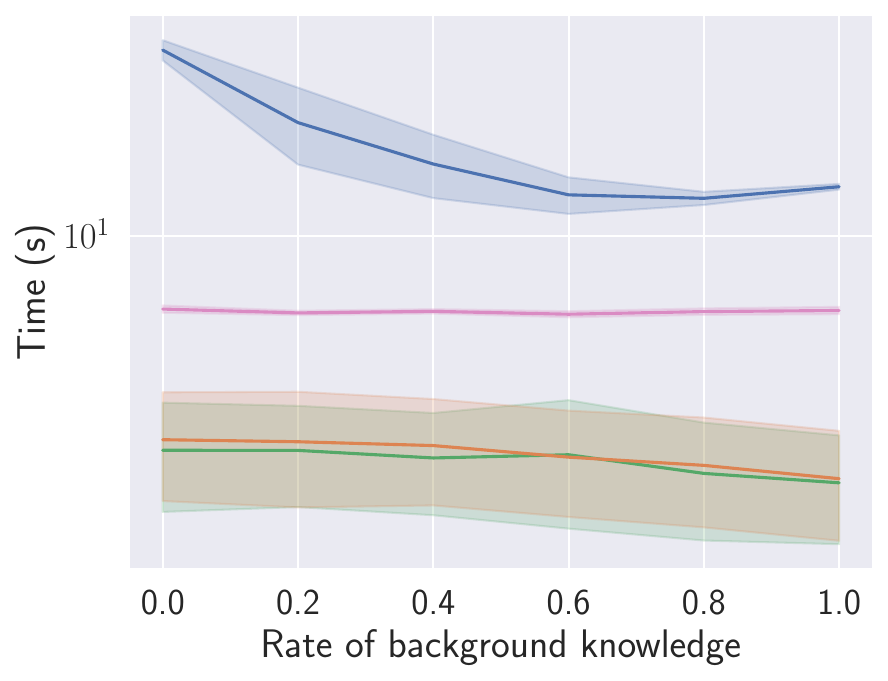}
            \includegraphics[width=\linewidth]{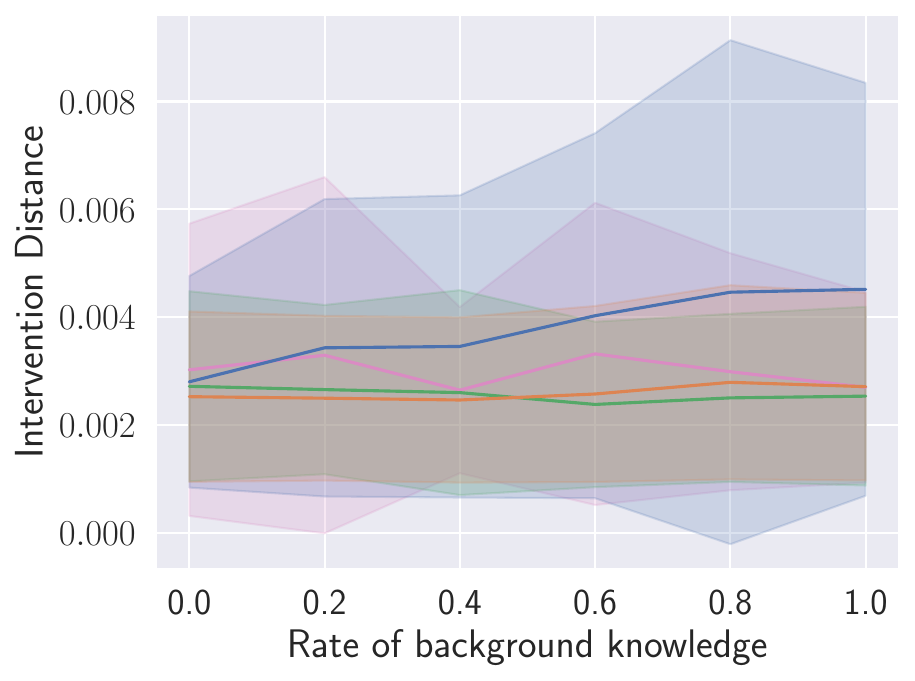}
        \end{subfigure}
    \end{subfigure}
    \caption{Results on semi-synthetic data according to various causal graphs from the \texttt{bnlearn} package \citep{scutari2010bnlearn}.}
    \label{fig:semi_synthetic}
\end{figure*}

\clearpage
\section*{NeurIPS Paper Checklist}

\begin{enumerate}

\item {\bf Claims}
    \item[] Question: Do the main claims made in the abstract and introduction accurately reflect the paper's contributions and scope?
    \item[] Answer: \answerYes{} 
    \item[] Justification: All claims made in the abstract and the introduction are supported by theoretical results with proofs and empirical evidence including several ablations.
    \item[] Guidelines:
    \begin{itemize}
        \item The answer \answerNA{} means that the abstract and introduction do not include the claims made in the paper.
        \item The abstract and/or introduction should clearly state the claims made, including the contributions made in the paper and important assumptions and limitations. A \answerNo{} or \answerNA{} answer to this question will not be perceived well by the reviewers. 
        \item The claims made should match theoretical and experimental results, and reflect how much the results can be expected to generalize to other settings. 
        \item It is fine to include aspirational goals as motivation as long as it is clear that these goals are not attained by the paper. 
    \end{itemize}

\item {\bf Limitations}
    \item[] Question: Does the paper discuss the limitations of the work performed by the authors?
    \item[] Answer: \answerYes{} 
    \item[] Justification: We state limitations of our work in the conclusions and extensively discuss them in \Cref{sec:limits_and_extens}.
    \item[] Guidelines:
    \begin{itemize}
        \item The answer \answerNA{} means that the paper has no limitation while the answer \answerNo{} means that the paper has limitations, but those are not discussed in the paper. 
        \item The authors are encouraged to create a separate ``Limitations'' section in their paper.
        \item The paper should point out any strong assumptions and how robust the results are to violations of these assumptions (e.g., independence assumptions, noiseless settings, model well-specification, asymptotic approximations only holding locally). The authors should reflect on how these assumptions might be violated in practice and what the implications would be.
        \item The authors should reflect on the scope of the claims made, e.g., if the approach was only tested on a few datasets or with a few runs. In general, empirical results often depend on implicit assumptions, which should be articulated.
        \item The authors should reflect on the factors that influence the performance of the approach. For example, a facial recognition algorithm may perform poorly when image resolution is low or images are taken in low lighting. Or a speech-to-text system might not be used reliably to provide closed captions for online lectures because it fails to handle technical jargon.
        \item The authors should discuss the computational efficiency of the proposed algorithms and how they scale with dataset size.
        \item If applicable, the authors should discuss possible limitations of their approach to address problems of privacy and fairness.
        \item While the authors might fear that complete honesty about limitations might be used by reviewers as grounds for rejection, a worse outcome might be that reviewers discover limitations that aren't acknowledged in the paper. The authors should use their best judgment and recognize that individual actions in favor of transparency play an important role in developing norms that preserve the integrity of the community. Reviewers will be specifically instructed to not penalize honesty concerning limitations.
    \end{itemize}

\item {\bf Theory assumptions and proofs}
    \item[] Question: For each theoretical result, does the paper provide the full set of assumptions and a complete (and correct) proof?
    \item[] Answer: \answerYes{} 
    \item[] Justification: We clearly state our assumptions in \Cref{sec:background} and provide proofs for all theorems in \Cref{sec:proofs}, which we always reference in the main paper.
    \item[] Guidelines:
    \begin{itemize}
        \item The answer \answerNA{} means that the paper does not include theoretical results. 
        \item All the theorems, formulas, and proofs in the paper should be numbered and cross-referenced.
        \item All assumptions should be clearly stated or referenced in the statement of any theorems.
        \item The proofs can either appear in the main paper or the supplemental material, but if they appear in the supplemental material, the authors are encouraged to provide a short proof sketch to provide intuition. 
        \item Inversely, any informal proof provided in the core of the paper should be complemented by formal proofs provided in appendix or supplemental material.
        \item Theorems and Lemmas that the proof relies upon should be properly referenced. 
    \end{itemize}

    \item {\bf Experimental result reproducibility}
    \item[] Question: Does the paper fully disclose all the information needed to reproduce the main experimental results of the paper to the extent that it affects the main claims and/or conclusions of the paper (regardless of whether the code and data are provided or not)?
    \item[] Answer: \answerYes{} 
    \item[] Justification: We explicitly state the parameters of our experiments in \Cref{sec:experiments} and \Cref{sec:experimental_details}, and provide several pseudocode for every evaluated algorithm in \Cref{sec:method}, \Cref{sec:mb_discovery} and \Cref{sec:algorithms}. Additionally, we provide the source code to replicate all results in the supplementary material.
    \item[] Guidelines:
    \begin{itemize}
        \item The answer \answerNA{} means that the paper does not include experiments.
        \item If the paper includes experiments, a \answerNo{} answer to this question will not be perceived well by the reviewers: Making the paper reproducible is important, regardless of whether the code and data are provided or not.
        \item If the contribution is a dataset and\slash or model, the authors should describe the steps taken to make their results reproducible or verifiable. 
        \item Depending on the contribution, reproducibility can be accomplished in various ways. For example, if the contribution is a novel architecture, describing the architecture fully might suffice, or if the contribution is a specific model and empirical evaluation, it may be necessary to either make it possible for others to replicate the model with the same dataset, or provide access to the model. In general. releasing code and data is often one good way to accomplish this, but reproducibility can also be provided via detailed instructions for how to replicate the results, access to a hosted model (e.g., in the case of a large language model), releasing of a model checkpoint, or other means that are appropriate to the research performed.
        \item While NeurIPS does not require releasing code, the conference does require all submissions to provide some reasonable avenue for reproducibility, which may depend on the nature of the contribution. For example
        \begin{enumerate}
            \item If the contribution is primarily a new algorithm, the paper should make it clear how to reproduce that algorithm.
            \item If the contribution is primarily a new model architecture, the paper should describe the architecture clearly and fully.
            \item If the contribution is a new model (e.g., a large language model), then there should either be a way to access this model for reproducing the results or a way to reproduce the model (e.g., with an open-source dataset or instructions for how to construct the dataset).
            \item We recognize that reproducibility may be tricky in some cases, in which case authors are welcome to describe the particular way they provide for reproducibility. In the case of closed-source models, it may be that access to the model is limited in some way (e.g., to registered users), but it should be possible for other researchers to have some path to reproducing or verifying the results.
        \end{enumerate}
    \end{itemize}

\item {\bf Open access to data and code}
    \item[] Question: Does the paper provide open access to the data and code, with sufficient instructions to faithfully reproduce the main experimental results, as described in supplemental material?
    \item[] Answer: \answerYes{} 
    \item[] Justification: Our evaluation data is synthetic in nature and can be reproduced using our source code as well as the cited open source library utilized for semi-synthetic datasets.
    \item[] Guidelines:
    \begin{itemize}
        \item The answer \answerNA{} means that paper does not include experiments requiring code.
        \item Please see the NeurIPS code and data submission guidelines (\url{https://neurips.cc/public/guides/CodeSubmissionPolicy}) for more details.
        \item While we encourage the release of code and data, we understand that this might not be possible, so \answerNo{} is an acceptable answer. Papers cannot be rejected simply for not including code, unless this is central to the contribution (e.g., for a new open-source benchmark).
        \item The instructions should contain the exact command and environment needed to run to reproduce the results. See the NeurIPS code and data submission guidelines (\url{https://neurips.cc/public/guides/CodeSubmissionPolicy}) for more details.
        \item The authors should provide instructions on data access and preparation, including how to access the raw data, preprocessed data, intermediate data, and generated data, etc.
        \item The authors should provide scripts to reproduce all experimental results for the new proposed method and baselines. If only a subset of experiments are reproducible, they should state which ones are omitted from the script and why.
        \item At submission time, to preserve anonymity, the authors should release anonymized versions (if applicable).
        \item Providing as much information as possible in supplemental material (appended to the paper) is recommended, but including URLs to data and code is permitted.
    \end{itemize}

\item {\bf Experimental setting/details}
    \item[] Question: Does the paper specify all the training and test details (e.g., data splits, hyperparameters, how they were chosen, type of optimizer) necessary to understand the results?
    \item[] Answer: \answerYes{} 
    \item[] Justification: We specify all learning parameters in \Cref{sec:experiments} as well as in the provided source code.

\item {\bf Experiment statistical significance}
    \item[] Question: Does the paper report error bars suitably and correctly defined or other appropriate information about the statistical significance of the experiments?
    \item[] Answer: \answerYes{} 
    \item[] Justification: All experimental results include standard deviations in both figures and tables, except for a single ablation study in \Cref{sec:bk_types_abl}.
    \item[] Guidelines:
    \begin{itemize}
        \item The answer \answerNA{} means that the paper does not include experiments.
        \item The authors should answer \answerYes{} if the results are accompanied by error bars, confidence intervals, or statistical significance tests, at least for the experiments that support the main claims of the paper.
        \item The factors of variability that the error bars are capturing should be clearly stated (for example, train/test split, initialization, random drawing of some parameter, or overall run with given experimental conditions).
        \item The method for calculating the error bars should be explained (closed form formula, call to a library function, bootstrap, etc.)
        \item The assumptions made should be given (e.g., Normally distributed errors).
        \item It should be clear whether the error bar is the standard deviation or the standard error of the mean.
        \item It is OK to report 1-sigma error bars, but one should state it. The authors should preferably report a 2-sigma error bar than state that they have a 96\% CI, if the hypothesis of Normality of errors is not verified.
        \item For asymmetric distributions, the authors should be careful not to show in tables or figures symmetric error bars that would yield results that are out of range (e.g., negative error rates).
        \item If error bars are reported in tables or plots, the authors should explain in the text how they were calculated and reference the corresponding figures or tables in the text.
    \end{itemize}

\item {\bf Experiments compute resources}
    \item[] Question: For each experiment, does the paper provide sufficient information on the computer resources (type of compute workers, memory, time of execution) needed to reproduce the experiments?
    \item[] Answer: \answerYes{} 
    \item[] Justification: We provide the exact computer resources used to run the experiments in \Cref{sec:experimental_details}
    \item[] Guidelines:
    \begin{itemize}
        \item The answer \answerNA{} means that the paper does not include experiments.
        \item The paper should indicate the type of compute workers CPU or GPU, internal cluster, or cloud provider, including relevant memory and storage.
        \item The paper should provide the amount of compute required for each of the individual experimental runs as well as estimate the total compute. 
        \item The paper should disclose whether the full research project required more compute than the experiments reported in the paper (e.g., preliminary or failed experiments that didn't make it into the paper). 
    \end{itemize}
    
\item {\bf Code of ethics}
    \item[] Question: Does the research conducted in the paper conform, in every respect, with the NeurIPS Code of Ethics \url{https://neurips.cc/public/EthicsGuidelines}?
    \item[] Answer: \answerYes{} 
    \item[] Justification: The paper conform, in every respect, with the NeurIPS Code of Ethics.
    \item[] Guidelines:
    \begin{itemize}
        \item The answer \answerNA{} means that the authors have not reviewed the NeurIPS Code of Ethics.
        \item If the authors answer \answerNo, they should explain the special circumstances that require a deviation from the Code of Ethics.
        \item The authors should make sure to preserve anonymity (e.g., if there is a special consideration due to laws or regulations in their jurisdiction).
    \end{itemize}

\item {\bf Broader impacts}
    \item[] Question: Does the paper discuss both potential positive societal impacts and negative societal impacts of the work performed?
    \item[] Answer: \answerNA{} 
    \item[] Justification: The paper contains foundational research. It is not tied to particular applications and there is no direct path to any negative applications.
    \item[] Guidelines:
    \begin{itemize}
        \item The answer \answerNA{} means that there is no societal impact of the work performed.
        \item If the authors answer \answerNA{} or \answerNo, they should explain why their work has no societal impact or why the paper does not address societal impact.
        \item Examples of negative societal impacts include potential malicious or unintended uses (e.g., disinformation, generating fake profiles, surveillance), fairness considerations (e.g., deployment of technologies that could make decisions that unfairly impact specific groups), privacy considerations, and security considerations.
        \item The conference expects that many papers will be foundational research and not tied to particular applications, let alone deployments. However, if there is a direct path to any negative applications, the authors should point it out. For example, it is legitimate to point out that an improvement in the quality of generative models could be used to generate Deepfakes for disinformation. On the other hand, it is not needed to point out that a generic algorithm for optimizing neural networks could enable people to train models that generate Deepfakes faster.
        \item The authors should consider possible harms that could arise when the technology is being used as intended and functioning correctly, harms that could arise when the technology is being used as intended but gives incorrect results, and harms following from (intentional or unintentional) misuse of the technology.
        \item If there are negative societal impacts, the authors could also discuss possible mitigation strategies (e.g., gated release of models, providing defenses in addition to attacks, mechanisms for monitoring misuse, mechanisms to monitor how a system learns from feedback over time, improving the efficiency and accessibility of ML).
    \end{itemize}
    
\item {\bf Safeguards}
    \item[] Question: Does the paper describe safeguards that have been put in place for responsible release of data or models that have a high risk for misuse (e.g., pre-trained language models, image generators, or scraped datasets)?
    \item[] Answer: \answerNA{} 
    \item[] Justification: The paper contains foundational research with synthetically generated data, and poses no such risks.
    \item[] Guidelines:
    \begin{itemize}
        \item The answer \answerNA{} means that the paper poses no such risks.
        \item Released models that have a high risk for misuse or dual-use should be released with necessary safeguards to allow for controlled use of the model, for example by requiring that users adhere to usage guidelines or restrictions to access the model or implementing safety filters. 
        \item Datasets that have been scraped from the Internet could pose safety risks. The authors should describe how they avoided releasing unsafe images.
        \item We recognize that providing effective safeguards is challenging, and many papers do not require this, but we encourage authors to take this into account and make a best faith effort.
    \end{itemize}

\item {\bf Licenses for existing assets}
    \item[] Question: Are the creators or original owners of assets (e.g., code, data, models), used in the paper, properly credited and are the license and terms of use explicitly mentioned and properly respected?
    \item[] Answer: \answerYes{} 
    \item[] Justification: We cite each library along with their corresponding license that are used in the experiments in \Cref{sec:experimental_details}.
    \item[] Guidelines:
    \begin{itemize}
        \item The answer \answerNA{} means that the paper does not use existing assets.
        \item The authors should cite the original paper that produced the code package or dataset.
        \item The authors should state which version of the asset is used and, if possible, include a URL.
        \item The name of the license (e.g., CC-BY 4.0) should be included for each asset.
        \item For scraped data from a particular source (e.g., website), the copyright and terms of service of that source should be provided.
        \item If assets are released, the license, copyright information, and terms of use in the package should be provided. For popular datasets, \url{paperswithcode.com/datasets} has curated licenses for some datasets. Their licensing guide can help determine the license of a dataset.
        \item For existing datasets that are re-packaged, both the original license and the license of the derived asset (if it has changed) should be provided.
        \item If this information is not available online, the authors are encouraged to reach out to the asset's creators.
    \end{itemize}

\item {\bf New assets}
    \item[] Question: Are new assets introduced in the paper well documented and is the documentation provided alongside the assets?
    \item[] Answer: \answerYes{} 
    \item[] Justification: The source code is documented and included in the submission in an anonymized zip file.
    \item[] Guidelines:
    \begin{itemize}
        \item The answer \answerNA{} means that the paper does not release new assets.
        \item Researchers should communicate the details of the dataset\slash code\slash model as part of their submissions via structured templates. This includes details about training, license, limitations, etc. 
        \item The paper should discuss whether and how consent was obtained from people whose asset is used.
        \item At submission time, remember to anonymize your assets (if applicable). You can either create an anonymized URL or include an anonymized zip file.
    \end{itemize}

\item {\bf Crowdsourcing and research with human subjects}
    \item[] Question: For crowdsourcing experiments and research with human subjects, does the paper include the full text of instructions given to participants and screenshots, if applicable, as well as details about compensation (if any)? 
    \item[] Answer: \answerNA{} 
    \item[] Justification: This paper does not involve crowdsourcing nor research with human subjects.
    \item[] Guidelines:
    \begin{itemize}
        \item The answer \answerNA{} means that the paper does not involve crowdsourcing nor research with human subjects.
        \item Including this information in the supplemental material is fine, but if the main contribution of the paper involves human subjects, then as much detail as possible should be included in the main paper. 
        \item According to the NeurIPS Code of Ethics, workers involved in data collection, curation, or other labor should be paid at least the minimum wage in the country of the data collector. 
    \end{itemize}

\item {\bf Institutional review board (IRB) approvals or equivalent for research with human subjects}
    \item[] Question: Does the paper describe potential risks incurred by study participants, whether such risks were disclosed to the subjects, and whether Institutional Review Board (IRB) approvals (or an equivalent approval/review based on the requirements of your country or institution) were obtained?
    \item[] Answer: \answerNA{} 
    \item[] Justification: This paper does not involve crowdsourcing nor research with human subjects.
    \item[] Guidelines:
    \begin{itemize}
        \item The answer \answerNA{} means that the paper does not involve crowdsourcing nor research with human subjects.
        \item Depending on the country in which research is conducted, IRB approval (or equivalent) may be required for any human subjects research. If you obtained IRB approval, you should clearly state this in the paper. 
        \item We recognize that the procedures for this may vary significantly between institutions and locations, and we expect authors to adhere to the NeurIPS Code of Ethics and the guidelines for their institution. 
        \item For initial submissions, do not include any information that would break anonymity (if applicable), such as the institution conducting the review.
    \end{itemize}

\item {\bf Declaration of LLM usage}
    \item[] Question: Does the paper describe the usage of LLMs if it is an important, original, or non-standard component of the core methods in this research? Note that if the LLM is used only for writing, editing, or formatting purposes and does \emph{not} impact the core methodology, scientific rigor, or originality of the research, declaration is not required.
    \item[] Answer: \answerNA{} 
    \item[] Justification: The core method development in this research does not involve LLMs as any important, original, or non-standard components.
    \item[] Guidelines:
    \begin{itemize}
        \item The answer \answerNA{} means that the core method development in this research does not involve LLMs as any important, original, or non-standard components.
        \item Please refer to our LLM policy in the NeurIPS handbook for what should or should not be described.
    \end{itemize}

\end{enumerate}
\end{document}